\DeclareMathOperator{\trace}{trace}
\begin{document}


\title{Minimax Mixing Time of the Metropolis-Adjusted Langevin Algorithm for Log-Concave Sampling}

\author{\name Keru Wu \email keru.wu@duke.edu \\
       \name Scott Schmidler \email scott.schmidler@duke.edu \\
       \name Yuansi Chen \email yuansi.chen@duke.edu \\
       \addr Department of Statistical Science\\
       Duke University\\
       Durham, North Carolina, USA}

\editor{Anthony Lee}

\maketitle

\begin{abstract}
We study the mixing time of the Metropolis-adjusted Langevin algorithm (MALA) for sampling from a log-smooth and strongly log-concave distribution. We  establish its optimal minimax mixing time under a warm start. Our main contribution is two-fold. First, for a $\dims$-dimensional log-concave density with condition number $\condi$, we show that MALA with a warm start mixes in $\TO(\condi \sqrt{\dims})$ iterations up to logarithmic factors. This improves upon the previous work on the dependency of either the condition number $\condi$ or the dimension $\dims$. Our proof relies on comparing the leapfrog integrator with the continuous Hamiltonian dynamics, where we establish a new concentration bound for the acceptance rate. Second, we prove a spectral gap based mixing time lower bound for reversible MCMC algorithms on general state spaces. We apply this lower bound result to construct a hard distribution for which MALA requires at least $\TOmega(\condi \sqrt{\dims})$ steps to mix. The lower bound for MALA matches our upper bound in terms of condition number and dimension. Finally, numerical experiments are included to validate our theoretical results.
\end{abstract}

\begin{keywords}
  Langevin algorithms, MCMC algorithms, Hamiltonian dynamics, Computational complexity, Bayesian computation
\end{keywords}

\section{Introduction}

Drawing random samples from a distribution is an essential challenge in various fields such as Bayesian statistics, operations research and machine learning \citep{andrieu2003introduction,robert2013monte}. Among all the sampling methods, Markov Chain Monte Carlo (MCMC) algorithms stand out by enabling a wide range of applications~\citep{plummer2003jags,carpenter2017stan}, especially for those involving high-dimensional target distributions. Many Metropolis-Hastings sampling algorithms have been proposed and theoretically studied since the fundamental work of \citet{metropolis1953equation} and the more general results by \citet{hastings1970monte}. Popular MCMC algorithms for sampling from continuous distributions include the Metropolized random walk (MRW) \citep{mengersen1996rates, roberts1996geometric}, the Metropolis-Adjusted Langevin Algorithm (MALA) \citep{ roberts1996exponential,roberts1998optimal,roberts2002langevin} and the Hamiltonian Monte Carlo (HMC) \citep{neal2011mcmc}.

Despite the wide adoption of these MCMC algorithms, establishing the exact mixing time of many algorithms has been challenging even in simple settings. In particular, for the problem of sampling from log-smooth and strongly log-concave distributions (that is, $\dims$-dimensional distributions with a density  $\target(\cdot) \propto e^{-\targetf(\cdot)}$ where $\targetf$ is $\smoothparam$-smooth and $\scparam$-strongly convex), the optimal mixing time results were not well understood for MALA or HMC. Nevertheless, the rationale behind MALA is simple: it constructs a Markov chain which is the Euler discretization of the continuous Langevin dynamics and then applies the Metropolis-Hastings accept-reject step to ensure convergence to the correct stationary distribution. While the continuous Langevin dynamics is well understood (see~\citet{bakry2014analysis}), analyzing the discretization and the Metropolis-Hastings step is far from settled.

The study of mixing time on log-concave distributions is important for practice, because these distributions often appear in statistical modeling and bad sampling outputs result in bad statistical estimates. Typical examples of log-concave distributions in statistics include multivariate Gaussian distributions and Bayesian logistic regression models.
On the other hand, it is a vital question to ask where the theoretical computational gap between sampling from a distribution and optimizing for its maximum lies. To be precise, the setting of the log-smooth and strongly log-concave distribution is closely related to the smooth and convex optimization setting in the convex optimization literature. Observing the resolution of optimal convergence rates for various optimization algorithms with precise upper and lower bounds in the latter literature~\citep{nemirovskij1983problem, nesterov2003introductory}, it is natural to wonder how to accomplish the same for sampling algorithms. In fact, \citet{dalalyan2017theoretical} was intrigued by the same question, which drove him to lay the groundwork on non-asymptotic guarantees for sampling algorithms in high dimensional log-concave settings. Motivated by the need in practice and the lack of theoretical understanding, in this work, we focus on sampling from log-smooth and strongly log-concave distributions and aim to determine the optimal mixing time for MALA.

\subsection{Related work}
The study of the mixing time of MALA and related algorithms started in the 80s~\citep{parisi1981correlation}. And it has recently witnessed a surge on non-asymptotic analyses starting from the work of \citet{dalalyan2017theoretical}.

MALA is the Metropolized version of a simpler algorithm called unadjusted Langevin algorithm (ULA). ULA can be seen as the Euler discretization of the continuous Langevin dynamics without Metropolis-Hastings steps. The study of the mixing time of ULA provides a source of inspiration for understanding the discretization step in MALA \citep{parisi1981correlation, grenander1994representations, dalalyan2017theoretical, durmus2017nonasymptotic, cheng2018convergence, durmus2019high, erdogdu2021convergence}. In particular, under the same log-concave and log-smooth setting in this paper, it was first shown by \citet{dalalyan2017theoretical} that ULA converges in total variation (TV) distance in $\TO(\condi^2\dims\tole^{-2})$ steps. The follow-up work by \citet{durmus2017nonasymptotic} extended this result by studying ULA with decreasing step size in TV distance. Similar results for ULA were proved under Kullback-Leibler (KL) divergence, \mbox{2-Wasserstein} distance \citep{cheng2018convergence, durmus2019high}, and more recently $\chi^2$-divergence \citet{erdogdu2021convergence}. 

Higher order discretization of the continuous Langevin dynamics are also studied in the similar setting, such as underdamped Langevin MCMC \citep{cheng2018sharp,cheng2018underdamped,eberle2019couplings,dalalyan2020sampling,ma2021there}, Hessian-free high-resolution Nesterov acceleration \citep{li2020hessian} and high-order Langevin \citep{mou2021high}. Compared to ULA, the underdamped Langevin MCMC achieves $\tole$ error tolerance with better dimension dependency and error dependency, as shown in \citet{cheng2018underdamped} where they derived a bound of order $\TO(m^{-1/2}\condi^2\dims^{1/2}\tole^{-1})$ in \mbox{2-Wasserstein} distance. This condition number dependency was further improved in \citet{dalalyan2020sampling}. It is worth noting that \citet{ma2021there} interpreted the underdamped Langevin MCMC as a Nesterov acceleration in KL divergence, explaining its faster convergence rate when compared to ULA. However, one limitation of employing unadjusted sampling algorithms is the polynomial dependence of their mixing time on $\tole^{-1}$, which can lead to impractical run times when high quality samples are required. Additionally, the resulting ergodic averages are asymptotically biased, which makes it difficult to choose a stopping time in practice.

Introduced by~\citet{besag1994comments}, the Metropolis-Hastings step in MALA ensures that the Markov chain has the correct stationary distribution. The exponential convergence of Langevin diffusion and its discretization with Metropolis-Hastings was first established by \citet{roberts1996exponential}. This result highlights the convergence difference between unadjusted and Metropolis-Hastings-adjusted sampling algorithms. In a follow-up work, \citet{roberts1998optimal} studied the mixing time of MALA from an asymptotic viewpoint under high order smoothness assumptions. Non-asymptotic mixing time bounds of MALA were later established in \citet{bou2013nonasymptotic} with implicit dimension $\dims$ and error dependency. More recently, in \citet{dwivedi2018log, chen2020fast}, an non-asymptotic mixing time upper bound for MALA was derived in the log-concave sampling setting with a logarithmic dependency on $\tole^{-1}$. Specifically, they show that the MALA mixing time with an appropriate step-size choice is bounded from above by $\bigO\parenth{\max\braces{\condi^{3/2}d^{1/2},\condi \dims } \cdot\log(\tole^{-1})}$ under either a warm start or a Gaussian initialization. A warm start is an initial distribution where maximal ratio between the initial distribution and target distribution is bounded by a constant; see Section~\ref{sec:mc_mixing} for a formal definition.

This bound is further improved in \citet{lee2020logsmooth}. They showed that from a Gaussian start, the MALA mixing time is upper bounded by $\TO(\condi \dims)$ with a log-polynomial dependency on $\tole^{-1}$, using a better log-smooth gradient concentration inequality. \citet{mangoubi2019nonconvex} showed that it is possible to obtained a better dimension dependency of order $\bigO(\dims^{2/3})$ if additional assumptions on higher order smoothness of $\targetf$ are imposed. Later \citet{chewi2021optimal} showed in the log-smooth and strongly log-concave sampling setting that MALA mixes in $\TO(\condi^{3/2} \dims^{1/2})$ steps from a warm start, which provides the state-of-the-art dimension dependency for the MALA mixing time. After observing these upper bounds, it is natural to ask whether it is possible to establish a mixing time upper bound which shares the best condition number and dimension dependency of all.

Other than studying the mixing time upper bound directly, the best way to refute the possibility of a certain upper bound is to establish mixing time lower bounds. For Markov chains in discrete spaces, there are well-established generic techniques for mixing time lower bounds, such as geometric lower bounds, spectral lower bounds and log-Sobolev lower bounds \citep{ borgs2003statistical, wilson2004mixing, montenegro2006eigenvalues, diaconis1996logarithmic}; see Section~5 of \citet{montenegro2006mathematical} and Section~7 of \cite{levin2017markov} for more details. For general state spaces, although some results in discrete spaces can be directly extended to general state spaces, the lower bound literature is relatively more disorganized. For example, lower bounds have been established using diverse proof techniques for MCMC with parallel and simulated tempering \citep{woodard2009sufficient} and adaptive MCMC methods \citep{schmidler2011lower}. As for MALA, mixing time lower bounds are typically obtained from considering hard log-smooth and strongly log-concave distributions. \citet{chewi2021optimal} took an indirect approach to argue the tightness of their mixing time upper bound. Instead of establishing a mixing time lower bound directly, they constructed an adversarial target distribution and showed that its spectral gap upper bound matches their spectral gap lower bound in terms of dimension dependency $\TOmega(\dims^{1/2})$ from a warm start. From their proof, it is not straight-forward to check whether their condition number dependency is also tight. In the case of MALA under an exponentially-warm start, \citet{lee2021lower} constructed a hard initialization and a hard target distribution which resulted in a nearly-tight mixing time lower bound of order $\TOmega(\condi \dims)$. It remains unclear whether taking the smaller exponents on dimension and condition number in both bounds result in the lower bound for MALA from a warm start.

\subsection{Our contribution}

This paper makes two main contributions. First, we show that under a warm start MALA converges in $\TO(\condi \dims^{1/2})$ iterations in the log-smooth and strongly log-concave setting; see \mbox{Table~\ref{tab:1}} for a detailed comparison with previous work.
This bound improves upon previous results obtained by \citet{chewi2021optimal}, $\TO( \condi^{3/2}\dims^{1/2})$, in terms of dependence on $\condi$. Its linear condition number dependency also matches mixing time shown in \citet{lee2020logsmooth} for MALA under a Gaussian start. Consequently, we sharpen both dependencies in the upper bound $\TO(\max(\condi^{3/2} \dims^{1/2}, \condi \dims))$ obtained by \citet{dwivedi2018log,chen2020fast} when the chain has a warm initialization.

\begin{table}[ht]
\centerline{
\begin{tabular}{ c c c}
\specialrule{.1em}{.05em}{.05em}
  & MALA initialization & Mixing Time Upper Bound \\\hline
 \citet{dwivedi2018log}& \multirow{ 2}{*}{warm / $\Normal(x^*,\smoothparam^{-1}\Ind_\dims)$} &\multirow{ 2}{*}{ $\max\{\condi^{\frac32}d^{\frac12},\condi \dims\}$}\\
 \citet{chen2020fast}& &\\\hline
   \citet{lee2020logsmooth} & $\Normal(x^*,\smoothparam^{-1}\Ind_\dims)$  &$\condi\dims$\\\hline
   \citet{chewi2021optimal}& warm & $ \condi^{\frac32}\dims^{\frac12}$\\\hline
  this work & warm  & $\condi \dims^{\frac12}$\\
 \specialrule{.1em}{.05em}{.05em}
  \end{tabular}}
  \caption{Summary of $\tole$-mixing time in TV distance for MALA with a $\smoothparam$-log-smooth and $\scparam$-strongly log-concave target under a warm start or a Gaussian initialization $\Normal(x^*,{\smoothparam}^{-1}\Ind_\dims)$, where $x^*$ denotes the unique mode of the target density. These statements hide logarithmic factors in $\dims,\tole^{-1}$ and $\condi =\smoothparam/\scparam$.}
  \label{tab:1}
\end{table}

Second, we establish an explicit mixing time lower bound in $\chi^2$-divergence for reversible Markov chains in general state spaces, and apply this result to obtain a matching lower bound $\TOmega(\condi\dims^{1/2})$ for MALA under a warm start.  The lower bound proof works for reversible Markov chains in general state spaces and can be of independent interest.

In addition to the two theoretical contributions, we also provide numerical experiments to demonstrate the best choice of step size in terms of condition number dependency and dimension dependency in various settings.

\vspace{0.5em}
\noindent \textbf{Organization:} The remainder of the paper is organized as follows. In Section~\ref{sec:Background}, we provide background on Markov chain Monte Carlo, mixing time analysis, the set-up of log-concave sampling and an introduction to MALA and Hamiltonian dynamics. Section~\ref{sec:Main} is devoted to our main results on the minimax mixing time of MALA. In Section~\ref{sec:Upper}, we sketch the proof of our upper bound by establishing a high probability bound on the acceptance rate of MALA. In Section~\ref{sec:Lower}, we prove a spectral lower bound for general state space Markov chains, and use this result to obtain a matching lower bound for MALA. Section~\ref{sec:Experiments} consists of numerical experiments that we perform to verify the correctness of our theoretical results for a difficult target distribution. Proofs of main theorems and lemmas are in Section~\ref{sec:Proof}; the proof of other technical lemmas are deferred to appendices. Finally, we conclude our results and discuss future directions in Section~\ref{sec:Discussion}.

\vspace{0.5em}
\noindent \textbf{Notations:} We use $x_k$ to denote the $k$-th obtained sample from the Markov chain. We use $\vecnorm{x}{2}$ to denote the Euclidean norm of a $d$-dimensional vector $x$, $\coord{x}{i}$ to denote its $i$-th coordinate, and $\coord{x}{-i}$ to denote the vector without the $i$-th coordinate. The big-O notation $\bigO(\cdot)$ and big-Omega notation $\Omega(\cdot)$ are used to denote asymptotic bounds ignoring constants. For example, we write $g_1(x)=\bigO(g_2(x))$ if there exists a universal constant $\constc>0$ such that $g_1(x)\leq \constc g_2(x)$ when $x$ is large enough. Adding a tilde above these notations such as $\TO(\cdot)$, $ \TOmega(\cdot)$ and $\TTheta(\cdot)$ denotes asymptotic bounds ignoring logarithmic factors for all symbols. We use $\text{poly(x)}$ to denote a polynomial of $x$.

\section{Background and problem set-up}\label{sec:Background}
In this section, we first introduce the background needed for carrying out the mixing time analysis of Markov chains. Then we set up our theoretical framework by defining log-smoothness, strongly log-concavity and warmness in Section~\ref{sec:log_concave}. Finally, we formally introduce the Metropolis-adjusted Langevin algorithm (MALA) and Hamiltonian Monte Carlo (HMC) dynamics in Section~\ref{sec:MALA}.

\subsection{Markov chain and mixing time}\label{sec:mc_mixing}
Consider the problem of sampling from a distribution $\target$ on a general state space $\statespace$. A standard class of sampling methods is to construct an irreducible and aperiodic discrete-time Markov chain with an initial distribution $\initial$ and with $\target$ as its stationary distribution; see for example, the book by \citet{meyn2012markov} for details. To obtain samples from the target distribution within certain error tolerance, one simulates the chain for multiple steps, number of which is determined by the mixing time analysis.

Let the time-homogeneous Markov chain be defined on a general state space $(\statespace, \borel(\statespace))$ associated with a transition kernel $\kernel:\statespace \times \borel(\statespace) \rightarrow \real_{\geq 0}$, where $\borel(\statespace)$ denotes the Borel-sigma algebra on $\statespace$. The $k$-step transition kernel $\kernel^k$ is defined recursively by $\kernel^{k}(x,dy)=\int_{z\in\statespace}\kernel^{k-1}(x,dz)\kernel(z,dy)$. The Markov chain is assumed to be reversible, meaning
\begin{align*}
    \kernel(x,dy)\target(dx)  =  \kernel(y, dx)\target(dy) .
\end{align*}
We define the expectation and the variance of a function $f$ with respect to $\target$ as 
\begin{align*}
    \Exs_\target[f]\defn\int_{x\in\statespace}f(x)\target(dx), \ \Var_\target[f]\defn\Exs_\target[f^2]-\parenth{\Exs_\target[f]}^2.
\end{align*}
In the following, we define several notions related to the mixing time analysis of a Markov chain.

\vspace{1em}
\noindent\textbf{Transition Operators:}  We use $\transition$ to denote the transition operator of the Markov chain as
\begin{align}
  \transition(\mu)(S) = \int_{y\in \statespace} \mu(dy)\kernel(y, S), \ \ \forall S\in\borel(\statespace).
\end{align}
When $\mu$ is the probability distribution of the current state of the Markov chain, $\transition(\mu)$ denotes the distribution at the next state of the chain. We use $\initial$ and $\mu_n$ to denote the initial distribution and the $n$-step distribution of the Markov chain, that is, $\mu_{n}=\transition^n(\mu_0)$.

\vspace{1em}
\noindent\textbf{Mixing Time:}
In this paper we consider two ways to quantify the mixing time. One is based on the total variation (TV) distance, and the other is based on the $\chi^2$-divergence. Let $\mu$ be a probability distribution. Its $\mathcal{L}_p$-divergence with respect to the target distribution $\target$ is defined as
\begin{subequations}
\begin{align}
    \pdist{p}{\mu}{\target}:=\parenth{\int_{x\in\statespace}\abss{\frac{d\mu}{d\target}(x)-1}^p\target(dx)}^{\frac1p}.
\end{align}
 For $p=1$, we get the total variation distance $\tvdist{\mu}{\target}=\pdist{1}{\mu}{\target}/2$. For $p=2$, $\chidist{\mu}{\target}$ corresponds to the $\chi^2$-divergence. Note that the $\chi^2$-divergence can be controlled by the total variation distance if the two distributions have bounded ratio. Specifically, if there exists $\warmparam\geq1$ such that $\mu(S)/\target(S)\leq \warmparam$ for all $S\in\borel(\statespace)$, we have $\chidist{\mu}{\target}^2\leq 2\warmparam\cdot \tvdist{\mu}{\target}$. With the $\mathcal{L}_p$-divergence, the $\mathcal{L}_p$ mixing time of the Markov chain with initial distribution $\initial$ is defined as
\begin{align}
    \pmix{p}{\tole}{ \initial}=\inf\braces{n\in\mathbb{N}\ \big|\ \pdist{p}{\mu_n}{\target}\leq\tole}.
\end{align}
\end{subequations}
The mixing time in total variation distance is denoted by $\tvmix{\tole}{\initial}$ similarly.

\vspace{1em}
\noindent\textbf{Dirichlet form:} The mixing property of Markov chain is closely related to its Dirichlet form introduced as follows. Let $L_2(\target)$ be the space of square integrable functions under the density $\target$. We define the $L_2$-norm on $L_2(\target)$ by
\begin{align}
    \vecnorm{f}{2,\target}^2 = \int_{x\in \mathcal X}f(x)^2 \target(dx),
\end{align}
 The \textit{Dirichlet form} $\dirichlet_\kernel : L_2(\target)\times L_2(\target) \rightarrow \real$ associated with the transition kernel $\kernel$ is given by
\begin{align}
    \dirichlet_\kernel(g,h) = \frac 12 \int_{x,y\in \statespace^2}(g(x)-h(y))^2 \kernel(x,dy)\target(dx).
\end{align}

\vspace{1em}
\noindent\textbf{$\condS$-Conductance: } Since it is difficult to compute the spectral gap for a specific Markov chain, conductance is usually used as an alternative for analysis. For scalar $\condS\in(0, 1/2)$, we define the $\condS$-conductance as
\begin{align}
    \conductance_\condS = \inf_{\targetdistri(S)\in(\condS,1/2]}\frac{\int_S\mathcal T_x(S^c)\target(dx)}{\targetdistri(S)-\condS}.\label{eq:s_conductance}
\end{align}
In this definition, $\transition_x$ is the shorthand for $\transition(\delta_x)$, the transition distribution at $x$, where $\delta_x$ denotes the Dirac distribution at $x$. We have $\transition_x(\cdot)=\kernel(x,\cdot)$ by definition.

\vspace{1em}
\noindent\textbf{Lazy chain:} We say that a Markov chain is $\lazy$-lazy if for each iteration the chain stays in the same state with probability at least $\lazy$. Since laziness only slows down the convergence rate by a constant factor, we study $1/2$-lazy Markov chains in this paper for convenience of theoretical analysis. We use $\transitionbf_x$ to denote the transition distribution before applying the lazy step. By definition, we have
\begin{align}\label{eq:lazy}
    \transition_x(S)=\frac12 \delta_x(S)+\frac12\transitionbf_x(S), \ \ \ \forall S\in\borel(\statespace).
\end{align}

\subsection{Log-concave sampling under a warm start}\label{sec:log_concave}
From now on we assume $\statespace=\reald$ unless otherwise specified. A differentiable function $f$ on $\reald$ is said to be $\smoothparam$-smooth and $\scparam$-strongly convex if
\begin{subequations}
\begin{align}
    &f(y)\leq f(x)+\gradf(x)^\top(y-x)+\frac{\smoothparam}{2}\vecnorm{x-y}{2}^2\ \ \  \text{ for all }x,y\in\real^\dims
\end{align}
and
\begin{align}
    &f(y)\geq f(x)+\gradf(x)^\top(y-x)+\frac{\scparam}{2}\vecnorm{x-y}{2}^2\ \ \  \text{ for all }x,y\in\real^\dims.
\end{align}
\end{subequations}
The condition number $\condi$ of such function $f$ is defined as $\condi:=\smoothparam/\scparam$. For an $\scparam$-strongly convex function, its global minimum is uniquely defined. We denote the global minimum of $f$ by $x^*\defn {\text{argmin}}_{{x\in\real^\dims}}f(x)$. The target distribution $\target$ is said to be $\smoothparam$-log-smooth and $\scparam$-strongly log-concave if it admits a density $\target(x)\propto\exp(-\targetf(x))$, and $\targetf$ is $\smoothparam$-smooth and $\scparam$-strongly convex.

\vspace{0.5em}
\noindent\textbf{Warmness:} We say that the initial distribution $\initial$ is $\warmparam$-warm if it satisfies
\begin{align}
    \sup_{S\in\borel(\reald)}\frac{\initial(S)}{\targetdistri(S)}\leq \warmparam. 
\end{align}
As the warmness parameter $\warmparam$ decreases, the initial distribution is closer to the target and the task of sampling becomes easier. For a sequence of target distributions, we say that a corresponding sequence of $\mu_0$ is constant-warm, if $\log(\warmparam)$ is a polylogarithmic function of the condition number $\condi$ and the dimension $\dims$ of the target distribution. If $\warmparam$ is exponential in $\condi$ or $\dims$, we say that $\mu_0$ is exponentially-warm. In this paper, when we say ``a warm start", we refer to the case of a constant-warm initialization. One practical choice of the initial distribution is the Gaussian initialization $\initial=\Normal(x^*,\smoothparam^{-1}\Ind_\dims)$ \citep{dwivedi2018log, lee2020logsmooth}. It is exponentially-warm, as shown in \citet{dwivedi2018log} that the warmness of this Gaussian start satisfies $\warmparam\leq \condi^{\dims/2}$. In practice, however, a (constant-)warm start is not always available. Despite this limitation, we focus on the case of warm starts in the interest of theoretical understanding of the MALA algorithm, and comparing mixing time under a warm start with an exponentially-warm start in previous papers~\citep{dwivedi2018log, chen2020fast, lee2020logsmooth}.

\subsection{MALA and HMC dynamics}\label{sec:MALA}

Metropolis-adjusted Langevin algorithm (MALA) was original proposed by~\citet{besag1994comments} and its properties were examined in detail by~\citet{roberts1996exponential}. Given a state $x_k$ at the $k$th iteration, MALA proposes a new state $y_{k+1}\sim\Normal(x_k-h\gradf(x_k),2h\Ind_\dims)$. Then it decides to accept or reject $y_{k+1}$ using a Metropolis-Hastings correction; see Algorithm~\ref{alg:mala}. We use $\proposal_x=\Normal(x-h\gradf(x),2h\Ind_\dims)$ to denote the proposal kernel of MALA at $x$.

\begin{algorithm}[ht]
  \SetAlgoLined
  \KwIn{Initial point $x_0$ from a starting distribution $\initial$, step size $h$, number of steps $n$}
  \KwOut{Sequence of samples $x_1,x_2,\ldots, x_n$}
  \For{$k=0,1,\ldots, n-1$}{
   Draw from the proposal distribution $y_{k}\sim\Normal(x_k-h\gradf(x_k),2h\Ind_d)$\;
   Compute the acceptance rate $\alpha_k\leftarrow\min\braces{\dfrac{\exp\parenth{-f(y_{k})-\vecnorm{{}x_{k}-y_{k}+h\gradf(y_{k})}{2}^2/4h}}{\exp\parenth{-f(x_k)-\vecnorm{y_{k}-x_k+h\gradf(x_k)}{2}^2/4h}},1}$\;
   Draw $u\sim \text{Unif}[0,1]$\;
    \eIf{$u<\alpha_k$}{
      Accept the proposal: $x_{k+1}\leftarrow y_{k}$\;
      }{
      Reject the proposal: $x_{k+1}\leftarrow x_k$\;
      }
    }
  \caption{Metropolis-Adjusted Langevin Algorithm (MALA)}
  \label{alg:mala}
\end{algorithm}
The MALA algorithm has a close connection to the Hamiltonian Monte Carlo (HMC) sampling algorithm from the physics literature, popularized in statistics by \citet{neal2011mcmc}. Define the Hamiltonian function $H:\real^d \times \real^d\rightarrow \real$ as
\begin{align}
    H(q, p) = f(q)+\frac12\|p\|_2^2.
\end{align}
The following continuous Hamiltonian dynamics describes the trajectory of $(q_t, p_t)\in\real^d\times\real^d$ for $t \geq 0$. Starting from the initial state $(q_0,p_0)$, the pair $(q_t,p_t)$ satisfies
\begin{subequations}
\begin{align}
    \frac{dq_t}{dt}&=\frac{\partial H}{\partial p}(p_t,q_t)=p_t\\
    \frac{dp_t}{dt}&=-\frac{\partial H}{\partial q}(p_t,q_t)=-\gradf(q_t).
\end{align}
\end{subequations}
Note that the Hamiltonian is preserved throughout the continuous Hamiltonian dynamics $\frac{dH(q_t, p_t)}{dt} = 0$. The solution of the continuous Hamiltonian dynamics satisfies, for $t\in [0,\step]$,
\begin{subequations}
\begin{align}
    q_t&=q_0+tp_0-\int_0^t\int_0^s\gradf(q_\tau)d\tau ds\label{eq:HMC_q}\\
    p_t&= p_0 -\int_0^t \gradf(q_s)ds.\label{eq:HMC_p}
\end{align}
\end{subequations}
In practice, it is difficult to obtain an explicit formula for the solution $(q_t, p_t)$. To discretize the continuous Hamiltonian dynamics,  \citet{neal2011mcmc} proposed to use the leapfrog or St\"{o}rmer-Verlet discretization, known as the Hamitonian Monte Carlo (HMC) algorithm. Starting from $(\cq_0, \cp_0)$, a single leapfrog step satisfies
\begin{subequations}
\begin{align}
    \hp_{\step/2} &= \cp_0 -\frac \step 2\nabla \targetf(\cq_0)\\
    \hq_\step &= \cq_0 + \step \hp_{\step/2}\label{eq:leapfrog_q}\\
    \hp_\step &= \hp_{\step/2}-\frac \step2 \nabla \targetf(\hq_\step).\label{eq:leapfrog_p}
\end{align}
\end{subequations}
Mathematically, starting from an initial point $q_0=x_0$ and letting $p_0\sim\Normal(0,\Ind_d)$, the proposal of MALA is the same as $\hat q_\step$ in Equation~\eqref{eq:leapfrog_q}, where the original step size $h$ in MALA and the step size $\step$ in HMC are related through
\begin{align}
    h=\frac{1}{2}\step^2.
\end{align}
With the above observation, one step of MALA can be regarded as one iteration of HMC with a single leapfrog step. This property allows us to study the acceptance rate of MALA via studying the acceptance rate of HMC with leapfrog discretization.


\section{Main results}\label{sec:Main}

In this section, we state our main results on the minimax mixing time of MALA. Let $\mathcal{A}_{\dims, \smoothparam, \scparam} = \braces{\mu \propto e^{-f} \mid f: \real^\dims \mapsto \real, \text{ twice-differentiable}, \smoothparam\text{-smooth and } \scparam\text{-strongly-concovex}}$ denote the collection of all $\dims$-dimensional $\smoothparam$-log-smooth and $\scparam$-strongly log-concave distributions. Define the \textit{minimax mixing time} of MALA in total variation distance as
\begin{align}
    \minimaxmixingtime(\dims,\smoothparam ,\scparam ,\tole,\warmparam):=\min_{h>0}\max_{\target\in \mathcal{A}_{\dims,\smoothparam ,\scparam }} \max_{\warmparam\text{-warm }\initial} \tvmix{\tole}{\initial}.
\end{align}
This minimax definition considers the best step size that minimizes the worst time mixing among all $\smoothparam$- log-smooth and $\scparam$-strongly log-concave distributions under an $\warmparam$-warm initialization. Our first theorem provides an upper bound of the minimax mixing time.

\begin{theorem}\label{thm:main}
Let $\target$ be a $\dims$-dimensional $\smoothparam$-log-smooth and $\scparam $-strongly log-concave target distribution. Define the condition number $\condi=\smoothparam /\scparam $. There exist universal constants $\constUniversal, \constUniversall, \constUniversalll >0$, such that for any $\warmparam$-warm initial distribution $\initial$ and any error tolerance $\tole\in(0,1)$, the $\tole$-mixing time of $1/2$-lazy MALA with step size
\begin{align}
h=\frac{\constUniversal}{\smoothparam \sqrt{\dims} \log^2 \parenth{\max\braces{ \condi, \dims,\frac{\warmparam}{\tole}, \constUniversalll}}}
\end{align}
is bounded by
\begin{align}
\tvmix{\tole}{\initial}\leq \constUniversall \condi \sqrt{\dims} \log^3 \parenth{\max\braces{\condi, \dims, \frac{\warmparam}{\tole}, \constUniversalll}}.
\end{align}
\end{theorem}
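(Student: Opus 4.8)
The plan is to control $\tvmix{\tole}{\initial}$ through the $\condS$-conductance $\conductance_{\condS}$ of the $1/2$-lazy MALA chain. Since MALA is reversible and, after lazification, the Lov\'asz--Simonovits machinery applies, from a $\warmparam$-warm start one has $\tvdist{\mu_n}{\target}\le \warmparam\condS+\warmparam(1-\conductance_{\condS}^2/2)^n$, hence $\tvmix{\tole}{\initial}\lesssim \conductance_{\condS}^{-2}\log(\warmparam/\tole)$ upon choosing $\condS\asymp \tole/\warmparam$. The whole problem then reduces to establishing $\conductance_{\condS}\gtrsim \sqrt{\scparam}\,\step$ up to polylogarithmic factors: substituting $\step^2=2h=\Theta\!\big(1/(\smoothparam\sqrt{\dims}\,\log^2(\cdot))\big)$ yields $\conductance_{\condS}^{-2}\lesssim \condi\sqrt{\dims}\,\log^2(\cdot)$, and multiplying by the $\log(\warmparam/\tole)$ factor from the reduction produces the claimed $\condi\sqrt{\dims}\,\log^3(\max\{\condi,\dims,\warmparam/\tole,\constUniversalll\})$ bound.

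To lower bound $\conductance_{\condS}$ I would combine two ingredients in the usual way. First, an isoperimetric inequality for the target: because $\target$ is $\scparam$-strongly log-concave it satisfies a Gaussian-type isoperimetric inequality, so for any measurable $A$ and small $\Delta>0$, $\target(A_\Delta\setminus A)\gtrsim \sqrt{\scparam}\,\Delta\,\min\{\target(A),\target(A^c)\}$, where $A_\Delta$ is the Euclidean $\Delta$-neighborhood of $A$. Second, a one-step overlap bound: there is a radius $\Delta\asymp\step$ and a universal constant $\rho>0$ such that whenever $x,y$ lie in a high-probability ``good region'' $\mathcal R$ (a ball around $x^*$ of $\target$-measure at least $1-\condS$) and $\vecnorm{x-y}{2}\le\Delta$, the lazy transition distributions satisfy $\tvdist{\transition_x}{\transition_y}\le 1-\rho$. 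Given these, the standard argument --- partition any test set $S$ with $\target(S)\in(\condS,1/2]$ into points whose $\transitionbf$-mass stays mostly inside $S$, mostly outside $S$, or is split, use the overlap bound to separate the first two classes, bound the third via isoperimetry on $\target$ restricted to $\mathcal R$, and absorb the $\le\condS$ exterior of $\mathcal R$ --- gives $\conductance_{\condS}\gtrsim \rho\sqrt{\scparam}\,\Delta\gtrsim \sqrt{\scparam}\,\step$, up to constants.

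The crux, and the main obstacle, is the one-step overlap bound, which in turn rests on a high-probability lower bound on the MALA acceptance rate $\alpha$. Since the proposal $\proposal_x=\Normal(x-h\gradf(x),2h\Ind_\dims)$ moves bulk points by only $\bigO(\step)$ and has per-coordinate scale $\sqrt{2h}=\step$, the Gaussians at nearby $x,y$ already overlap in total variation up to $1-\Omega(1)$; the danger is that the accept--reject correction destroys this overlap when $\alpha$ is small. I would handle $\alpha$ via the Hamiltonian reformulation of Section~\ref{sec:MALA}: writing the proposal as a single leapfrog step with momentum $p_0\sim\Normal(0,\Ind_\dims)$, the negative log acceptance ratio equals the leapfrog energy error $\big(H(\hat q_\step,\hat p_\step)-H(q_0,p_0)\big)_+$. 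I would expand this against the energy-preserving continuous dynamics \eqref{eq:HMC_q}--\eqref{eq:HMC_p}, bounding the discretization error using $\smoothparam$-smoothness of $\targetf$ by a sum of terms of order $\step^3$ times polynomials in $\vecnorm{\gradf(q_0)}{2}$, $\vecnorm{p_0}{2}$ and intermediate gradients. The new concentration estimate then shows that, for $x\in\mathcal R$ and over the randomness of $p_0$ and the proposal, this energy error is $\bigO(1)$ (hence $\alpha\ge$ constant) except on a set of probability $\mathrm{poly}(\condi,\dims,\warmparam/\tole)^{-1}$, precisely when $\step=\Theta\!\big(\smoothparam^{-1/2}\dims^{-1/4}/\log(\cdot)\big)$, i.e. $h=\Theta\!\big(\smoothparam^{-1}\dims^{-1/2}/\log^2(\cdot)\big)$. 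This larger admissible step size --- compared to earlier analyses that paid an extra $\sqrt{\condi}$ --- is exactly what upgrades the condition-number dependence from $\condi^{3/2}$ to $\condi$, and isolating the dominant contribution to the energy error and bounding the tails of the relevant quadratic and cubic forms in the Gaussian momentum, uniformly over the bulk of $\target$, is where the real work lies.

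Finally I would assemble the pieces in order: (i) the acceptance-rate concentration yields the overlap bound with $\Delta\asymp\step$ and constant $\rho$ on the good region $\mathcal R$; (ii) strong-log-concavity isoperimetry together with the overlap bound gives $\conductance_{\condS}\gtrsim \sqrt{\scparam}\,\step$ up to polylogarithmic factors for $\condS\asymp\tole/\warmparam$; (iii) the conductance-to-mixing reduction gives $\tvmix{\tole}{\initial}\lesssim \conductance_{\condS}^{-2}\log(\warmparam/\tole)$; and (iv) substituting the stated value of $h$ and tracking the logarithmic factors from the isoperimetric profile, the tail probability, and the warmness produces $\tvmix{\tole}{\initial}\le \constUniversall\,\condi\sqrt{\dims}\,\log^3(\max\{\condi,\dims,\warmparam/\tole,\constUniversalll\})$ for suitable universal constants $\constUniversal,\constUniversall,\constUniversalll$.
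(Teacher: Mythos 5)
Your proposal follows essentially the same route as the paper: reduce the mixing time to the $\condS$-conductance via the Lov\'asz--Simonovits argument, lower-bound the conductance by combining the isoperimetric inequality for strong log-concavity with a one-step TV-overlap bound, and obtain that overlap bound from a high-probability concentration estimate on the leapfrog energy error (i.e.\ the MALA acceptance rate) using the Hamiltonian/energy-conservation reformulation and the step-size scaling $\step^2 \asymp 1/(\smoothparam\sqrt{\dims})$. One small imprecision: the paper's good region $\goodqsetcond$ is \emph{not} a ball around $x^*$ (it is the set of $q_0$ for which the acceptance rate is bounded below in high probability over the momentum, and is not convex, which is precisely why the paper pays extra logarithmic factors in $\tole$ relative to \citet{dwivedi2018log}); but since your argument only uses $\target(\mathcal R)\geq 1-\condS$ and not convexity, this does not affect the structure of the proof.
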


See Section~\ref{proof:main} for the proof. It is a direct corollary of Theorem~\ref{thm:general} which states a mixing time upper bound assuming $\target$ is log-smooth and log-concave but not necessarily strongly log-concave, and satisfies an isoperimetric inequality. The proof mainly relies on bounding the $\condS$-conductance $\conductance_\condS$ from below so that we can control $\tvdist{\mu_n}{\target}$. To do so, we develop a new concentration bound on the acceptance rate of MALA in Lemma~\ref{lem:acceptance rate}. By Theorem~\ref{thm:main}, we obtain an upper bound on the minimax mixing time
\begin{align}
    \minimaxmixingtime(\dims,\smoothparam ,\scparam ,\tole,\warmparam)\leq \constUniversall \condi \sqrt{\dims} \log^3 \parenth{\max\braces{ \condi,\dims,  \frac{\warmparam}{\tole}, \constUniversalll}}.
\end{align}

Theorem \ref{thm:main} provides an explicit choice for the step size $h$, which ensures that approximately $\TO\parenth{\condi \dims^{1/2}}$ iterations is enough to achieve an $\tole$ error tolerance in total variation distance. This result improves previous upper bound in terms of either the condition number dependency \citep{chewi2021optimal} or the dimension dependency \citep{dwivedi2018log}. Note that recently \citet{lee2021structured} developed a reduction framework which can improve the condition number dependency from polynomial to linear for sampling algorithms. This technique can improve the mixing time obtained in \citet{chewi2021optimal} to $\TO(\condi\dims^{1/2})$, however, the resulting algorithm will be different from the original MALA and it is not trivial to implement the algorithm in practice. 

In terms of dependencies on the error tolerance parameter $\tole$ and the warmness parameter $\warmparam$, the bound above may not be tight. \cite{dwivedi2018log, chen2020fast} prove an upper bound with dependency $\log(1/\tole)$ and $\log\log(\warmparam)$ for all initial distributions. Under an exponentially-warm start, additional dimension dependency will be introduced by the $\log(\warmparam)$ term in Theorem~\ref{thm:main}, while it can be ignored under a warm start. We leave future work to achieve better dependencies on these two parameters.

The second theorem provides a matching lower bound on the mixing time.
\begin{theorem}\label{thm:main2}
Under the same assumptions in Theorem~\ref{thm:main}, further assume that $3\leq \condi\leq \alpha\cdot \dims^\beta$ for some $\alpha,\beta>0$. There exists an integer $\integer_{\alpha,\beta}>0$ such that for any $\dims>\integer_{\alpha,\beta}$, $\warmparam\geq 12$ and $\tole\in(0,1)$, we have
\begin{align}
    \minimaxmixingtime(\dims,\smoothparam ,\scparam ,\tole,\warmparam)\geq \frac{\constUniversallll\condi\sqrt{\dims}}{\max\braces{\log\condi, \log \dims}^3}\log\parenth{\frac \constUniversalllll \tole},
\end{align}
where $\constUniversallll,\constUniversalllll>0$ are universal constants.
\end{theorem}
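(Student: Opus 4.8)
The plan is to reduce the lower bound to a spectral-gap estimate and then to exhibit a hard target whose MALA chain has a small spectral gap for \emph{every} step size. The $1/2$-lazy MALA kernel $\kernel$ with step size $h$ is reversible with respect to its stationary distribution, so the spectral-gap lower bound for reversible Markov chains on general state spaces established in Section~\ref{sec:Lower} applies: it produces a $\warmparam$-warm initialization $\initial$ (with $\warmparam$ an absolute constant, which is the origin of the hypothesis $\warmparam\geq 12$) for which $\tvmix{\tole}{\initial}\gtrsim \gamma^{-1}\log(c/\tole)$, where $\gamma$ is the spectral gap of $\kernel$ in $L_2(\target)$ and $c>0$ is absolute. (The general argument goes through the $\chi^2$-divergence, started at a bounded proxy for the slowest eigenfunction, and then converts back to total variation via $\chidist{\mu}{\target}^2\leq 2\warmparam\tvdist{\mu}{\target}$, which is where $\warmparam$ enters.) It therefore suffices to construct, for every $h>0$, a target $\target^\star=\target^\star_h\in\mathcal{A}_{\dims,\smoothparam,\scparam}$ whose $1/2$-lazy MALA chain has spectral gap at most $C\max\{\log\condi,\log\dims\}^{3}/(\condi\sqrt{\dims})$ for an absolute constant $C$; taking the minimum over $h$ and using $3\leq\condi\leq\alpha\dims^\beta$ together with $\dims>\integer_{\alpha,\beta}$ (to bound $\log\condi$ by a $\beta$-dependent multiple of $\log\dims$ and to validate the logarithmic estimates) then yields Theorem~\ref{thm:main2}.

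To bound $\gamma(\kernel)$ I use the Rayleigh quotient of a linear test function $g(x)=\coord{x}{1}$ along a pure Gaussian direction of curvature $\scparam$, so that $\Var_{\target^\star}[g]=\scparam^{-1}$. For the lazy chain, $\dirichlet_{\kernel}(g,g)=\tfrac14\Exs_{\target^\star,\transitionbf}\big[(g(y)-g(x))^2\mathbf{1}\{\text{accept}\}\big]$, and since the proposed increment of $g$ is Gaussian with variance $2h$ and mean $-\scparam h\coord{x}{1}$, a Cauchy--Schwarz bound gives $\dirichlet_{\kernel}(g,g)\lesssim h\sqrt{\bar a(h)}$ for $h\lesssim\scparam^{-1}$ (with a cruder bound for larger $h$, where acceptance is anyway negligible), where $\bar a(h)=\Exs_{\target^\star}[\text{acceptance probability from }x]$. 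Hence $\gamma(\kernel)\lesssim h\scparam\sqrt{\bar a(h)}$. Now split on $h$: if $h\leq c_0/(\smoothparam\sqrt{\dims}\log^{2}(\cdot))$ (the scale of Theorem~\ref{thm:main}, with $\log(\cdot)$ the same logarithmic term), then $\bar a(h)\leq 1$ already gives $\gamma\lesssim h\scparam\leq c_0/(\condi\sqrt{\dims}\log^2(\cdot))$, which suffices; here one may take $\target^\star=\Normal(0,\scparam^{-1}\Ind_\dims)$, since its Hessian is $\scparam\Ind_\dims\preceq\smoothparam\Ind_\dims$. If $h> c_0/(\smoothparam\sqrt{\dims}\log^{2}(\cdot))$, I need $\bar a(h)$ to be small enough that $h\scparam\sqrt{\bar a(h)}$ is again of order $1/(\condi\sqrt\dims)$ up to the stated logarithmic factors.

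The remaining task — and the crux — is to design $\target^\star$ so that MALA's acceptance probability collapses once $h$ exceeds the scale $1/(\smoothparam\sqrt\dims)$ (up to logarithmic factors), while $\target^\star$ remains $\smoothparam$-log-smooth, $\scparam$-strongly log-concave, and retains a pure Gaussian direction of curvature $\scparam$. This is exactly the converse of the acceptance-rate concentration bound of Lemma~\ref{lem:acceptance rate}: instead of proving that the leapfrog/Hamiltonian discretization error is small with high probability for small $h$, one shows it is bounded below on a region of $\target^\star$-mass bounded away from zero whenever $h$ is above this threshold, so that $\bar a(h)$ is exponentially small there. The key quantitative point is that this degradation must happen at the $\sqrt\dims$ scale rather than at the $\dims^{1/3}$ scale exhibited by a Gaussian — or by any product target — and without incurring a spurious factor of $\sqrt\condi$; this is what pins the lower bound at $\condi\sqrt\dims$ (improving over the $\condi^{3/2}\sqrt\dims$ one obtains from cruder constructions). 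Assembling: $\gamma(\kernel)\lesssim \max\{\log\condi,\log\dims\}^{3}/(\condi\sqrt\dims)$ for every $h$, so $\minimaxmixingtime(\dims,\smoothparam,\scparam,\tole,\warmparam)=\min_{h}\max_{\target}\max_{\initial}\tvmix{\tole}{\initial}\geq \frac{\constUniversallll\condi\sqrt\dims}{\max\{\log\condi,\log\dims\}^{3}}\log\!\big(\tfrac{\constUniversalllll}{\tole}\big)$.

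The main obstacle is precisely the construction in the previous paragraph: producing a strongly-log-concave, log-smooth distribution whose leapfrog-step Hamiltonian error is large — with high probability over a set of constant mass — exactly when $h\gtrsim 1/(\smoothparam\sqrt\dims)$ up to logs, since the natural candidates (Gaussians, products of one-dimensional potentials) only force the weaker $\dims^{1/3}$ threshold or pay an extra $\sqrt\condi$. Everything else — the spectral-gap test-function bound, the small-$h$ Gaussian case, verifying the warm start supplied by the general theorem is $\warmparam$-warm for $\target^\star$ with $\warmparam\geq 12$, and the spectral-gap-to-mixing-time conversion — is comparatively routine once the general lower bound of Section~\ref{sec:Lower} is in hand.
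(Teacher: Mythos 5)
You have the right high-level architecture---reduce the minimax lower bound to a Rayleigh-quotient estimate via the $\chi^2$ machinery of Section~\ref{sec:Lower}, then split on step size and exhibit a hard target---and this matches the paper's proof (Theorem~\ref{thm:var}, Corollary~\ref{corollary:gap}, Lemma~\ref{lem:worst}). But the proof is incomplete exactly where you flag it: the large-step-size construction is never supplied, and your intuition that ``products of one-dimensional potentials only force the $\dims^{1/3}$ threshold'' is wrong. The paper's worst case, $f_\deltas(x)=\frac{\smoothparam}{2}\sum_{i\leq\dims}\coord{x}{i}^2-\frac{1}{2\dims^{1/2-2\deltas}}\sum_{i\leq\dims}\cos\parenth{\dims^{1/4-\deltas}\smoothparam^{1/2}\coord{x}{i}}+\frac{\scparam}{2}\coord{x}{\dims+1}^2$, \emph{is} a product of one-dimensional potentials. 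The trick is that the cosine perturbation has third derivative of order $\smoothparam^{3/2}\dims^{1/4-\deltas}$, growing with $\dims$, so the per-coordinate leapfrog error is larger than a smooth Gaussian's; summed over the first $\dims$ coordinates, the Hamiltonian discretization error becomes order one already when $h\sim 1/(\smoothparam\sqrt\dims)$ rather than $\dims^{-1/3}$ (Lemma~\ref{lem:pi1}). Meanwhile the single unperturbed coordinate with curvature $\scparam$ carries the slow Gaussian mode and does not enter the acceptance-rate calculation at all, which is exactly why the construction yields $\condi\sqrt\dims$ rather than $\condi^{3/2}\sqrt\dims$. This separation of roles---perturb only the stiff directions, keep one pure soft direction---is the idea your sketch is missing, and it is essentially the entire content of the lower bound.

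A secondary gap: Corollary~\ref{corollary:gap} does not convert the operator spectral gap $\gamma(\kernel)$ into a mixing-time lower bound. It lower-bounds the mixing time from a \emph{specific} $\initial$ in terms of $\dirichlet_\kernel(h_0,h_0)/\chidist{\initial}{\target}^2$ with $h_0=d\initial/d\target$, which must be nonnegative, $\target$-integrate to one, and be bounded by $\warmparam$ to certify warmness. Your linear test function $g(x)=\coord{x}{1}$ fails all three, so the small-$h$ branch as written does not produce a warm initialization; the paper instead uses a bounded triangular density ratio in the last coordinate, which is $1/Z$-Lipschitz and yields $\dirichlet_\kernel(h_0,h_0)\leq 18\scparam h\cdot\chidist{\initial}{\target}^2$ (Lemma~\ref{lem:worst}\ref{state:a}). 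For large $h$, the paper bounds the Dirichlet form of the normalized indicator of a constant-mass set $F_1\times F_2$ on which the acceptance probability is exponentially small (Lemmas~\ref{lem:pi1} and~\ref{lem:pi2}), rather than passing through an averaged acceptance rate $\bar a(h)$ as in your Cauchy--Schwarz sketch.
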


See Section~\ref{proof:main2} for the proof. In the proof we first establish a spectral gap based mixing time lower bound in $\chi^2$-divergence for reversible Markov chains on general state spaces. Then we use it to analyze the mixing time for the worst-case log-concave distributions with specifically designed warm initializations. The worst-case distributions are perturbed Gaussian distributions adapted from \citet{lee2020logsmooth} and \citet{chewi2021optimal}. Note that the warmness parameter $\warmparam$ does not appear in the lower bound, as we implicitly assume a warm start. \citet{chewi2021optimal} bounded the spectral gap of the perturbed Gaussian distribution to suggest the optimal choice of step size. However, a rigorous argument for relating the spectral gap and the mixing time for continuous state Markov chains was missing. Note that our lower bound result is not directly comparable to that in \citet{lee2021lower}, because they assume an exponentially-warm initialization with the warmness growing exponentially in dimension.

The upper bound in Theorem~\ref{thm:main} and the lower bound in Theorem~\ref{thm:main2} match perfectly after ignoring logarithmic factors. That is, under a warm start, assuming that $3\leq\condi = \bigO(\text{poly}(\dims))$ and ignoring all logarithmic factors in $\dims, \condi, \warmparam$ and $\tole$, these two bounds match in terms of dimension and condition number dependency. We thus obtain a minimax mixing time of order $\TTheta\parenth{\condi\dims^{1/2}}$.


\subsection{Mixing time upper bound}\label{sec:Upper}
In this section we outline the key steps to obtain Theorem~\ref{thm:main}. In Section~\ref{sec:sketch}, we state Theorem~\ref{thm:general} which upper bounds the mixing time when $\target$ is log-concave and log-smooth, and satisfies an isoperimetric inequality. Theorem~\ref{thm:main} immediately follows from this theorem. In Section~\ref{sec:accept_rate}, we analyze the acceptance rate of MALA and establish a new high probability concentration bound for it. The new high probability concentration bound is the key to the improved acceptance rate analysis and is also the main reason behind the improved dimension and condition number dependency in Theorem~\ref{thm:main}.

\subsubsection{Mixing time upper bound via isoperimetric inequality}\label{sec:sketch}
Instead of assuming the target distribution $\target$ to be both log-smooth and strongly log-concave, we consider a more general setting where we only assume that $\target$ is log-smooth and log-concave but not necessarily strongly log-concave, and $\target$ satisfies an isoperimetric inequality. A distribution $\target$ in $\real^\dims$ is said to satisfy an isoperimetric inequality with \textit{isoperimetric constant} $\isop(\target)$, if given any partition $S_1,S_2,S_3$ of $\real^\dims$, we have
\begin{align}
    \targetdistri(S_3)\geq\isop(\target)\cdot \dist(S_1,S_2)\cdot \targetdistri(S_1)\targetdistri(S_2),
\end{align}
where the distance between two sets $S_1$, $S_2$ is defined as $\dist(S_1,S_2)=\inf_{x\in S_1,y\in S_2}\vecnorm{x-y}{2}$. In particular, when $\target$ is $\scparam $-strongly log-concave, \citet{cousins2014cubic} (Theorem 4.4) proved that $\target$ satisfies an isoperimetric inequality with isoperimetric constant $\log2\cdot\sqrt{\scparam }$; See also \citet{ledoux2001concentration}. Thus an upper bound of mixing time based on the isoperimetric constant can be applied to strongly log-concave targets.

The following theorem provides a mixing time upper bound for MALA applied to log-concave and $\smoothparam$-log-smooth target distributions satisfying an isoperimetric inequality.

\begin{theorem}\label{thm:general}
Let $\target$ be a $\dims$-dimensional log-concave and $\smoothparam$-log-smooth distribution satisfying an isoperimetric inequality with isoperimetric constant $\isop(\target)$. There exist universal constants $\constUniversal, \constUniversall, \constUniversalll>0$, such that for any $\warmparam$-warm initial distribution $\initial$ and any error tolerance $\tole\in(0,1)$, the $1/2$-lazy MALA with step size
\begin{align}
 h & = \frac{\constUniversal}{\smoothparam \sqrt{\dims} \cdot \log^2\parenth{\max\braces{\dims, \frac{\smoothparam}{\isop(\target)^2},  \frac{\warmparam}{\tole}, \constUniversalll}}}
\end{align}
has mixing time given by
\begin{align}
    \tvmix{\tole}{\initial}\leq\constUniversall\cdot\max\braces{\frac{\smoothparam\sqrt{\dims}}{\isop(\target)^2} \cdot \log^3\parenth{\max\braces{\dims, \frac{\smoothparam}{\isop(\target)^2},  \frac{\warmparam}{\tole}, \constUniversalll}}, \log\parenth{\frac{2\warmparam}{\tole}}} .
\end{align}
\end{theorem}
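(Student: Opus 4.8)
The plan is to control $\tvdist{\mu_n}{\target}$ through the $\condS$-conductance of the $1/2$-lazy MALA chain. A Lov\'asz--Simonovits-type inequality (see, e.g., \citet{dwivedi2018log,chen2020fast}) states that for a $\warmparam$-warm start $\initial$ and any $\condS\in(0,1/2)$,
\begin{align*}
    \tvdist{\mu_n}{\target}\ \leq\ \warmparam\,\condS\ +\ \warmparam\parenth{1-\tfrac12\conductance_\condS^2}^{n}.
\end{align*}
Taking $\condS=\tole/(2\warmparam)$ kills the first term at level $\tole/2$, so the theorem reduces to proving a lower bound of the shape $\conductance_\condS\gtrsim\isop(\target)\sqrt{h}$ and then choosing $n\asymp\conductance_\condS^{-2}\log(2\warmparam/\tole)$; the outer maximum with $\log(2\warmparam/\tole)$ in the statement absorbs the degenerate regime in which $\conductance_\condS$ is already of constant order.

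The key ingredient is a one-step overlap estimate: there are universal constants $\constc_1,\constc_2>0$ and a ``good set'' $\Omega\subseteq\real^\dims$ whose complement has $\target$-mass as small as desired (quantitatively, below the threshold made explicit below) such that, writing $\Delta:=\constc_1\sqrt{h}$,
\begin{align*}
    x,y\in\Omega,\ \vecnorm{x-y}{2}\leq\Delta\quad\Longrightarrow\quad\tvdist{\transition_x}{\transition_y}\leq1-\constc_2.
\end{align*}
To obtain this, apply Lemma~\ref{lem:acceptance rate} with the prescribed step size $h$: it produces a set $\Omega$ of the desired mass on which $\Exs_{y\sim\proposal_x}[1-\alpha(x,y)]$ is at most a small constant, so rejection moves only a small amount of mass onto the atom at $x$, whence $\tvdist{\transitionbf_x}{\proposal_x}$ is small for every $x\in\Omega$. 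Moreover the proposal $\proposal_x=\Normal(x-h\gradf(x),2h\Ind_\dims)$ depends smoothly on $x$: the drift map $x\mapsto x-h\gradf(x)$ is $(1+h\smoothparam)$-Lipschitz with $h\smoothparam\leq1$ by the choice of $h$, and two isotropic Gaussians with common covariance $2h\Ind_\dims$ whose means are at distance $r$ have total variation distance at most a universal constant times $r/\sqrt{h}$; so choosing $\constc_1$ small enough makes $\tvdist{\proposal_x}{\proposal_y}$ small whenever $\vecnorm{x-y}{2}\leq\Delta$. Combining these with the $1/2$-laziness (which contributes at most $\tfrac12$ to $\tvdist{\transition_x}{\transition_y}$) yields the displayed overlap bound, e.g. with $\constc_2=\tfrac14$.

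Granting the overlap estimate, I run the standard conductance-via-isoperimetry argument. Fix $S$ with $\target(S)\in(\condS,1/2]$ and set $S_1=\braces{x\in S:\transition_x(S^c)<\constc_2/4}$ and $S_2=\braces{x\in S^c:\transition_x(S)<\constc_2/4}$. If $\target(S_1)<\tfrac12\target(S)$ then every point of $S\setminus S_1$ contributes at least $\constc_2/4$ to the escape flow, so $\int_S\transition_x(S^c)\,\target(dx)\geq\tfrac{\constc_2}{8}\target(S)$ and $\conductance_\condS\geq\constc_2/8$; symmetrically if $\target(S_2)<\tfrac12\target(S^c)$. Otherwise $\target(S_1)\geq\tfrac12\target(S)$ and $\target(S_2)\geq\tfrac12\target(S^c)\geq\tfrac14$, so, using that $\target(\Omega^c)$ is at most a small fraction of $\target(S)$, both $\target(S_1\cap\Omega)$ and $\target(S_2\cap\Omega)$ are bounded below (by $\tfrac14\target(S)$ and $\tfrac18$ respectively). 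The overlap estimate forces $\dist(S_1\cap\Omega,S_2\cap\Omega)\geq\Delta$: a closer pair $x\in S_1$, $y\in S_2$ would satisfy $\tvdist{\transition_x}{\transition_y}\geq\transition_x(S)-\transition_y(S)>1-\constc_2/2>1-\constc_2$, a contradiction. Applying the isoperimetric inequality to the partition $\bigl(S_1\cap\Omega,\,S_2\cap\Omega,\,\real^\dims\setminus((S_1\cap\Omega)\cup(S_2\cap\Omega))\bigr)$ gives $\target\bigl(\Omega\setminus(S_1\cup S_2)\bigr)\geq\isop(\target)\cdot\Delta\cdot\target(S_1\cap\Omega)\target(S_2\cap\Omega)-\target(\Omega^c)\gtrsim\isop(\target)\,\Delta\,\target(S)$, the last step using that $\target(\Omega^c)$ is below a small multiple of $\isop(\target)\,\Delta\,\target(S)$. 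Finally every $x\in\Omega\setminus(S_1\cup S_2)$ is either in $S$ with $\transition_x(S^c)\geq\constc_2/4$ or in $S^c$ with $\transition_x(S)\geq\constc_2/4$, so reversibility yields $\int_S\transition_x(S^c)\,\target(dx)\geq\tfrac{\constc_2}{8}\target\bigl(\Omega\setminus(S_1\cup S_2)\bigr)\gtrsim\isop(\target)\,\Delta\,\target(S)$, i.e. $\conductance_\condS\gtrsim\isop(\target)\,\Delta=\constc_1\isop(\target)\sqrt{h}$. Combining the cases, $\conductance_\condS\gtrsim\min\braces{1,\isop(\target)\sqrt{h}}$; substituting $h=\constUniversal/(\smoothparam\sqrt{\dims}\log^2(\cdots))$ into $n\asymp\conductance_\condS^{-2}\log(2\warmparam/\tole)$ and consolidating the logarithmic arguments gives exactly the bound in the theorem.

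The main obstacle is Lemma~\ref{lem:acceptance rate}, the new concentration bound for the MALA acceptance rate that underlies the good set $\Omega$; it is what permits the comparatively large step size $h\asymp1/(\smoothparam\sqrt{\dims})$ up to logarithmic factors, and its proof (comparing the leapfrog integrator with the energy-preserving continuous Hamiltonian dynamics) is the technical heart of the argument. A secondary obstacle is the quantitative bookkeeping in the last case above: converting $\target\bigl(\Omega\setminus(S_1\cup S_2)\bigr)$ into escape flow requires $\target(\Omega^c)$ to be below a threshold of order $\isop(\target)\sqrt{h}\cdot\condS$ (up to logarithmic factors), which dictates that the logarithm appearing in $h$ be taken over $\max\braces{\dims,\smoothparam/\isop(\target)^2,\warmparam/\tole,\constUniversalll}$; one then checks that the universal constants $\constc_1,\constc_2$ and $\constUniversal,\constUniversall,\constUniversalll$ can be chosen consistently.
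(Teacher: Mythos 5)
Your proposal is correct and follows essentially the same route as the paper's proof: a Lov\'asz--Simonovits $\condS$-conductance bound driven by Lemma~\ref{lem:acceptance rate}, a good set $\Omega$ (the paper's $\goodqsetcond$, defined by requiring $\Prob_{p_0\sim\Normal(0,\Ind_\dims)}((q_0,p_0)\in\goodqpset)\geq 7/8$), a triangle-inequality overlap estimate combining the acceptance-rate bound with the Gaussian TV formula, the same two-case conductance argument with the isoperimetric inequality, and the same bookkeeping constraints on $\deltaf$ of order $\condS$ and $\isop(\target)\sqrt{h}\cdot\condS$. The only cosmetic difference is the slightly loose Lipschitz bound $(1+h\smoothparam)$ on the drift map (for convex $\smoothparam$-smooth $f$ with $h\smoothparam\leq1$ it is in fact $1$-Lipschitz), which is harmless.
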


See Section~\ref{proof:general} for the proof of the above theorem. Theorem~\ref{thm:main} follows directly by noticing that $\scparam$-strongly log-concave distributions satisfy an isoperimetric inequality with constant $\log2\cdot\sqrt{\scparam}$. The proof of Theorem~\ref{thm:main} is provided in Section~\ref{proof:main}.

Here we sketch the proof of Theorem~\ref{thm:general}. We follow the framework of bounding the conductance of Markov chains to analyze mixing times~\citep{sinclair1989approximate,lovasz1993random}. The basic idea is to bound the $\condS$-conductance $\conductance_\condS$, as implied by the following lemma in \citet{lovasz1993random}.

\begin{lemma}\label{lem:s_cond}
Consider a reversible $1/2$-lazy Markov chain with stationary distribution $\target$ and initial distribution $\initial$. Let $0<\condS<1/2$ and $H_\condS \defn \sup\braces{\abss{\mu_0(B) - \target(B)} : \target(B) \leq \condS}$. Then 
\begin{align*}
    \tvdist{\mu_n}{\target}\leq H_\condS + \frac{H_\condS}{\condS}(1-\frac{\conductance_\condS^2}{2})^n,
\end{align*}
where the $\condS$-conductance $\conductance_\condS$ is defined in equation~\eqref{eq:s_conductance}.
\end{lemma}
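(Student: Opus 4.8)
This is the classical $\condS$-conductance bound of Lov\'asz and Simonovits, so one option is simply to invoke \citet{lovasz1993random}; below I sketch how I would reprove it via their ``curve'' technique. The central object is, for a probability measure $\mu$, the envelope
\[
\mathcal{E}_\mu(x)\;\defn\;\sup\bigl\{\mu(A)\,:\,A\in\borel(\statespace),\ \target(A)=x\bigr\},\qquad x\in[0,1].
\]
The first step is to record its elementary properties: the supremum is attained by a super-level set of the density $d\mu/d\target$, the marginal gain $d\mathcal{E}_\mu/dx$ equals the density ratio at the frontier and is therefore non-increasing, so $\mathcal{E}_\mu$ is concave on $[0,1]$ with $\mathcal{E}_\mu(0)=0$, $\mathcal{E}_\mu(1)=1$, $\mathcal{E}_\mu(x)\ge x$, and $\tvdist{\mu}{\target}=\sup_{x\in[0,1]}\bigl(\mathcal{E}_\mu(x)-x\bigr)$. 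Writing $\mathcal{E}_\mu(x)=\sup\{\int f\,d\mu:0\le f\le1,\ \int f\,d\target=x\}$ and testing with $f=\transition^*\mathbf 1_A$ (which satisfies $0\le f\le 1$ and $\int f\,d\target=x$ because $\transition$ preserves $\target$ and hence $\transition 1=1$) shows $\mathcal{E}_{\transition\mu}\le\mathcal{E}_\mu$ pointwise, so the envelopes $\mathcal{E}_{\mu_n}$ decrease in $n$. With $g_n(x)\defn\mathcal{E}_{\mu_n}(x)-x\ge0$ the lemma reduces to bounding $\sup_x g_n(x)$.

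\textbf{Boundary and base estimates.} By the definition of $H_\condS$ we have $\mu_0(A)-\target(A)\le H_\condS$ whenever $\target(A)\le\condS$, so $\mathcal{E}_{\mu_0}(x)\le x+H_\condS$ for $x\le\condS$ and, passing to complements, for $x\ge1-\condS$; combined with the monotonicity of the envelopes this gives $g_n(x)\le H_\condS$ for all $n$ and all $x\notin(\condS,1-\condS)$. For the base case $n=0$, combining $\mathcal{E}_{\mu_0}(\condS)\le\condS+H_\condS$ with $\mathcal{E}_{\mu_0}(0)=0$ and concavity (a tangent-line bound at $\condS$) yields $g_0(x)\le\tfrac{H_\condS}{\condS}x$ for $x\ge\condS$, and the mirror argument using $\mathcal{E}_{\mu_0}(1)=1$ gives $g_0(x)\le\tfrac{H_\condS}{\condS}(1-x)$, so altogether $g_0(x)\le\tfrac{H_\condS}{\condS}\min\{x,1-x\}$ on $[\condS,1-\condS]$.

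\textbf{One-step averaging inequality (the crux).} The heart of the argument is to show that for the reversible $1/2$-lazy chain and every $x\in(\condS,1-\condS)$,
\[
\mathcal{E}_{\transition\mu}(x)\;\le\;\tfrac12\,\mathcal{E}_\mu\bigl(x-\tau(x)\bigr)+\tfrac12\,\mathcal{E}_\mu\bigl(x+\tau(x)\bigr),
\qquad \tau(x)\defn 2\,\conductance_\condS\cdot\min\{x-\condS,\,1-\condS-x\}.
\]
I would prove this by taking a set $A$ with $\target(A)=x$ attaining $\mathcal{E}_{\transition\mu}(x)$, which may be chosen to be a super-level set of $d(\transition\mu)/d\target$; writing $\transition\mu(A)=\tfrac12\mu(A)+\tfrac12\transitionbf\mu(A)$ via \eqref{eq:lazy}; and exhibiting sets $B\subseteq A\subseteq C$ with $\target(B)=x-\tau(x)$, $\target(C)=x+\tau(x)$ for which $\transition\mu(A)\le\tfrac12\bigl(\mu(B)+\mu(C)\bigr)$, which reduces to $\transitionbf\mu(A)-\mu(A)\le\mu(C\setminus A)-\mu(A\setminus B)$, i.e.\ a bound on the net $\transitionbf$-flow into $A$. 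The latter is obtained from reversibility of the chain with respect to $\target$ together with the $\condS$-conductance lower bound $\int_A\transition_z(A^c)\,\target(dz)\ge\conductance_\condS\,(\target(A)-\condS)$ on the ergodic flow out of $A$ (and the analogous bound for $A^c$ when $x>\tfrac12$), which forces a definite amount of mass to be exchanged across the boundary of $A$. This is the only place where reversibility, laziness, and the $\condS$-conductance are all used, and carrying out the mass-redistribution bookkeeping carefully is the main obstacle; I would follow \citet{lovasz1993random} for the details.

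\textbf{Induction and conclusion.} Granting the averaging inequality, I would prove by induction on $n$ that for all $x\in[\condS,1-\condS]$,
\[
g_n(x)\;\le\;H_\condS+\frac{H_\condS}{\condS}\,\phi(x)\,\Bigl(1-\frac{\conductance_\condS^2}{2}\Bigr)^{\!n},
\qquad
\phi(x)\defn\frac{\sqrt{\min\{x-\condS,\,1-\condS-x\}}}{\sqrt{1/2-\condS}},
\]
where $\phi$ is concave on $[\condS,1-\condS]$ with $\phi(\condS)=\phi(1-\condS)=0$ and $0\le\phi\le1$. The base case $n=0$ holds because $g_0(x)\le\tfrac{H_\condS}{\condS}\min\{x,1-x\}=H_\condS+\tfrac{H_\condS}{\condS}\min\{x-\condS,1-\condS-x\}$ and $\phi(x)\ge\min\{x-\condS,1-\condS-x\}$ (since that argument is $\le 1/2-\condS$). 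For the inductive step, the averaging inequality gives $g_{n+1}(x)\le\tfrac12\bigl(g_n(x-\tau)+g_n(x+\tau)\bigr)$ because $\tfrac12\bigl((x-\tau)+(x+\tau)\bigr)=x$, after which one checks $\tfrac12\bigl(\phi(x-\tau)+\phi(x+\tau)\bigr)\le(1-\conductance_\condS^2/2)\,\phi(x)$: on a branch of $\phi$ where $x\mp\tau$ stay on the same side of the peak $x=\tfrac12$ this is the numerical fact $\tfrac12\bigl(\sqrt{1-2\conductance_\condS}+\sqrt{1+2\conductance_\condS}\bigr)=\tfrac12\sqrt{2+2\sqrt{1-4\conductance_\condS^2}}\le\sqrt{1-\conductance_\condS^2}\le1-\conductance_\condS^2/2$, and the case where the two sampled points straddle the peak is a short explicit computation of the same type. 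Taking the supremum over $x$, using $\phi\le1$, and recalling $g_n\le H_\condS$ outside $[\condS,1-\condS]$ gives $\tvdist{\mu_n}{\target}=\sup_x g_n(x)\le H_\condS+\tfrac{H_\condS}{\condS}(1-\conductance_\condS^2/2)^n$, which is the claim. (One may assume $\conductance_\condS\le\tfrac12$ so that $x\pm\tau(x)$ never leave $[\condS,1-\condS]$; the complementary range is not of interest.)
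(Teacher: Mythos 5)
The paper does not actually prove this lemma: it is invoked verbatim from Lov\'asz and Simonovits (1993) (``\ldots as implied by the following lemma in \citet{lovasz1993random}''), so there is no internal proof to compare against. Your sketch is a faithful reconstruction of the Lov\'asz--Simonovits argument --- the envelope $\mathcal{E}_\mu$ (their ``$h$-curve''), its concavity and monotonicity under $\transition$, the boundary bound $g_n\le H_\condS$ outside $(\condS,1-\condS)$, the one-step averaging inequality driven by the ergodic flow and the lazy decomposition \eqref{eq:lazy}, and the induction against the concave square-root test function. The outline is internally consistent: the base-case chord bound $g_0(x)\le\frac{H_\condS}{\condS}\min\{x,1-x\}$ follows correctly from concavity at both endpoints, and your on-branch computation $\tfrac12(\sqrt{1-2\conductance_\condS}+\sqrt{1+2\conductance_\condS})\le 1-\conductance_\condS^2/2$ yields the claimed contraction rate. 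The one step you (rightly) flag as the crux and defer to the reference --- establishing the one-step averaging inequality $\mathcal{E}_{\transition\mu}(x)\le\frac12\mathcal{E}_\mu(x-\tau)+\frac12\mathcal{E}_\mu(x+\tau)$ from reversibility, laziness, and the $\condS$-conductance flow bound --- is exactly the content of Lov\'asz--Simonovits's key lemma, and its mass-rearrangement proof is not reproduced here; with that borrowed, the argument is complete. One minor point worth noting for rigor: the bound $\conductance_\condS\le 1/2$ that you assume so that $x\pm\tau(x)$ stay in $[\condS,1-\condS]$ is not automatic from the $1/2$-laziness alone when $\condS$ is close to $1/2$; the standard fix is to clamp $x\pm\tau$ to $[0,1]$ in the averaging inequality, after which the induction goes through unchanged, and in any case the lemma is only interesting when $\conductance_\condS$ is small.
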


Instead of bounding the conductance for all $x\in\reald$, the notion of $\condS$-conductance enables us to consider a high probability region $\goodqsetcond$ with good mixing behavior for analysis. Previously in \citet{dwivedi2018log}, $\goodqsetcond$ was defined as a convex set with bounded $\vecnorm{x}{2}$ such that the gradient norm $\vecnorm{\gradf(x)}{2}$ is small. More recently, \citet{lee2020logsmooth} considered $\goodqsetcond$ to be the not-necessarily-convex region where $\vecnorm{\gradf(x)}{2}$ is small and obtained a better $\condi$ dependency via blocking conductance. To avoid issues caused by the non-convexity of $\goodqsetcond$, their conductance argument introduced additional logarithmic dependency on the error tolerance $\tole$ in the final bound. In our work, we still apply the $\condS$-conductance argument. But in our definition of $\goodqsetcond$, we require not only $\vecnorm{\gradf(x)}{2}$ to be bounded, but also the acceptance rate of MALA to be above some positive constant. This region $\goodqsetcond$ allows us to achieve both linear $\condi$ dependency and $\dims^{1/2}$ dimension dependency in the case of a warm start. Since our region $\goodqsetcond$ is not guaranteed to be convex as in~\citet{lee2020logsmooth}, we also introduce additional logarithmic factors on $\tole$ compared to \citet{dwivedi2018log} and~\citet{chen2020fast}.
 
In the next section we show how we control the acceptance rate and develop a high probability concentration bound for it.

\subsubsection{Acceptance rate of MALA}\label{sec:accept_rate}
As introduced in Section \ref{sec:MALA}, viewing MALA as a special case of HMC allows us to write down its acceptance rate as follows
\begin{align}\label{eq:accept_rate}
\min\braces{\exp\parenth{-f(\hat q_\step)-\frac{1}{2}\vecnorm{\hat p_\step}{2}^2 + f(q_0) + \frac12\vecnorm{p_0}{2}^2},1 }.
\end{align}
Then, we use the fact that the continuous HMC dynamics conserves the Hamiltonian, that is,
\begin{align*}
f(q_t)+\frac12\vecnorm{p_t}{2}^2=f(q_0)+\frac12\vecnorm{p_0}{2}^2, \ \ \forall t>0.
\end{align*}
Combined with Equation~\eqref{eq:accept_rate}, the MALA acceptance rate can be equivalently written as
\begin{align}
\min\braces{\exp\parenth{-\targetf(\hat q_\step)-\frac12\vecnorm{\hat p_\step}{2}^2 + \targetf(q_\step) + \frac12\|p_\step\|_2^2},1}.
\end{align}
The key to control the MALA acceptance rate is to control the discretization error when each step of HMC is discretized with the leapfrog scheme. The next lemma provides a control of the the MALA acceptance rate with high probability.

\begin{lemma}
  \label{lem:acceptance rate}
  Assume the negative log density $\targetf$ is $\smoothparam$-smooth and convex. For any $\deltaf\in(0,1)$, there exists a set $\goodqpset \subset \real^\dims \times \real^\dims$ with $\Prob_{\cq_0 \sim \target, \cp_0 \sim \Normal(0, \Ind_\dims)}((\cq_0, \cp_0) \in \goodqpset) \geq 1-\deltaf$, such that for $(\cq_0, \cp_0) \in \goodqpset$ and the step-size choice $\step^2\smoothparam \leq 1$, we have
  \begin{align}\label{eq:accept_high_prob}
  \begin{split}
    &\quad -\targetf(\hq_\step)-\frac12\vecnorm{\hp_\step}{2}^2 + \targetf(\cq_\step) + \frac12\vecnorm{\cp_\step}{2}^2 \\
    &\geq -100 \parenth{4 + \log\parenth{\frac{2\dims}{\deltaf}}}^2 \step^2 \smoothparam \dims^{\frac12}   - 8 \parenth{\sqrt{\dims}+\log\parenth{\frac{12}{\deltaf}}}^2\step^4 \smoothparam^2.
  \end{split}
  \end{align}
  where $(\hat q_\step,\hat p_\step)$ are the states of HMC after one step of leapfrog discretization starting from $(q_0, p_0)$ in Equation~\eqref{eq:leapfrog_q} and~\eqref{eq:leapfrog_p}, and $(q_\step, p_\step)$ are solutions at time $\step$ of the continuous Hamiltonian dynamics starting from $(q_0, p_0)$ in Equation~\eqref{eq:HMC_q}, \eqref{eq:HMC_p}.
\end{lemma}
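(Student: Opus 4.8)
# Proof Proposal for Lemma~\ref{lem:acceptance rate}

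\textbf{Overall strategy.} The plan is to bound the difference between the Hamiltonian of the leapfrog state $(\hq_\step, \hp_\step)$ and the Hamiltonian of the exact continuous solution $(\cq_\step, \cp_\step)$, both started from the same $(\cq_0, \cp_0)$. Since the continuous dynamics conserves the Hamiltonian, $H(\cq_\step,\cp_\step) = H(\cq_0,\cp_0)$, so the quantity we must lower bound is $H(\cq_0,\cp_0) - H(\hq_\step,\hp_\step)$, i.e.\ the negative of the one-step energy error of the leapfrog integrator. I would write the leapfrog error as a Taylor-type expansion in $\step$ around $0$, tracking exactly which terms are cubic in $\step$ (these give the $\step^2\smoothparam\sqrt\dims$ contribution once we divide by the typical scale, wait — more precisely, I expect a $\step^3$ energy error whose coefficient involves $\gradf$ and its directional behavior) and which are higher order (giving the $\step^4\smoothparam^2$ term). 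The key subtlety is that the leading error coefficient involves quantities like $\|\gradf(\cq_0)\|_2$, $\gradf(\cq_0)^\top \cp_0$, and Hessian-vector products; under $\cq_0\sim\target$ and $\cp_0\sim\Normal(0,\Ind_\dims)$ these concentrate, and that concentration is exactly what the high-probability set $\goodqpset$ encodes.

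\textbf{Step 1: Algebraic expansion of the one-step energy error.} Using the three leapfrog equations, substitute $\hp_{\step/2} = \cp_0 - \tfrac\step2\gradf(\cq_0)$, $\hq_\step = \cq_0 + \step\hp_{\step/2}$, $\hp_\step = \hp_{\step/2} - \tfrac\step2\gradf(\hq_\step)$, and expand
\[
H(\hq_\step,\hp_\step) - H(\cq_0,\cp_0) = \brackets{f(\hq_\step) - f(\cq_0)} + \tfrac12\brackets{\|\hp_\step\|_2^2 - \|\cp_0\|_2^2}.
\]
For the momentum part, $\|\hp_\step\|_2^2 = \|\hp_{\step/2}\|_2^2 - \step\,\hp_{\step/2}^\top\gradf(\hq_\step) + \tfrac{\step^2}{4}\|\gradf(\hq_\step)\|_2^2$, and similarly expand $\|\hp_{\step/2}\|_2^2$. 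For the position part, use smoothness to sandwich $f(\hq_\step) - f(\cq_0)$ between $\gradf(\cq_0)^\top(\hq_\step - \cq_0) \pm \tfrac\smoothparam2\|\hq_\step-\cq_0\|_2^2$; note $\hq_\step - \cq_0 = \step\cp_0 - \tfrac{\step^2}{2}\gradf(\cq_0)$. Collecting terms, I expect massive cancellation at orders $\step^0, \step^1, \step^2$ (this is the whole point of the symplectic leapfrog scheme being second-order accurate and of the ``velocity Verlet'' structure), leaving a remainder controlled by (i) the gradient mismatch $\gradf(\hq_\step) - \gradf(\cq_0)$, which by $\smoothparam$-smoothness is $\le \smoothparam\|\hq_\step - \cq_0\|_2 \le \smoothparam(\step\|\cp_0\|_2 + \tfrac{\step^2}{2}\|\gradf(\cq_0)\|_2)$, and (ii) the convexity slack terms. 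The honest accounting here is the routine-but-delicate part.

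\textbf{Step 2: Concentration and definition of $\goodqpset$.} Define $\goodqpset$ to be the intersection of two events: $\|\cp_0\|_2 \le \sqrt\dims + \log(12/\deltaf)$ (a standard Gaussian norm concentration / sub-exponential tail bound, failing with probability $\le \deltaf/2$), and an event controlling the gradient along the trajectory — most naturally $\|\gradf(\cq_0)\|_2^2 \lesssim (\const + \log(2\dims/\deltaf))^2\,\smoothparam\sqrt\dims$ type bound, or perhaps a bound on $f(\cq_0) - f(x^*)$ and on $\gradf(\cq_0)^\top \cp_0$, each holding with probability $\ge 1 - \deltaf/2$ under $\cq_0\sim\target,\cp_0\sim\Normal(0,\Ind_\dims)$ by the log-smooth concentration results (the moment/tail bounds of the type used in \citet{lee2020logsmooth}). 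On $\goodqpset$ the leading $\step^3$-order error term is bounded by $100(4 + \log(2\dims/\deltaf))^2\step^2\smoothparam\sqrt\dims$ (after the factor $\step$ is absorbed appropriately using $\step^2\smoothparam\le1$, which also lets us fold some $\step^4$ pieces into $\step^2$ if needed), and the genuinely quartic leftover is bounded by $8(\sqrt\dims + \log(12/\deltaf))^2\step^4\smoothparam^2$. Then a union bound gives $\Prob(\goodqpset) \ge 1 - \deltaf$, and negating the sign of the energy-error bound yields Equation~\eqref{eq:accept_high_prob}.

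\textbf{Main obstacle.} The hard part will be Step 1: getting the one-step leapfrog energy error into a form where every surviving term is manifestly either $O(\step^3)$ with a coefficient that is a product of ``nice'' random quantities ($\|\cp_0\|_2$, $\|\gradf(\cq_0)\|_2$, Hessian norms bounded by $\smoothparam$) or genuinely $O(\step^4)$, \emph{without} ever differentiating $f$ more than we are allowed (we only have $\smoothparam$-smoothness and convexity, not third-derivative bounds, unlike \citet{mangoubi2019nonconvex}). This forces the bookkeeping to route everything through $\gradf(\hq_\step) - \gradf(\cq_0)$ and integral-remainder forms of smoothness/convexity rather than a naive Taylor expansion in $\step$; care is needed because $\hq_\step - \cq_0$ itself carries a $\step^2\gradf(\cq_0)$ piece, so a ``$\step\cdot(\text{gradient difference})$'' term secretly contains both a $\step^2\|\cp_0\|\|\gradf\|$-scale piece and a $\step^3\|\gradf\|^2$-scale piece, and one must confirm the former cancels against position-side terms. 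Matching the precise constants ($100$, $8$) and the exact log-arguments ($2\dims/\deltaf$ versus $12/\deltaf$) to the concentration inequalities invoked is then a matter of careful constant-chasing rather than a conceptual difficulty.
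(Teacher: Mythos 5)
Your overall framing---bound the one-step leapfrog energy error and show it is small with high probability over $(\cq_0,\cp_0)$---matches the paper's, and exploiting conservation of $H$ along the continuous flow to move between $(\cq_0,\cp_0)$ and $(\cq_\step,\cp_\step)$ is fine. But the concentration step you sketch cannot reach the claimed $\step^2\smoothparam\dims^{1/2}$ scaling. A factual issue first: under $\cq_0\sim\target$ with $\targetf$ $\smoothparam$-smooth and convex, the gradient norm concentrates at $\vecnorm{\gradf(\cq_0)}{2}\approx\sqrt{\smoothparam\dims}$, so $\vecnorm{\gradf(\cq_0)}{2}^2\approx\smoothparam\dims$, not the $\smoothparam\sqrt\dims$ you wrote down---compare the bound on $\vecnorm{\gradf(\cq_0)}{2}$ appearing in Lemma~\ref{lem:accept_reject_key_term_bound}. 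More importantly, if $\goodqpset$ is defined only by pointwise tail bounds on $\vecnorm{\cp_0}{2}$ and $\vecnorm{\gradf(\cq_0)}{2}$, and the surviving $O(\step^3)$ error is then bounded via Cauchy--Schwarz and $\smoothparam$-smoothness, the terms $\cp_0^\top\hessf_{\cq_0}\cp_0$ and $\vecnorm{\gradf(\cq_0)}{2}^2$ both sit at scale $\smoothparam\dims$, giving an energy error of order $\step^2\smoothparam\dims$. Pushed through Theorem~\ref{thm:general}, this only recovers a $\condi\dims$-type mixing time (the bound of \citet{lee2020logsmooth}), not the $\condi\dims^{1/2}$ the lemma is designed to produce.

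The missing ingredient is the mean-zero concentration the paper isolates in the functional $\arkey_\step(\cq_0,\cp_0)$ of Equation~\eqref{eq:def_accept_reject_key_term}. After the cancellations you anticipate, the surviving $O(\step^3)$ remainder is, to leading order, an integral along the trajectory of quantities of the form $\cp_0^\top\hessf_{\cq_0+t\cp_0}\cp_0-\vecnorm{\gradf(\cq_0)}{2}^2$. Each piece alone has magnitude $\smoothparam\dims$, but the combination has expectation zero under $\cq_0\sim\target$, $\cp_0\sim\Normal(0,\Ind_\dims)$---Lemma~\ref{lem:arkpower_expectation} proves this by integration by parts against the Boltzmann weight $e^{-\targetf}$---and its fluctuations are of order $\smoothparam\dims^{1/2}$. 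Turning this into a high-probability bound requires the $k$-th moment control of Lemma~\ref{lem:arkpower_kth_expectation}, obtained through a recursive integration-by-parts identity; none of that is recoverable from separate tail bounds on $\vecnorm{\cp_0}{2}$ and $\vecnorm{\gradf(\cq_0)}{2}$, because those control magnitudes rather than the cancellation between the kinetic and potential contributions. Your proposal does gesture toward bounding $\gradf(\cq_0)^\top\cp_0$, which has mean zero for trivial reasons and in any case arises at order $\step$, where the cancellation is exact; the quantity that needs the nontrivial mean-zero argument lives at order $\step^3$. Finally, the paper does not expand directly around $(\cq_0,\cp_0)$: it compares $\hq_\step$ to $\cq_\step$ and $\hp_\step$ to $\cp_\step$ separately (Lemmas~\ref{lem:accept_rate_kinetic_diff} and~\ref{lem:accept_rate_potential_diff}), organizing the dangerous $O(\step^3)$ contributions as $\int_0^\step\int_0^s(\interG_\tau-\interG_0)\,d\tau\,ds$ versus $\frac12\int_0^\step(s-\step)(\interG_s-\interG_0)\,ds$ with $\interG_t=\vecnorm{\gradf(\cq_0+t\cp_0)}{2}^2$, which cancel exactly after an integration by parts in $t$; your direct expansion is equivalent by energy conservation but must rediscover both this cancellation and the residual mean-zero functional by hand, which is precisely the part the proposal leaves unaddressed.
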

See Section~\ref{proof:acceptance rate} for the proof of this lemma.

Lemma~\ref{lem:acceptance rate} establishes a high probability bound on the exponent in the acceptance rate. If we ignore constants and logarithmic factors for now, taking $\step^2$ roughly $1/(\smoothparam\dims^{1/2})$ suffices to keep the acceptance rate above some positive constant for any $(\cq_0,\cp_0) \in \goodqpset$. In our proof of Theorem~\ref{thm:general}, we construct our good region $\goodqsetcond$ based on set $\goodqpset$ introduced in this lemma.


\subsection{Mixing time lower bound}\label{sec:Lower}

In this section we outline the key steps to obtain the mixing time lower bound in Theorem~\ref{thm:main2}.
In Section~\ref{sec:spectral_lower_bound}, we lower bound the mixing time in $\chi^2$-divergence for reversible Markov Chains on general state spaces. We show that obtaining this lower bound can be reduced to controlling the spectral gap. In Section~\ref{sec:worst_case_example}, we show that the spectral gap of MALA can be upper bounded by considering a special perturbed Gaussian distribution. Theorem~\ref{thm:main2} then follows from these two parts.

\subsubsection{Spectral lower bound}\label{sec:spectral_lower_bound}

In this section, the state space $\statespace$ can be any general state space. For reversible Markov chains on general state spaces, we establish the following lower bound on the $\chi^2$-divergence via Dirichlet form and spectral gap. Similar results appeared in previous work, see Lemma 3.1 in \cite{goel2006mixing}, Proposition 4.2 in \cite{coulhon2001geometric}, Proposition 4.3 in \cite{coulhon1997diagonal}, and remark on Page 524 in \cite{coulhon1996ultracontractivity} for details. However, most of these results make use of eigenvalues of the Markov operator, and do not directly extend to infinite-dimensional operators. Consequently, none of these results are directly applicable in our setting; instead we prove a new lower bound on mixing time which combines ideas from these previous results.

\begin{theorem}\label{thm:var} Let $\kernel$ be the kernel of a reversible Markov chain with invariant distribution $\target$. For any initial distribution $\initial\ll\target$ satisfying $\chidist{\mu_0}{\target}<\infty$, let $h_0=d\mu_0/d\target$, then we have
\begin{align}
    \chidist{\mu_n}{\target}^2 \geq \chidist{\mu_0}{\target}^2 \cdot \parenth{1-\frac{\dirichlet_{\kernel ^2}(h_0,h_0)}{\chidist{\mu_0}{\target}^2}}^n.
\end{align}
\end{theorem}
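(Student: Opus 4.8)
The plan is to transcribe the statement into the language of the self-adjoint Markov operator on $L_2(\target)$ and then reduce it to an elementary log-convexity inequality for a sequence of squared $L_2(\target)$-norms. I would begin by setting things up on $L_2(\target)$. Let $\mathsf{T}$ be the Markov operator associated with $\kernel$ (the operator on functions dual to $\transition$), acting by $(\mathsf{T}f)(x)=\int_\statespace f(y)\,\kernel(x,dy)$, and write $\langle f,g\rangle_\target:=\int_\statespace fg\,d\target$. Reversibility of $\kernel$ makes $\mathsf{T}$ self-adjoint for $\langle\cdot,\cdot\rangle_\target$; conditional Jensen together with stationarity shows $\mathsf{T}$ is a contraction on $L_2(\target)$; and $\mathsf{T}\mathbf 1=\mathbf 1$. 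A direct computation using reversibility shows the densities evolve by $d\mu_n/d\target=\mathsf{T}^n h_0$. Subtracting the constant function and writing $g_0:=h_0-\mathbf 1$ (so that $\Exs_\target[g_0]=0$ and $\mathsf{T}^n h_0-\mathbf 1=\mathsf{T}^n g_0$), this gives $\chidist{\mu_n}{\target}^2=\vecnorm{\mathsf{T}^n g_0}{2,\target}^2$ for every $n$, and in particular $\chidist{\mu_0}{\target}^2=\vecnorm{g_0}{2,\target}^2$.

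Next I would rewrite the Dirichlet form appearing in the statement. The kernel $\kernel^2$ is that of the two-step chain, which is again reversible with respect to $\target$ with Markov operator $\mathsf{T}^2$, so the standard identity for Dirichlet forms gives $\dirichlet_{\kernel^2}(g,g)=\langle g,(I-\mathsf{T}^2)g\rangle_\target=\vecnorm{g}{2,\target}^2-\vecnorm{\mathsf{T}g}{2,\target}^2$; since the Dirichlet form is unchanged by adding a constant to its argument, $\dirichlet_{\kernel^2}(h_0,h_0)=\dirichlet_{\kernel^2}(g_0,g_0)=\vecnorm{g_0}{2,\target}^2-\vecnorm{\mathsf{T}g_0}{2,\target}^2$. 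Hence the factor appearing in the theorem is exactly $1-\dirichlet_{\kernel^2}(h_0,h_0)/\chidist{\mu_0}{\target}^2=\vecnorm{\mathsf{T}g_0}{2,\target}^2/\vecnorm{g_0}{2,\target}^2$, which lies in $[0,1]$ because $\mathsf{T}$ is a contraction. Writing $a_k:=\vecnorm{\mathsf{T}^k g_0}{2,\target}^2=\langle g_0,\mathsf{T}^{2k}g_0\rangle_\target$, all that remains is to show $a_n\geq a_0\,(a_1/a_0)^n$ for every $n\geq 1$.

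For this last step I would prove that $(a_k)_{k\geq0}$ is log-convex: by self-adjointness $a_k=\langle \mathsf{T}^{k-1}g_0,\mathsf{T}^{k+1}g_0\rangle_\target$, so Cauchy--Schwarz gives $a_k^2\leq a_{k-1}a_{k+1}$; a nonnegative log-convex sequence satisfies $a_n a_0^{n-1}\geq a_1^n$ for all $n\geq 1$ (the degenerate cases $a_0=0$ and $a_1=0$ are immediate), which is precisely the claimed bound once one substitutes $a_0=\chidist{\mu_0}{\target}^2$, $a_n=\chidist{\mu_n}{\target}^2$, and $a_1/a_0=1-\dirichlet_{\kernel^2}(h_0,h_0)/\chidist{\mu_0}{\target}^2$. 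An alternative for this step is to apply the spectral theorem to the positive self-adjoint operator $\mathsf{T}^2$, writing $a_k=\int_0^1 t^k\,d\nu(t)$ for a probability measure $\nu$, and then invoke Jensen's inequality for the convex map $t\mapsto t^n$; the Cauchy--Schwarz route is preferable here because it invokes no spectral calculus, which is exactly what allows the argument to go through when $\mathsf{T}$ has no discrete spectrum.

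I expect the main obstacle to be not any single estimate but the care required to carry out the first two steps on a \emph{general} state space: one has to verify that $\mathsf{T}$ is a genuine bounded self-adjoint operator on $L_2(\target)$, that the densities $d\mu_n/d\target$ exist and equal $\mathsf{T}^n h_0$ as elements of $L_2(\target)$, and that the Dirichlet-form identities hold with no appeal to eigenfunctions---which is precisely where the previously cited results break down. Once the problem has been reduced to the inequality $a_n\geq a_0(a_1/a_0)^n$ for the log-convex sequence $a_k=\vecnorm{\mathsf{T}^k g_0}{2,\target}^2$, everything left is elementary.
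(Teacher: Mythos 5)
Your proposal is correct and follows essentially the same route as the paper: you express $\chidist{\mu_n}{\target}^2=\vecnorm{\mathsf{T}^n(h_0-1)}{2,\target}^2$, rewrite $\dirichlet_{\kernel^2}$ via the identity $\dirichlet_{\kernel^2}(f,f)=\vecnorm{f}{2,\target}^2-\vecnorm{\mathsf{T}f}{2,\target}^2$ (the paper's Lemma~\ref{lem:EK2}), and then prove the ratio bound using Cauchy--Schwarz plus iteration (the paper's Lemma~\ref{lem:Sn}, done by explicit induction, is exactly your log-convexity/telescoping of $a_k^2\leq a_{k-1}a_{k+1}$). The only presentational difference is that you phrase the last step as log-convexity of $a_k=\vecnorm{\mathsf T^k g_0}{2,\target}^2$ rather than as an induction, but the content is identical.
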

See Section \ref{proof:var} for the proof of this theorem. The proof is inspired by \cite{coulhon1997diagonal} on lower bounds for heat kernels and Markov chains.

 This lower bound has an exponential rate of convergence, and its rate depends on the Dirichlet form of the two-step transition kernel $\kernel ^2$. Note that $\chidist{\mu_0}{\target}^2=\Var_\target[h_0]$, so ${\dirichlet_{\kernel ^2}(h_0,h_0)/\chidist{\mu_0}{\target}^2}$ is the spectral gap of the function $h_0$ under the two-step transition kernel $\kernel^2$. The two-step transition kernel $\kernel^2$ can be related to the transition kernel $\kernel $ via $\dirichlet_{\kernel ^2}(f,f)\leq 2\dirichlet_\kernel (f,f)$ (see Lemma~\ref{lem:dirichlet}). This observation allows us to obtain the following corollary on mixing time lower bound in $\chi^2$-divergence.

\begin{corollary}\label{corollary:gap}Let $\kernel$ be the kernel of a reversible Markov chain with invariant distribution $\target$. For any $\tole>0$ and any initial distribution $\initial\ll\target$ satisfying $\chidist{\mu_0}{\target}<\infty$, let $h_0=d\initial/d\target$, if the spectral gap ${\dirichlet_{\kernel }(h_0,h_0)}/{\chidist{\initial}{\target}^2}\leq 1/2 $, then its mixing time in $\chi^2$-divergence has a lower bound
\begin{align}
    \chimix{\tole}{\initial}\geq{2}\parenth{-\log\parenth{1-\frac{2\dirichlet_{\kernel }(h_0,h_0)}{\chidist{\mu_0}{\target}^2}}}^{-1}\log\frac{\chidist{\initial}{\target}}{\tole}.
\end{align}
Consequently, if the spectral gap ${\dirichlet_{\kernel }(h_0,h_0)}/{\chidist{\initial}{\target}^2}\leq 1/4 $, the mixing time satisfies
\begin{align}
    \chimix{\tole}{\initial}\geq\frac{1}{2} \parenth{\frac{\dirichlet_{\kernel }(h_0,h_0)}{\chidist{\initial}{\target}^2}}^{-1} \log\frac{\chidist{\initial}{\target}}{\tole}.
\end{align}
\end{corollary}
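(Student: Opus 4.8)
The plan is to derive the corollary directly from Theorem~\ref{thm:var}, using the comparison $\dirichlet_{\kernel^2}(h_0,h_0) \leq 2\dirichlet_\kernel(h_0,h_0)$ from Lemma~\ref{lem:dirichlet} and then inverting the exponential rate. First I would set $\rho := \dirichlet_\kernel(h_0,h_0)/\chidist{\mu_0}{\target}^2$ for the spectral gap of $h_0$ under the one-step kernel; by the lemma the corresponding quantity for $\kernel^2$ is at most $2\rho$. Plugging into Theorem~\ref{thm:var} gives $\chidist{\mu_n}{\target}^2 \geq \chidist{\mu_0}{\target}^2(1-2\rho)^n$, which requires $2\rho < 1$, i.e. $\rho \le 1/2$, exactly the first hypothesis (ensuring the base of the exponential is nonnegative so the bound is not vacuous). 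Taking square roots, $\chidist{\mu_n}{\target} \geq \chidist{\mu_0}{\target}(1-2\rho)^{n/2}$.

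Next I would solve for the smallest $n$ that can possibly bring the right-hand side down to $\tole$: if $\chidist{\mu_n}{\target}\le\tole$ then $\chidist{\mu_0}{\target}(1-2\rho)^{n/2}\le\tole$, hence $\frac n2 \log(1-2\rho) \le \log(\tole/\chidist{\mu_0}{\target})$; since $\log(1-2\rho)<0$, dividing flips the inequality to
\begin{align*}
n \;\geq\; \frac{2\log\bigl(\chidist{\mu_0}{\target}/\tole\bigr)}{-\log(1-2\rho)} \;=\; 2\Bigl(-\log(1-2\rho)\Bigr)^{-1}\log\frac{\chidist{\mu_0}{\target}}{\tole}.
\end{align*}
Since this holds for every $n$ achieving the $\tole$-threshold, it holds for the infimum $\chimix{\tole}{\initial}$, giving the first displayed bound (using $\chidist{\mu_0}{\target}=\chidist{\initial}{\target}$).

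For the second, cleaner bound I would simply weaken $-\log(1-2\rho)$ using the elementary inequality $-\log(1-t)\le t/(1-t)$ for $t\in[0,1)$, applied at $t=2\rho$: this gives $-\log(1-2\rho) \le 2\rho/(1-2\rho)$, so $(-\log(1-2\rho))^{-1} \ge (1-2\rho)/(2\rho)$. Under the stronger hypothesis $\rho \le 1/4$ we have $1-2\rho \ge 1/2$, hence $(-\log(1-2\rho))^{-1} \ge 1/(4\rho)$, and the factor of $2$ out front yields $\chimix{\tole}{\initial} \ge \frac{1}{2}\rho^{-1}\log(\chidist{\initial}{\target}/\tole)$, as claimed. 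There is no real obstacle here — the only things to be careful about are the direction of the inequality when dividing by the negative quantity $\log(1-2\rho)$, and checking that each hypothesis ($\rho\le 1/2$ for nonnegativity of the base, $\rho\le 1/4$ for the clean constant) is used where stated; the substantive content is all in Theorem~\ref{thm:var} and Lemma~\ref{lem:dirichlet}, which I am free to assume.
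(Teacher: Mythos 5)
Your proposal is correct and follows the same route as the paper: invoke Lemma~\ref{lem:dirichlet} to pass from $\dirichlet_{\kernel^2}$ to $\dirichlet_\kernel$, plug into Theorem~\ref{thm:var}, solve for $n$ from the exponential decay, and then weaken $-\log(1-2\rho)$ via $-\log(1-t)\leq t/(1-t)$ together with $1-2\rho\geq 1/2$ under $\rho\leq 1/4$. Nothing is missing.
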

See Section~\ref{proof:gap} for the proof. 

According Corollary~\ref{corollary:gap}, the mixing time lower bound depends on
the spectral gap of a function $h_0$ and the logarithmic ratio between the initial $\chi^2$-divergence and error tolerance. Given an initial distribution, establishing mixing time lower bounds can be reduced to finding a function $h_0$ such that the spectral gap is upper bounded.

\subsubsection{A worst-case example}\label{sec:worst_case_example}
Following Corollary~\ref{corollary:gap}, here we construct a worst-case example where the spectral gap can be upper bounded with tight dependency on both the condition number $\condi$ and the dimension $\dims$. Note that \citet{lee2020logsmooth} showed that Gaussian distribution has the tight $\condi$ dependency, and \citet{chewi2021optimal} showed that a perturbed Gaussian distribution has the tight dimension dependency. We combine these two ideas and introduce the following worst-case target distribution. Let $x=(\coord{x}{1}, \ldots,\coord{x}{d+1})\in\real^{d+1}$. For $\deltas\in(0,1/4)$, define
\begin{align}
    f_\deltas(x)=\frac{\smoothparam }{2}\sum_{i=1}^{d}\coord{x}{i}^2-\frac{1}{2d^{\frac12-2\deltas}}\sum_{i=1}^{d}\cos\parenth{d^{\frac14-\deltas}\smoothparam ^{\frac12}\coord{x}{i}}+\frac \scparam 2 \coord{x}{d+1}^2\label{eq:worst},
\end{align}
where we assume $\smoothparam\geq 2\scparam$. It is not hard to show that $f_\deltas(x)$ is $3\smoothparam /2$-smooth and $\scparam $-strongly convex by calculating the Hessian of $f_\deltas(x)$. The Hessian is diagonal with the first $d$ main diagonal elements being $\smoothparam(1+\cos(\dims^{1/4-\deltas}\smoothparam^{1/2}\coord{x}{i})/2)\in[\smoothparam/2, 3\smoothparam/2]$ and $m$ being the last element.

This worst-case distribution is adapted from a Gaussian distribution following two steps. First, we create $\dims$ dimensions with variance $1/\smoothparam$ and another dimension with variance $1/\scparam$. This step is intended to make the condition number roughly $\condi$. Second, cosine terms are added to the first $\dims$ dimensions such that the second order derivative switches between $\smoothparam/2$ and $3\smoothparam/2$ frequently, and the third derivative can go to infinity as $\dims$ grows. As we will see later, such perturbation makes the sampling process challenging for MALA, and as a result the best step size is of order $O(\dims^{-1/2})$.

\begin{lemma}\label{lem:worst}Consider the target distribution $\target(x)\propto \exp(-f_\deltas(x))$ where $\deltas\in(0,1/4)$ may vary with $\dims$. Fix the warmness $\warmparam =12$.
    \begin{enumerate}[label=(\alph*)]
        \item There exists an $\warmparam $-warm initial distribution $\initial$ with $\chidist{\initial}{\target}>1/2$,
        such that for any $h\in\parenth{0,1/m}$, the spectral gap of $h_0=d\initial/d\target$ under the MALA transition kernel $\kernel$ with step size $h$ satisfies\label{state:a}
        \begin{align}
            \frac{\dirichlet_{\kernel}(h_0,h_0)}{\chidist{\initial}{\target}^2}&\leq18\scparam h.
        \end{align}
        \item Further assume that $\deltas\in(0,1/{20})$ and $\dims^\deltas\geq\max\braces{\log\dims/2+6,10}$. There exists an $\warmparam $-warm initial distribution $\initial$ with $\chidist{\initial}{\target}>1/2$, such that for any $h\in\parenth{1/(\smoothparam d^{1/2-3\deltas}), \infty}$, the spectral gap of $h_0=d\initial/d\target$ under  the MALA transition kernel $\kernel$ with step size $h$ satisfies\label{state:b}
        \begin{align}
            \frac{\dirichlet_{\kernel}(h_0,h_0)}{\chidist{\mu_0}{\target}^2}&\leq48\exp(-\frac{\dims^{4\deltas}}{16384}).
        \end{align}
    \end{enumerate}
\end{lemma}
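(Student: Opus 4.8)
The plan is to exploit the product structure of $f_\deltas$, which makes the MALA chain a product chain across the $d+1$ coordinates, so that the Dirichlet form decomposes as a sum of one-dimensional contributions. For part \ref{state:a}, I would choose the initial distribution $\initial$ to differ from $\target$ only in the last coordinate $\coord{x}{d+1}$: set $\initial$ to be $\target$ conditioned on the first $d$ coordinates, but with the last coordinate replaced by a shifted or slightly-spread Gaussian so that the marginal ratio is bounded by $\warmparam=12$ and $\chidist{\initial}{\target}>1/2$. With $h_0 = d\initial/d\target$ depending only on $\coord{x}{d+1}$, the Dirichlet form $\dirichlet_\kernel(h_0,h_0)$ only picks up the transition in the $(d+1)$-st coordinate, which is exactly a one-dimensional MALA step for the Gaussian $\Normal(0,\scparam^{-1})$. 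For a Gaussian target, one can compute or bound the MALA conductance/Dirichlet form directly; the point is that with step size $h<1/m$ the one-dimensional Gaussian MALA chain moves a distance of order $\sqrt{h}$ with constant acceptance probability, giving a spectral-gap-type bound of order $\scparam h$ against $h_0$. The constant $18$ should come out of an explicit but elementary computation for the 1D Gaussian case (there is essentially no discretization difficulty here since $h<1/m$ keeps the chain well-behaved).

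For part \ref{state:b} — the hard part — I would instead make $\initial$ differ from $\target$ only in one of the first $d$ coordinates, say $\coord{x}{1}$, where the target marginal is the perturbed Gaussian $\propto \exp(-\tfrac{\smoothparam}{2}t^2 + \tfrac{1}{2d^{1/2-2\deltas}}\cos(d^{1/4-\deltas}\smoothparam^{1/2}t))$. Again by the product structure, $\dirichlet_\kernel(h_0,h_0)$ reduces to the one-dimensional MALA Dirichlet form in that coordinate. The key phenomenon is that when $h \geq 1/(\smoothparam d^{1/2-3\deltas})$, a MALA proposal in this coordinate has standard deviation $\sqrt{2h} \gtrsim d^{-1/4+3\deltas/2}\smoothparam^{-1/2}$, which is large compared to the oscillation wavelength $\sim d^{-1/4+\deltas}\smoothparam^{-1/2}$ of the cosine perturbation (the ratio is $d^{\delta/2}$ up to constants). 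Over such a proposal the perturbation term in the log-density acceptance ratio accumulates a third-derivative-driven error whose typical size is large, forcing the acceptance probability to be exponentially small in $d^{4\deltas}$ — this is precisely the $\exp(-d^{4\deltas}/16384)$ factor. Quantitatively, I would expand the Metropolis log-ratio $f_\deltas(y) - f_\deltas(x) + \tfrac{1}{4h}(\|x-y+h\gradf(x)\|^2 - \|y-x+h\gradf(y)\|^2)$ restricted to coordinate $1$, separate the Gaussian part (which contributes $O(1)$) from the cosine part, and show the cosine part behaves like a sum/integral of oscillatory terms that concentrate around a negative value of order $-d^{4\deltas}$ for a constant-probability region of proposals; a high-probability (over $x\sim\target_1$ and the Gaussian proposal increment) lower bound on the magnitude of this oscillatory error, using the conditions $\dims^\deltas \geq \max\{\log\dims/2 + 6, 10\}$ to control tails, then yields $\Exs[\alpha(x,y)] \lesssim \exp(-d^{4\deltas}/16384)$, and hence the Dirichlet form — which is at most $\Exs_\target[\alpha]$ times a bounded factor depending on $h_0$ — is bounded by $48\exp(-d^{4\deltas}/16384)$.

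The main obstacle is the oscillatory-integral / concentration estimate in part \ref{state:b}: showing that the cosine-perturbation contribution to the acceptance exponent is, with overwhelming probability over the proposal, a large negative number of order $-d^{4\deltas}$. This requires carefully tracking how a proposal step of length $\asymp d^{-1/4+3\deltas/2}$ interacts with a cosine of wavelength $\asymp d^{-1/4+\deltas}$: the relevant quantity is roughly $\sum_{j}[\cos(\omega y) - \cos(\omega x)] $-type differences together with the gradient cross-terms $\tfrac14 h\,(\|\gradf(x)\|^2-\|\gradf(y)\|^2)$ whose perturbation part scales like $h \cdot \omega^2 \cdot \sin^2 \asymp d^{4\deltas}$ in the relevant coordinate — so the dominant negative term actually comes from the $h\|\gradf\|^2$ piece of the MALA proposal correction, not from $f$ itself. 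I would make this precise by isolating that term, showing its perturbation part is $\asymp d^{4\deltas}$ in expectation and concentrated (its fluctuations are lower order by the stated conditions on $\dims^\deltas$), and deducing that $e^{(\text{log-ratio})}$ is exponentially small except on an event of probability itself $O(\exp(-d^{4\deltas}/\text{const}))$, which suffices. The bookkeeping of constants to land exactly at $48$ and $16384$ is routine once the mechanism is set up; the conceptual content is entirely the claim that a too-large step size makes MALA reject the perturbed-Gaussian coordinate with near-certainty.
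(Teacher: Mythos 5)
Your plan for part \ref{state:a} is in the right spirit --- condition $h_0$ only on the last coordinate --- but the claim that the Dirichlet form ``is exactly a one-dimensional MALA step'' is incorrect: even though the target and proposal factorize, MALA accepts or rejects the full $(d+1)$-dimensional proposal \emph{jointly} (since $\min(1,\prod_i r_i)\neq\prod_i\min(1,r_i)$), so the last coordinate does not evolve as a one-dimensional Metropolis chain. What saves the argument, and is what the paper does, is the crude bound $\Exs_{y\sim\transition_x}[(h_0(\coord{x}{d+1})-h_0(\coord{y}{d+1}))^2]\le\Exs_{y\sim\proposal_x}[(h_0(\coord{x}{d+1})-h_0(\coord{y}{d+1}))^2]$ obtained from $\alpha\le1$ (rejections contribute zero); after this, the \emph{proposal} is a product, and a Lipschitz, compactly-supported $h_0$ (the paper uses a tent function) gives the $18\scparam h$ bound with no acceptance-rate analysis at all. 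Your shifted/spread Gaussian ratio would not be globally Lipschitz, so this step would need a different argument.

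Part \ref{state:b} has a genuine gap. Taking $h_0$ to depend only on $\coord{x}{1}$ cannot yield the exponentially small Dirichlet form. The exponential smallness of the acceptance rate is a $d$-dimensional aggregate phenomenon: each perturbed coordinate contributes only $\Theta(d^{-1+4\deltas})$ to the Metropolis log-ratio, and only after summing over all $d$ coordinates and invoking concentration over $x\sim\target_1$ (the conditions on $\smoothparam\vecnorm{x}{2}^2$, $\sum_i\cos(\dims^{1/4-\deltas}\smoothparam^{1/2}\coord{x}{i})$, and related quantities that define a set $F_1\subset\real^d$) does the exponent concentrate near $-\Theta(d^{4\deltas})$. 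Lemma~\ref{lem:pi1} gives exponentially small acceptance only for starting points with the first $d$ coordinates in $F_1$, whose $\target_1$-measure is guaranteed only to exceed $1/6$; on the complement the acceptance rate may be $\Theta(1)$. An $h_0$ supported on $\{x:\coord{x}{1}\in A\}$ is nowhere near contained in $F_1\times F_2$, so the Dirichlet form, which scales like $\Prob[\coord{x}{1}\in A,\ \coord{y}{1}\notin A]$ under the MALA kernel, picks up a non-exponentially-small contribution from the $\Omega(1)$-measure set of $x$ outside $F_1$ with $\coord{x}{1}\in A$. The essential step you are missing is the paper's choice $h_0\propto\mathbbm{1}_{F_1\times F_2}$, which forces the starting point into the $d$-dimensional low-acceptance region by construction. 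Your physics heuristic (step size $\sqrt{2h}$ overshooting the cosine wavelength, with the $h\vecnorm{\nabla f_P}{2}^2$ term driving a large negative exponent) is the right mechanism, but the inference $\Exs_\target[\alpha]\lesssim\exp(-d^{4\deltas}/16384)$ over \emph{all} of $\target$ is neither proved in the paper nor needed; only the conditional version over $F_1\times F_2$ is available, which is precisely why the initialization must be conditioned on that set.
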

See Section \ref{proof:worst} for the proof, part of the which is adapted from \citet{chewi2021optimal}.

Lemma~\ref{lem:worst} bounds the spectral gap of MALA. Compared to results in \citet{chewi2021optimal}, this lemma further includes the smoothness parameter $\smoothparam$ and the strong convexity parameter $\scparam$. Also, Lemma~\ref{lem:worst}\ref{state:b} holds for a much larger range of step-size choices, whereas \citet{chewi2021optimal} requires $h\leq \dims^{-1/3}$. To turn these spectral gap upper bounds into rigorous mixing lower bounds, we combine this lemma with Corollary~\ref{corollary:gap} and obtain Theorem~\ref{thm:main2}.

\section{Numerical experiments}\label{sec:Experiments}

In this section, we apply MALA in various simulation settings to validate our theoretical results for the worst-case example in Section~\ref{sec:worst_case_example}. We consider $\target(x) \propto \exp(-f_\deltas(x))$ as the target distribution, where the negative log density $f_\deltas$ is defined in Equation~\eqref{eq:worst}. We aim to find the best step size for this special distribution. For simplicity we replace $\dims$ with $\dims-1$ in $f_\deltas$ so that the dimension of the state space is exactly $\dims$. In the following, we fix $\deltas=1/40$ which satisfies the conditions in Lemma~\ref{lem:worst}. For this difficult target distribution, we show that the best step size should have $\dims^{-1/2}$ dimension dependency under a warm start in Section~\ref{sec:dimension}. In terms of the condition number dependency, we illustrate that the best step size should be $\smoothparam^{-1}\dims^{-1/2}$ under a warm start in Section~\ref{sec:condition}.

\subsection{Dimension dependency}\label{sec:dimension}
We fix the smoothness parameter $\smoothparam =1$ and strong convexity parameter $\scparam =1$, and vary the dimension $d$ in this section. To measure the convergence of the chain, we consider two metrics. The first is the accept-reject rate of the chain, and the second is a proxy for the mixing time. It is defined as 
\begin{align*}
    \hat \tau =\min_{n\geq 1}\braces{n:\mid\hat q_{n,0.9}-q_{0.9}|\leq 0.05 },
\end{align*}
where $\hat q_{n,0.9}$ is the 90\% quantile of the last dimension of the first $n$ samples from MALA, and $q_{0.9}$ is the actual 90\% quantile of the last dimension of the target distribution. If the accept-reject rate is close to zero, then the chain needs a large number of iterations to mix. If the error in 90\% quantile in the last dimension is large, then the chain has not mixed yet.

\begin{figure}[ht]
        \centering
        \begin{subfigure}[b]{0.45\textwidth}
            \centering
            \includegraphics[width=\textwidth]{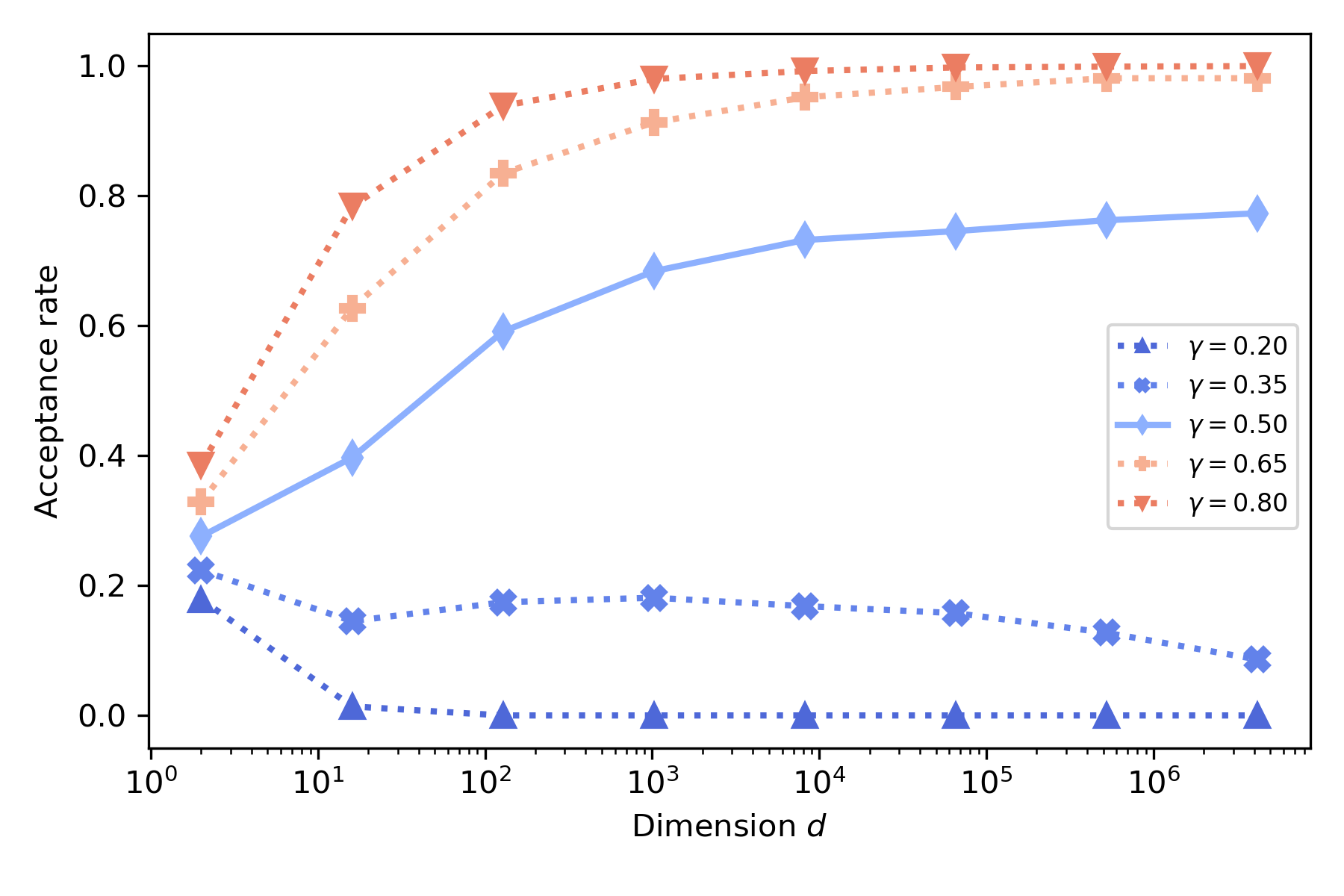}
            \caption[]%
            {{\small Acceptance rate (warm start)}}
        \end{subfigure}
        \hspace{2em}
        \begin{subfigure}[b]{0.45\textwidth}
            \centering
            \includegraphics[width=\textwidth]{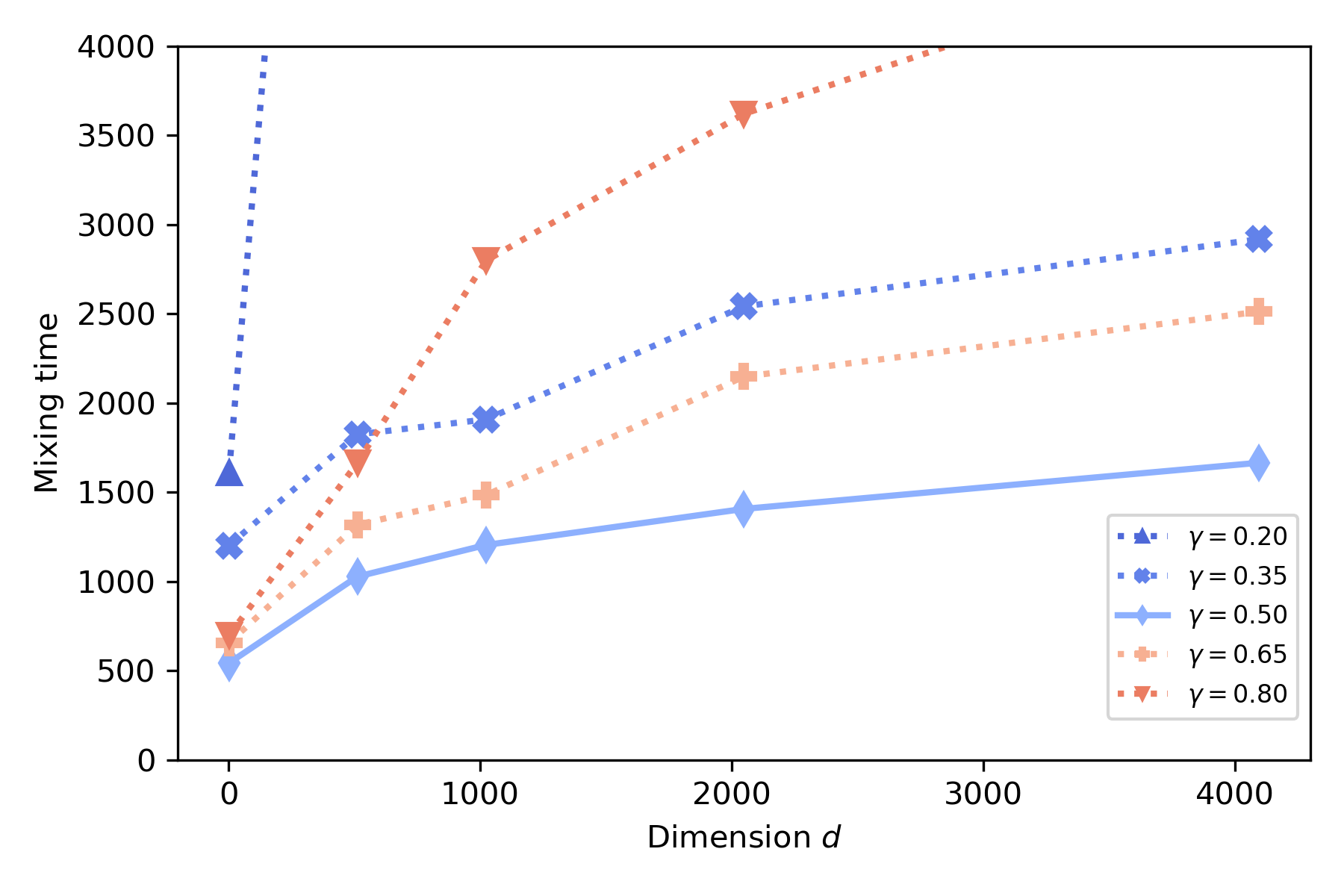}
            \caption[]%
            {{\small Mixing time (warm start)}}
        \end{subfigure}
        \vskip\baselineskip
        \begin{subfigure}[b]{0.45\textwidth}
            \centering
            \includegraphics[width=\textwidth]{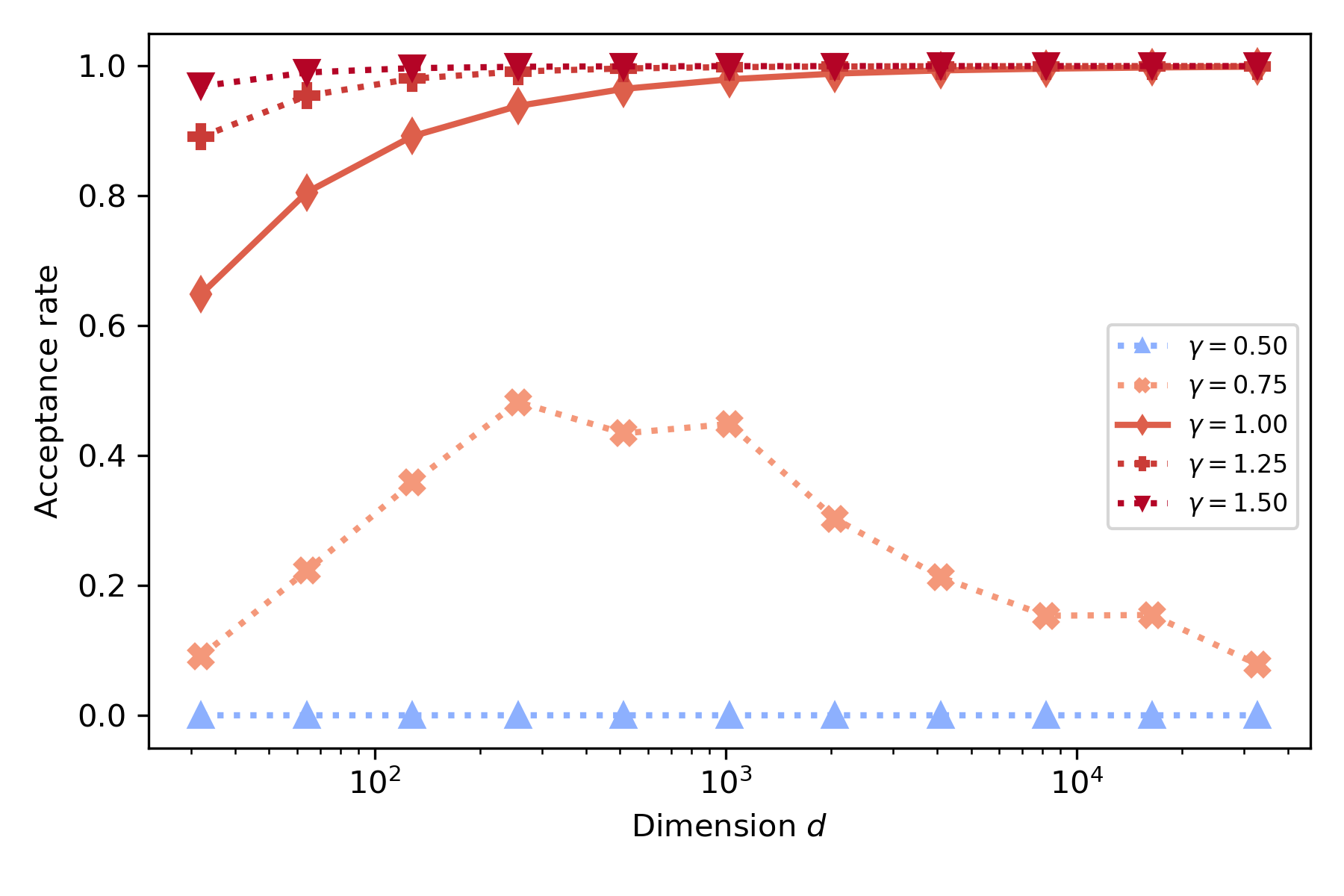}
            \caption[]%
            {{\small Acceptance rate (Gaussian start)}}
        \end{subfigure}
        \hspace{2em}
        \begin{subfigure}[b]{0.45\textwidth}
            \centering
            \includegraphics[width=\textwidth]{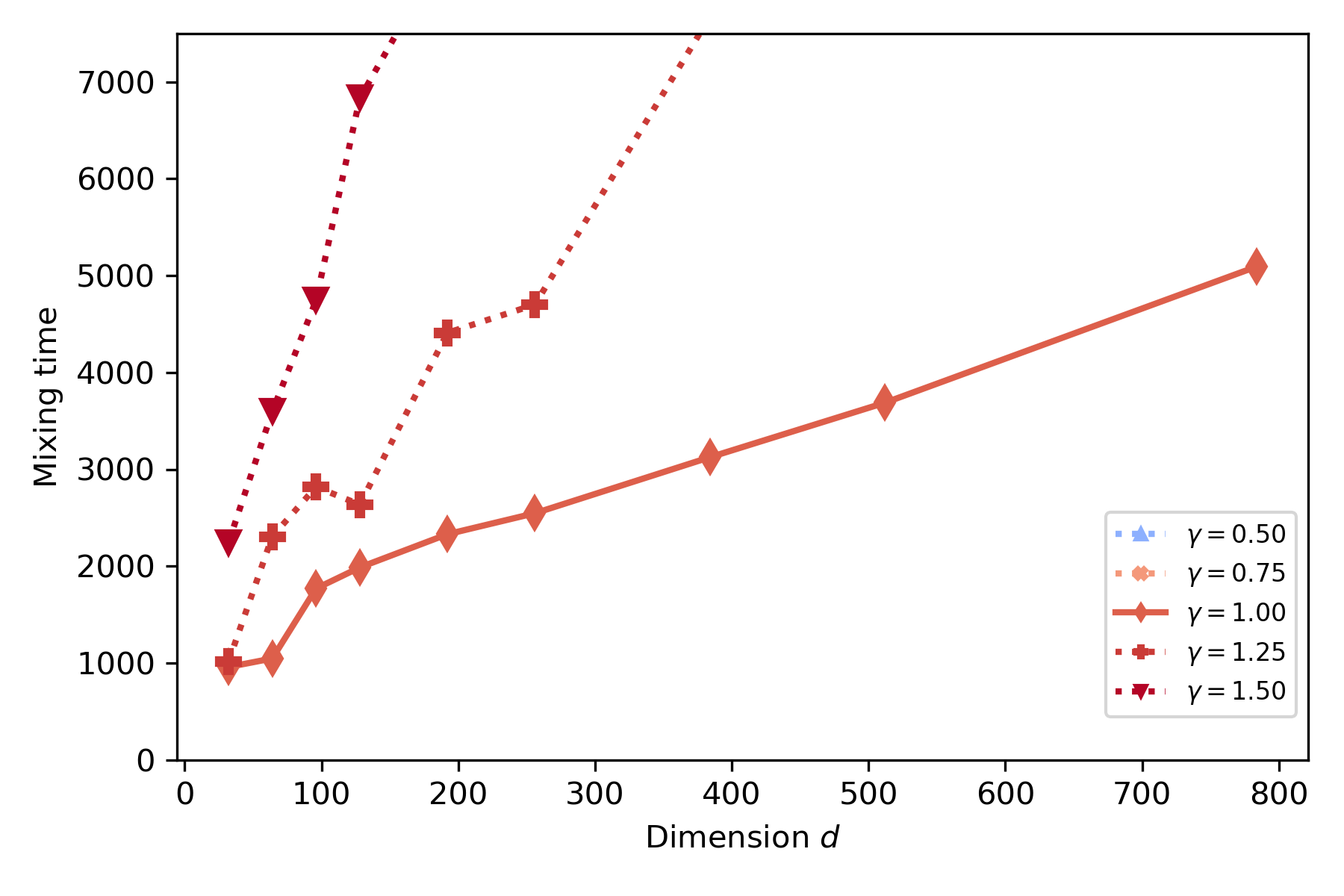}
            \caption[]%
            {{\small Mixing time (Gaussian start)}}
        \end{subfigure}
        \caption[  ]
        {\small Acceptance rate and mixing time of MALA with $\target(x)\propto f_\deltas(x)$  using step size $h=\dims^{-\gamma}$. Panels (a) and (b) show that under a warm initialization, step size $d^{-0.5}$ has a non-vanishing acceptance rate and the best mixing time. Panels (c) and (d) show that under an exponentially-warm initialization,  $\dims^{-1}$ is the best step size. In (d), mixing times for $\gamma=0.5$ and $\gamma=0.75$ are not shown in the plot because their corresponding mixing times are too large and out of range.}
        \label{fig:perturbed_dimension}
\end{figure}

To illustrate how different initializations leads to different mixing times and choice of the best step size, we experiment with both a (constant-)warm start and an exponentially-warm start. The warm initialization is obtained by constraining the target distribution on the set $G=\{x:\sqrt{\smoothparam }\vecnorm{\coord{x}{-d}}{2}\leq \sqrt{d-1}, \ \sqrt{\scparam}|\coord{x}{d}|\leq1\}$, which is a simplified approximation of the warm start we used in the proof of Lemma~\ref{lem:worst} in Section~\ref{proof:worst}. In order to generate initial samples from this distribution, we take advantage of the fact that its density can be written as a product. Hence we can simulate each dimension of the target distribution separately using MALA with sufficient number of steps until convergence, and take samples that are in set $G$. The exponentially-warm start we use is $\Normal(0,\Ind_\dims/1000)$, a Gaussian distribution with most generated samples close to 0. It is not hard to show that the warmness of this initialization has exponential dependency in $\dims$; see \cite{dwivedi2018log} for details.

For the warm start, we consider step size $h=d^{-\gamma}$ for $\gamma\in\braces{0.2,0.35,0.5,0.65,0.8}$. For the Gaussian start, we use $h=d^{-\gamma}$ for $\gamma\in\braces{0.5,0.75,1.0,1.25,1.5}$. In each experiment, we simulate 200 chains and calculate the average of each metric.

Figure~\ref{fig:perturbed_dimension} shows the results of these experiments. From panels (a) and (b) we see that under a warm start, $\dims^{-0.5}$ is the best possible choice of step size. When the step size is too large $h=d^{-0.2}$ or $d^{-0.35}$, the acceptance rate tends to zero when $d$ is large. Additionally, the mixing time in these two cases are larger than that in the case of $h=d^{-0.5}$. We remark that for $\gamma=0.35$ the decrease in acceptance rate is very slow. It might require experiments with even larger $\dims$ to see its acceptance rate to become very close to 0. Such a large $\dims$ may rarely appear in practice, and using a slightly larger step size such as $\gamma=1/3$ in \cite{roberts1998optimal} may not hurt the performance of MALA significantly. When the step size is less than $d^{-0.5}$, the acceptance rate is always kept above some positive constant. Additionally, the mixing time in these cases have a non-exponential growth and are still larger than that in the case of $h=d^{-0.5}$. These results match our main results in Section~\ref{sec:Main} as well as the theoretical results in \citet{chewi2021optimal}. On the other hand, if the chain is initialized under a Gaussian start with an exponential warmness, panels (c) and (d) show that the step size $\dims^{-1}$ becomes the best possible choice of step size. The observation that step-size choice $\dims^{-0.5}$ has close-to-zero acceptance rate under this exponentially-warm start agrees with the mixing time lower bound established in \citet{lee2021lower}.

\subsection{Condition number dependency}\label{sec:condition}
We fix the dimension $\dims$ to be $32$ and vary the condition number by changing $\smoothparam $ while retaining $\scparam=1$. The experimental setup is the similar to that of the previous section, except that here we only consider a warm start for simplicity.

\begin{figure}[ht]
        \centering
        \begin{subfigure}[b]{0.45\textwidth}
            \centering
            \includegraphics[width=\textwidth]{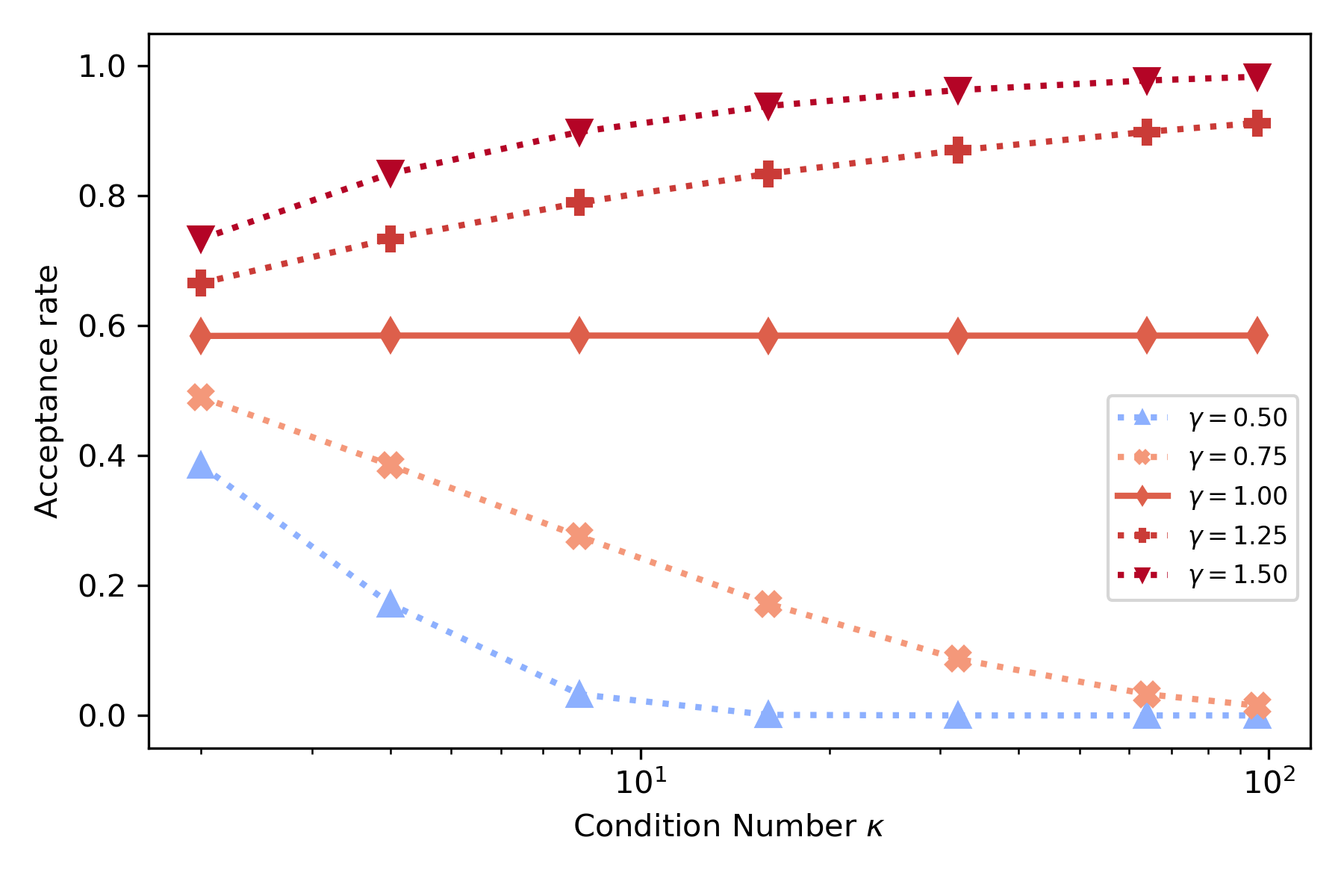}
            \caption[]%
            {{\small Acceptance rate (warm start)}}
        \end{subfigure}
        \hspace{2em}
        \begin{subfigure}[b]{0.45\textwidth}
            \centering
            \includegraphics[width=\textwidth]{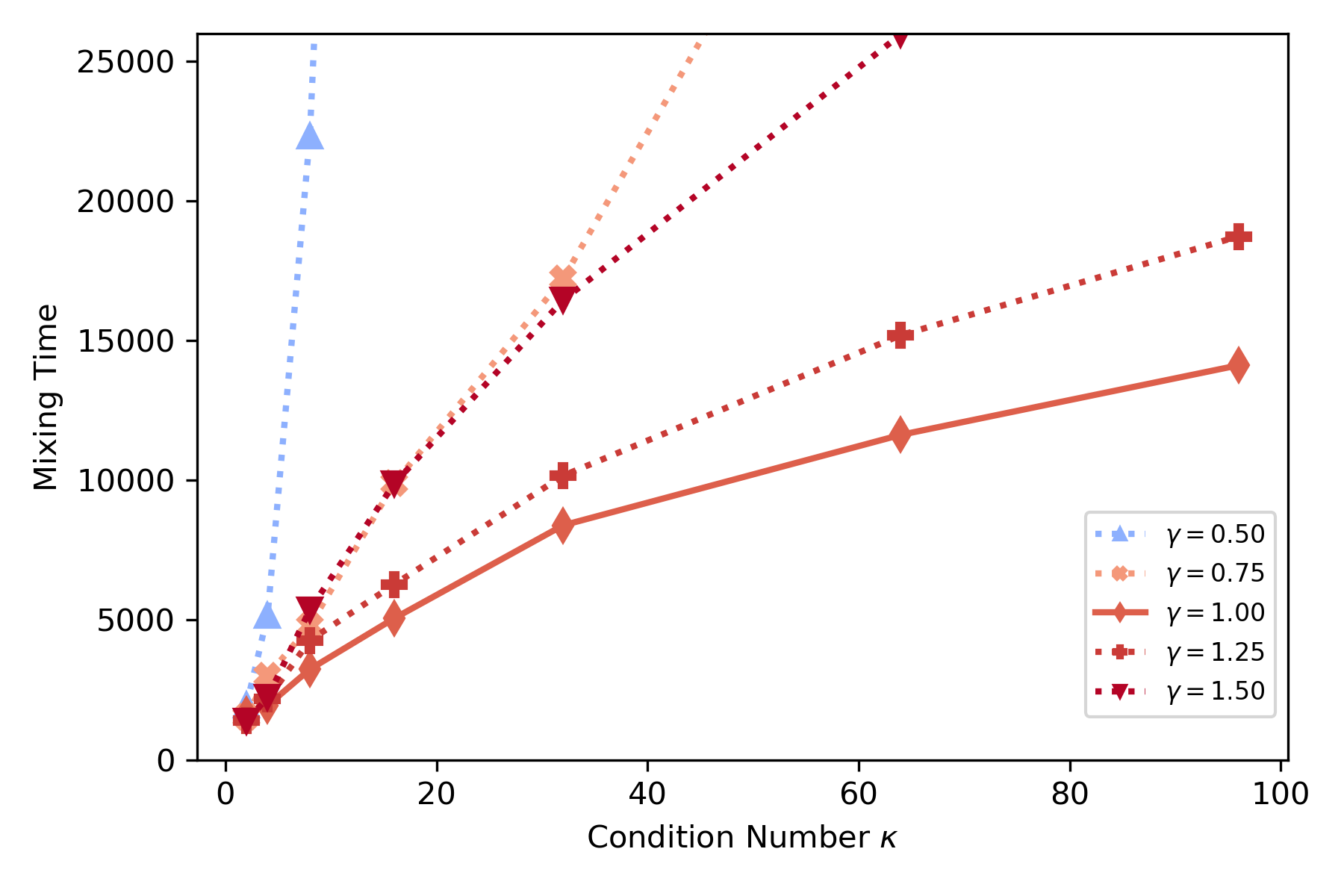}
            \caption[]%
            {{\small Mixing time (warm start)}}
        \end{subfigure}
        
        \caption[  ]
        {\small Acceptance rate and mixing time of MALA with $\target(x)\propto f_\deltas(x)$  using step size $h=\condi^{-\gamma}\dims^{-\frac12}$ under a warm start. Step size with a linear condition number dependency has a non-vanishing acceptance rate and the best mixing time.}
        \label{fig:perturbed_condition}
\end{figure}

The chain is simulated for step sizes with the same dependency on the dimension but varying dependency on the condition number: $h=\smoothparam^{-1}\condi^{1-\gamma}\dims^{-1/2}=\condi^{-\gamma}\dims^{-1/2}$ for $\gamma\in\{0.5, 0.75, 1.0, 1.25,1.5\}$. We observe in Figure \ref{fig:perturbed_condition} that the step size with condition number dependency $\condi^{-0.5}$ or $\condi^{-0.75}$ leads to vanishing acceptance rates and large mixing times. On the other hand, using a step size less than or equal to $\condi^{-1}\dims^{-1/2}$ keeps the acceptance rate above some positive constant and has a much shorter mixing time. As predicted by our theoretical results in Section~\ref{sec:Main}, the step size with a linear $\condi$ dependency turns out to be the best in terms of mixing time. Combing these results with the previous experiments on dimension dependency in Section~\ref{sec:dimension}, we conclude that the best choice of step size should be roughly $\smoothparam^{-1}\dims^{-1/2}$, which matches the step size in Theorem~\ref{thm:main}.


\section{Proofs}\label{sec:Proof}
In this section we prove main theorems. Section~\ref{proof:acceptance rate}, \ref{proof:general} and \ref{proof:main} are devoted to the mixing time upper bound.  Proof of lower bound related results are in Section~\ref{proof:var}, \ref{proof:gap}, \ref{proof:worst} and \ref{proof:main2}. 

\subsection{Proof of Lemma \ref{lem:acceptance rate}}\label{proof:acceptance rate}

Given $(\cq_0, \cp_0) \in \real^\dims \times \real^\dims$, define the following quantity on the squared gradient norm interpolated for $t \geq 0$

\begin{align}
  \label{eq:Gt}
  \interG_t(\cq_0,\cp_0)\defn\nabla \targetf(\cq_0+t \cp_0)^\top\nabla \targetf(\cq_0+t\cp_0).
\end{align}
For simplicity, if the dependence on $(\cq_0, \cp_0)$ is clear from the context, we use $\interG_t$ as a shorthand for $\interG_t(\cq_0,\cp_0)$.
In the following, we separate the exponent in Lemma~\ref{lem:acceptance rate} into two parts and bound the two parts with two lemmas.
\begin{lemma}
  \label{lem:accept_rate_kinetic_diff}
  Assume the negative log density $\targetf$ is $\smoothparam$-smooth. Given $(\cq_0, \cp_0) \in \real^\dims \times \real^\dims$, for the step-size choice satisfying $\step^2 \smoothparam \leq 1$, we have
  \begin{align*}
    \targetf(\hq_\step) - \targetf(\cq_\step) \leq \frac{1}{2}\int_0^\step\int_0^s (\interG_\tau - \interG_0) d\tau ds + \frac{3}{4}\step^4\smoothparam^2 \parenth{\vecnorm{\cp_0}{2} + \frac{1}{\sqrt{\smoothparam}}\vecnorm{\gradf(\cq_0)}{2} }^2
  \end{align*}
  where $\cq_\step$ and $\hq_\step$ are defined in the same way as in Lemma~\ref{lem:acceptance rate}.
\end{lemma}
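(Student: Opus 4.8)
The plan is to invoke the $\smoothparam$-smoothness upper bound at the continuous-dynamics endpoint $\cq_\step$,
\[
  \targetf(\hq_\step) \le \targetf(\cq_\step) + \gradf(\cq_\step)^\top(\hq_\step - \cq_\step) + \frac{\smoothparam}{2}\vecnorm{\hq_\step - \cq_\step}{2}^2 ,
\]
(note that only $\smoothparam$-smoothness is used here, not convexity), and then to reduce the claim to two sub-tasks: (i) bound the cross term $\gradf(\cq_\step)^\top(\hq_\step - \cq_\step)$ from above by $\tfrac12\int_0^\step\int_0^s(\interG_\tau - \interG_0)\,d\tau\,ds$ plus an \emph{admissible} error, and (ii) show that $\tfrac{\smoothparam}{2}\vecnorm{\hq_\step - \cq_\step}{2}^2$ is itself admissible, where I call a nonnegative quantity admissible if it is bounded by an absolute constant times $\step^4\smoothparam^2\bigl(\vecnorm{\cp_0}{2} + \smoothparam^{-1/2}\vecnorm{\gradf(\cq_0)}{2}\bigr)^2$, the total constant to be kept below $3/4$.

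I would first record two preliminary facts. Comparing the integral representations \eqref{eq:leapfrog_q} and \eqref{eq:HMC_q} and using $\int_0^\step\int_0^s d\tau\,ds = \step^2/2$ gives the exact identity $\hq_\step - \cq_\step = \int_0^\step\int_0^s\bigl(\gradf(\cq_\tau) - \gradf(\cq_0)\bigr)\,d\tau\,ds$. A standard Gr\"{o}nwall argument based on the $\smoothparam$-Lipschitzness of $\gradf$ and the step-size bound $\step^2\smoothparam \le 1$ then yields, for every $\tau\in[0,\step]$, the a priori estimates $\vecnorm{\cq_\tau - \cq_0}{2} \lesssim \tau\bigl(\vecnorm{\cp_0}{2} + \smoothparam^{-1/2}\vecnorm{\gradf(\cq_0)}{2}\bigr)$ and $\sup_{t\in[0,\step]}\vecnorm{\gradf(\cq_t)}{2} \lesssim \vecnorm{\gradf(\cq_0)}{2} + \smoothparam^{1/2}\vecnorm{\cp_0}{2}$, hence also $\vecnorm{\cq_\tau - (\cq_0 + \tau\cp_0)}{2} \le \int_0^\tau\int_0^s\vecnorm{\gradf(\cq_r)}{2}\,dr\,ds \lesssim \tau^2\bigl(\vecnorm{\gradf(\cq_0)}{2} + \smoothparam^{1/2}\vecnorm{\cp_0}{2}\bigr)$. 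Task (ii) follows at once: plugging the first estimate into the identity bounds $\vecnorm{\hq_\step - \cq_\step}{2} \le \smoothparam\int_0^\step\int_0^s\vecnorm{\cq_\tau - \cq_0}{2}\,d\tau\,ds \lesssim \smoothparam\step^3\bigl(\vecnorm{\cp_0}{2} + \smoothparam^{-1/2}\vecnorm{\gradf(\cq_0)}{2}\bigr)$, so $\tfrac{\smoothparam}{2}\vecnorm{\hq_\step - \cq_\step}{2}^2 \lesssim \smoothparam^3\step^6(\cdots)^2 \le \smoothparam^2\step^4(\cdots)^2$.

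For task (i), substitute the identity for $\hq_\step - \cq_\step$ so that the cross term equals $\int_0^\step\int_0^s \gradf(\cq_\step)^\top\bigl(\gradf(\cq_\tau) - \gradf(\cq_0)\bigr)\,d\tau\,ds$, and compare $\cq_t$ with the force-free trajectory $a(t) := \cq_0 + t\cp_0$, writing $g_t := \gradf(a(t))$ so that $\interG_t = \vecnorm{g_t}{2}^2$ and $g_0 = \gradf(\cq_0)$. The core step is the polarization identity
\[
  g_\step^\top(g_\tau - g_0) = \tfrac12\bigl(\vecnorm{g_\tau}{2}^2 - \vecnorm{g_0}{2}^2\bigr) + (g_\tau - g_0)^\top\bigl(\tfrac12(g_\step - g_\tau) + \tfrac12(g_\step - g_0)\bigr),
\]
whose first term is exactly $\tfrac12(\interG_\tau - \interG_0)$ and whose second term has absolute value at most $\smoothparam\tau\vecnorm{\cp_0}{2}\cdot\smoothparam\step\vecnorm{\cp_0}{2}$ by $\smoothparam$-Lipschitzness of $\gradf$ along $a(\cdot)$; integrating, this contributes $\lesssim \smoothparam^2\step^4\vecnorm{\cp_0}{2}^2$, which is admissible. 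It then remains to account for the errors from replacing $\gradf(\cq_\step)$ by $g_\step$ and $\gradf(\cq_\tau)$ by $g_\tau$ inside the cross term, namely $\abss{(\gradf(\cq_\step)-g_\step)^\top(\gradf(\cq_\tau)-\gradf(\cq_0))} \le \smoothparam^2\vecnorm{\cq_\step - a(\step)}{2}\vecnorm{\cq_\tau - \cq_0}{2}$ and $\abss{g_\step^\top(\gradf(\cq_\tau)-g_\tau)} \le \smoothparam\vecnorm{g_\step}{2}\vecnorm{\cq_\tau - a(\tau)}{2}$; feeding in the a priori estimates above and repeatedly using $\step^2\smoothparam \le 1$ to convert excess powers of $\step$ into powers of $\smoothparam^{-1/2}$, the integral of each is $\lesssim \smoothparam^2\step^4\bigl(\vecnorm{\cp_0}{2} + \smoothparam^{-1/2}\vecnorm{\gradf(\cq_0)}{2}\bigr)^2$. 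Summing the admissible pieces from (i) and (ii) and tracking the numerical constants then produces the coefficient $\tfrac34\step^4\smoothparam^2$ of the statement.

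The \textbf{main obstacle} is the bookkeeping in the last step rather than any single deep estimate: one must verify that each of the several replacement errors, each a product of two small quantities, falls inside the $O(\step^4\smoothparam^2)$ budget. The only non-routine ingredient is the systematic use of $\step^2\smoothparam \le 1$ to rewrite factors such as $\smoothparam\step\vecnorm{\cp_0}{2}$ as $\smoothparam^{1/2}\vecnorm{\cp_0}{2}$, so that every error term is matched to the scale $\step^4\smoothparam^2\bigl(\vecnorm{\cp_0}{2}+\smoothparam^{-1/2}\vecnorm{\gradf(\cq_0)}{2}\bigr)^2$ of the right-hand side; the polarization identity and the Gr\"{o}nwall estimate themselves are routine.
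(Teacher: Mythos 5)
Your proof is correct, but it does take a slightly different route from the paper's. You start from the $\smoothparam$-smoothness upper bound anchored at $\cq_\step$, isolating a quadratic remainder $\tfrac{\smoothparam}{2}\vecnorm{\hq_\step-\cq_\step}{2}^2$ and a cross term $\gradf(\cq_\step)^\top(\hq_\step-\cq_\step)$, and then introduce the identity $\hq_\step-\cq_\step=\int_0^\step\int_0^s(\gradf(\cq_\tau)-\gradf(\cq_0))\,d\tau\,ds$ and a polarization of $g_\step^\top(g_\tau-g_0)$ centered at $g_\step$. The paper instead uses the \emph{exact} fundamental-theorem-of-calculus expansion of $\targetf(\hq_\step)-\targetf(\cq_\step)$ and splits off $(\hq_\step-\cq_\step)^\top\gradf(\cq_0)$, i.e.\ it anchors the cross term at $\gradf(\cq_0)$ rather than $\gradf(\cq_\step)$, and polarizes $(g_\tau-g_0)^\top g_0 = \tfrac12(\interG_\tau-\interG_0)-\tfrac12\vecnorm{g_\tau-g_0}{2}^2$. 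That choice of center is a little cleaner: the polarization remainder is a perfect square with the right sign and can simply be dropped, and there is no replacement $\gradf(\cq_\step)\mapsto g_\step$ to track, so the paper needs only two error terms ($A_2$ and $A_{1,1}$) rather than your four. Both routes rely on the same ingredients — the distance estimates corresponding to Lemmas~\ref{lem:q_diff} and~\ref{lem:q_diff2}, and the systematic trade $\step\smoothparam^{1/2}\le 1$ — and both land within the $\tfrac34\step^4\smoothparam^2$ budget; tracking your constants with those lemmas gives roughly $\tfrac{1}{18}+\tfrac13+\tfrac1{12}+\tfrac16=\tfrac{23}{36}<\tfrac34$. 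The one place you should be explicit when writing this up is that the polarization identity you use is applied to the surrogate quantities $g_t=\gradf(\cq_0+t\cp_0)$, so that the three replacement errors — $\gradf(\cq_\step)\mapsto g_\step$, $\gradf(\cq_\tau)\mapsto g_\tau$, and the signed polarization remainder — are each accounted for separately, exactly as you sketch.
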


\begin{lemma}
  \label{lem:accept_rate_potential_diff}
  Assume the negative log density $\targetf$ is $\smoothparam$-smooth and convex. For any $\deltaf\in(0,1)$, there exists a set $\goodqpset \subset \real^\dims \times \real^\dims$ with $\Prob_{\cq_0 \sim \target, \cp_0 \sim \Normal(0, \Ind_\dims)}((\cq_0, \cp_0) \in \goodqpset) \geq 1-\deltaf$, such that for $(\cq_0, \cp_0) \in \goodqpset$ and the step-size choice $\step^2\smoothparam \leq 1$, we have
\begin{align*}
  \quad \frac{1}{2} \vecnorm{\hp_\step}{2}^2 - \frac{1}{2} \vecnorm{\cp_\step}{2}^2 &\leq \frac{1}{2} \int_0^\step (s-\step) \parenth{\interG_s - \interG_0} ds +100 \parenth{4 + \log\parenth{\frac{2\dims}{\deltaf}}}^2 \step^2 \smoothparam \dims^{\frac12}\\
  &\quad   + \frac{5}{4} \step^4 \smoothparam^2 \parenth{\normp + \frac{1}{\sqrt{\smoothparam}} \normgradq}^2,\\
    \vecnorm{\cp_0}{2} &\leq \sqrt{\dims} + \log\parenth{\frac{12}{\deltaf}  },\\
    \vecnorm{\gradf(\cq_0)}{2} &\leq \sqrt{\smoothparam} \parenth{\sqrt{\dims} + \log\parenth{\frac{12}{\deltaf}  }}.
\end{align*}
  where $\cp_\step$ and $\hp_\step$ are defined in the same way as in Lemma~\ref{lem:acceptance rate}.
\end{lemma}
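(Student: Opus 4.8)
The lemma bundles two pointwise concentration bounds (on $\vecnorm{\cp_0}{2}$ and $\vecnorm{\gradf(\cq_0)}{2}$) with one inequality comparing the kinetic energies $\tfrac12\vecnorm{\cdot}{2}^2$ of the leapfrog and of the exact Hamiltonian trajectories started from $(\cq_0,\cp_0)$. The plan is to prove each of the three statements on its own high-probability event and to take $\goodqpset$ to be their intersection; since the three failure probabilities can be budgeted to sum to $\deltaf$ (say $\tfrac{\deltaf}{12}$ to each pointwise bound and the remainder to the energy bound), a union bound completes the argument. The energy inequality is where essentially all the work is.

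For the pointwise bounds: $\cp_0\sim\Normal(0,\Ind_\dims)$ and $\cp_0\mapsto\vecnorm{\cp_0}{2}$ is $1$-Lipschitz, so Gaussian concentration gives $\Prob\parenth{\vecnorm{\cp_0}{2}\ge\sqrt\dims+t}\le e^{-t^2/2}$; taking $t=\log(12/\deltaf)>\log12>2$ and using $e^{-t^2/2}\le e^{-t}$ on that range yields $\vecnorm{\cp_0}{2}\le\sqrt\dims+\log(12/\deltaf)$ with probability at least $1-\tfrac{\deltaf}{12}$. For $\cq_0\sim\target$ I would invoke the log-smooth gradient concentration inequality (recorded as a separate lemma): integration by parts gives $\Exs_\target[\vecnorm{\gradf(\cq_0)}{2}^2]=\Exs_\target[\trace\nabla^2\targetf(\cq_0)]\le\smoothparam\dims$, and smoothness together with log-concavity upgrades this to a sub-exponential tail for $\vecnorm{\gradf(\cq_0)}{2}$ around $\sqrt{\smoothparam\dims}$, hence $\vecnorm{\gradf(\cq_0)}{2}\le\sqrt\smoothparam\parenth{\sqrt\dims+\log(12/\deltaf)}$ with probability at least $1-\tfrac{\deltaf}{12}$.

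For the energy inequality, one leapfrog step is the trapezoidal update $\hp_\step=\cp_0-\tfrac\step2\gradf(\cq_0)-\tfrac\step2\gradf(\hq_\step)$, while the exact dynamics \eqref{eq:HMC_p} gives $\cp_\step=\cp_0-\int_0^\step\gradf(\cq_s)\,ds$. Expanding $\tfrac12\vecnorm{\cdot}{2}^2$ and regrouping, the difference splits into a cross term against $\cp_0$ and a quadratic-in-gradients remainder,
\[
\tfrac12\vecnorm{\hp_\step}{2}^2-\tfrac12\vecnorm{\cp_\step}{2}^2
=\big\langle\cp_0,\mathcal{E}\big\rangle+\tfrac{\step^2}{8}\vecnorm{\gradf(\cq_0)+\gradf(\hq_\step)}{2}^2-\tfrac12\vecnorm{\int_0^\step\gradf(\cq_s)\,ds}{2}^2,
\]
with $\mathcal{E}\defn\int_0^\step\gradf(\cq_s)\,ds-\tfrac\step2\parenth{\gradf(\cq_0)+\gradf(\hq_\step)}$ the trapezoidal discretization error of the momentum update. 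I would then insert the first-order Taylor identity $\gradf(x)-\gradf(y)=\int_0^1\nabla^2\targetf\parenth{y+t(x-y)}(x-y)\,dt$ at every gradient difference and use $\langle a,a-b\rangle=\tfrac12\parenth{\vecnorm{a}{2}^2-\vecnorm{b}{2}^2}+\tfrac12\vecnorm{a-b}{2}^2$ to convert the surviving first-order inner products into differences of squared gradient norms along the segment $t\mapsto\cq_0+t\cp_0$, which is how the main term $\tfrac12\int_0^\step(s-\step)\parenth{\interG_s-\interG_0}\,ds$ arises. Pinning down its coefficient and the weight $(s-\step)$ exactly is the delicate bookkeeping: this term is built so as to cancel the matching term in Lemma~\ref{lem:accept_rate_kinetic_diff} once the two are added to form the acceptance exponent of Lemma~\ref{lem:acceptance rate}. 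The remaining deterministic pieces — the quadratic remainder above, the gap between $\hq_\step$ and $\cq_\step$, and the Taylor second-order corrections — are all bounded by $\bigO\parenth{\step^4\smoothparam^2\parenth{\vecnorm{\cp_0}{2}+\smoothparam^{-1/2}\vecnorm{\gradf(\cq_0)}{2}}^2}$, using $\step^2\smoothparam\le1$, $0\preceq\nabla^2\targetf\preceq\smoothparam\Ind_\dims$, and Gr\"onwall-type control of $\sup_{t\le\step}\vecnorm{\cq_t-\cq_0}{2}$ and $\sup_{t\le\step}\vecnorm{\cp_t-\cp_0}{2}$.

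The main obstacle is the one residual piece of $\langle\cp_0,\mathcal{E}\rangle$ that collapses into neither the main term nor the $\step^4\smoothparam^2$ remainder: bounding it crudely by Cauchy--Schwarz, i.e. by $\vecnorm{\cp_0}{2}\vecnorm{\mathcal{E}}{2}\lesssim\step^2\smoothparam\vecnorm{\cp_0}{2}^2\asymp\step^2\smoothparam\dims$, loses a factor $\sqrt\dims$ compared with the claimed $\step^2\smoothparam\sqrt\dims$, and recovering it is exactly the new concentration input. I would exploit that, for fixed $\cq_0$, the map $\cp_0\mapsto\vecnorm{\gradf(\cq_0+t\cp_0)}{2}$ is $t\smoothparam$-Lipschitz, so it — and the associated quadratic forms in $\cp_0$ — concentrate within polylogarithmic slack of means of order $\sqrt{\smoothparam\dims}$; a union bound over a fine net of $t\in[0,\step]$, equivalently over the $\dims$ coordinates feeding into $\mathcal{E}$ (the source of the $\log(2\dims/\deltaf)$ factor), then shows the residual equals a controlled deterministic quantity plus a fluctuation of size at most $100\parenth{4+\log(2\dims/\deltaf)}^2\step^2\smoothparam\sqrt\dims$, with the square on the logarithmic factor reflecting the quadratic (energy-like) nature of the term. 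Intersecting this event with the two pointwise events and tracking all numerical constants gives the displayed inequality and the set $\goodqpset$ with $\Prob((\cq_0,\cp_0)\in\goodqpset)\ge1-\deltaf$. The heart of the difficulty is this last step: isolating the precise residual and proving it concentrates at the dimension-$\tfrac12$ scale rather than the naive dimension-$1$ scale.
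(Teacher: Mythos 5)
Your overall decomposition is close in spirit to the paper's: both split $\tfrac12\vecnorm{\hp_\step}{2}^2 - \tfrac12\vecnorm{\cp_\step}{2}^2$ into a trapezoidal-discretization-error cross term against $\cp_0$, a quadratic-in-gradients remainder of order $\step^4\smoothparam^2$, and a term of the form $\tfrac12\int_0^\step(s-\step)(\interG_s-\interG_0)\,ds$ that is engineered to cancel against Lemma~\ref{lem:accept_rate_kinetic_diff}. The two pointwise tail bounds on $\vecnorm{\cp_0}{2}$ and $\vecnorm{\gradf(\cq_0)}{2}$ are also handled the same way.

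The gap is in the step you yourself flag as "the heart of the difficulty," and the mechanism you propose there does not close it. You argue that for fixed $\cq_0$ the map $\cp_0\mapsto\vecnorm{\gradf(\cq_0+t\cp_0)}{2}$ is $t\smoothparam$-Lipschitz, so the residual concentrates at scale $\step^2\smoothparam\sqrt{\dims}$. But Lipschitz/Hanson--Wright concentration over $\cp_0$ only controls the fluctuation of the residual around its \emph{conditional} mean given $\cq_0$. That conditional mean is not small for a fixed $\cq_0$: Taylor expanding the relevant term $\arkpower_s(\cq_0,\cp_0)=\gradf(\tq_s)^\top(\cp_0-s\gradf(\cq_0))-\gradf(\cq_0)^\top\cp_0$ gives
\begin{align*}
\Exs_{\cp_0}\brackets{\arkpower_s\mid \cq_0} = s\parenth{\trace\nabla^2\targetf(\cq_0) - \vecnorm{\gradf(\cq_0)}{2}^2} + O(s^2\cdot),
\end{align*}
and the quantity $\trace\nabla^2\targetf(\cq_0)-\vecnorm{\gradf(\cq_0)}{2}^2$ is generically of order $\smoothparam\dims$, not $\smoothparam\sqrt{\dims}$, for an arbitrary $\cq_0$. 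What makes the $\sqrt{\dims}$ scale attainable is that the \emph{joint} stationary expectation vanishes, $\Exs_{\cq_0\sim\target,\cp_0\sim\Normal(0,\Ind_\dims)}[\arkpower_s]=0$, and this identity uses the stationarity $\cq_0\sim e^{-\targetf}$ essentially: after the change of variables $y=\cq_0+s\cp_0$ one integrates by parts against $e^{-\targetf(\cq_0)}$, which is precisely the calculation that cancels $\trace\nabla^2 f$ against $\vecnorm{\gradf}{2}^2$ in expectation. The paper then controls \emph{higher moments} of $\arkpower_s$ by recursive joint integration by parts in $(\cq_0,\cp_0)$, producing trace-type quantities bounded by $s^2\smoothparam^2\dims$ that pin the variance at the right scale, and applies Markov's inequality with exponent $k\asymp\log(\dims/\deltaf)$ together with a union bound over a net of $\asymp\dims$ time points in $[0,\step]$ — that is the actual source of the $\log(2\dims/\deltaf)$ factor, not a union bound over coordinates as you suggest. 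To repair your plan you would need either to recast the residual bound as this joint-in-$(\cq_0,\cp_0)$ moment argument, or to supply a separate concentration inequality showing $\trace\nabla^2\targetf(\cq_0)-\vecnorm{\gradf(\cq_0)}{2}^2 = O(\smoothparam\sqrt{\dims}\cdot\mathrm{polylog})$ with probability $1-O(\deltaf)$ under $\target$; as written, the claim that the residual is a controlled deterministic piece plus a fluctuation of size $\step^2\smoothparam\sqrt{\dims}$ is asserted but not proved.
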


See Appendix~\ref{sec:ABCD} for the proofs of Lemma~\ref{lem:accept_rate_kinetic_diff} and~\ref{lem:accept_rate_potential_diff}.

Note that using integration by parts, we have
\begin{align*}
    \int_0^\step s(G_s-G_0)ds&=\step \int_0^\step (G_\tau -G_0)d\tau - \int_0^\step \int_0^s (G_\tau - G_0)d\tau ds.
\end{align*}
Thus the first terms on the right hand sides of Lemma~\ref{lem:accept_rate_kinetic_diff} and~\ref{lem:accept_rate_potential_diff} get cancelled when we sum them up. Plugging bounds of $\vecnorm{\cp_0}{2}$ and $\vecnorm{\gradf(\cq_0)}{2}$ from Lemma~\ref{lem:accept_rate_potential_diff} into Lemma~\ref{lem:accept_rate_kinetic_diff} and~\ref{lem:accept_rate_potential_diff}, we conclude Lemma~\ref{lem:acceptance rate}.

\subsection{Proof of Theorem \ref{thm:general}}\label{proof:general}

Since $\initial$ is $\warmparam$-warm, we have $H_\condS=\sup\braces{\abss{\mu_0(B) - \target(B)} : \target(B) \leq \condS}\leq \warmparam\condS$. Applying Lemma~\ref{lem:s_cond}, we get 
\begin{align*}
    \tvdist{\mu_n}{\target}\leq \warmparam \condS + \warmparam e^{-\frac n 2\conductance_\condS^2}.
\end{align*}
Then $\tvdist{\mu_n}{\target}\leq \tole$ follows from taking
\begin{align}
    \label{eq:s_cond}
    \condS=\frac{\tole}{2\warmparam }, \ n\geq \frac{2}{\conductance_\condS^2}\log \frac{2\warmparam }{\tole}.
\end{align}
The rest of the proof is dedicated to controlling the $\condS$-conductance $\conductance_\condS$. Let $\goodqpset$ be the set introduced in Lemma~\ref{lem:acceptance rate} satisfying Equation~\eqref{eq:accept_high_prob} for any $(\cq_0, \cp_0) \in \goodqpset$. Define
\begin{align*}
  \goodqsetcond &\defn \brackets{q_0 \in \real^\dims: \Prob_{ \cp_0 \sim \Normal(0, \Ind_\dims)} ((\cq_0, \cp_0) \in \goodqpset) \geq \frac{7}{8}}.
\end{align*}
Because $\Prob_{\cq_0 \sim \target, \cp_0 \sim \Normal(0, \Ind_\dims)} ((\cq_0, \cp_0) \in \goodqpset) \geq 1-\deltaf$ by Lemma~\ref{lem:acceptance rate}, we have $\Prob_{\cq_0 \sim \target}(\cq_0 \in \goodqsetcond) \geq 1-8\deltaf$. Otherwise, using the law of total probability, we would have $1-\deltaf < (1- 8\deltaf)\cdot 1 + 8\deltaf \cdot \frac{7}{8}=1-\deltaf$ which is a contradiction.

We need the following two assumptions regarding the choice of step size $h$ and $\deltaf$. They will be used to bound the $\condS$-conductance $\conductance_\condS$ from below. 

\begin{align}\label{eq:h_condition1}
    \step^2=\frac12h&\leq\frac{1}{6400\smoothparam\sqrt{\dims}\parenth{4+\log\parenth{{2\dims}/{\deltaf}}}^2}\\
    \label{eq:h_condition2}
    8\deltaf &\leq \min\braces{\frac{\condS}{4}, \frac{\sqrt{2 h}}{384}\isop(\target)\cdot \condS}
\end{align}
Denote the acceptance rate at $(\cq_0,\cp_0)$ by
\begin{align*}
    \accept(\cq_0,\cp_0)= \min\braces{\exp\parenth{-f(\hat q_\step)-\frac12\vecnorm{\hat p_\step}{2}^2+f(q_\step)+\frac12\vecnorm{p_\step}{2}^2},1}.
\end{align*}
The accept-reject step in MALA implies that
\begin{align*}
    \transitionbf_{q_0}(\{q_0\}) = \int_{\real^\dims}\parenth{1-\accept(q_0,p_0)}\frac{1}{\sqrt{2\pi}}e^{-\frac{\normp^2}{2}}dp_0.
\end{align*}
And we have
\begin{align}\label{eq:tv_dist}
    \tvdist{\transitionbf_{q_0}}{\proposal_{q_0}}&= \frac 12 \parenth{\transitionbf_{q_0}(\{q_0\}) + \int_{\real^\dims}\big(1-\accept(q_0,p_0)\big)\frac{1}{\sqrt{2\pi}}e^{-\frac{\normp^2}{2}}dp_0}\notag\\
    &=\frac 12\parenth{2-2\int_{\real^\dims}\accept(q_0,p_0)\frac{1}{\sqrt{2\pi}}e^{-\frac{\normp^2}{2}}dp_0}\notag\\
    &=1-\mathbb E_{p_0\sim \Normal(0,\mathbb{I}_\dims)}[\accept(q_0,p_0)]
\end{align}
Applying Lemma~\ref{lem:acceptance rate}, for all $(\cq_0, \cp_0) \in \goodqpset$ we have
\begin{align*}
    &\quad -\targetf(\hq_\step)-\frac12\vecnorm{\hp_\step}{2}^2 + \targetf(\cq_\step) + \frac12\vecnorm{\cp_\step}{2}^2 \\
    &\geq -100 \parenth{4 + \log\parenth{\frac{2\dims}{\deltaf}}}^2 \step^2 \smoothparam \dims^{\frac12}   - 8  \parenth{\sqrt{\dims}+\log\parenth{\frac{12}{\deltaf}}}^2\step^4 \smoothparam^2.\\
    &\overset{(i)}{\geq} -\frac{1}{32}.
  \end{align*}
where we use assumption~\eqref{eq:h_condition1} in step (i).  For any $\cq_0 \in \goodqsetcond$, by definition $\Prob_{\cp_0 \sim \Normal(0, \Ind_\dims)} ((\cq_0, \cp_0) \in \goodqpset) \geq 7/8$. Then Equation~\eqref{eq:tv_dist} implies 
\begin{align}\label{eq:TV_gap}
    \tvdist{\transitionbf_{q_0}}{\proposal_{q_0}}\leq 1-\frac{7}{8} e^{-\frac{1}{32}}\leq\frac 16, \ \ \forall \cp_0\in\goodqsetcond.
\end{align}

Next we show that Equation~\eqref{eq:TV_gap} is enough to bound the $\condS$-conductance. The following conductance argument follows from \citet{dwivedi2018log} and \citet{lee2020logsmooth}. Let $S$ be an arbitrary measurable set with probability $\targetdistri(S)\in(\condS, 1/2]$. Define the sets
\begin{align*}
S_1:=\Big\{x\in S \big| \transition_x(S^c)<\frac18\}, \ \ \ S_2:=\Big\{x\in S^c \big| \transition_x(S)<\frac18\}
\end{align*}
and $S_3 = (S_1\cup S_2)^c$.
Consider two distinct cases below.
\begin{enumerate}[label={(\arabic*)}]
    \item $\targetdistri(S_1)\leq \targetdistri(S)/2$ or $\targetdistri(S_2)\leq \targetdistri(S^c)/2$.
    \item $\targetdistri(S_1)> \targetdistri(S)/2$ and $\targetdistri(S_2)> \targetdistri(S^c)/2$.
\end{enumerate}

In the first case, we get
\begin{align}
\begin{split}\label{eq:case1}
    \targetdistri(S_1)\leq \targetdistri(S)/2&\Rightarrow \int_S\transition_x(S^c) \target(dx) \geq \frac{\targetdistri(S)}{2}\cdot\frac 18 = \frac{1}{16}\targetdistri(S)\\
    \text{ or } \targetdistri(S_2)\leq \targetdistri(S)/2&\Rightarrow \int_S\transition_x(S^c) \target(dx) =\int_{S^c}\transition_x(S) \target(dx)\\
    &\hspace{8.5em}\geq \frac{\targetdistri(S^c)}{2}\cdot\frac 18 \geq \frac{1}{16}\targetdistri(S).
\end{split}
\end{align}

In the second case, we first prove that $\dist(S_1\cap \goodqsetcond,S_2\cap \goodqsetcond)\geq \sqrt{2h}/6$. On the one hand, for any $x\in S_1\cap \goodqsetcond$ and $y\in S_2\cap\goodqsetcond$, we have
\begin{align*}\tvdist{\transition_x}{\transition_y}=\sup_{A}|\transition_x(A)-\transition_y(A)|&\geq |\transition_x(S)-\transition_y(S)|\\&\geq\frac 78-\frac18 = \frac34.
\end{align*}
On the other hand, for $h\leq 2/\smoothparam $, we obtain from Equation~\eqref{eq:lazy} that
\begin{align*}
    \tvdist{\transition_x}{\transition_y}&\leq \frac12 +\frac12 \tvdist{\transitionbf_x}{\transitionbf_x}\\
    &\leq \frac12 + \frac12 \parenth{\tvdist{\transitionbf_x}{\proposal_y}+\tvdist{\proposal_x}{\proposal_y}+\tvdist{\proposal_y}{\transitionbf_y}}\\
    &\overset{(i)}{\leq} \frac12 + \frac12\parenth{\frac 16 +\frac{\|x-y\|_2}{\sqrt{2h}}  + \frac 16}.
\end{align*}
In step $(i)$, the first and the third term are bounded by $1/6$ by Equation~\eqref{eq:TV_gap}, and the second term follows by a direct calculation of the total variation between two Gaussians (see Lemma~7 in \citet{dwivedi2018log}).
Thus $\dist(S_1\cap \goodqsetcond,S_2\cap \goodqsetcond)\geq \sqrt{2h}/6$.  By the isoperimetric inequality of $\target$, we get
\begin{align*}
    \targetdistri(S_3\cup \goodqsetcond^c) = \targetdistri((S_1\cap\goodqsetcond)^c\cap (S_2\cap\goodqsetcond)^c)&\geq  \frac{\sqrt{2h}}{6} \isop(\target)\cdot\targetdistri(S_1\cap \goodqsetcond)\targetdistri(S_2\cap \goodqsetcond).
\end{align*}
Since $\targetdistri(\goodqsetcond)=\Prob_{\cq_0\sim\target}(\cq_0\in\goodqsetcond)\geq 1-8\deltaf$, we have
\begin{align}
    \targetdistri(S_3) + 8\deltaf &= \targetdistri(S_3) + \targetdistri(\goodqsetcond^c)\notag\\
    &\geq \frac{\sqrt{2h}}{6} \isop(\target)\cdot\targetdistri(S_1\cap \goodqsetcond)\targetdistri(S_2\cap \goodqsetcond)\notag\\
    &\geq \frac{\sqrt{2h}}{6} \isop(\target)\cdot \parenth{\targetdistri(S_1)-8\deltaf} \parenth{\targetdistri(S_2)-8\deltaf}\notag\\
    &\overset{(i)}{\geq} \frac{\sqrt{2h}}{6} \isop(\target)\cdot \parenth{\frac{\targetdistri(S)}2-8\deltaf} \parenth{\frac{\targetdistri(S^c)}2-8\deltaf}\notag\\
    &\geq \frac{\sqrt{2h}}{6} \isop(\target)\cdot \parenth{\frac{\targetdistri(S)}2-8\deltaf} \parenth{\frac 14-8\deltaf}\label{eq:S3}.
\end{align}
In step $(i)$, we use the assumption of the second case. Applying assumption~\eqref{eq:h_condition2}, we have
\begin{align*}
    \targetdistri(S_3)&\geq \frac{\sqrt{2h}}{6} \isop(\target)\cdot \frac {\target(S)}4 \cdot \frac 18 -8\deltaf\\
    &\geq  \frac{\sqrt{2 h}}{384}\isop(\target)\cdot\targetdistri(S).
\end{align*}
Thus
\begin{align}
    \int_S\transition_x(S^c)\target(x)dx&=\frac12\bigg(\int_{S}\transition_x(S^c) \target(dx)+\int_{S^c}\transition_x(S) \target(dx)\bigg)\notag\\
    &\overset{(i)}{\geq} \frac12\bigg(\int_{S\cap S_3}\frac18  \target(dx)+\int_{S^c\cap S_3}\frac18  \target(dx)\bigg)\notag\\
    &=\frac1{16}\targetdistri(S_3)\geq \frac{\sqrt{2 h}}{6144}\isop(\target)\cdot \targetdistri(S),\label{eq:case2}
\end{align}
where step $(i)$ is from the definition of $S_3$.

From the analysis of these two cases in Equation~\eqref{eq:case1} and \eqref{eq:case2}, we obtain the following lower bound on the $\condS$-conductance
\begin{align}
    \conductance_\condS&\geq \min\braces{\frac 1{16}, \frac{\sqrt{2h}}{6144}\isop(\target)}.\label{eq:s_cond2}
\end{align}

 Now we solve $h$ from assumption~\eqref{eq:h_condition1} and \eqref{eq:h_condition2}. Let $\constc$ be a constant which may change from line to line. Using $\condS=\tole/2\warmparam$, we observe that $\deltaf$ needs to satisfy
\begin{align*}
    \constc\smoothparam \sqrt{\dims} \deltaf^2 \parenth{4+\log\parenth{\frac{2\dims}{\deltaf}}}^2 &\leq \isop(\target)^2\frac{\tole^2}{\warmparam^2}\\
    \text{ and }\hspace{2em} 64\deltaf &\leq \frac{\tole}{\warmparam}
\end{align*}
Take $\deltaf^{-1}=\constc\max\parenth{\rho\log\rho, \warmparam/\tole}$ where $\constc>0$ is a large enough constant and $\rho=\frac{\smoothparam^{1/2}\dims^{1/4}\warmparam}{\isop(\target)\tole}$ to meet conditions above. Equation~\eqref{eq:h_condition1} gives the final choice of $h$ by
\begin{align}\label{eq:h_final}
    h & = \frac{\constUniversal}{\smoothparam \sqrt{\dims} \cdot \log^2\parenth{\max\braces{\dims, \frac{\smoothparam}{\isop(\target)^2},  \frac{\warmparam}{\tole}, \constUniversalll}}}.
\end{align}
 for some universal constants $\constUniversal, \constUniversalll>0$. By Equation~\eqref{eq:s_cond}, \eqref{eq:s_cond2} and~\eqref{eq:h_final},  mixing time of MALA has an upper bound
\begin{align*}
    \tvmix{\tole}{\initial}\leq \constUniversall\cdot\max\braces{ \frac{\smoothparam\sqrt{\dims}}{\isop(\target)^2}\cdot\log^2\parenth{\max\braces{\dims, \frac{\smoothparam}{\isop(\target)^2},  \frac{\warmparam}{\tole}, \constUniversalll}},1}\cdot \log\parenth{\frac {2\warmparam }\tole}
\end{align*}
for some universal constants $\constUniversall, \constUniversalll>0$.

\subsection{Proof of Theorem \ref{thm:main}}\label{proof:main}

When $\target$ is $\scparam$-strongly log-concave, we have $\isop(\target)\geq \log2\cdot \sqrt{\scparam }$ by Theorem 4.4 in \citet{cousins2014cubic}). Now Theorem~\ref{thm:main} follows from Theorem~\ref{thm:general}.

\subsection{Proof of Theorem \ref{thm:var}}\label{proof:var}
We define the heat kernel with respect to the initial distribution $\initial$, as the ratio of the density of the Markov chain at the $n$-th iteration to the target density, via the following recursion
\begin{align*}
  h_0(x) = \frac{d\initial}{d\target}(x) \text{ and } h_{n}(x) =\frac{d\mu_{n}}{d\target}(x)= \frac{d\transition^n(\initial)}{d\target}(x).
\end{align*}
Note that $\Exs_\target[h_n] = \int_\statespace\mu_n(dx)=1$, we have
\begin{align}\label{eq:var_hn}
\begin{split}
    \Var_\target[h_n]&=\int_\statespace\parenth{h_n(x)-1}^2\target(dx)=\vecnorm{h_n-1}{2,\target}^2\\
    &=\int_\statespace\parenth{\frac{d\mu_n}{d\target}(x)-1}^2\target(dx)=\chidist{\mu_n}{\target}^2.
\end{split}
\end{align}

For $f\in L_2(\target)$, we define a function $\kernel f\in L_2(\target)$ via operating $\kernel$ on the left of $f$ by
\begin{align*}
    \transitionS f(x)=\int_{y\in\statespace}\kernel (x,dy)f(y).
\end{align*}
Since $\kernel $ is reversible, we have
\begin{align*}
    \mu_{n+1}(dx)=\transition(\mu_{n})(dx)&=\int_{y\in\statespace}\mu_n(dy)\kernel(y, dx)\\
    &=\int_{y\in\statespace}h_n(y)\target(dy)\kernel(y, dx)\\
    &=\int_{y\in\statespace}h_n(y)\target(dx)\kernel(x, dy)\\
    &=Kh_n(x)\target(dx).
\end{align*}
As a consequence of the above observation, the heat kernel at $n$-th iteration has a simplified expression
\begin{align*}
    h_{n+1}=\transitionS h_n=\transitionS^{n+1}h_0.
\end{align*}

To prove Theorem \ref{thm:var}, we start with two lemmas. The first lemma relates Dirichlet form $\dirichlet_{\kernel ^2}$ to $\vecnorm{f}{2,\target}^2$ and $\vecnorm{\transitionS^2 f}{2,\target}^2$.
\begin{lemma}\label{lem:EK2}
Let $\kernel$ be the kernel of a reversible Markov chain with invariant distribution $\target$. For any $f\in L_2(\target)$, we have
\begin{align}
  \label{eq:lem_EK2}
  \dirichlet_{\kernel ^2}(f,f)&=\vecnorm{f}{2,\target}^2-\vecnorm{\transitionS f }{2,\target}^2
\end{align}
\end{lemma}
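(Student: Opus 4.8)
The plan is to expand the Dirichlet form $\dirichlet_{\kernel^2}(f,f)$ directly from its definition and then invoke reversibility twice: once as stationarity of $\target$ under $\kernel^2$, and once as self-adjointness of $\kernel$ (equivalently $\transitionS$) on $L_2(\target)$. Writing the two-step kernel $\kernel^2(x,dy) = \int_{z}\kernel(x,dz)\kernel(z,dy)$, the definition of the Dirichlet form gives
\begin{align*}
  \dirichlet_{\kernel^2}(f,f) = \frac12\int_{\statespace^2}\bigl(f(x)-f(y)\bigr)^2 \kernel^2(x,dy)\target(dx) = \frac12\int_{\statespace^2}\bigl(f(x)^2 - 2f(x)f(y) + f(y)^2\bigr)\kernel^2(x,dy)\target(dx).
\end{align*}
I would then treat the three resulting terms separately.

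For the first term, since $\kernel^2(x,\cdot)$ is a probability measure on $\statespace$, integrating out $y$ leaves $\frac12\int_{\statespace} f(x)^2\target(dx) = \frac12\vecnorm{f}{2,\target}^2$. For the third term, reversibility of $\kernel$ implies reversibility (hence stationarity) of $\kernel^2$, so $\int_{x\in\statespace}\kernel^2(x,dy)\target(dx) = \target(dy)$ and the term equals $\frac12\vecnorm{f}{2,\target}^2$ as well; here one should note, via Jensen's inequality $(\transitionS f)^2 \le \transitionS(f^2)$, that $\transitionS^2 f \in L_2(\target)$ with $\vecnorm{\transitionS^2 f}{2,\target}\le \vecnorm{f}{2,\target}$, which justifies applying Tonelli/Fubini and guarantees every integral above is finite. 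For the middle term, I recognize $\int_{y}f(y)\kernel^2(x,dy) = (\transitionS^2 f)(x)$, so it contributes $-\int_\statespace f(x)\,(\transitionS^2 f)(x)\,\target(dx)$. Collecting the pieces,
\begin{align*}
  \dirichlet_{\kernel^2}(f,f) = \vecnorm{f}{2,\target}^2 - \int_\statespace f\cdot(\transitionS^2 f)\,d\target.
\end{align*}

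Finally, reversibility of $\kernel$ states exactly that $\transitionS$ is self-adjoint on $L_2(\target)$, i.e. $\int f\cdot(\transitionS g)\,d\target = \int (\transitionS f)\cdot g\,d\target$ for all $f,g\in L_2(\target)$; applying this with $g = \transitionS f$ gives $\int f\cdot(\transitionS^2 f)\,d\target = \int (\transitionS f)^2\,d\target = \vecnorm{\transitionS f}{2,\target}^2$, which yields the claimed identity. The proof involves no real obstacle beyond bookkeeping: the only points requiring a little care are the integrability check that legitimizes splitting the expanded integral (handled by the Jensen bound $\vecnorm{\transitionS f}{2,\target}\le\vecnorm{f}{2,\target}$) and the explicit verification that reversibility of $\kernel$ transfers to $\kernel^2$ and is equivalent to self-adjointness of $\transitionS$, both of which are standard.
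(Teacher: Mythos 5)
Your proof is correct and follows essentially the same route as the paper's: expand the square in the Dirichlet form, collapse the two squared terms to $\vecnorm{f}{2,\target}^2$ via stationarity, and identify the cross term with $\vecnorm{\transitionS f}{2,\target}^2$ using reversibility (which you phrase as self-adjointness of $\transitionS$, while the paper manipulates the kernel directly in step $(i)$ of its display). The extra remarks on integrability and on reversibility passing to $\kernel^2$ are harmless additions to the same argument.
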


The second lemma provides a control over $\vecnorm{\transitionS^n f }{2,\target}^2$, and is applied to bound $\Var_\target[h_n]$.
\begin{lemma}\label{lem:Sn}
Let $\kernel$ be the kernel of a reversible Markov chain with invariant distribution $\target$. For any $f\in L_2(\target)$, we have
\begin{align}
    \frac{\vecnorm{\transitionS^n f }{2,\target}^2}{\vecnorm{f}{2,\target}^2}\geq\parenth{\frac{\vecnorm{\transitionS f }{2,\target}^2}{\vecnorm{f}{2,\target}^2}}^n, \ \ \ \forall n\in\naturalnum.
\end{align}
\end{lemma}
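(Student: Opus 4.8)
The plan is to exploit two structural facts about the Markov operator $\transitionS$ acting on $L_2(\target)$. By reversibility it is self-adjoint, i.e.\ $\langle \transitionS g, h\rangle_{2,\target} = \langle g, \transitionS h\rangle_{2,\target}$, where $\langle g,h\rangle_{2,\target} \defn \int_{x\in\statespace} g(x)h(x)\,\target(dx)$; and by Jensen's inequality together with stationarity of $\target$ it is norm-nonincreasing, $\vecnorm{\transitionS g}{2,\target}\le \vecnorm{g}{2,\target}$, so every iterate $\transitionS^k f$ again lies in $L_2(\target)$. Set $b_k \defn \vecnorm{\transitionS^k f}{2,\target}^2$ for $k\ge 0$. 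The asserted inequality is exactly $b_n/b_0 \ge (b_1/b_0)^n$, so it suffices to prove that the sequence $(b_k)$ is log-convex and then telescope.

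The key step is the log-convexity bound $b_k^2 \le b_{k-1}b_{k+1}$ for all $k\ge 1$. Using self-adjointness, $b_k = \langle \transitionS^k f, \transitionS^k f\rangle_{2,\target} = \langle \transitionS^{k-1}f, \transitionS^{k+1}f\rangle_{2,\target}$, and the Cauchy--Schwarz inequality gives $b_k \le \vecnorm{\transitionS^{k-1}f}{2,\target}\,\vecnorm{\transitionS^{k+1}f}{2,\target} = b_{k-1}^{1/2}b_{k+1}^{1/2}$, that is, $b_k^2\le b_{k-1}b_{k+1}$.

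Before telescoping I would dispose of the degenerate cases. If $f=0$ the statement is trivial (or vacuous), so assume $b_0>0$. If $b_1=0$ then $\transitionS f=0$, hence $b_k=0$ for all $k\ge 1$, and both sides of the claim vanish when $n\ge 1$ (and equal $1$ when $n=0$). Otherwise I claim $b_k>0$ for every $k$: were $j\ge 2$ the smallest index with $b_j=0$, log-convexity at $k=j-1$ would force $b_{j-1}^2\le b_{j-2}b_j=0$, contradicting minimality of $j$; and $j=1$ is excluded by the case assumption. Hence all $b_k$ are strictly positive, and $b_k^2\le b_{k-1}b_{k+1}$ rearranges to $b_k/b_{k-1}\le b_{k+1}/b_k$, so $k\mapsto b_k/b_{k-1}$ is nondecreasing. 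Multiplying the inequalities $b_1/b_0\le b_k/b_{k-1}$ over $k=1,\dots,n$ telescopes the right-hand side to $b_n/b_0$, yielding $(b_1/b_0)^n\le b_n/b_0$; the case $n=0$ is the identity $1=1$.

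I do not anticipate a genuine obstacle: the Cauchy--Schwarz/log-convexity idea is essentially the whole content. The only points needing a little care are the self-adjointness and contractivity of $\transitionS$ on $L_2(\target)$ (both immediate from reversibility and invariance of $\target$, and already implicit in the set-up of Section~\ref{proof:var}), and the bookkeeping around the degenerate cases where some $\transitionS^k f$ vanishes, which is handled above using the log-convexity inequality itself.
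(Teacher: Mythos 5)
Your proof is correct and rests on the same key inequality as the paper's: using self-adjointness of $\transitionS$ to shift a factor of $\transitionS$ and then applying Cauchy--Schwarz, which yields exactly the ratio monotonicity $b_k/b_{k-1}\leq b_{k+1}/b_k$ (the paper proves this as its $n=2$ base case and then runs an induction, while you name it log-convexity and telescope directly). Your explicit treatment of the degenerate case $\vecnorm{\transitionS f}{2,\target}=0$ is a small improvement in rigor over the paper, which silently divides by $\vecnorm{\transitionS f}{2,\target}^2$ in its induction step.
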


See Section \ref{proof:EK2} and \ref{proof:Sn} for the proof of these two lemmas.

With the above two lemmas in hand, we are now equipped to prove the mixing time lower bound. By Equation~\eqref{eq:var_hn}, we have
\begin{align*}
    \chidist{\mu_n}{\target}^2&=\vecnorm{h_n-1}{2,\target}^2\\
    &\overset{(i)}{=}\vecnorm{\transitionS^n(h_0-1)}{2,\target}^2\\
    &\overset{(ii)}\geq \vecnorm{h_0-1}{2,\target}^2\parenth{\frac{\vecnorm{\transitionS(h_0-1)}{2,\target}^2}{\vecnorm{h_0-1}{2,\target}^2}} ^n\\
    &\overset{(iii)}{=}\chidist{\mu_0}{\target}^2\cdot\parenth{1-\frac{\dirichlet_{\kernel ^2}(h_0,h_0)}{\chidist{\mu_0}{\target}^2}}^n
\end{align*}
Here step $(i)$ uses the fact that $\transitionS$ is a linear operator and $\transitionS(1)=\int_{y\in\statespace}\kernel(x,dy)=1$. Step $(ii)$ follows by Lemma \ref{lem:Sn} and the step $(iii)$ makes use of Lemma \ref{lem:EK2}.

\subsubsection{Proof of Lemma \ref{lem:EK2}}\label{proof:EK2}
The left hand side of Equation~\eqref{eq:lem_EK2} can be expanded as follows
\begin{align*}
    \dirichlet_{\kernel^2}(f,f)=&\frac12 \int_{x,y\in\statespace^2}(f(x)-f(y))^2\kernel^2(x,dy) \target(dx)\\
    =&\int_{x\in\statespace}(f(x))^2 \target(dx)-\int_{x,y\in\statespace^2}f(x)f(y)\kernel^2(x,dy) \target(dx).
\end{align*}
The first term is $\vecnorm{f}{2,\target}^2$ by definition. The second term is equal to $\vecnorm{\transitionS f }{2,\target}^2$ in that
\begin{align}
    \vecnorm{\transitionS f }{2,\target}^2&=\int_{x\in\statespace}\big(\transitionS f (x)\big)^2 \target(dx)\notag\\
    &=\int_{x\in\statespace}\parenth{\int_{y\in\statespace}\kernel(x,dy)f(y)}^2 \target(dx)\notag\\
    &=\int_{x\in\statespace}\int_{y,z\in\statespace^2}\kernel(x,dy)f(y)\cdot\kernel(x,dz)f(z)\cdot \target(dx)\notag\\
    &\overset{(i)}=\int_{x\in\statespace}\int_{y,z\in\statespace^2}\kernel(x,dy)f(y)\cdot\kernel(z,dx)f(z)\cdot \target(dz)\notag\\
    &\overset{(ii)}{=}\int_{y,z\in\statespace^2}f(z)f(y)\kernel^2(z,dy)  \target(dz)\label{eq:Sf},
\end{align}
where step $(i)$ uses the reversible condition $\kernel(x,dz)\target(dx)=\kernel(z,dx)\target(dz)$, and step $(ii)$ uses the definition of $\kernel^2$ that $\kernel^2(z,dy)=\int_{x\in\statespace} \kernel(z,dx)\kernel(x,dy)$.

\subsubsection{Proof of Lemma \ref{lem:Sn}}\label{proof:Sn}
The proof goes by induction on $n$. First we show that the inequality holds for $n=2$, that is,
\begin{align}
    \frac{\vecnorm{\transitionS^2 f }{2,\target}^2}{\vecnorm{f}{2,\target}^2}\geq\parenth{\frac{\vecnorm{\transitionS f }{2,\target}^2}{\vecnorm{f}{2,\target}^2}}^2\label{eq:S2f}
\end{align}
By Cauchy-Schwartz inequality, we have
\begin{align*}
    \vecnorm{\transitionS^2 f }{2,\target}^2\cdot\vecnorm{f}{2,\target}^2&\geq\parenth{\int_{x\in\statespace}\transitionS^2 f (x)\cdot f(x) \target(dx)}^2\\
    &=\parenth{\int_{x\in\statespace}\int_{y\in\statespace}\kernel^2(x,dy)f(y)\cdot f(x) \target(dx)}^2\\
    &\overset{(i)}{=}\parenth{\vecnorm{\transitionS  f }{2,\target}^2}^2.
\end{align*}
where step $(i)$ is by Equation \eqref{eq:Sf}. The case $n=2$ in Equation~\eqref{eq:S2f} follows by rearranging the terms in the above equation.

Assuming Lemma~\ref{lem:Sn} holds for $n-1$, we obtain that
\begin{align*}
    \frac{\vecnorm{\transitionS^n f }{2,\target}^2}{\vecnorm{f}{2,\target}^2}&=\frac{\vecnorm{\transitionS^{n-1}\parenth{\transitionS f }}{2,\target}^2}{\vecnorm{f}{2,\target}^2}\\
    &=\frac{\vecnorm{\transitionS^{n-1}(\transitionS f )}{2,\target}^2}{\vecnorm{\transitionS f }{2,\target}^2}\cdot\frac{\vecnorm{\transitionS f }{2,\target}^2}{\vecnorm{f}{2,\target}^2}\\
    &\overset{(i)}{\geq}\parenth{\frac{\vecnorm{\transitionS\parenth{\transitionS f }}{2,\target}^2}{\vecnorm{\transitionS f }{2,\target}^2}}^{n-1}\frac{\vecnorm{\transitionS f }{2,\target}^2}{\vecnorm{f}{2,\target}^2}\\
    &\overset{(ii)}{\geq} \parenth{\frac{\vecnorm{\transitionS f }{2,\target}^2}{\vecnorm{f}{2,\target}^2}}^{n-1}\frac{\vecnorm{\transitionS f }{2,\target}^2}{\vecnorm{f}{2,\target}^2}\\
    &=\parenth{\frac{\vecnorm{\transitionS f }{2,\target}^2}{\vecnorm{f}{2,\target}^2}}^{n},
\end{align*}
where step $(i)$ uses the induction hypothesis, and step $(ii)$ uses Equation \eqref{eq:S2f} again.

\subsection{Proof of Corollary \ref{corollary:gap}}\label{proof:gap}

Corollary~\ref{corollary:gap} follows from Theorem~\ref{thm:var} after observing the following lemma which relates $\dirichlet_{\kernel^2}$ to $\dirichlet_\kernel$.
\begin{lemma}\label{lem:dirichlet} Let $\kernel$ be the kernel of a reversible Markov chain with invariant distribution $\target$. For any $f\in L_2(\target)$, we have
\begin{align*}
    \dirichlet_{\kernel^2}(f,f)\leq 2\dirichlet_{\kernel}(f,f).
\end{align*}
\end{lemma}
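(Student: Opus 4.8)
The plan is to reduce the inequality to the nonnegativity of a squared $L_2(\target)$-norm, reusing the operator computations already carried out in the proof of Lemma~\ref{lem:EK2}. Recall from there the linear operator $\transitionS f(x)=\int_{y\in\statespace}\kernel(x,dy)f(y)$ on $L_2(\target)$; since $\kernel$ is reversible, $\transitionS$ is self-adjoint on $L_2(\target)$, and by Jensen's inequality together with stationarity it is a contraction, so all quantities below are finite for $f\in L_2(\target)$.

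First I would record the one-step companion of the identity in Lemma~\ref{lem:EK2}. Expanding
\[
\dirichlet_\kernel(f,f) = \frac12\int_{x,y\in\statespace^2}(f(x)-f(y))^2\,\kernel(x,dy)\,\target(dx),
\]
the $f(x)^2$ term integrates to $\vecnorm{f}{2,\target}^2$ using $\int_y\kernel(x,dy)=1$; the $f(y)^2$ term also integrates to $\vecnorm{f}{2,\target}^2$ after applying the reversibility relation $\kernel(x,dy)\target(dx)=\kernel(y,dx)\target(dy)$ and then $\int_x\kernel(y,dx)=1$; and the cross term $-\int_{x,y}f(x)f(y)\kernel(x,dy)\target(dx)$ equals $-\int_{x}f(x)\transitionS f(x)\,\target(dx)$. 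Hence
\[
\dirichlet_\kernel(f,f)=\vecnorm{f}{2,\target}^2-\int_{x\in\statespace}f(x)\,\transitionS f(x)\,\target(dx),
\]
which is exactly the computation in Section~\ref{proof:EK2}, stopped one line earlier.

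Next I would combine this with Lemma~\ref{lem:EK2}, namely $\dirichlet_{\kernel^2}(f,f)=\vecnorm{f}{2,\target}^2-\vecnorm{\transitionS f}{2,\target}^2$, and subtract:
\[
2\dirichlet_\kernel(f,f)-\dirichlet_{\kernel^2}(f,f)
=\vecnorm{f}{2,\target}^2-2\int_{\statespace} f\,\transitionS f\,d\target+\vecnorm{\transitionS f}{2,\target}^2
=\vecnorm{f-\transitionS f}{2,\target}^2\ \geq\ 0,
\]
which is precisely the claimed inequality $\dirichlet_{\kernel^2}(f,f)\le 2\dirichlet_\kernel(f,f)$. (Equivalently, one may phrase this spectrally: $\dirichlet_\kernel(f,f)=\langle (I-\transitionS)f,f\rangle_\target$ and $\dirichlet_{\kernel^2}(f,f)=\langle (I-\transitionS^2)f,f\rangle_\target$ with $\transitionS$ a self-adjoint contraction, and the result is the pointwise bound $1-\lambda^2=(1-\lambda)(1+\lambda)\le 2(1-\lambda)$ for $\lambda\in[-1,1]$.) I do not expect a genuine obstacle here; the only point requiring care is the bookkeeping with reversibility and stationarity when rewriting the two quadratic-in-$f$ terms of $\dirichlet_\kernel$, and this is entirely parallel to the manipulations already established in Section~\ref{proof:EK2}. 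A naive direct estimate on $\dirichlet_{\kernel^2}$ via $(f(x)-f(z))^2\le 2(f(x)-f(y))^2+2(f(y)-f(z))^2$ only yields the constant $4$, so it is important to use the inner-product identity above rather than that triangle-type bound.
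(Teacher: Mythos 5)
Your proof is correct, and it reaches the same key identity as the paper but by a different route. The paper expands $\dirichlet_{\kernel^2}(f,f)$ directly over $\statespace^3$ via the three-point split $f(x)-f(y)=(f(x)-f(z))+(f(z)-f(y))$, identifies the first two squares with $\dirichlet_\kernel(f,f)$ after marginalizing and invoking reversibility, and then recognizes the cross term as $-\int_{\statespace}\bigl(f(z)-\transitionS f(z)\bigr)^2\target(dz)$, which is dropped to give the inequality. You instead work at the operator level: you first establish the one-step companion identity $\dirichlet_\kernel(f,f)=\vecnorm{f}{2,\target}^2-\langle f,\transitionS f\rangle_\target$, then pair it with Lemma~\ref{lem:EK2} and subtract, obtaining
\[
2\dirichlet_\kernel(f,f)-\dirichlet_{\kernel^2}(f,f)=\vecnorm{f-\transitionS f}{2,\target}^2\geq 0.
\]
Both arguments are correct and really hinge on the same fact, namely that the slack is the squared $L_2(\target)$-norm of $(I-\transitionS)f$; but your version treats Lemma~\ref{lem:EK2} as a genuine building block rather than rederiving its content inline, and it makes the equality case (when $f$ is a fixed point of $\transitionS$) and the spectral picture ($1-\lambda^2\le 2(1-\lambda)$ for a self-adjoint contraction) transparent. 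The paper's version is more self-contained and stays entirely in the language of integral kernels, which matches the style of the surrounding lemmas. Your aside that the naive triangle-inequality bound $(f(x)-f(z))^2\le 2(f(x)-f(y))^2+2(f(y)-f(z))^2$ only yields the constant $4$ is a useful sanity check and correctly explains why one must keep the cross term rather than discard it too early.
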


\begin{proof}
By the definition of Dirichlet form, we have
\begin{align*}
    \dirichlet_{\kernel^2}(f,f)&=\frac 12\int_{x,y\in\statespace^2}\parenth{f(x)-f(y)}^2\kernel^2(x,dy) \target(dx)\\
    &=\frac 12\int_{x,y,z\in\statespace^3}\parenth{f(x)-f(y)}^2\kernel(x,dz)\kernel(z,dy) \target(dx)\\
    &=\frac 12\int_{x,y,z\in\statespace^3}\big(f(x)-f(z)+f(z)-f(y)\big)^2\kernel(x,dz)\kernel(z,dy) \target(dx)\\
    &=\frac 12\int_{x,y,z\in\statespace^3}\big(f(x)-f(z)\big)^2\kernel(x,dz)\kernel(z,dy) \target(dx)\\
    &\quad \quad +\frac 12\int_{x,y,z\in\statespace^3}\big(f(z)-f(y)\big)^2\kernel(x,dz)\kernel(z,dy) \target(dx)\\
    &\quad \quad -\int_{x,y,z\in\statespace^3}\big(f(z)-f(x)\big)\big(f(z)-f(y)\big)\kernel(x,dz)\kernel(z,dy) \target(dx)\\
    &\overset{(i)}{=} \frac 12\int_{x,z\in\statespace^2}\big(f(x)-f(z)\big)^2\kernel(x,dz) \target(dx) \\
    &\quad \quad + \frac 12\int_{z,y\in\statespace^2}\big(f(z)-f(y)\big)^2\kernel(z,dy) \target(dz)\\
    &\quad \quad - \int_{z\in \statespace}\Big(f(z)-\int_{t\in\statespace}f(t)\kernel(z,dt)\Big)^2 \target(dz)\\
    &\leq 2\dirichlet_{\kernel}(f,f),
\end{align*}
where step $(i)$ uses the fact that $\kernel$ is reversible.
\end{proof}
 
Applying Lemma~\ref{lem:dirichlet} and Theorem~\ref{thm:var}, if ${2\dirichlet_{\kernel }(h_0,h_0)}/{\chidist{\mu_0}{\target}^2}\leq1$ we get
\begin{align*}
    \chidist{\mu_n}{\target}^2 &\geq \chidist{\mu_0}{\target}^2 \cdot \parenth{1-\frac{\dirichlet_{\kernel ^2}(h_0,h_0)}{\chidist{\mu_0}{\target}^2}}^n\\
    &\geq\chidist{\mu_0}{\target}^2 \cdot \parenth{1-\frac{2\dirichlet_{\kernel }(h_0,h_0)}{\chidist{\mu_0}{\target}^2}}^n.
\end{align*}
It follows that the mixing time in $\chi^2$-divergence has a lower bound
\begin{align*}
    \chimix{\tole}{\initial}\geq{2}\parenth{-\log\parenth{1-\frac{2\dirichlet_{\kernel }(h_0,h_0)}{\chidist{\mu_0}{\target}^2}}}^{-1}\log\frac{\chidist{\initial}{\target}}{\tole}.
\end{align*}
If the spectral gap satisfies $\dirichlet_{\kernel }(h_0,h_0)/{\chidist{\mu_0}{\target}^2}<1/4$, using $\log(1-x)\geq -x/(1-x)$ for $x\in(0,1)$, we get
\begin{align*}
    \chimix{\tole}{\initial} &\geq \parenth{1-\frac{2\dirichlet_{\kernel }(h_0,h_0)}{\chidist{\mu_0}{\target}^2}}\frac{\chidist{\mu_0}{\target}^2}{\dirichlet_{\kernel }(h_0,h_0)}\log\frac{\chidist{\initial}{\target}}{\tole} \\
    & \geq \frac{1}{2} \parenth{\frac{\dirichlet_{\kernel }(h_0,h_0)}{\chidist{\mu_0}{\target}^2}}^{-1} \log\frac{\chidist{\initial}{\target}}{\tole}.
\end{align*}

\subsection{Proof of Lemma \ref{lem:worst}}\label{proof:worst}

In this section we prove the two statements in Lemma~\ref{lem:worst}. We procced by first constructing difficult warm initialization and then upper bounding the spectral gap. To prove Lemma~\ref{lem:worst}-\ref{state:a}, it suffices to make the initialization $\initial$ different from the target $\target$ in only the last dimension. To prove Lemma~\ref{lem:worst}-\ref{state:b}, we construct a warm initialization supported on a set where the acceptance rate is exponentially small.

\subsubsection{Proof of the statement \ref{state:a} in Lemma \ref{lem:worst}}
Consider a initial distribution $\initial(x)=h_0(\coord{x}{\dims+1})\target(x)$ where $h_0: \real \rightarrow \real$ is the following piece-wise function
\begin{align*}
h_0(u)= \left\{\begin{aligned}
&\frac 1Z|u| & \sqrt{\scparam }|u|\leq 2\\
&\frac 1Z(\frac{4}{\sqrt{\scparam }}-|u|)\ \  & 2<\sqrt{\scparam }|u|\leq4\\
&0 &\sqrt{\scparam }|u|>4
\end{aligned}\right.
\end{align*}
and $Z$ is the normalizing constant that ensures $\int_{\reald}\initial(x)dx=1$,
\begin{align*}
    Z&=2\parenth{\int_0^{\frac{2}{\sqrt{\scparam}}}x\sqrt{\frac{\scparam}{2\pi}}e^{-\frac{\scparam}{2}x^2}dx+\int_{\frac{2}{\sqrt{\scparam}}}^{\frac{4}{\sqrt{\scparam}}}(4-x)\sqrt{\frac{\scparam}{2\pi}}e^{-\frac{\scparam}{2}x^2}dx}\\
    &=\frac{2}{\sqrt{\scparam}}\parenth{\int_0^2\frac{1}{\sqrt{2\pi}}te^{-\frac12t^2}dt+\int_2^4\frac{1}{\sqrt{2\pi}}(4-t)e^{-\frac12t^2}dt}.
\end{align*}
This construction guarantees that the warmness of $\initial(x)$ is
$\warmparam=2/(Z\sqrt{\scparam })$
and
\begin{align*}
|h_0(u)-h_0(v)|\leq \frac{1}{Z}|u-v|, \ \ \forall u,v\in\real.
\end{align*}
Numerical calculations show that $Z\sqrt{m}\in (0.7,0.8)$, the warmness $\warmparam\in(2.6, 2.7)$ and the initial $\chi^2$-divergence $\chidist{\initial}{\target}^2\in(0.4,0.5)$.
Then the spectral gap of this initialization is controlled by
\begin{align*}
    \frac{\dirichlet_{\kernel}(h_0,h_0)}{\chidist{\initial}{\target}^2}
    &= \frac{\frac12\mathbb{E}_{x\sim \target, y\sim \transition_x}\brackets{\parenth{h_0(\coord{x}{\dims+1})-h_0(\coord{y}{\dims+1})}^2}}{\chidist{\initial}{\target}^2}\\
    &\leq \frac 1{2\chidist{\initial}{\target}^2}\mathbb{E}_{x\sim \target, y\sim \transition_x}\brackets{\frac{1}{Z^2}(\coord{x}{\dims+1}-\coord{y}{\dims+1})^2}\\
    &\leq \frac 1{2\chidist{\initial}{\target}^2}\mathbb{E}_{x\sim \target, y\sim \proposal_x}\brackets{\frac{1}{Z^2}(\coord{x}{\dims+1}-\coord{y}{\dims+1})^2}\\
    &\overset{(i)}{\leq} 3\scparam\cdot  \mathbb{E}_{\coord{x}{\dims+1}\sim\Normal(0,1/\scparam ), \xi\sim \Normal(0,1)}\brackets{\parenth{h\cdot \scparam \coord{x}{\dims+1}-\sqrt{2h}\xi}^2}\\
    &\leq 6\scparam\cdot  \parenth{\mathbb{E}_{\coord{x}{\dims+1}\sim\Normal(0,1/\scparam )}\brackets{h^2\scparam ^2\coord{x}{\dims+1}^2}+\mathbb{E}_{\xi\sim\Normal(0,1)}\brackets{2h\xi^2}}\\
    &=6\parenth{\scparam ^2h^2+2\scparam h}\\
    &\leq 18mh,
\end{align*}
where step $(i)$ uses estimations of $Z$ and $\chidist{\initial}{\target}$, and the last step is from $mh\leq 1$.

\subsubsection{Proof of the statement \ref{state:b} in Lemma \ref{lem:worst}}

The target density $\target(x)$ can be written as a product $\target_1(\coord{x}{1:d})\target_2(\coord{x}{d+1})$, where $\target_1$ is the marginal density of the first $d$ dimensions, and $\target_2$ is the marginal density in the last dimension. We claim two lemmas which uppers bound the acceptance rate of MALA with the target distribution being $\target_1$ and $\target_2$ respectively.
\begin{lemma}\label{lem:pi1}
    Fix smoothness parameter $\smoothparam>0$, dimension $\dims$ and scalar $\deltas\in(0,1/20)$ satisfying $\dims^{\deltas}\geq \max\braces{\log d/2+6, 10}$. Consider the following target distribution
    \begin{align}\label{eq:perturbed_gaussian}
        \target_1(x)\propto \exp\parenth{\frac{\smoothparam }{2}\sum_{i=1}^{d}\coord{x}{i}^2-\frac{1}{2d^{\frac12-2\deltas}}\sum_{i=1}^{d}\cos\parenth{d^{\frac14-\deltas}\smoothparam ^{\frac12}\coord{x}{i}}}.
    \end{align}
    Let $\propkernel_1(x,\cdot)$ denote the density of the MALA proposal distribution at $x$. There exists a set $F_1\subset\real^\dims$ satisfying $\target_1(F_1)>1/6$, such that whenever $h\geq 1/\parenth{\smoothparam\dims^{1/2-3\deltas}}$, for any $x\in F_1$, there exists a set $G_x \subseteq \real^\dims$ satisfying
    \begin{align*}
        \int_{G_x}Q_1(x,y)dy&\geq 1-10\exp\parenth{-\frac{\dims^{4\deltas}}{16384}}, \ \text{ and }\\
        \frac{\target_1(y)\propkernel_1(y,x)}{\target_1(x)\propkernel_1(x,y)}&\leq\exp\parenth{ -\frac{\dims^{4\deltas}}{32}}, \ \forall y\in G_x.
    \end{align*}
\end{lemma}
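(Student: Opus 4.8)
The plan is to exploit the product structure of $\target_1$. The restriction of $f_\deltas$ to the first $\dims$ coordinates is separable, so $\target_1(x)\propto\exp\bigl(-\sum_{i=1}^\dims\phi(\coord x i)\bigr)$ with $\phi(u):=\tfrac{\smoothparam}{2}u^2-\tfrac{\smoothparam}{2a^2}\cos(au)$ and $a:=\dims^{1/4-\deltas}\smoothparam^{1/2}$, and the MALA proposal kernel $\propkernel_1(x,\cdot)$ is likewise a product over coordinates. Hence the Metropolis ratio factorizes, $\frac{\target_1(y)\propkernel_1(y,x)}{\target_1(x)\propkernel_1(x,y)}=\prod_{i=1}^\dims R_i$, where $R_i$ is the one-dimensional acceptance ratio for the density $\propto e^{-\phi}$ at $(\coord x i,\coord y i)$; conditionally on $x$, the $\log R_i$ are independent functions of the independent Gaussian increments $\xi_i$ in $\coord y i=\coord x i-h\phi'(\coord x i)+\sqrt{2h}\,\xi_i$. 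It therefore suffices to prove a sharp one-dimensional estimate for $\log R_i$ and then sum it over the $\dims$ coordinates with a concentration inequality; the set $G_x:=\{y:\sum_{i=1}^\dims\log R_i\le-\dims^{4\deltas}/32\}$ will have the two claimed properties.

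For the one-dimensional step I would begin from the exact identity
\begin{align*}
\log R_i=\Bigl[\phi(\coord x i)-\phi(\coord y i)+\tfrac{\coord y i-\coord x i}{2}\bigl(\phi'(\coord x i)+\phi'(\coord y i)\bigr)\Bigr]-\tfrac h4\bigl(\phi'(\coord y i)^2-\phi'(\coord x i)^2\bigr),
\end{align*}
whose first bracket is the negated trapezoidal-rule error for $\int\phi'$ and, for our explicit $\phi$, equals $\tfrac{\smoothparam}{2a^2}(\cos a\coord y i-\cos a\coord x i)+\tfrac{\smoothparam(\coord y i-\coord x i)}{4a}(\sin a\coord x i+\sin a\coord y i)$, while the second term is of strictly smaller order. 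The structural input is that for $h\ge1/(\smoothparam\dims^{1/2-3\deltas})$ the proposal width dominates the perturbation wavelength, $a\sqrt{2h}\gtrsim\dims^{\deltas/2}$ and hence $a^2h\gtrsim\dims^{\deltas}\ge10$, so every oscillatory Gaussian moment $\Exs_{\xi_i}[\cos a\coord y i]$, $\Exs_{\xi_i}[\xi_i\sin a\coord y i]$, $\Exs_{\xi_i}[\xi_i^2\cos2a\coord y i]$, \dots\ carries a factor $e^{-\Theta(a^2h)}$ and is negligible. From this I would extract, for $\coord x i$ in a bulk set (say $\smoothparam^{1/2}\abss{\coord x i}\le C\sqrt{\log\dims}$): (i) $\Exs_{\xi_i}[R_i\mid\coord x i]=1-\tfrac{\smoothparam\cos a\coord x i}{2a^2}+(\text{lower order})$; (ii) a variance lower bound $\Var_{\xi_i}(\log R_i\mid\coord x i)\gtrsim\smoothparam h\,\dims^{-1/2+2\deltas}$, uniform in bulk $\coord x i$, driven by the genuinely fluctuating piece $\tfrac{\smoothparam(\coord y i-\coord x i)}{4a}\sin a\coord y i$ (the phase $a\coord y i$ sweeps many periods as $\xi_i$ varies); and (iii) sub-Gaussianity of $\log R_i$ in $\xi_i$ with scale $\lesssim(\smoothparam h)^{1/2}\dims^{-1/4+\deltas}$, the quadratic term $\phi'(\coord y i)^2$ being of smaller order. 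Since $\Exs_{\xi_i}[e^{\log R_i}\mid\coord x i]=\Exs_{\xi_i}[R_i\mid\coord x i]$, combining (i)--(iii) with the elementary gap bound $\Exs[W]\le\log\Exs[e^W]-c\,\Var(W)$ for sub-Gaussian $W$ of small scale yields
\begin{align*}
\Exs_{\xi_i}[\log R_i\mid\coord x i]\le-\tfrac{\smoothparam\cos a\coord x i}{2a^2}-c\,\smoothparam h\,\dims^{-1/2+2\deltas}.
\end{align*}
This one-dimensional computation parallels the perturbed-Gaussian analysis of \citet{chewi2021optimal}, adapted to carry $\smoothparam$ explicitly and to allow $h$ arbitrarily large (larger $h$ only flings the proposal farther out, making $R_i$ still smaller).

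To aggregate, let $F_1$ be the set of $x\in\real^\dims$ all of whose coordinates lie in the bulk and for which $\abss{\sum_{i=1}^\dims\cos a\coord x i}\le\dims^{1/2+\deltas}$; since the coordinates are i.i.d.\ under $\target_1$ with $\Exs[\cos a\coord x i]=e^{-\Theta(\dims^{1/2-2\deltas})}$ negligible, Hoeffding's inequality and a union bound over the $\dims$ bulk constraints give $\target_1(F_1)\ge1-2e^{-\dims^{2\deltas}/2}-o(1)>1/6$. For $x\in F_1$, summing the last bound over $i$ gives $\sum_{i=1}^\dims\Exs_{\xi_i}[\log R_i\mid\coord x i]\le-c\,\smoothparam h\,\dims^{1/2+2\deltas}+O(\dims^{3\deltas})\le-\tfrac c2\,\dims^{5\deltas}$, using $\smoothparam h\,\dims^{1/2+2\deltas}\ge\dims^{5\deltas}$, which dwarfs $\dims^{4\deltas}/32$ once $\dims^{\deltas}\ge10$. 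Finally, conditionally on $x\in F_1$ the variables $\log R_i-\Exs_{\xi_i}[\log R_i\mid\coord x i]$ are independent, mean zero, with total variance of order $\smoothparam h\,\dims^{1/2+2\deltas}$ and individual sub-Gaussian scale $\lesssim(\smoothparam h)^{1/2}\dims^{-1/4+\deltas}$, so Bernstein's inequality gives
\begin{align*}
\Prob_{\xi}\Bigl(\textstyle\sum_{i=1}^\dims\log R_i>-\dims^{4\deltas}/32\ \Big|\ x\Bigr)\le\exp\bigl(-\Theta(\dims^{5\deltas})\bigr)\le10\exp\bigl(-\dims^{4\deltas}/16384\bigr),
\end{align*}
which, with $G_x$ as above and $\prod_i R_i=e^{\sum_i\log R_i}\le e^{-\dims^{4\deltas}/32}$ on $G_x$, proves both inequalities.

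I expect the crux to be the one-dimensional estimate: one must be precise enough to separate the \emph{sign-varying} first-order term $-\smoothparam\cos(a\coord x i)/(2a^2)$ (which cancels only after summing over coordinates) from the reliably negative Jensen/variance term, while simultaneously getting a matching variance lower bound and a sub-Gaussian tail, all uniformly over $h\ge1/(\smoothparam\dims^{1/2-3\deltas})$ and over bulk $\coord x i$. The recurring technical ingredient is the decay $e^{-\Theta(a^2h)}$ of the oscillatory Gaussian integrals, which is exactly the quantitative statement that the high-frequency perturbation decorrelates the forward and reverse MALA proposals and so forces the acceptance probability to zero.
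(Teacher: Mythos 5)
Your per-coordinate factorization and Jensen-gap strategy is a genuinely different route from the paper, which decomposes the full $\dims$-dimensional log-acceptance ratio into seven algebraic pieces $\Deltapart_1,\ldots,\Deltapart_7$ (Equation~\eqref{eq:7terms}) and bounds each by concentration over $y\sim\propkernel_1(x,\cdot)$ in Lemma~\ref{lem:7terms}. The critical flaw in your sketch appears already in the construction of $F_1$: you assert that $\Exs_{\target_1}\brackets{\cos(a\coord{x}{i})}=e^{-\Theta(\dims^{1/2-2\deltas})}$ is negligible (here $a=\dims^{1/4-\deltas}\smoothparam^{1/2}$). That is the formula for a Gaussian marginal. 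Under the perturbed one-dimensional marginal $\propto\exp\parenth{-\tfrac{\smoothparam}{2}u^2+\tfrac{1}{2\dims^{1/2-2\deltas}}\cos au}$ the cosine in the exponent shifts this expectation to leading order, and in fact $\Exs_{\target_1}\brackets{\cos(a\coord{x}{i})}=\tfrac14\dims^{-1/2+2\deltas}+O(\dims^{-1+4\deltas})$ (Lemma~\ref{lem:32}(c)). Hence $\sum_i\cos(a\coord{x}{i})$ concentrates around $\tfrac14\dims^{1/2+2\deltas}$, which exceeds your cutoff $\dims^{1/2+\deltas}$ once $\dims^\deltas\ge10$, so the set $F_1$ you define has exponentially small $\target_1$-measure, not measure $>1/6$. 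Ironically this oversight works against you: the nonvanishing mean of the cosine sum is exactly the engine of the paper's proof, making the first-order contribution $\sum_i\parenth{-\tfrac{\smoothparam\cos(a\coord{x}{i})}{2a^2}}\approx-\tfrac18\dims^{4\deltas}$ already negative and of the required magnitude (this is the paper's $\Deltapart_1$, controlled via the third constraint in~\eqref{eq:lower_bound_def_x_F1}); the Jensen/variance correction you build the argument around is a subsidiary effect, not the driver, and your claimed $O(\dims^{3\deltas})$ bound on this sum is both the wrong size and the wrong sign.

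Two further steps are unsupported and would need genuine work. First, $\Exs[W]\le\log\Exs[e^W]-c\,\Var(W)$ is not a consequence of sub-Gaussianity at small scale: a \emph{lower} bound on the centered moment generating function requires control of the left tail (for instance boundedness or truncation, since $e^v-1-v<v^2/4$ once $v$ is far below zero), and you provide no such control for $\log R_i$. Second, the dismissal of the quadratic piece $-\tfrac h4\bigl(\phi'(\coord{y}{i})^2-\phi'(\coord{x}{i})^2\bigr)$ as strictly lower order is accurate at the threshold $\smoothparam h\sim\dims^{-1/2+3\deltas}$ but degrades as $\smoothparam h$ grows, and the lemma must hold for all $h$ above the threshold, including $h\gg1/\smoothparam$. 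The paper handles that regime through explicit cancellation across the seven terms — the final coefficient polynomial $-217\smoothparam h+136\smoothparam^2h^2-75\smoothparam^3h^3$ has negative discriminant and is negative for every $h>0$ — and this cancellation is not visible from the per-coordinate bookkeeping you adopt; the remark that ``larger $h$ only makes $R_i$ smaller'' is a heuristic, not an estimate.
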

\begin{lemma}\label{lem:pi2}
    Given $\scparam>0$, consider the target distribution $\target_2(x)\propto \exp\parenth{-\scparam x^2/2}$. Let $\propkernel_2(x,\cdot)$ denote the density of the MALA proposal distribution at $x$. There exists a set $F_2\subset\real$ satisfying $\target_2(F_2)\in(1/2, 3/4)$, such that
    \begin{align*}
        \int_{\real}\frac{\target_2(y)\propkernel_2(y,x)}{\target_2(x)}dy\leq 2, \ \forall x\in F_2.
    \end{align*}
\end{lemma}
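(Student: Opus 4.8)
The plan is to evaluate the integral in closed form, exploiting the fact that for a one–dimensional Gaussian target every factor in sight is a Gaussian. Since the negative log density here is $\scparam x^2/2$, the MALA proposal at $x$ is $\Normal\!\left((1-h\scparam)x,\,2h\right)$, so $\propkernel_2(x,y) = (4\pi h)^{-1/2}\exp\!\big(-(y-(1-h\scparam)x)^2/(4h)\big)$. Writing $a = 1-h\scparam$, the integrand $\target_2(y)\propkernel_2(y,x)/\target_2(x)$ equals $(4\pi h)^{-1/2}$ times the exponential of
\[
  \tfrac{\scparam}{2}(x^2-y^2) - \tfrac{1}{4h}(x-ay)^2 ,
\]
which is a quadratic in $y$ with leading coefficient $-(2h\scparam+a^2)/(4h) = -(1+h^2\scparam^2)/(4h)$. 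First I would complete the square in $y$ and integrate using $\int_{\real}e^{-By^2+Cy+D}dy = e^{D+C^2/(4B)}\sqrt{\pi/B}$; after substituting and simplifying (the identity $2h\scparam+a^2 = 1+h^2\scparam^2$ makes the prefactor collapse) this gives the exact value
\[
  \int_{\real}\frac{\target_2(y)\propkernel_2(y,x)}{\target_2(x)}\,dy \;=\; \frac{1}{\sqrt{1+h^2\scparam^2}}\,\exp\!\left(\frac{h^2\scparam^3\,x^2}{2(1+h^2\scparam^2)}\right).
\]

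Next I would make the bound uniform in the step size, which matters because in Lemma~\ref{lem:worst}\ref{state:b} the step size $h$ ranges over $\big(1/(\smoothparam d^{1/2-3\deltas}),\infty\big)$ and may be arbitrarily large. The key observation is that $h^2\scparam^3/(1+h^2\scparam^2) = \scparam\cdot t/(1+t) \le \scparam$ with $t = h^2\scparam^2 \ge 0$, and $(1+h^2\scparam^2)^{-1/2}\le 1$; hence the integral is at most $\exp(\scparam x^2/2)$ for every $h>0$.

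Finally I would take $F_2 = \{x\in\real : \sqrt{\scparam}\,|x|\le 1\}$. For $x\in F_2$ one has $\scparam x^2\le 1$, so the integral is at most $e^{1/2} < 2$, which is the claimed inequality. It remains to check the mass of $F_2$: since $\sqrt{\scparam}\,x$ is standard normal under $\target_2$, $\target_2(F_2) = \Prob_{Z\sim\Normal(0,1)}(|Z|\le 1)\approx 0.683 \in (1/2,3/4)$, as required. There is no genuine obstacle here; the only step needing care is the exact Gaussian integral — completing the square correctly and using $2h\scparam+a^2 = 1+h^2\scparam^2$ to simplify — after which the uniform-in-$h$ bound and the choice of $F_2$ are immediate.
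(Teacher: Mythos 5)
Your proof is correct and follows essentially the same route as the paper's: evaluate the Gaussian integral in closed form to obtain $\frac{1}{\sqrt{1+h^2\scparam^2}}\exp\!\left(\frac{h^2\scparam^3 x^2}{2(1+h^2\scparam^2)}\right)$, bound this uniformly in $h$ by $\exp(\scparam x^2/2)$, and take $F_2=\{x:\sqrt{\scparam}\,|x|\le 1\}$ so that the bound is $e^{1/2}<2$ and $\target_2(F_2)\approx 0.683\in(1/2,3/4)$. This matches the paper's argument step for step.
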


See Appendix~\ref{sub:pi1} for the proof of Lemma~\ref{lem:pi1}, which is inspired by Theorem 8 in \citet{chewi2021optimal}. It is a stronger result as it includes the smoothness parameter $L$ and holds for a larger range of step size. See Appendix~\ref{sub:pi2} for the proof of Lemma~\ref{lem:pi2}. 

Given these two lemmas, we construct the following initial distribution $\initial$.
\begin{align*}
    \initial(x) = \frac{1}{\target_1(F_1)\target_2(F_2)} \cdot \target(x)\cdot \mathbbm{1}_{x\in F_1\times F_2}
\end{align*}
The warmness of this initial distribution $\warmparam=1/\parenth{\target_1(F_1)\target_2(F_2)}\in (4/3, 12)$ and its initial $\chi^2$-divergence is
\begin{align*}
    \chidist{\initial}{\target}=\bigg(\int_{\real^\dims}\frac{\initial(x)^2}{\target(x)}dx-1\bigg)^{\frac12}=(\warmparam-1)^{\frac12}.
\end{align*}
Then the spectral gap of this initialization is bounded by
\begin{align*}
    \frac{\dirichlet_{\kernel}(h_0,h_0)}{\chidist{\mu_0}{\target}^2}&=\frac{\warmparam^2\Exs_{x\sim \target, y\sim \transition_x}\brackets{(\mathbbm{1}_{x\in F_1\times F_2}-\mathbbm{1}_{y\in F_1\times F_2})^2}}{2(\warmparam-1)}\\
    &= \frac{\warmparam^2}{\warmparam-1}\int_{x\in F_1\times F_2}\int_{y\notin F_1\times F_2}\min\braces{1,\frac{\target(y)\propkernel(y,x)}{\target(x)\propkernel(x,y)}}\target(x)\propkernel(x,y)dydx\\
    &\leq \frac{\warmparam^2}{\warmparam-1}\sup_{x\in F_1\times F_2}\int_{y\in\real^\dims}\min\braces{1,\frac{\target(y)\propkernel(y,x)}{\target(x)\propkernel(x,y)}}\propkernel(x,y)dy\cdot \target(F_1\times F_2)\\
    &\leq \frac{\warmparam}{\warmparam-1}\sup_{x\in F_1\times F_2}\parenth{\int_{G_x\times\real}\frac{\target(y)\propkernel(y,x)}{\target(x)\propkernel(x,y)}\propkernel(x,y)dy+\int_{G_x^c\times\real}\propkernel(x,y)dy}\\
    &\overset{(i)}{\leq}\frac{M}{M-1}\parenth{2\exp\parenth{ -\frac{\dims^{4\deltas}}{32}}+10\exp\parenth{-\frac{\dims^{4\deltas}}{16384}}}\\
    &\overset{(ii)}{\leq} 4\parenth{2\exp\parenth{ -\frac{\dims^{4\deltas}}{32}}+10\exp\parenth{-\frac{\dims^{4\deltas}}{16384}}}\\
    &\leq 48\exp(-\frac{\dims^{4\deltas}}{16384}),
\end{align*}
where step $(i)$ uses Lemma~\ref{lem:pi1}, Lemma~\ref{lem:pi2}, and step $(ii)$ uses the lower bound of $\warmparam$.

\subsection{Proof of Theorem \ref{thm:main2}} \label{proof:main2}
When $\initial$ is $\warmparam$-warm, it is straightforward to show that $\mu_n$ is also $\warmparam$-warm. The $\chi^2$-divergence of $\mu_n$ with respect to $\target$ is bounded by the total variation distance via
\begin{align*}
\chidist{\mu_n}{\target}^2&=\int\parenth{\frac{d\mu_n}{d\target}(x)-1}^2 \target(dx)\leq \warmparam\int\abss{\frac{d\mu_n}{d\target}(x)-1} \target(dx)= 2\warmparam\cdot \tvdist{\mu_n}{\target}.
\end{align*}
and therefore
\begin{align*}
    \chimix{\sqrt{2\warmparam\tole}}{\initial}\leq \tvmix{\tole}{ \initial}
\end{align*}
By the definition of the minimax mixing time, $\minimaxmixingtime(d,\smoothparam ,\scparam ,\tole,\warmparam)\geq \minimaxmixingtime(d,\smoothparam ,\scparam  ,\tole, 12)$. From now on we fix $\warmparam=12$. The condition $\condi\geq3$ enables us to consider the perturbed Gaussian distribution in Lemma \ref{lem:worst}. 

In the definition of $f_\deltas$ in Equation~\eqref{eq:worst} , replace $\smoothparam$ with $2\smoothparam/3$ so that the smoothness of  $f_\deltas$ is exactly $L$. Replace the dimension $\dims$ with $\dims-1$ so that the dimension of $\target$ is exactly $\dims$.  Since we only consider the case when $\dims$ is large, we do not distinguish between $\dims-1$ and $\dims$ in the following proof for simplicity. We study two cases of step size $h$.

\begin{itemize}
    \item In Lemma~\ref{lem:worst}-\ref{state:b}, take
    \begin{align*}
        \deltas =\frac{\log\log(\condi d^{\frac12})+\log12}{\log d}, \text{ or equivalently }  \dims^{\deltas} = 12\log\parenth{\condi\dims^{\frac12}}.
    \end{align*}
     Since $3\leq\condi\leq\alpha\cdot \dims^\beta$, there exists $\integer_{1}>0$ such that when $\dims>\integer_1$ we have $\deltas<1/20$ and $\dims^{\deltas}\geq\max\braces{\log\dims/2+6,10}$. Given that $\exp( d^{4\deltas}/16384)\geq\exp\parenth{(\log\condi d^{1/2})^4}\geq\condi\dims^{1/2}$, there exists $\integer_2>0$ such that $\dirichlet_{\kernel}(h_0,h_0)/\chidist{\mu_0}{\target}^2\leq48 (\condi\dims^{1/2})^{-1}\leq 1/4$ for all $\dims>\integer_2$.  By Corollary~\ref{corollary:gap} and Lemma~\ref{lem:worst}-\ref{state:b}, there exists an $\warmparam$-warm $\initial$ such that for any $h\in(3/(2\smoothparam d^{1/2-3\deltas}),\infty)$ and $\dims>\max\braces{\integer_1, \integer_2}$ we have
    \begin{align}
        \chimix{\sqrt{2\warmparam\tole}}{\initial}&\geq \frac12\cdot\frac{\chidist{\initial}{\target}^2}{\dirichlet_{\kernel }(h_0,h_0)} \log\parenth{\frac{\chidist{\initial}{\target}}{\sqrt{2\warmparam\tole}}}\notag\\ 
        &\geq\frac{1}{96}\condi d^{\frac12}\log\parenth{\frac{1}{10\sqrt{\tole}}}.\label{eq:lower2}
    \end{align}

    \item In Lemma~\ref{lem:worst}-\ref{state:a}, consider $h\leq 3/\parenth{2\smoothparam d^{1/2-3\deltas}}\leq\constc\cdot \max\braces{\log\condi,\log d}^3/\parenth{\smoothparam d^{1/2}}$, where $\constc>0$ is some universal constant. There exists $\integer_3>0$ such that $\scparam h\leq1$ and $\dirichlet_{\kernel}(h_0,h_0)/\chidist{\mu_0}{\target}^2\leq18\scparam h\leq 1/4$ for all $\dims>\integer_3$. By Corollary~\ref{corollary:gap} and Lemma~\ref{lem:worst}-\ref{state:a}, there exists an $\warmparam$-warm $\initial$ such that for any $h\in\parenth{0,3/(2\smoothparam d^{1/2-3\deltas})}$ and $\dims>\integer_3$ we have
    \begin{align}
        \chimix{\sqrt{2\warmparam\tole}}{\initial}&\geq \frac12\cdot\frac{1}{18\scparam h}\log\parenth{\frac {\chidist{\initial}{\target}}{\sqrt{2\warmparam\tole}}}\notag\\
        &\geq\frac{\condi \dims^{\frac12}}{36\constc\cdot \max\braces{\log\condi, \log d}^3}\log\parenth{\frac{1}{10\sqrt{\tole}}},\label{eq:lower4}
    \end{align}
    \end{itemize}

Combining theses two cases by letting $\integer_{\alpha,\beta}=\max\parenth{\integer_1,\integer_2,\integer_3}$, we obtain Theorem~\ref{thm:main2}.


\section{Discussion} 
\label{sec:Discussion}
In this paper, we proved matching upper and lower bounds for the minimax mixing time of Metropolis-adjusted Langevin algorithm under a warm start. Specifically, our results show that for $\smoothparam$-log-smooth and $\scparam$-strongly log-concave target distributions, with step size chosen roughly $\TO\parenth{1/(\smoothparam d^{1/2})}$ , MALA has a mixing time of order $\TO(\condi d^{1/2})$. Furthermore, larger step size can lead to exponentially slow mixing for certain worst-case distributions.

Several open questions arise from our work. First, it is intriguing how to improve the warmness dependency and the error tolerance dependency of MALA. In our mixing time upper bounds, we have polynomial logarithmic dependencies on both warmness and inverse error tolerance. In~\citet{chen2020fast}, the warmness dependency was $\log\log(\warmparam)$ and the error dependency was simply $\log(1/\tole)$ albeit a worse dependency on dimension. It is not clear whether one has to suffer worse dependencies on both $\warmparam$ and $\tole^{-1}$ in order to obtain the tightest bound in terms of dimension and condition number dependency. 

Second, since a warm initialization is not always available in practice, one usually instead initializes the chain using a standard Gaussian distribution centered at $x^*$. For such an exponentially-warm start, one may consider running ULA or underdamped Langevin algorithm for a few steps first to obtain moderate accuracy, and then continue the chain using MALA. We find that this hybrid algorithm mixes much faster than directly running MALA under certain bad initializations in simulations. But whether we can obtain theoretical guarantees for the hybrid algorithm remains open.

Another future work is to apply and adapt our results to Hamiltonian Monte Carlo. In our proof, we showed that in a single-step leapfrog integration, the difference in Hamiltonian is of order roughly $\smoothparam ^2\step^4\dims$. Since Hamiltonian Monte Carlo are usually run with multiple steps of leapfrog integration at each iteration, it remains interesting how to generalize our proof techniques to the case where multiple steps of leapfrog integration are involved.

\subsection*{Acknowledgements}
Part of this work was done when Yuansi Chen was a post-doc researcher at ETH Z\"urich. Yuansi Chen was partially supported by the funding from the European Research Council under the Grant Agreement No 786461 (CausalStats - ERC-2017-AD).

\newpage
\appendix

\section{Lemmas related to the upper bound}\label{sec:ABCD}
In this section we prove Lemma~\ref{lem:accept_rate_kinetic_diff} and Lemma~\ref{lem:accept_rate_potential_diff}. For $t \in [0, \step]$, we define
\begin{align}
  \hq_t &\defn \cq_0 + t\cp_0 - \frac{t^2}{2}\gradf(\cq_0)\label{eq:def_hq}\\
  \tq_t &\defn \cq_0 + t\cp_0 - \frac{t^2}{2} \gradf(\cq_0 + t\cp_0\label{eq:def_tq} ).
\end{align}
The definition of $\hq_t$ is the same as definition of $\hq_\eta$ in Equation~\eqref{eq:leapfrog_q}. The following two lemmas regarding the distance distance between $\cq_0,\cq_t,\hq_t$ and $\tq_t$ will be frequently used in the proof.
\begin{lemma}
\label{lem:q_diff}
Assume the negative log density $\targetf$ is $\smoothparam$-smooth. For step size $\step>0$ satisfying $\smoothparam \step^2\leq 1$ and $t\in[0,\step]$, we have
\begin{align*}
  \vecnorm{q_t-q_0}{2}&\leq  2t \normp + t^2 \normgradq, \\
  \vecnorm{\hq_t-q_0}{2}&\leq t \normp + \frac{t^2}{2} \normgradq,\\
  \vecnorm{\hq_t-\hq_\step}{2}&\leq (\step-t)\normp + \step(\step-t)\normgradq,
\end{align*}
where $q_t,\hq_t,\hq_\step$ are defined in Equation~\eqref{eq:HMC_q}, \eqref{eq:def_hq}  and~\eqref{eq:leapfrog_q}.
\end{lemma}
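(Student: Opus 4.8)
I would handle the three bounds in increasing order of difficulty. The bounds on $\vecnorm{\hq_t-q_0}{2}$ and $\vecnorm{\hq_t-\hq_\step}{2}$ are purely algebraic and use no smoothness: by the definition~\eqref{eq:def_hq}, $\hq_t-q_0 = tp_0-\tfrac{t^2}{2}\gradf(q_0)$, so the triangle inequality gives the second bound at once; for the third, \eqref{eq:leapfrog_q} gives $\hq_\step = q_0+\step p_0-\tfrac{\step^2}{2}\gradf(q_0)$, hence $\hq_t-\hq_\step = (t-\step)p_0+\tfrac{\step^2-t^2}{2}\gradf(q_0)$, and since $0\le\step^2-t^2=(\step-t)(\step+t)\le 2\step(\step-t)$ for $t\in[0,\step]$, the triangle inequality yields $\vecnorm{\hq_t-\hq_\step}{2}\le(\step-t)\normp+\step(\step-t)\normgradq$.

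The substantive bound is the one on $\vecnorm{q_t-q_0}{2}$, which I would obtain by a Gr\"onwall/Picard argument. Write $g(t)=\vecnorm{q_t-q_0}{2}$; this is continuous, hence bounded, on $[0,\step]$ because $q_\cdot$ solves the second-order ODE with $\smoothparam$-Lipschitz right-hand side. Starting from~\eqref{eq:HMC_q}, the triangle inequality and Tonelli give
\begin{align*}
 g(t)\le t\normp+\int_0^t\int_0^s\vecnorm{\gradf(q_\tau)}{2}\,d\tau\,ds = t\normp+\int_0^t(t-\tau)\vecnorm{\gradf(q_\tau)}{2}\,d\tau,
\end{align*}
and $\smoothparam$-smoothness gives $\vecnorm{\gradf(q_\tau)}{2}\le\normgradq+\smoothparam\,g(\tau)$. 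Hence $g$ satisfies the Volterra-type inequality $g(t)\le\phi(t)+\smoothparam\,I[g](t)$ with $\phi(t):=t\normp+\tfrac{t^2}{2}\normgradq$ and $I[f](t):=\int_0^t(t-\tau)f(\tau)\,d\tau$. Iterating this $N$ times (the kernel and $\phi$ being nonnegative) gives $g(t)\le\sum_{k=0}^{N}\smoothparam^kI^k[\phi](t)+\smoothparam^{N+1}\|g\|_\infty I^{N+1}[1](t)$, and since $I^{N+1}[1](t)=t^{2N+2}/(2N+2)!$ the remainder is at most $\|g\|_\infty(\smoothparam\step^2)^{N+1}/(2N+2)!\to0$. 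So $g(t)\le\sum_{k\ge0}\smoothparam^kI^k[\phi](t)$. A direct calculation gives $I^k[t]=t^{2k+1}/(2k+1)!$ and $I^k[t^2/2]=t^{2k+2}/(2k+2)!$, so the series resums to $g(t)\le\tfrac{\normp}{\sqrt\smoothparam}\sinh(\sqrt\smoothparam\,t)+\tfrac{\normgradq}{\smoothparam}\bigl(\cosh(\sqrt\smoothparam\,t)-1\bigr)$.

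To finish, I would use $\smoothparam\step^2\le1$, so $x:=\sqrt\smoothparam\,t\in[0,1]$, together with the elementary inequalities $\sinh x\le(\sinh1)x$ and $\cosh x-1\le(\cosh1-1)x^2$ on $[0,1]$ and the numerics $\sinh1<2$, $\cosh1-1<1$; this gives $g(t)\le2t\normp+t^2\normgradq$. The only genuinely delicate point is making the self-referential bound for $g$ rigorous: the Picard iteration above does this cleanly, using only that $g$ is bounded on the compact interval, with no continuity/base-case subtleties. As an alternative one could run a standard bootstrap — assume $g(s)\le2s\normp+s^2\normgradq$ on a maximal subinterval of $[0,\step]$, plug this into the Volterra inequality, and recover $g(t)\le\tfrac43 t\normp+\tfrac{7}{12}t^2\normgradq$, strictly inside the target bound, which forces the subinterval to be all of $[0,\step]$. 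Everything else is routine.
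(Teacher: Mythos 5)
Your proof is correct, and your handling of the two algebraic bounds on $\vecnorm{\hq_t-\cq_0}{2}$ and $\vecnorm{\hq_t-\hq_\step}{2}$ matches the paper's verbatim. For the substantive bound on $\vecnorm{\cq_t-\cq_0}{2}$, however, you take a genuinely different route. The paper closes the integral inequality in a single step: starting from $\vecnorm{\cq_t-\cq_0}{2}\leq t\normp+\tfrac{t^2}{2}\normgradq+\tfrac12\smoothparam t^2\sup_{\tau\in[0,t]}\vecnorm{\cq_\tau-\cq_0}{2}$, it takes the supremum over $t'\in[0,t]$ on the left, uses monotonicity of the right-hand side, and rearranges; this works immediately precisely because $\smoothparam\step^2\leq1$ keeps the coefficient $1-\tfrac12\smoothparam t^2\geq\tfrac12$. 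Your Picard iteration instead resums the entire Neumann series to $\tfrac{\normp}{\sqrt{\smoothparam}}\sinh(\sqrt{\smoothparam}\,t)+\tfrac{\normgradq}{\smoothparam}\bigl(\cosh(\sqrt{\smoothparam}\,t)-1\bigr)$ before using $\smoothparam\step^2\leq1$ at all. This is heavier machinery, but it buys robustness and sharper constants: the paper's one-shot rearrangement relies crucially on the contraction factor being bounded strictly below $1$, whereas your hyperbolic bound is valid for every $t$ and gives $\sinh 1<1.18$ and $\cosh 1-1<0.55$ where the paper settles for $2$ and $1$. Your alternative bootstrap on a maximal subinterval is also valid and is closer in spirit to the paper's argument, since plugging the target bound into the Volterra inequality returns the strictly better $\tfrac43 t\normp+\tfrac{7}{12}t^2\normgradq$. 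In all three variants the only finiteness input is that $t\mapsto\vecnorm{\cq_t-\cq_0}{2}$ is bounded on the compact interval $[0,\step]$, a fact the paper also uses implicitly when it rearranges the supremum.
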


\begin{lemma}
\label{lem:q_diff2}
Assume the negative log density $\targetf$ is $\smoothparam$-smooth. For step size $\step>0$ satisfying $\smoothparam \step^2\leq 1$ and $t\in[0,\step]$, we have
\begin{align*}
  \vecnorm{q_t-(q_0+tp_0)}{2}&\leq  t ^2 \sqrt{\smoothparam} \vecnorm{p_0}{2} +  t ^2 \normgradq,\\
  \vecnorm{q_t-\hq_t}{2}&\leq  t ^3 \smoothparam \parenth{\frac{1}{3} \normp + \frac{ t }{6} \normgradq}
\end{align*}
where $q_t, \hq_t$ are defined in Equation~\eqref{eq:HMC_q} and \eqref{eq:def_hq}.

\end{lemma}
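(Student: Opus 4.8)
The plan is to work directly from the integral representation of the continuous Hamiltonian flow, Equation~\eqref{eq:HMC_q}, namely $q_t = q_0 + tp_0 - \int_0^t\int_0^s \gradf(q_\tau)\,d\tau\,ds$, and to control $\gradf$ along the trajectory by combining $\smoothparam$-smoothness with the trajectory bound $\vecnorm{q_\tau - q_0}{2}\le 2\tau\normp + \tau^2\normgradq$ supplied by Lemma~\ref{lem:q_diff} (which is proved independently of this lemma).

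For the first bound, I would note that $q_t - (q_0+tp_0) = -\int_0^t\int_0^s \gradf(q_\tau)\,d\tau\,ds$, so $\vecnorm{q_t-(q_0+tp_0)}{2} \le \int_0^t\int_0^s \vecnorm{\gradf(q_\tau)}{2}\,d\tau\,ds$ by the triangle inequality for integrals. Then $\vecnorm{\gradf(q_\tau)}{2}\le \normgradq + \smoothparam\vecnorm{q_\tau-q_0}{2} \le \normgradq + \smoothparam(2\tau\normp+\tau^2\normgradq)$ by $\smoothparam$-smoothness and Lemma~\ref{lem:q_diff}. Integrating term by term gives $\frac{t^2}{2}\normgradq + \frac{\smoothparam t^3}{3}\normp + \frac{\smoothparam t^4}{12}\normgradq$. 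Since $t\le\step$ and $\smoothparam\step^2\le 1$, we have $\smoothparam t^2\le 1$ and $\sqrt{\smoothparam}\,t\le 1$, hence $\smoothparam t^3 = (\sqrt{\smoothparam}\,t^2)(\sqrt{\smoothparam}\,t)\le \sqrt{\smoothparam}\,t^2$ and $\smoothparam t^4\le t^2$; collecting the three terms yields the claimed $t^2\sqrt{\smoothparam}\,\normp + t^2\normgradq$.

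For the second bound, I would use that $\frac{t^2}{2}\gradf(q_0) = \int_0^t\int_0^s \gradf(q_0)\,d\tau\,ds$, so that from Equation~\eqref{eq:HMC_q} and the definition of $\hq_t$ in Equation~\eqref{eq:def_hq} one gets $q_t-\hq_t = -\int_0^t\int_0^s\parenth{\gradf(q_\tau)-\gradf(q_0)}\,d\tau\,ds$. Then $\smoothparam$-smoothness and Lemma~\ref{lem:q_diff} give $\vecnorm{q_t-\hq_t}{2}\le \smoothparam\int_0^t\int_0^s \vecnorm{q_\tau-q_0}{2}\,d\tau\,ds \le \smoothparam\int_0^t\int_0^s(2\tau\normp+\tau^2\normgradq)\,d\tau\,ds = \smoothparam\parenth{\frac{t^3}{3}\normp+\frac{t^4}{12}\normgradq}$, which is at most $t^3\smoothparam\parenth{\frac13\normp + \frac{t}{6}\normgradq}$ since $\frac{1}{12}\le\frac16$.

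The computation is elementary, and the only mildly delicate point is the bookkeeping of powers of $t$ and $\smoothparam$ in the first bound: one must invoke $\smoothparam\step^2\le1$ precisely so that $\smoothparam t^3$ gets absorbed into $\sqrt{\smoothparam}\,t^2$ rather than left as the weaker $\smoothparam t^3$; everything else is direct double integration together with the triangle inequality.
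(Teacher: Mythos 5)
Your proof is correct and follows essentially the same route as the paper's: integral representation from Equation~\eqref{eq:HMC_q}, triangle inequality, $\smoothparam$-smoothness, the trajectory bound from Lemma~\ref{lem:q_diff}, and the assumption $\smoothparam t^2 \le 1$ to collapse higher powers of $t$. The only difference is cosmetic: you integrate the $\tau$-dependent bound exactly and then absorb the resulting $\smoothparam t^3$ and $\smoothparam t^4$ terms at the end, whereas the paper first replaces $\tau$ by $t$ inside the integrand; both choices land on the same final constants.
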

See Appendix \ref{proof:q_diff} and \ref{proof:q_diff2} for the proof of Lemma~\ref{lem:q_diff} and Lemma~\ref{lem:q_diff2}, respectively.

\subsection{Proof of Lemma~\ref{lem:accept_rate_kinetic_diff}}
\label{proof:lem_accept_rate_kinetic_diff}
Observe that
\begin{align}
  \label{eq:proof_accept_rate_kinetic_diff}
  \targetf(\hq_\step) - \targetf(\cq_\step)
  \overset{(i)}&{=} \int_0^1 (\hq_\step - \cq_\step) \tp \gradf\parenth{r (\hq_\step - \cq_\step) + \cq_\step} dr \notag \\
  &= (\hq_\step - \cq_\step) \tp \gradf(\cq_0) + \int_0^1 (\hq_\step - \cq_\step) \tp \parenth{\gradf\parenth{r (\hq_\step - \cq_\step) + \cq_\step} - \gradf(\cq_0)} dr \notag \\
  \overset{(ii)}&{=} \underbrace{\int_0^\step \int_0^s \parenth{\gradf(\cq_\tau) - \gradf(\cq_0)} \tp \gradf(\cq_0) d\tau ds}_{A_1} \notag \\
  &\quad + \underbrace{\int_0^1 \int_0^\step \int_0^s \parenth{\gradf(\cq_\tau) - \gradf(\cq_0)} \tp \parenth{\gradf\parenth{r (\hq_\step - \cq_\step) + \cq_\step} - \gradf(\cq_0)} d\tau ds dr}_{A_2},
\end{align}
where step (i) follows from the fact that the function $r \mapsto \targetf\parenth{r (\hq_\step - \cq_\step) + \cq_\step} $ is differentiable, step (ii) plugs in the definition of $\hq_\step$~\eqref{eq:leapfrog_q} and $\cq_\step$~\eqref{eq:HMC_q}.

The term $A_2$ in Equation~\eqref{eq:proof_accept_rate_kinetic_diff} is relatively easy to bound. We have
\begin{align}
  \label{eq:accept_reject_A2_bound}
  \abss{A_2}
  &\leq \int_0^1 \int_0^\step \int_0^s \vecnorm{\gradf(\cq_\tau) - \gradf(\cq_0)}{2} \vecnorm{\gradf\parenth{r (\hq_\step - \cq_\step) + \cq_\step} - \gradf(\cq_0)}{2} d\tau ds dr \notag \\
  \overset{(i)}&{\leq} \smoothparam^2 \int_0^1 \int_0^\step \int_0^s \vecnorm{\cq_\tau - \cq_0}{2} \vecnorm{r(\hq_\step-\cq_0) + (1-r)(\cq_\step - \cq_0)}{2} d\tau ds dr \notag \\
  \overset{(ii)}&{\leq} \smoothparam^2 \int_0^1 \int_0^\step \int_0^s \parenth{2\tau \normp + \tau^2 \normgradq }  \parenth{2\step \normp + \step^2\normgradq } d\tau ds dr \notag\\
  &\leq \frac{2}{3} \step^4 \smoothparam^2 \parenth{\normp + \frac{\step}{2}\normgradq }^2 \notag \\
  \overset{(iii)}&\leq \frac{2}{3} \step^4 \smoothparam^2 \parenth{\normp + \frac{1}{\sqrt{\smoothparam}}\normgradq }^2,
\end{align}
where step (i) follows from the smoothness assumption, step (ii) applies Lemma~\ref{lem:q_diff} and step (iii) uses $\smoothparam \step^2 \leq 1$. We bound the term $A_1$ by making appear the term $\gradf(\cq_0 + \tau \cp_0)$
\begin{align}
  \label{eq:accept_reject_A1_bound}
  A_1
  &= \int_0^\step\int_0^s(\gradf(\cq_\tau) - \gradf(\cq_0+\tau \cp_0))^\top\gradf(\cq_0)d\tau ds\notag \\
  &\quad \quad + \int_0^\step\int_0^s(\gradf(\cq_0+\tau \cp_0)-\gradf(\cq_0))^\top\gradf(\cq_0)d\tau ds \notag\\
  \overset{(i)}&{=} \int_0^\step\int_0^s(\gradf(\cq_\tau) - \gradf(\cq_0+\tau \cp_0))^\top\gradf(\cq_0)d\tau ds \notag \\
  &\quad \quad + \frac{1}{2}\int_0^\step\int_0^s \vecnorm{\gradf(\cq_0+\tau \cp_0)}{2}^2-\normgradq^2 d\tau ds\notag\\
  &\quad \quad - \frac{1}{2} \int_0^\step \int_0^s \vecnorm{\gradf(\cq_0 + \tau \cp_0) - \gradf(\cq_0) }{2}^2 d\tau ds \notag \\
  \overset{(ii)}&{\leq} \underbrace{\int_0^\step\int_0^s(\gradf(\cq_\tau) - \gradf(\cq_0+\tau \cp_0))^\top\gradf(\cq_0) }_{A_{1,1}}d\tau ds+ \frac{1}{2}\int_0^\step\int_0^s (\interG_\tau - \interG_0) d\tau ds,
\end{align}
where step (i) follows from $(a-b)\tp b = \frac{1}{2} (a^2-b^2) - \frac{1}{2} (a-b)^2 $ for any $a,b \in \real^\dims$, step (ii) removes the last nonnegative term and uses the definition of $\interG_\tau$ in Equation~\eqref{eq:Gt}. For the term $A_{1, 1}$, we have
\begin{align}
  \label{eq:accept_reject_A11_bound}
  A_{1, 1}
  &\leq \int_0^\step\int_0^s \vecnorm{\gradf(\cq_\tau) - \gradf(\cq_0+\tau \cp_0)}{2}\normgradq d\tau ds \notag \\
  \overset{(i)}&{\leq} \smoothparam \int_0^\step\int_0^s \vecnorm{\cq_\tau - (\cq_0+\tau \cp_0)}{2}\normgradq d\tau ds \notag \\
  \overset{(ii)}&{\leq} \smoothparam \int_0^\step \int_0^s (\tau^2 \sqrt{\smoothparam}\normp + \tau^2 \normgradq ) \normgradq d\tau ds \notag \\
  &= \frac{\step^4\smoothparam}{12} \parenth{\sqrt{\smoothparam}\normp + \normgradq } \normgradq \notag \\
  \overset{(iii)}&{\leq} \frac{\step^4\smoothparam^2}{12} \parenth{\normp + \frac{1}{\sqrt{\smoothparam}}\normgradq }^2,
\end{align}
where step (i) follows from the smoothness assumption, step (ii) follows from Lemma~\ref{lem:q_diff2} and step (iii) follows from $(a+b)b \leq (\frac{1}{2}a+b)^2$ for $a, b \in \real$.

Combining Equation~\eqref{eq:accept_reject_A2_bound}~\eqref{eq:accept_reject_A1_bound}~\eqref{eq:accept_reject_A11_bound} with Equation~\eqref{eq:proof_accept_rate_kinetic_diff}, we obtain
\begin{align*}
  \targetf(\hq_\step) - \targetf(\cq_\step) \leq \frac{1}{2}\int_0^\step\int_0^s (\interG_\tau - \interG_0) d\tau ds + \frac{3}{4}\step^4\smoothparam^2 \parenth{\normp + \frac{1}{\sqrt{\smoothparam}}\normgradq }^2.
\end{align*}

\subsection{Proof of Lemma~\ref{lem:accept_rate_potential_diff}}

Observe that we can decompose $\frac{1}{2}\vecnorm{\hp_\step}{2}^2 - \frac{1}{2}\vecnorm{\cp_\step}{2}^2$ as follows
\begin{align}
  \label{eq:proof_accept_reject_potential_diff}
  & \quad \frac{1}{2}\vecnorm{\hp_\step}{2}^2 - \frac{1}{2}\vecnorm{\cp_\step}{2}^2 \notag \\
  &= \frac{1}{2} \parenth{\hp_\step - \cp_\step}\tp \parenth{\hp_\step + \cp_\step} \notag \\
  \overset{(i)}&= \frac{1}{2} \parenth{\int_0^\step \gradf(\cq_s) ds - \frac{\step}{2}\gradf(\cq_0) - \frac{\step}{2}\gradf(\hq_\step) }\tp \parenth{\hp_\step + \cp_\step} \notag \\
  \overset{(ii)}&= \underbrace{\frac12\parenth{\int_0^\step \gradf(\cq_s) - \gradf\parenth{\hq_s} ds}\tp \parenth{\hp_\step + \cp_\step}}_{A_3}\notag \\
  &\quad \quad + \underbrace{\frac12\parenth{\int_0^\step \gradf\parenth{\hq_s } ds - \frac{\step}{2} \gradf(\cq_0) - \frac{\step}{2} \gradf\parenth{\hq_\step }} \tp \parenth{\hp_\step + \cp_\step}}_{A_4}
\end{align}
where step (i) plugs the definition of $\hp_\step$ and $\cp_\step$ in Equation~\eqref{eq:leapfrog_p}~\eqref{eq:HMC_p} and step (ii) adds and substracts $\hq_s$ related terms. The terms $A_3$ is relatively easier to bound. We have
\begin{align}
  \label{eq:hp_plus_cp_naive_bound}
  \vecnorm{\hp_\step + \cp_\step}{2}
  \overset{(i)}&= \vecnorm{2\cp_0-\int_0^\step\parenth{\gradf(\cq_s)+\frac{1}{2}(\gradf(\cq_0)+\gradf(\hq_\step))}ds}{2} \notag \\
  &= \vecnorm{2p_0- 2\step \gradf(\cq_0) + \int_0^\step\parenth{\gradf(\cq_0) - \gradf(\cq_s) +\frac{1}{2}(\gradf(\cq_0)- \gradf(\hq_\step))}ds}{2} \notag \\
  \overset{(ii)}&\leq \parenth{2\vecnorm{\cp_0 - \step \gradf(\cq_0)}{2}  + \smoothparam \int_0^\step \vecnorm{\cq_0 - \cq_s}{2} ds + \frac{\smoothparam}{2} \int_0^\step \vecnorm{\cq_0 - \hq_\step}{2} ds} \notag \\
  \overset{(iii)}&\leq \parenth{2\normp + 2\step \normgradq + 2\smoothparam \step^2 (\normp + \frac{\step}{2} \normgradq) } \notag \\
  \overset{(iv)}&\leq 4\parenth{\normp + \frac{1}{\sqrt{\smoothparam}} \normgradq},
\end{align}
where step (i) plugs the definition of $\hp_\step$ and $\cp_\step$ in Equation~\eqref{eq:leapfrog_p}~\eqref{eq:HMC_p}, step (ii) uses the smoothness assumption, step (iii) uses Lemma~\ref{lem:q_diff} and step (iv) uses $\smoothparam \step^2 \leq 1$. Using the bound~\eqref{eq:hp_plus_cp_naive_bound}, we can bound the term $A_3$ as follows
\begin{align}
  \label{eq:accept_reject_A3_bound}
  \abss{A_3} &\leq \frac12\int_0^\step \vecnorm{\gradf(\cq_s) - \gradf(\hq_s) }{2} \vecnorm{\hp_\step + \cp_\step}{2} ds \notag \\
  \overset{(i)}&\leq \int_0^\step \vecnorm{\gradf(\cq_s) - \gradf(\hq_s) }{2} ds \cdot 2\parenth{\normp + \frac{1}{\sqrt{\smoothparam}} \normgradq} \notag \\
  \overset{(ii)}&\leq \int_0^\step \smoothparam \vecnorm{\cq_s - \hq_s}{2} ds \cdot 2\parenth{\normp + \frac{1}{\sqrt{\smoothparam}} \normgradq} \notag \\
  \overset{(iii)}&\leq \int_0^\step \smoothparam^2 s^3 \parenth{\frac{1}{3} \normp + \frac{\step}{6}\normgradq } ds \cdot 2\parenth{\normp + \frac{1}{\sqrt{\smoothparam}} \normgradq} \notag \\
  \overset{(iv)}&\leq \frac{\step^4 \smoothparam^2}{6} \parenth{\normp + \frac{1}{\sqrt{\smoothparam}} \normgradq}^2,
\end{align}
where step (i) follows from the bound~\eqref{eq:hp_plus_cp_naive_bound}, step (ii) uses the smoothness assumption, step (iii) uses Lemma~\ref{lem:q_diff2} and step (iv) uses $\smoothparam \step^2 \leq 1$. For term $A_4$, the bound~\eqref{eq:hp_plus_cp_naive_bound} is no longer tight enough. We can decompose $A_4$ into two terms
\begin{align}
  \label{eq:accept_reject_A4_bound}
  A_4 &= \parenth{\int_0^\step \gradf\parenth{\hq_s } ds - \frac{\step}{2} \gradf(\cq_0) - \frac{\step}{2} \gradf\parenth{\hq_\step }} \tp \notag\\
  &\quad \quad \parenth{\cp_0-\frac{1}{2}\int_0^\step\parenth{\gradf(\cq_s)+\frac{1}{2}(\gradf(\cq_0)+\gradf(\hq_\step))}ds} \notag \\
  &= A_{4,1} + A_{4,2},
  \end{align}
  where
  \begin{align*}
  A_{4,1}&=\parenth{\int_0^\step \gradf\parenth{\hq_s } ds - \frac{\step}{2} \gradf(\cq_0) - \frac{\step}{2} \gradf\parenth{\hq_\step }} \tp \parenth{\cp_0- \step \gradf(\cq_0)},  \notag \\
  A_{4,2} &= \parenth{\int_0^\step \gradf\parenth{\hq_s } ds - \frac{\step}{2} \gradf(\cq_0) - \frac{\step}{2} \gradf\parenth{\hq_\step }} \tp \\
  &\quad\quad \brackets{\frac{1}{2}\int_0^\step\parenth{\gradf(\cq_0) - \gradf(\cq_s) +\frac{1}{2}\parenth{\gradf(\cq_0) - \gradf(\hq_\step)}}ds}.
\end{align*}
For $A_{4,2}$, we have
\begin{align}
  \label{eq:accept_reject_A42_bound}
  &\quad \abss{A_{4,2}} \notag \\
  &\leq \frac{1}{2}\vecnorm{\int_0^\step \gradf\parenth{\hq_s } ds - \frac{\step}{2} \gradf(\cq_0) - \frac{\step}{2} \gradf\parenth{\hq_\step } ds}{2} \notag \\
  &\quad \quad \cdot \vecnorm{\int_0^\step\parenth{\gradf(\cq_0) - \gradf(\cq_s) +\frac{1}{2}\parenth{\gradf(\cq_0) - \gradf(\hq_\step)}}ds}{2} \notag \\
  \overset{(i)}&\leq \frac{\smoothparam^2}{2} \parenth{ \int_0^{\step/2} \vecnorm{\hq_s - \cq_0}{2} ds +  \int_{\step/2}^{\step} \vecnorm{\hq_s - \hq_\step}{2} ds } \parenth{\int_0^\step \vecnorm{\cq_s - \cq_0}{2} ds + \frac{\step}{2} \vecnorm{\hq_\step - \cq_0}{2} } \notag \\
  \overset{(ii)}&\leq \frac{\smoothparam^2}{2} \parenth{\frac{\step^2}{4}\normp + \frac{\step^3}{4} \normgradq} \parenth{\frac{3\step^2}{2}\normp + \frac{3 \step^3}{2} \normgradq } \notag \\
  \overset{(iii)}&\leq \frac{\step^4 \smoothparam^2}{4} \parenth{\normp + \frac{1}{\sqrt{\smoothparam}} \normgradq}^2,
\end{align}
where step (i) uses the smoothness assumption, step (ii) uses Lemma~\ref{lem:q_diff} and step (iii) uses $\step^2 \smoothparam \leq 1$.

\begin{align}
  \label{eq:accept_reject_A41_bound}
  A_{4,1} &= \parenth{\int_0^\step \gradf\parenth{\hq_s } ds - \frac{\step}{2} \gradf(\cq_0) - \frac{\step}{2} \gradf\parenth{\hq_\step }} \tp \parenth{\cp_0- \step \gradf(\cq_0)} \notag \\
  &= \underbrace{\int_0^\step \gradf\parenth{\hq_s } \tp \parenth{\cp_0- s \gradf(\cq_0)} ds - \frac{\step}{2} \gradf(\cq_0)\tp\cp_0 - \frac{\step}{2}\gradf(\hq_\step)\tp\parenth{\cp_0 - \step \gradf(\cq_0)}}_{A_{4,1,1}}  \notag \\
  &\quad + \underbrace{\int_0^\step \gradf\parenth{\hq_s } \tp \parenth{(s - \step) \gradf(\cq_0)}ds + \frac{\step^2}{2} \gradf(\cq_0)\tp \gradf(\cq_0)}_{A_{4,1,2}}
\end{align}
For term $A_{4,1,2}$, we have
\begin{align}
  \label{eq:accept_reject_A412_bound}
  A_{4,1,2}
  &= \int_0^\step (s - \step) \gradf\parenth{\hq_s } \tp \gradf(\cq_0)ds + \frac{\step^2}{2} \gradf(\cq_0)\tp \gradf(\cq_0) \notag \\
  &= \int_0^\step (s - \step) \gradf\parenth{\cq_0 + s \cp_0 } \tp \gradf(\cq_0)ds + \frac{\step^2}{2} \gradf(\cq_0)\tp \gradf(\cq_0)  \notag \\
  &\quad + \int_0^\step (s - \step) \parenth{\gradf\parenth{\hq_s } - \gradf\parenth{\cq_0 + s \cp_0 }} \tp \gradf(\cq_0)ds \notag \\
  \overset{(i)}&= \int_0^\step (s - \step) \brackets{\frac{1}{2} \parenth{\vecnorm{\gradf\parenth{\cq_0 + s \cp_0 }}{2}^2 - \vecnorm{\gradf(\cq_0)}{2}^2} - \frac{1}{2} \vecnorm{\gradf(\cq_0 + s\cp_0) - \gradf(\cq_0)}{2}^2}ds  \notag \\
  &\quad + \int_0^\step (s - \step) \parenth{\gradf\parenth{\hq_s } - \gradf\parenth{\cq_0 + s \cp_0 }} \tp \gradf(\cq_0)ds \notag \\
  \overset{(ii)}&\leq \frac{1}{2} \int_0^\step (s - \step) \parenth{\vecnorm{\gradf\parenth{\cq_0 + s \cp_0 }}{2}^2 - \vecnorm{\gradf(\cq_0)}{2}^2} ds + \frac{\step^4 \smoothparam^2}{12} \parenth{\normp + \frac{1}{\smoothparam} \normgradq}^2 \notag \\
  &= \frac{1}{2} \int_0^\step (s-\step) \parenth{\interG_s - \interG_0} ds + \frac{\step^4 \smoothparam^2}{12} \parenth{\normp + \frac{1}{\smoothparam} \normgradq}^2,
\end{align}
where step (i) follows from the fact that $a\tp b = \frac{1}{2}\parenth{\vecnorm{a}{2}^2 - \vecnorm{b}{2}^2} - \frac{1}{2}\vecnorm{a-b}{2}^2  + \vecnorm{b}{2}^2$ for any $a, b\in \real^\dims$, step (ii) uses
\begin{align*}
  \int_0^{\step} (\step - s) \vecnorm{\gradf(\cq_0 + s\cp_0) - \gradf(\cq_0)}{2}^2 ds \leq \smoothparam^2 \normp^2 \int_0^\step (\step - s) s^2 ds = \frac{\step^4\smoothparam^2}{12} \normp^2,
\end{align*}
and also
\begin{align*}
  &\quad \abss{\int_0^\step (s - \step) \parenth{\gradf\parenth{\hq_s } - \gradf\parenth{\cq_0 + s \cp_0 }} \tp \gradf(\cq_0)ds} \\
  &\leq \int_0^\step (\step - s) \smoothparam \vecnorm{\frac{s^2}{2}\gradf(\cq_0) }{2} \normgradq ds \\
  &\leq \smoothparam \int_0^\step \frac{(\step - s) s^2}{2} \normgradq^2 ds \\
  &\leq \frac{\step^4 \smoothparam}{24} \normgradq^2.
\end{align*}

The term $A_{4,1,1}$ is the most difficult term in this lemma. We  replace $\hq_s$ in $A_{4,1,1}$ with  $\tq_s$ defined in Equation~\eqref{eq:def_tq},
and denote the replaced quantity by
\begin{align}
  \label{eq:def_accept_reject_key_term}
  \arkey_\step(\cq_0, \cp_0) \defn \int_{0}^\step \gradf(\tq_s)\tp \parenth{\cp_0 - s \gradf(\cq_0)} ds - \frac{\step}{2} \gradf(\cq_0)\tp \cp_0 - \frac{\step}{2} \gradf(\tq_\step)\tp (\cp_0 - \step \gradf(\cq_0)).
\end{align}
To bound $A_{4,1,1}$, we need the following lemma which provides a high probability bound for $\arkey_\step(\cq_0, \cp_0)$ when $\cq_0$ is randomly drawn from $\target$ and $\cp_0$ is independently drawn from $\Normal(0, \Ind_\dims)$.
\begin{lemma}
  \label{lem:accept_reject_key_term_bound}
  Assume the negative log density $\targetf$ is $\smoothparam$-smooth and convex. There exists a set $\goodqpset \subset \real^\dims \times \real^\dims$ with $\Prob_{\cq_0 \sim \target, \cp_0 \sim \Normal(0, \Ind_\dims)} \parenth{ (\cq_0, \cp_0) \in \goodqpset} \geq {1-\deltaf}$, such that for $(\cq_0, \cp_0)\in \goodqpset$, for $\arkey_\step$ defined in Equation~\eqref{eq:def_accept_reject_key_term},  $\smoothparam \step^2 \leq 1$, we have
  \begin{align*}
    \arkey_\step(\cq_0, \cp_0) &\leq 100 \parenth{4 + \log\parenth{\frac{2\dims}{\deltaf}}}^2 \step^2 \smoothparam \dims^{\frac12}, \text{ and } \\
    \vecnorm{\gradf(\cq_0)}{2} &\leq \sqrt{\smoothparam} \parenth{\sqrt{\dims} + \log\parenth{\frac{12}{\deltaf}  }}, \text{ and }\\
    \vecnorm{\cp_0}{2} &\leq \sqrt{\dims} + \log\parenth{\frac{12}{\deltaf}  }.
  \end{align*}
\end{lemma}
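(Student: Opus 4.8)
The plan is to take $\goodqpset = E_1 \cap E_2 \cap E_3$, an intersection of three events whose failure probabilities I would budget to sum to at most $\deltaf$: the two norm events $E_1 = \braces{\normp \leq \sqrt{\dims} + \log(12/\deltaf)}$ and $E_2 = \braces{\normgradq \leq \sqrt{\smoothparam}\parenth{\sqrt{\dims} + \log(12/\deltaf)}}$, which are precisely the last two displayed inequalities, and an event $E_3$ carrying the main bound on $\arkey_\step(\cq_0,\cp_0)$. I would obtain $E_1$ directly from $\chi^2$-concentration of $\normp^2$, using $\Prob(\normp \geq \sqrt{\dims}+t) \leq e^{-t^2/2}$ together with $\log(12/\deltaf) \geq \sqrt{2\log(6/\deltaf)}$ (valid for every $\deltaf\in(0,1)$); and $E_2$ from a standard tail bound for the gradient norm of an $\smoothparam$-smooth negative log density, the starting point being the integration-by-parts identity $\Exs_{\cq_0\sim\target}\brackets{\normgradq^2} = \Exs_{\cq_0\sim\target}\brackets{\Delta\targetf(\cq_0)} \leq \smoothparam\dims$, upgraded to sub-exponential concentration around $\sqrt{\smoothparam\dims}$ as in \citet{dwivedi2018log}. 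Neither $E_1$ nor $E_2$ uses the step size.

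\textbf{The telescoping structure of $\arkey_\step$.} Set $y_s \defn \cq_0 + s\cp_0$, so that $\tq_s = y_s - \tfrac{s^2}{2}\gradf(y_s)$ in \eqref{eq:def_tq}. I would decompose $\gradf(\tq_s) = \gradf(y_s) + R_s$, where $\vecnorm{R_s}{2} \leq \tfrac{\smoothparam s^2}{2}\vecnorm{\gradf(y_s)}{2} \leq \tfrac{\smoothparam s^2}{2}\parenth{\normgradq + \smoothparam s\normp}$ by $\smoothparam$-smoothness; substituting into \eqref{eq:def_accept_reject_key_term} then splits $\arkey_\step = \arkey_\step^{(0)} + \arkey_\step^{(R)}$, with $\arkey_\step^{(0)}$ the same expression after replacing every $\gradf(\tq_s)$ by $\gradf(y_s)$ and $\arkey_\step^{(R)}$ the $R_s$-terms (here $R_0=0$). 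Grouping the $\cp_0$-terms and the $\gradf(\cq_0)$-terms of $\arkey_\step^{(0)}$ separately and integrating by parts in $s$ — using $\tfrac{d}{ds}\gradf(y_s) = \nabla^2\targetf(y_s)\cp_0$, and identifying the $\cp_0$-part as the trapezoidal-rule remainder of the integral $\int_0^\step\gradf(y_s)^\top\cp_0\,ds$ — I expect to arrive at $\arkey_\step^{(0)} = \int_0^\step\parenth{\tfrac\step2 - s}\,\cp_0^\top\nabla^2\targetf(y_s)\cp_0\,ds + \int_0^\step\tfrac{s^2}{2}\,\cp_0^\top\nabla^2\targetf(y_s)\gradf(\cq_0)\,ds$. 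The key structural fact is $\int_0^\step\parenth{\tfrac\step2 - s}\,ds = 0$: it lets me subtract $\nabla^2\targetf(\cq_0)$ inside the first integral for free, so the large term $\trace\nabla^2\targetf(\cq_0)$ (of order $\smoothparam\dims$) cancels and only the Hessian variation $\nabla^2\targetf(y_s) - \nabla^2\targetf(\cq_0)$, of operator norm at most $\smoothparam$ by $0 \preceq \nabla^2\targetf \preceq \smoothparam\Ind$, survives. Since $\targetf$ is only assumed $\smoothparam$-smooth, this variation is not pointwise small, so the improvement from the naive order $\step^2\smoothparam\dims$ to the target $\step^2\smoothparam\dims^{1/2}$ has to come entirely from concentration in the Gaussian vector $\cp_0$.

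\textbf{Concentration, and where the hard part is.} To extract the missing $\dims^{1/2}$ I would first restrict to the event $\braces{\vecnorm{\cp_0}{\infty} \leq 4 + \sqrt{2\log(2\dims/\deltaf)}}$ (probability at least $1-\deltaf/c$ by a union bound over the $\dims$ coordinates), which is the origin of the factor $\parenth{4+\log(2\dims/\deltaf)}^2$ — squared because $\cp_0$ occurs quadratically. On this event the centered quadratic form $\cp_0^\top\parenth{\nabla^2\targetf(y_s) - \nabla^2\targetf(\cq_0)}\cp_0$ is bounded by $\vecnorm{\cp_0}{\infty}^2$ times a bounded trace-type quantity, and its fluctuation about the conditional mean (given $\cq_0$) is controlled by a Hanson--Wright-type inequality; the second integral, essentially linear in $\cp_0$ once the Hessian and $\gradf(\cq_0)$ are frozen, is handled by a Bernstein bound; and the remainder $\arkey_\step^{(R)}$, which on $E_1\cap E_2$ and with $\smoothparam\step^2\leq 1$ is smaller than $\arkey_\step^{(0)}$ by a factor $\parenth{\smoothparam\step^2}^{1/2}\leq 1$, is controlled by the same estimates. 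Integrating over $s\in[0,\step]$ and collecting logarithmic and absolute constants then gives $\arkey_\step \leq 100\parenth{4+\log(2\dims/\deltaf)}^2\step^2\smoothparam\dims^{1/2}$ on $E_3$, and intersecting with $E_1,E_2$ completes the proof. I expect the main obstacle to be exactly this concentration step: because $\nabla^2\targetf$ enjoys no smoothness, one must concentrate a Gaussian chaos whose defining matrix is itself a rough function of the same Gaussian $\cp_0$, which is why the truncation/net argument — and with it the squared-logarithm factor — seems unavoidable.
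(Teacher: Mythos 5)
Your decomposition of $\arkey_\step$ is algebraically sound: with $y_s=\cq_0+s\cp_0$ and $R_s=\gradf(\tq_s)-\gradf(y_s)$, the $\cp_0$-block of $\arkey_\step^{(0)}$ is exactly the trapezoidal-rule defect $\int_0^\step g(s)\,ds-\tfrac\step2\parenth{g(0)+g(\step)}$ for $g(s)=\gradf(y_s)^\top\cp_0$, and integration by parts does give
$\arkey_\step^{(0)}=\int_0^\step\parenth{\tfrac\step2-s}\cp_0^\top\hessf_{y_s}\cp_0\,ds+\int_0^\step\tfrac{s^2}{2}\cp_0^\top\hessf_{y_s}\gradf(\cq_0)\,ds$,
and you are right that $\int_0^\step(\tfrac\step2-s)ds=0$ lets you replace $\hessf_{y_s}$ by $\hessf_{y_s}-\hessf_{\cq_0}$ at no cost. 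That identity is a genuinely clean way to see where the $O(\smoothparam\dims)$ kinetic-like term goes.

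The gap is in the concentration step, and it is not merely a technicality. After subtracting $\hessf_{\cq_0}$, you must bound $\int_0^\step(\tfrac\step2-s)\,\cp_0^\top\parenth{\hessf_{\cq_0+s\cp_0}-\hessf_{\cq_0}}\cp_0\,ds$. You plan to control this by a Hanson--Wright bound on fluctuations about the conditional-on-$\cq_0$ mean, but that misses that the conditional mean itself need not be small. The $\cp_0$-dependent ``trace'' $T(s)\defn\cp_0^\top(\hessf_{\cq_0+s\cp_0}-\hessf_{\cq_0})\cp_0$ satisfies only $\abss{T(s)}\leq\smoothparam\vecnorm{\cp_0}2^2\approx\smoothparam\dims$, and with no third-derivative bound $T$ can swing by $\Theta(\smoothparam\dims)$ over $[0,\step]$, so $\int_0^\step(\tfrac\step2-s)T(s)\,ds$ can be of size $\step^2\smoothparam\dims$ for a fixed $(\cq_0,\cp_0)$ even after your trapezoidal cancellation. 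The improvement to $\step^2\smoothparam\dims^{1/2}$ does not come from concentration in $\cp_0$ at fixed $\cq_0$; it comes from averaging over $\cq_0\sim\target$ as well, via the integration-by-parts-against-$e^{-\targetf}$ identity in Lemma~\ref{lem:arkpower_expectation}, which shows $\Exs_{\cq_0\sim\target,\cp_0\sim\Normal(0,\Ind_\dims)}\arkpower_s=0$. Your argument never invokes that averaging and hence has no mean-zero statement to hang a concentration inequality on. On top of that, the Hanson--Wright-type step itself is not legitimate here, as you yourself flag: the matrix $\hessf_{\cq_0+s\cp_0}-\hessf_{\cq_0}$ is a rough (non-Lipschitz in the $C^2$ sense) function of the same Gaussian $\cp_0$, so the quadratic form is not a Gaussian chaos and the standard inequality does not apply; your $\ell_\infty$-truncation of $\cp_0$ by itself cannot reduce $\vecnorm{\cp_0}{2}^2$ below $\Theta(\dims)$ or linearize the $\cp_0$-dependence of the matrix.

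The paper avoids both difficulties by never isolating a quadratic form. It treats the whole integrand $\arkpower_s=\gradf(\tq_s)^\top(\cp_0-s\gradf(\cq_0))-\gradf(\cq_0)^\top\cp_0$ as a single scalar and proves two facts by iterated integration by parts \emph{jointly} in $\cq_0$ (against $e^{-\targetf}$) and $\cp_0$ (against the Gaussian proposal): the joint mean is zero (Lemma~\ref{lem:arkpower_expectation}), and the $k$-th joint moment satisfies a recursion yielding $\Exsqp\arkpower_s^k\lesssim (3k)^k s^k\max\{\smoothparam^{k}\dims^{k/2},\dots\}$ (Lemma~\ref{lem:arkpower_kth_expectation}). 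The high-probability bound then follows from Markov's inequality at $k\asymp\log(2\dims/\deltaf)$ plus a net over $s\in[0,\step]$; the squared logarithm in the statement enters as $k^2$ from this Markov step, not from a squared $\ell_\infty$-threshold. To repair your proof along its current lines you would need, at minimum, a pointwise-in-$\cq_0$ reason for the conditional mean of the residual quadratic form to be $O(\step^2\smoothparam\dims^{1/2})$; that does not hold under only $\smoothparam$-smoothness, so some form of the joint integration by parts is unavoidable.
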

The proof of Lemma~\ref{lem:accept_reject_key_term_bound} is deferred to Appendix~\ref{sub:proof_lem_accept_reject_key_term_bound}. Since $\arkey_\step(\cq_0, \cp_0)$ is obtained by replacing $\hq_s$ with $\tq_s$ in $A_{4,1,1}$, we have
\begin{align}\label{eq:A411_B_eta}
    \vecnorm{A_{4,1,1}-\arkey_\step(\cq_0, \cp_0)}{2}&\leq \frac{3}{2} \step \max_{s \in [0,\step]} \vecnorm{\gradf(\hq_s)-\gradf(\tq_s)}{2} \vecnorm{\cp_0 - s\gradf(\cq_0)}{2} \notag\\
    &\overset{(i)}{\leq} \frac{3\step^4 \smoothparam^2}{4} \vecnorm{\cp_0}{2} \parenth{\vecnorm{\cp_0}{2} + \step \vecnorm{\gradf(\cq_0)}{2}} \notag\\
    &\overset{(ii)}{\leq} \frac{3\step^4 \smoothparam^2}{4} \parenth{\vecnorm{\cp_0}{2} + \frac{1}{\sqrt{\smoothparam}}\vecnorm{\gradf(\cq_0}{2}}^2.
\end{align}
Step $(i)$ uses $ \vecnorm{\gradf(\hq_s)-\gradf(\tq_s)}{2}\leq \smoothparam \vecnorm{\hq_s-\tq_s}{2}\leq \smoothparam^2 s^3\vecnorm{\cp_0}{2}/2$, and step $(ii)$ uses $\step^2\smoothparam\leq 1$.

Combining Lemma~\ref{lem:accept_reject_key_term_bound}, Equation~\eqref{eq:A411_B_eta},~\eqref{eq:accept_reject_A412_bound},~\eqref{eq:accept_reject_A41_bound},~\eqref{eq:accept_reject_A42_bound} into Equation~\eqref{eq:accept_reject_A4_bound}, and then combining it with~\eqref{eq:accept_reject_A3_bound} into Equation~\eqref{eq:proof_accept_reject_potential_diff}, we obtain that, there exists a set $\goodqpset \subset \real^\dims \times \real^\dims$ with $\Prob_{\cq_0 \sim \target, \cp_0 \sim \Normal(0, \Ind_\dims)}((\cq_0, \cp_0) \in \goodqpset) \geq 1-\deltaf$, such that for $(\cq_0, \cp_0) \in \goodqpset$ and the step-size choice $\step^2\smoothparam \leq 1$, we have
\begin{align*}
  \frac{1}{2} \vecnorm{\hp_\step}{2}^2 - \frac{1}{2} \vecnorm{\cp_\step}{2}^2 &\leq 
   \frac{1}{2} \int_0^\step (s-\step) \parenth{\interG_s - \interG_0} ds + 100 \parenth{4 + \log\parenth{\frac{2\dims}{\deltaf}}}^2 \step^2 \smoothparam \dims^{\frac12}  \\
   &\quad + \frac{5\step^4 \smoothparam^2}{4} \parenth{\normp + \frac{1}{\sqrt{\smoothparam}} \normgradq}^2.
\end{align*}
Finally, bounds on $\vecnorm{\cp_0}{2}$ and $\vecnorm{\gradf(\cq_0)}{2}$ in Lemma~\ref{lem:accept_rate_potential_diff} come from Lemma~\ref{lem:accept_reject_key_term_bound}.


\subsection{Proof of Lemma \ref{lem:q_diff}}\label{proof:q_diff}
Recall from the definition~\eqref{eq:HMC_q} and~\eqref{eq:def_hq} that
\begin{align*}
    \cq_t &= \cq_0+t \cp_0-\int_0^t\int_0^s\gradf(\cq_\tau)d\tau ds\\
    \hq_t &= \cq_0+t \cp_0 -\frac{t^2}{2}\gradf(\cq_0).
\end{align*}
Directly from the above definition, we obtain the second part of the lemma via triangular inequality
\begin{align*}
  \vecnorm{\hq_t - \cq_0}{2} \leq t \normp + \frac{t^2}{2} \normgradq.
\end{align*}
Similarly,
\begin{align*}
    \vecnorm{\hq_t - \hq_\step}{2} &\leq (\step-t) \normp + \frac{\step^2-t^2}{2} \normgradq\\
    &\leq(\step-t) \normp +\step(\step-t)\normgradq.
\end{align*}
For the first part, we have
\begin{align*}
    \vecnorm{\cq_t-\cq_0}{2}
    &\leq t \vecnorm{\cp_0}{2}+\int_0^t\int_0^s  \vecnorm{\gradf(\cq_\tau)}{2} d\tau ds \\
    &\leq t\normp + \int_0^t\int_0^s\parenth{\vecnorm{\gradf(\cq_\tau)-\gradf(\cq_0)}{2}+\normgradq }d\tau ds \\
    \overset{(i)}&{\leq} t \normp+\int_0^t\int_0^s\parenth{\smoothparam \vecnorm{\cq_\tau - \cq_0}{2} + \normgradq }d\tau ds \\
    &\leq t \normp + \frac12\smoothparam t^2 \sup_{\tau\in[0,t]}\vecnorm{\cq_\tau-\cq_0}{2} + \frac12t^2 \normgradq,
\end{align*}
where step (i) follows from the smoothness assumption.
Taking supreme of the left-hand side over $\tau \in [0, t]$ and rearranging terms, we obtain
\begin{align*}
    (1-\frac12 \smoothparam t^2)\sup_{\tau\in[0,t]}\vecnorm{\cq_\tau-\cq_0}{2} \leq t \normp +\frac{t^2}2 \normgradq.
\end{align*}
Because $\smoothparam t^2\leq\smoothparam\step^2\leq1$, we have
\begin{align*}
\vecnorm{\cq_t-\cq_0}{2}\leq 2\parenth{t \normp+\frac{t^2}2 \normgradq}.
\end{align*}

\subsection{Proof of Lemma \ref{lem:q_diff2}}\label{proof:q_diff2}
Compared with the bound of $\vecnorm{\cq_t-\cq_0}{2}$ in Lemma~\ref{lem:q_diff}, the bound of $\vecnorm{\cq_t-(\cq_0+t \cp_0)}{2}$ requires an extra factor of $ t $. The main proof strategy remains the same. We have
\begin{align}
  \label{eq:lemma4_proof_1}
  \vecnorm{\cq_t-(\cq_0+t\cp_0)}{2}&\leq\int_0^t\int_0^s\vecnorm{\gradf(\cq_\tau)}{2} d\tau ds\notag\\
  & \leq\int_0^t\int_0^s\parenth{\vecnorm{\gradf(\cq_\tau)-\gradf(\cq_0)}{2}+\normgradq} d\tau ds\notag\\
  \overset{(i)}&{\leq}\int_0^ t \int_0^s\parenth {\smoothparam \vecnorm{\cq_\tau - \cq_0}{2} + \normgradq} d\tau ds\notag\\
  \overset{(ii)}&{\leq}\int_0^ t \int_0^s\parenth{\smoothparam \parenth{2 t  \normp +  t ^2 \normgradq}+ \normgradq } d\tau ds\notag\\
  \overset{(iii)}&{\leq} \int_0^ t \int_0^s\parenth{2  \sqrt{\smoothparam} \normp + 2 \normgradq } d\tau ds\notag\\
  &= t ^2 \sqrt{\smoothparam} \normp +  t ^2 \normgradq,
\end{align}
where step (i) follows from the smoothness assumption, step (ii) uses Lemma \ref{lem:q_diff} and step (iii) uses $\smoothparam t^2\leq1$.
Compared to the above term, the bound of $\vecnorm{\cq_t-\hq_t}{2}$ requires another factor of $t$. We have
\begin{align*}
    \vecnorm{\cq_t-\hq_t}{2}
    &=\vecnorm{\int_0^t\int_0^s\parenth{\gradf(\cq_\tau)-\gradf(\cq_0)} d\tau ds}{2}\\
    \overset{(i)}&{\leq} \int_0^t\int_0^s\smoothparam \vecnorm{\cq_\tau - \cq_0}{2}d\tau ds\\
    \overset{(ii)}&{\leq} \int_0^t\int_0^s\smoothparam \parenth{2 \tau \normp + 2\tau^2 \normgradq } d\tau ds\\
    \overset{(iii)}&{\leq} t^3 \smoothparam \parenth{\frac{1}{3} \normp + \frac{t}{6} \normgradq}.
\end{align*}
where step (i) follows from the smoothness assumption, step (ii) uses Lemma~\ref{lem:q_diff}, step (iii) just completes the integration.

\subsection{Proof of Lemma~\ref{lem:accept_reject_key_term_bound}}
\label{sub:proof_lem_accept_reject_key_term_bound}
Given $(\cq_0, \cp_0) \in \real^\dims \times \real^\dims$ and $t > 0$, define
\begin{align}
  \label{eq:def_arkpower}
  \arkpower_t(\cq_0, \cp_0) \defn \gradf\parenth{\tq_t} \tp \parenth{\cp_0- t \gradf(\cq_0)} - \gradf\parenth{\cq_0}\tp \cp_0.
\end{align}
For $\arkey_\step$ defined in Equation~\eqref{eq:def_accept_reject_key_term}, we have $\arkey_\step(\cq_0, \cp_0) = \int_0^\step \arkpower_s(\cq_0, \cp_0) ds - \frac{\step}{2} \arkpower_\step(\cq_0, \cp_0) $. To prove a high probability bound for $\arkey_\step(\cq_0, \cp_0)$, we first bound $\arkpower_s(\cq_0, \cp_0)$ in high probability via Markov's inequality. We need the following two lemmas.
\begin{lemma}
  \label{lem:arkpower_expectation}
  For $s \geq 0$, for $\arkpower_s$ defined in~\eqref{eq:def_arkpower}, we have
  \begin{align*}
    \Exsqp \arkpower_s(\cq_0, \cp_0) = 0.
  \end{align*}
\end{lemma}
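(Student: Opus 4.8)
The plan is to decompose $\arkpower_s(\cq_0,\cp_0)=\gradf(\tq_s)\tp\bigl(\cp_0-s\gradf(\cq_0)\bigr)-\gradf(\cq_0)\tp\cp_0$ and show that each part has zero expectation. The second part is immediate: by independence of $\cq_0$ and $\cp_0$ together with $\Exs_{\cp_0}[\cp_0]=0$, we get $\Exsqp[\gradf(\cq_0)\tp\cp_0]=\Exs_{\cq_0}[\gradf(\cq_0)]\tp\Exs_{\cp_0}[\cp_0]=0$. So the real content is to establish
\begin{align*}
\Exsqp\bigl[\gradf(\tq_s)\tp\cp_0\bigr]\;=\;s\,\Exsqp\bigl[\gradf(\tq_s)\tp\gradf(\cq_0)\bigr].
\end{align*}

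The observation driving the argument is a proportionality of Jacobians. From the definition $\tq_s=\cq_0+s\cp_0-\tfrac{s^2}{2}\gradf(\cq_0+s\cp_0)$ in Equation~\eqref{eq:def_tq}, both partial Jacobians are scalar multiples of the same matrix:
\begin{align*}
\frac{\partial \tq_s}{\partial \cp_0}\;=\;s\Bigl(\Ind_\dims-\tfrac{s^2}{2}\nabla^2\targetf(\cq_0+s\cp_0)\Bigr)\;=\;s\,\frac{\partial \tq_s}{\partial \cq_0}.
\end{align*}
Hence, viewing $\gradf(\tq_s)$ as a vector field, its divergences in the two variables satisfy the pointwise identity $\nabla_{\cp_0}\!\cdot\gradf(\tq_s)=\trace\!\bigl(\nabla^2\targetf(\tq_s)\,\partial\tq_s/\partial\cp_0\bigr)=s\,\trace\!\bigl(\nabla^2\targetf(\tq_s)\,\partial\tq_s/\partial\cq_0\bigr)=s\,\nabla_{\cq_0}\!\cdot\gradf(\tq_s)$, which uses only twice-differentiability of $\targetf$.

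With this in hand I would finish via two integrations by parts. For fixed $\cq_0$, Gaussian (Stein) integration by parts in $\cp_0$ gives $\Exs_{\cp_0}[\gradf(\tq_s)\tp\cp_0]=\Exs_{\cp_0}[\nabla_{\cp_0}\!\cdot\gradf(\tq_s)]$. For fixed $\cp_0$, the integration-by-parts formula for the Gibbs measure $\target\propto e^{-\targetf}$, namely $\Exs_{\cq_0\sim\target}[v(\cq_0)\tp\gradf(\cq_0)]=\Exs_{\cq_0\sim\target}[\nabla_{\cq_0}\!\cdot v(\cq_0)]$ applied to $v=\gradf(\tq_s)$, gives $\Exs_{\cq_0\sim\target}[\gradf(\tq_s)\tp\gradf(\cq_0)]=\Exs_{\cq_0\sim\target}[\nabla_{\cq_0}\!\cdot\gradf(\tq_s)]$. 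Taking the joint expectation and applying the pointwise divergence identity $\nabla_{\cp_0}\!\cdot\gradf(\tq_s)=s\,\nabla_{\cq_0}\!\cdot\gradf(\tq_s)$ produces the displayed equality, which combined with the first paragraph yields $\Exsqp[\arkpower_s]=0$; the case $s=0$ is trivial since then $\tq_0=\cq_0$ and $\arkpower_0\equiv 0$.

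The main obstacle is the justification that neither integration by parts produces a boundary term, and this is where the hypotheses enter. The $\smoothparam$-smoothness of $\targetf$ forces $\vecnorm{\gradf(\tq_s)}{2}$ to grow at most linearly in $(\cq_0,\cp_0)$, so the Gaussian weight $e^{-\vecnorm{\cp_0}{2}^2/2}$ annihilates the $\cp_0$-boundary term; and $e^{-\targetf}$ being an integrable density, coercive since $\targetf$ is convex and non-constant, annihilates the $\cq_0$-boundary term. I expect the algebra to be routine and this regularity bookkeeping to be the only delicate point.
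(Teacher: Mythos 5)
Your proof is correct, but it is organized differently from the paper's. The paper performs the change of variable $y=\cq_0+s\cp_0$, under which $\tq_s = y - \tfrac{s^2}{2}\gradf(y)$ depends on $y$ alone; then a single integration by parts in the original variable $x=\cq_0$ (trading $\frac{y-x}{s^2}e^{-\|y-x\|^2/2s^2}=\nabla_x e^{-\|y-x\|^2/2s^2}$ for $-\gradf(x)e^{-\targetf(x)}$) directly yields $\Exsqp[\gradf(\tq_s)\tp\cp_0]=s\,\Exsqp[\gradf(\tq_s)\tp\gradf(\cq_0)]$. You instead keep the original coordinates, notice the Jacobian proportionality $\partial\tq_s/\partial\cp_0 = s\,\partial\tq_s/\partial\cq_0$ (a clean structural fact I verified holds), derive the pointwise divergence identity $\nabla_{\cp_0}\!\cdot\gradf(\tq_s) = s\,\nabla_{\cq_0}\!\cdot\gradf(\tq_s)$, and then combine a Stein-type Gaussian integration by parts in $\cp_0$ with a Gibbs-measure integration by parts in $\cq_0$. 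Both routes reduce to the same cancellation, but the paper's change of variables elegantly decouples $\tq_s$ from the integration variable so that only one IBP and no Hessian computation is needed, while your version makes the underlying mechanism (the Jacobian proportionality) explicit and may generalize more transparently to other discretization maps. Your handling of boundary terms matches the paper's level of care; the only nit is the clause ``coercive since $\targetf$ is convex and non-constant,'' which is not true for convex functions in general — what makes $\targetf$ coercive here is convexity \emph{together with} integrability of $e^{-\targetf}$, which you do also invoke.
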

\begin{lemma}
  \label{lem:arkpower_kth_expectation}
  There exists a universal constant $c > 0$ such that for $s > 0$, $s^2 \smoothparam < 1$ and $k\geq 4$ an even positive integer, for $\arkpower_s$ defined in~\eqref{eq:def_arkpower},
  \begin{align*}
    &\quad \Exsqp \smoothparam^{-\frac k 2}\arkpower_s^k(\cq_0, \cp_0) \\
    &\leq \max\braces{1, \ 60 (3k)^k s^k \max\braces{\smoothparam^{\frac k2} \dims^{\frac k 2},\  \smoothparam^{\frac k2} \Exsp \vecnorm{\cp_0}{2}^k,\  \Exsq \vecnorm{\gradf(\cq_0)}{2}^k}}.
  \end{align*}
\end{lemma}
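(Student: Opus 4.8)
\textbf{Proof plan for Lemma~\ref{lem:arkpower_kth_expectation}.} The plan is to split $\arkpower_s$ into a leading bilinear part and a higher‑order remainder, dispose of the remainder by crude pointwise bounds combined with moment estimates on $\vecnorm{\cp_0}{2}$ and $\vecnorm{\gradf(\cq_0)}{2}$, and attack the bilinear part by conditioning on $\cq_0$, using Gaussian concentration in $\cp_0$ and integration by parts in $\cq_0$. Concretely, I would first Taylor expand $\gradf(\tq_s)$ around $\cq_0$: writing $\bar H \defn \int_0^1 \nabla^2\targetf\bigl(\cq_0 + r(\tq_s - \cq_0)\bigr)\,dr$ (so $0 \preceq \bar H \preceq \smoothparam\Ind_\dims$) and $\tq_s - \cq_0 = s\cp_0 - \tfrac{s^2}{2}\gradf(\cq_0 + s\cp_0)$, one obtains
\[
  \arkpower_s = s\bigl(\cp_0^\top\bar H\cp_0 - \vecnorm{\gradf(\cq_0)}{2}^2\bigr) + \arkpower_s^{\mathrm{rem}},
\]
where, using $\smoothparam s^2 \leq 1$ and the smoothness estimates in the spirit of Lemmas~\ref{lem:q_diff}--\ref{lem:q_diff2}, the remainder obeys a pointwise bound $\abss{\arkpower_s^{\mathrm{rem}}} \leq C\,s^2\smoothparam\bigl(\vecnorm{\cp_0}{2} + \smoothparam^{-1/2}\vecnorm{\gradf(\cq_0)}{2}\bigr)^2$ for a universal $C$. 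By Minkowski's inequality the $k$-th moment of $\smoothparam^{-1/2}\arkpower_s^{\mathrm{rem}}$ is controlled by $\Exsp\vecnorm{\cp_0}{2}^{2k}$ and $\Exsq\vecnorm{\gradf(\cq_0)}{2}^{2k}$; for the first I use standard Gaussian moments, and for the second the smoothness bound $\vecnorm{\gradf(x)}{2}^2 \leq 2\smoothparam(\targetf(x) - \min\targetf)$ together with the sub‑exponential concentration of $\targetf(X)-\min\targetf$ valid for any log‑concave $\target$ (which also yields the two one‑sided bounds on $\vecnorm{\cp_0}{2}$ and $\vecnorm{\gradf(\cq_0)}{2}$ stated in the lemma). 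Since each such term carries a factor $s^2\smoothparam\leq 1$, after the $\smoothparam^{-k/2}$ normalization these contributions fit inside the claimed right‑hand side.

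The core is the bilinear term $M_s \defn s\bigl(\cp_0^\top\bar H\cp_0 - \vecnorm{\gradf(\cq_0)}{2}^2\bigr)$, in which $\bar H$ itself depends on $\cp_0$. Conditioning on $\cq_0$, I split $M_s = \bigl(M_s - \Exsp[M_s\mid\cq_0]\bigr) + \Exsp[M_s\mid\cq_0]$. For the fluctuation, $\smoothparam$-smoothness shows that $\cp_0 \mapsto \arkpower_s$ has gradient of size $O\bigl(s\smoothparam\vecnorm{\cp_0}{2} + s\vecnorm{\gradf(\cq_0)}{2}\bigr)$, so Gaussian concentration for functions with polynomially growing Lipschitz constant (Herbst's argument with a truncation of $\vecnorm{\cp_0}{2}$ at scale $\sqrt{\dims}+\sqrt{k}$) gives a conditional $L^k$ bound of order $s\sqrt{k}\,\bigl(\smoothparam\sqrt{\dims} + \smoothparam\sqrt{k} + \vecnorm{\gradf(\cq_0)}{2}\bigr)$; raising this to the $k$-th power and integrating over $\cq_0$ with the gradient moment bound above stays within budget (using $\tfrac{\dims k}{\dims+k}\leq k$ to match the $(3k)^k(\dims+k)^{k/2}$ target after the $\smoothparam^{-k/2}$ normalization). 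For the conditional mean, Gaussian integration by parts in $\cp_0$ identifies $\Exsp[M_s\mid\cq_0]$, up to $s^2\smoothparam$-suppressed corrections, with the "drift residual'' $s\bigl(\trace\nabla^2\targetf(\cq_0) - \vecnorm{\gradf(\cq_0)}{2}^2\bigr)$ plus a "Hessian‑averaging'' term $s\bigl(\Exsp[\trace\nabla^2\targetf(\tq_s)\mid\cq_0] - \trace\nabla^2\targetf(\cq_0)\bigr)$.

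The drift residual has zero expectation over $\cq_0\sim\target$ by integration by parts against $e^{-\targetf}$ (this is exactly the mechanism behind Lemma~\ref{lem:arkpower_expectation}), and its $k$-th moment — after the $\smoothparam^{-k/2}$ normalization — is bounded by combining the one‑sided structural bounds $0\leq\trace\nabla^2\targetf(x)\leq\smoothparam\dims$ and $0\leq\vecnorm{\gradf(x)}{2}^2\leq 2\smoothparam(\targetf(x)-\min\targetf)$ with the mean‑zero property and the log‑concave concentration of $\targetf(X)-\min\targetf$; the Hessian‑averaging term is controlled the same way, exploiting that the exact identity $\Exsqp[\arkpower_s]=0$ forces its $\cq_0$-average to be small. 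Summing the three groups of contributions (remainder, $\cp_0$-fluctuation, $\cq_0$-residual/bias) and absorbing universal constants delivers the bound with prefactor $60(3k)^k$.

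I expect the delicate step to be the $\cq_0$-moment control of the drift residual and the Hessian‑averaging term in the general log‑concave case: unlike the strongly log‑concave setting there is no Poincaré inequality available, so one has to lean on the exact mean‑zero identity plus the one‑sided bounds above rather than a direct variance estimate, and some care (for instance mollifying $\targetf$ to a $C^\infty$ function while preserving convexity and $\smoothparam$-smoothness) may be needed to legitimize the integration‑by‑parts steps when $\targetf$ is merely twice differentiable.
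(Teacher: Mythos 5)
Your decomposition $\arkpower_s = s(\cp_0^\top\bar H\cp_0 - \vecnorm{\gradf(\cq_0)}{2}^2) + \arkpower_s^{\mathrm{rem}}$ is a genuinely different starting point from the paper, which never expands $\gradf(\tq_s)$ around $\cq_0$. The paper treats $\arkpower_s^k$ directly: it rewrites $\Exsqp\arkpower_s^k$ as a double integral in $(x,y)=(\cq_0,\cq_0+s\cp_0)$, peels off one factor of $\arkpower_s$, applies integration by parts in $x$ and in $y$ to kill it, and then handles the residual trace terms with a \emph{second} round of integration by parts, arriving at a closed recursion $\Exsqp\arkpower_s^k \lesssim k s^2\smoothparam(\dots)\Exsqp|\arkpower_s|^{k-2} + k^2s^2\smoothparam^{3/2}(\dots)\Exsqp|\arkpower_s|^{k-3}$ that is then closed by H\"older. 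Crucially, $\arkpower_s$ is built only from $\gradf$, so every derivative taken in the recursion produces a Hessian $\hessf$, which is uniformly bounded by $\smoothparam\Ind$; the argument never meets a third derivative of $\targetf$.

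Your route hits a gap precisely at the step you flag as delicate. After conditioning on $\cq_0$ you reduce to bounding $\Exs_{\cq_0\sim\target}\bigl[\,\abss{\trace\nabla^2\targetf(\cq_0) - \vecnorm{\gradf(\cq_0)}{2}^2}^k\,\bigr]$ at scale $(\smoothparam\dims^{1/2})^k$. The tools you propose cannot deliver that order. The one-sided pointwise bounds $0\le\trace\nabla^2\targetf\le\smoothparam\dims$ and $0\le\vecnorm{\gradf}{2}^2\le 2\smoothparam(\targetf-\min\targetf)$, combined with the mean-zero identity, give at best something of order $(\smoothparam\dims)^{k-1}\Exs[2\smoothparam(\targetf-\min\targetf)]\sim(\smoothparam\dims)^k$ (for instance via $\Exs[(W^+)^k]\le B^{k-1}\Exs[W^-]$ when $W\le B$), which is larger than the target by a factor $\dims^{k/2}$. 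To improve this you would need genuine concentration of $\trace\nabla^2\targetf - \vecnorm{\gradf}{2}^2$ around zero at scale $\smoothparam\sqrt\dims$, i.e., a Poincar\'e- or log-Sobolev-type inequality, which is exactly what is unavailable for merely log-concave (not strongly log-concave) $\target$. And one cannot mimic the paper's recursion on this residual directly: integrating $\trace\nabla^2\targetf$ by parts introduces $\nabla^3\targetf$, which the assumptions do not control. This is the structural reason the paper refuses to Taylor-expand and instead runs the recursion on $\arkpower_s^k$ itself — it keeps every differentiated quantity at the $\gradf$/$\hessf$ level. A secondary issue: your remainder bound feeds $2k$-th moments of $\vecnorm{\cp_0}{2}$ and $\vecnorm{\gradf(\cq_0)}{2}$ into Minkowski, while the stated right-hand side only carries $k$-th moments; with only $s^2\smoothparam\le 1$ (and no $s^2\smoothparam\lesssim\dims^{-1/2}$), the remainder contribution $(s^2\smoothparam\dims)^k$ does not obviously fit under $(3k)^ks^k\smoothparam^{k/2}\dims^{k/2}$ for $\dims\gg k$, so that step also needs more care than "these contributions fit inside the claimed right-hand side."
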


The proofs of Lemma~\ref{lem:arkpower_expectation} and Lemma~\ref{lem:arkpower_kth_expectation} are deferred to  Appendix~\ref{proof:arkpower_expectation} and \ref{proof:arkpower_kth_expectation}. Assuming these two lemmas for now, we complete the proof of Lemma~\ref{lem:accept_reject_key_term_bound}.

First, to provide upper bounds for the moments $\Exsp \vecnorm{\cp_0}{2}^k$ and $\Exsq \vecnorm{\gradf(\cq_0)}{2}^k$, we evoke the following bound established in Theorem 5 of~\cite{lee2020logsmooth}. For $f$ twice-differentiable, $\smoothparam$-smooth and convex, $\target \propto e^{-f}$, we have for $\lambda \in \parenth{0, \frac{2}{\sqrt{\smoothparam}}}$,
\begin{align}
  \label{eq:lee_etal_expectation_gradient_norm}
  \Exsq \brackets{\exp \parenth{\lambda \normgradq}} \leq \frac{1 + \frac12 \sqrt{\smoothparam} \lambda }{ 1- \frac12 \sqrt{\smoothparam} \lambda } \exp\parenth{\lambda \sqrt{\smoothparam\dims} }.
\end{align}
Applying the above result, for $\tau^2 \smoothparam \leq 1$, we have
\begin{align}
  \label{eq:k_th_gradient_power_explicit_bound}
  \frac{1}{k!} \Exsq \tau^{2k} \smoothparam^{\frac k2} \vecnorm{\gradf(\cq_0)}{2}^k
  &\overset{(i)}{\leq} \Exsq \brackets{\exp \parenth{\tau^2 \smoothparam^{\frac12} \normgradq}} \notag \\
  &\overset{(ii)}{\leq} \frac{1 + \frac{1}{2} \tau^2 \smoothparam }{1 - \frac{1}{2} \tau^2 \smoothparam } \exp \parenth{\tau^2 \smoothparam \dims^{\frac12} } \notag \\
  &\leq 3 \exp \parenth{\tau^2 \smoothparam \dims^{\frac12} }
\end{align}
(i) follows from the power series expansion of $\exp$. (ii) makes use of Equation~\eqref{eq:lee_etal_expectation_gradient_norm} with $\tau^2 \smoothparam \leq 1$. Similarly, we have
\begin{align}
  \label{eq:k_th_p_power_explicit_bound}
  \frac{1}{k!} \Exsq \tau^{2k} \smoothparam^k \vecnorm{\cp_0}{2}^k
  \leq 3 \exp \parenth{\tau^2 \smoothparam \dims^{\frac12} }
\end{align}
Second, plugging Equation~\eqref{eq:k_th_gradient_power_explicit_bound} and Equation~\eqref{eq:k_th_p_power_explicit_bound} into the bound in Lemma~\ref{lem:arkpower_kth_expectation}, we have for $s \in (0, \step]$,
\begin{align*}
  &\quad \Exsqp \step^k \arkpower_s^k(\cq_0, \cp_0) \\
  &\leq \max\braces{\step^k \smoothparam^{\frac k2}, 60 (3k)^k \parenth{\step^2 \smoothparam \dims^{\frac12} }^k, 180 (3k)^k k! \parenth{\frac{\step}{\tau}}^{2k}\exp\parenth{\tau^2 \smoothparam \dims^{\frac12}}} \\
  &\overset{(i)}{\leq} \max\braces{\step^k \smoothparam^{\frac k2}, 180 e^2 k^{2k+\frac12} \parenth{\frac 3 e}^k \parenth{\step\smoothparam^{\frac12} \dims^{\frac14}}^{2k} }.
\end{align*}
The last step (i) takes $\tau^2 = 1/\parenth{\smoothparam \dims^{1/2}}$ and uses the upper bound of the Stirling's approximation (see for example, upper bound of the Gamma function in~\cite{mortici2011improved}).

Third, fix $\deltaf >0$ and take $k$ to be the smallest even integer larger than $\max\braces{4, \log\parenth{2\dims/{\deltaf}}}$. By Markov's inequality and the expectation calculation in Lemma~\ref{lem:arkpower_expectation}, we obtain
\begin{align*}
  \Prob_{\cq_0 \sim \target, \cp_0 \sim \Normal\parenth{0, \Ind_\dims}}\parenth{\step \abss{\arkpower_s(\cq_0, \cp_0)} \geq \alpha} &\leq \frac{\Exsqp \step^k \arkpower_s^k(\cq_0, \cp_0)}{\alpha^k}\\
  &\leq \frac {\max\braces{\step^k \smoothparam^{\frac k2}, 180 e^2 k^{2k+\frac12} \parenth{\frac 3 e}^k \parenth{\step^2\smoothparam \dims^{\frac12}}^{k} }} {\alpha^k}.
\end{align*}
Taking $\alpha = 50 k^2 \step^2\smoothparam \dims^{\frac12}$, we have
\begin{align*}
  \Prob_{\cq_0 \sim \target, \cp_0 \sim \Normal\parenth{0, \Ind_\dims}}\parenth{\step \abss{\arkpower_s(\cq_0, \cp_0)} \geq \alpha} \leq e^{-k} \leq \frac{\deltaf}{2\dims}.
\end{align*}
Because $k \leq 4 + \log\parenth{\frac{2\dims}{\deltaf}}$, for $s \in (0, \step]$, we have
\begin{align}
  \label{eq:arkpower_high_prob_bound}
  \Prob_{\cq_0 \sim \target, \cp_0 \sim \Normal\parenth{0, \Ind_\dims}}\parenth{\step \abss{\arkpower_s(\cq_0, \cp_0)} \geq 50 \parenth{4 + \log\parenth{\frac{2\dims}{\deltaf}}}^2 \step^2 \smoothparam \dims^{\frac12} } \leq \frac{\deltaf}{2\dims}.
\end{align}
To obtain a high probability bound for $\arkey_\step$ defined in Equation~\eqref{eq:def_accept_reject_key_term} from the high probability bound for $\arkpower_s$, we build a covering the segment $[0, \step]$ and apply union bound. We have, for any integer $\dims \geq 1$,
\begin{align*}
  \arkey_\step(\cq_0, \cp_0) &= \int_0^\step \arkpower_s(\cq_0, \cp_0) ds - \frac{\step}{2} \arkpower_\step(\cq_0, \cp_0) \\
  &\leq \frac{3}{2} \sup_{s \in [0, \step]} \step \abss{\arkpower_s(\cq_0, \cp_0)} \\
  &\leq \frac{3}{2} \sup_{s \in \braces{0, \frac{\step}{\dims}, 2\frac{\step}{\dims}, \ldots, (\dims-1)\frac{\step}{\dims}}} \step \abss{\arkpower_s(\cq_0, \cp_0)} + \frac{\step}{\dims}  \sup_{s \in [0, \step]} \step \abss{\frac{\partial \arkpower_s\parenth{\cq_0, \cp_0}}{\partial s}}
\end{align*}

For the derivative of $\arkpower_s$ with respect to $s$, we have
\begin{align*}
  \abss{\frac{\partial \arkpower_s\parenth{\cq_0, \cp_0}}{\partial s}}
  &= \parenth{\cp_0 - s \gradf(\cq_0 + s\cp_0) - \frac{s^2}{2} \hessf_{\cq_0 + \cp_0} \cp_0 }\tp \hessf_{\tq_s} \parenth{\cp_0 - s\gradf(\cq_0)} \\
  &\quad \quad - \gradf(\tq_s) \tp \gradf(\cq_0) \\
  &\leq 8 \smoothparam \vecnorm{\cp_0}{2}^2 + 8 \vecnorm{\gradf(\cq_0)}{2}^2
\end{align*}
Using the Gradient norm concentration (see Corollary 6 in ~\cite{lee2020logsmooth}), we have
\begin{align*}
  \Prob_{\cq_0 \sim \target} \brackets{\vecnorm{\gradf(\cq_0)}{2} \geq \sqrt{\smoothparam \dims} + \gamma \sqrt{\smoothparam}} \leq 3 e^{-\gamma}.
\end{align*}
Thus for $\gamma = \log\parenth{12/{\deltaf}}$, we have
\begin{align*}
  \Prob_{\cq_0 \sim \target} \brackets{\vecnorm{\gradf(\cq_0)}{2} \geq \sqrt{\smoothparam \dims} + \log\parenth{\frac{12}{\deltaf}} \sqrt{\smoothparam}} \leq \frac{\deltaf}{4}.
\end{align*}
Similarly, we have
\begin{align*}
  \Prob_{\cp_0 \sim \Normal(0, \Ind_\dims)} \brackets{\vecnorm{\cp_0}{2} \geq \sqrt{\dims} + \log\parenth{\frac{12}{\deltaf}} } \leq \frac{\deltaf}{4}.
\end{align*}
Denote the following three events,
\begin{small}
\begin{align*}
  E_1 &= \braces{(\cq_0, \cp_0) \in \real^\dims \times \real^\dims \mid \sup_{s \in \braces{0, \frac{\step}{\dims}, 2\frac{\step}{\dims}, \ldots, (\dims-1)\frac{\step}{\dims}}} \step \abss{\arkpower_s(\cq_0, \cp_0)} \leq 50 \parenth{4 + \log\parenth{\frac{2\dims}{\deltaf}}}^2 \step^2 \smoothparam \dims^{\frac12} }, \\
  E_2 &= \braces{(\cq_0, \cp_0) \in \real^\dims \times \real^\dims \mid \vecnorm{\gradf(\cq_0)}{2} \leq \sqrt{\smoothparam \dims} + \log\parenth{\frac{12}{\deltaf}  }\sqrt{\smoothparam}}, \\
  E_3 &= \braces{(\cq_0, \cp_0) \in \real^\dims \times \real^\dims \mid \vecnorm{\cp_0}{2} \leq \sqrt{\dims} + \log\parenth{\frac{12}{\deltaf}}}.
\end{align*}
\end{small}
For $(\cq_0, \cp_0) \in E_1 \cap E_2 \cap E_3$, we have
\begin{align*}
  \arkey_\step(\cq_0, \cp_0) &\leq 75 \parenth{4 + \log\parenth{\frac{2\dims}{\deltaf}}}^2 \step^2 \smoothparam \dims^{\frac12} + \frac{16 \step^2}{\dims} \smoothparam \parenth{\sqrt{\dims} + \log\parenth{\frac{12}{\deltaf}}}^2 \\
  &\leq 100 \parenth{4 + \log\parenth{\frac{2\dims}{\deltaf}}}^2 \step^2 \smoothparam \dims^{\frac12}.
\end{align*}
Furthermore, we have
\begin{align*}
  \Prob\parenth{(\cq_0, \cp_0) \in \parenth{E_1 \cap E_2 \cap E_3}^c} &\leq \Prob(E_1^c) + \Prob(E_2^c) + \Prob(E_3^c) \\
  &\leq \dims \cdot \frac{\deltaf}{2\dims} + \frac{\deltaf}{4} + \frac{\deltaf}{4}\\
  &\leq \deltaf.
\end{align*}

\subsubsection{Proof of Lemma~\ref{lem:arkpower_expectation}}\label{proof:arkpower_expectation}
For $s > 0$, we have
\begin{align*}
  &\quad \Exs_{\cq_0 \sim \target, \cp_0 \sim \Normal(0, \Ind_\dims)} \brackets{\gradf(\tq_s)\tp \cp_0} \\
  &= \int_{\real^\dims} \int_{\real^\dims} \brackets{\gradf\parenth{x + s \cp_0 - \frac{s^2}{2} \gradf(x + s \cp_0) }}\tp \cp_0 e^{-\targetf(x)} \frac{1}{\sqrt{(2\Pi)^\dims}} e^{-\frac{\vecnorm{\cp_0}{2}^2}{2}} dp_0 dx \\
  &\overset{(i)}{=} \frac{1}{s} \int_{\real^\dims} \int_{\real^\dims } \brackets{ \gradf\parenth{y - \frac{s}{2} \gradf(y)} \tp (y-x)   } e^{-\targetf(x)} \frac{1}{\sqrt{(2\Pi s^2)^\dims}} e^{-\frac{\vecnorm{y-x}{2}^2}{2s^2}} d y d x \\
  &\overset{(ii)}{=} s \int_{\real^\dims} \int_{\real^\dims } \brackets{ \gradf\parenth{y - \frac{s}{2} \gradf(y)} \tp \gradf(x)  } e^{-\targetf(x)} \frac{1}{\sqrt{(2\Pi s^2)^\dims}} e^{-\frac{\vecnorm{y-x}{2}^2}{2s^2}} d y d x \\
  &\overset{(iii)}{=} s \Exs_{p_0 \sim \Normal(0, \Ind_\dims), q_0 \sim \target} \brackets{ \gradf(\tq_s) \tp \gradf(q_0)  } \\
\end{align*}
(i) applies change of variable $p_0 \leftarrow (y - x)/s$. (ii) applies integration by parts with respect to $x$ (with $u = \gradf\parenth{y - \frac{s}{2} \gradf(y)} e^{-\targetf(x)}$ and $v = e^{-\frac{\vecnorm{y-x}{2}^2}{2s^2}}$), and the boundary term is zero. (iii) changes the variable back $y \leftarrow x + sp_0$. Note that the above derivation requires $s > 0$. But for the case $s = 0$, we trivially have $\Exsqp \gradf(\cq_0) \tp \cp_0 = 0$ since $\Exs_{\cp_0\sim \Normal(0, \Ind_\dims)} [p_0] = 0$. Overall, we have proved that for any $s \geq 0$,
\begin{align*}
  \Exsqp \brackets{\gradf(\tq_s)\tp \parenth{\cp_0 - s\gradf(\cq_0)}} = 0.
\end{align*}
Consequently, we have
\begin{align*}
  \Exsqp \arkpower_s(\cq_0, \cp_0) = 0.
\end{align*}

\subsubsection{Proof of Lemma~\ref{lem:arkpower_kth_expectation}}\label{proof:arkpower_kth_expectation}
The main idea to upper bound the expectation of the $k$-th power of $\arkpower$ is to use integration by parts and H\"older's inequality to establish a recursive relationship for it. Before we dive into integration by parts, we first establish a few derivatives of $\arkpower^k$ that become handy in the rest part of the proof. Using the change of variables $x \leftarrow \cq_0, y \leftarrow \cq_0 + s\cp_0$, we have
\begin{align}
  \begin{split}\label{eq:arkpower_partial_derivatives}
  \arkpower_s\parenth{x, \frac{y-x}{s}} &= \gradf\parenth{y-\frac{s^2}{2}\gradf(y)}^\top\parenth{\frac{y-x}{s}-s\gradf(x)}-\gradf(x)^\top\parenth{\frac{y-x}{s}} \\
  \frac{\partial \arkpower_s\parenth{x, \frac{y-x}{s}}}{\partial x} &= - \frac{1}{s} \gradf\parenth{y - \frac{s^2}{2}\gradf(y)} - s \hessf_x \gradf\parenth{y - \frac{s^2}2\gradf(y)} \\
  &\quad - \hessf_x \parenth{\frac{y-x}{s}} + \frac{1}{s}\gradf(x) \\
  \frac{\partial \arkpower_s\parenth{x, \frac{y-x}{s}}}{\partial y} &= \parenth{\Ind_\dims - \frac{s^2}{2} \hessf_y } \hessf_{y - \frac{s^2}{2} \gradf(y)} \parenth{\frac{y-x}{s} - s\gradf(x)} \\
  &\quad + \frac{1}{s} \gradf\parenth{y - \frac{s^2}{2} \gradf(y)} - \frac{1}{s} \gradf(x).
  \end{split}
\end{align}
Using the $\smoothparam$-smoothness assumption and $s^2 \smoothparam \leq 1$, it is not hard to obtain the following derivative bounds
\begin{align}
  \label{eq:arkpower_partial_derivative_bounds}
  \vecnorm{\frac{\partial \arkpower_s\parenth{x, \frac{y-x}{s}}}{\partial x}}{2} &\leq 5 \smoothparam \vecnorm{\frac{y-x}{s}}{2} + 3 s \smoothparam \vecnorm{\gradf(x)}{2} \notag \\
  \vecnorm{\frac{\partial \arkpower_s\parenth{x, \frac{y-x}{s}}}{\partial y}}{2} &\leq 3 \smoothparam \vecnorm{\frac{y-x}{s}}{2} + 2 s \smoothparam \vecnorm{\gradf(x)}{2} \notag \\
  \abss{\frac{\partial \arkpower_s\parenth{\cq_0, \cp_0}}{\partial s}} &\leq 8 \smoothparam \vecnorm{\cp_0}{2}^2 + 8 \vecnorm{\gradf(\cq_0)}{2}^2
\end{align}

We have
\begin{small}
\begin{align}
  \label{eq:arkpower_kth_expectation_first_steps}
  &\quad \Exsq \Exsp \arkpower^k_s(\cq_0, \cp_0) \notag \\
  &\overset{(i)}{=} \iint \parenth{\gradf\parenth{y - \frac{s^2}{2}\gradf(y) }\tp \parenth{\frac{y-x}{s} - s \gradf(x)}  - \gradf(x)\tp \parenth{\frac{y-x}{s}}}^k  \cdot e^{-\targetf(x)} \frac{e^{-\frac{\vecnorm{y-x}{2}^2}{2s^2}}}{\sqrt{(2\Pi s^2)^\dims}}  dy dx \notag\\
  &= \iint \gradf\parenth{y - \frac{s^2}{2}\gradf(y) }\tp \parenth{\frac{y-x}{s} - s \gradf(x)} \arkpower^{k-1}_s\parenth{x, \frac{y-x}{s}} \cdot  e^{-\targetf(x)} \frac{e^{-\frac{\vecnorm{y-x}{2}^2}{2s^2}}}{\sqrt{(2\Pi s^2)^\dims}}  dy dx \notag \\
  &\quad - \iint \gradf(x)\tp \parenth{\frac{y-x}{s}} \arkpower^{k-1}_s\parenth{x, \frac{y-x}{s}} \cdot  e^{-\targetf(x)} \frac{e^{-\frac{\vecnorm{y-x}{2}^2}{2s^2}}}{\sqrt{(2\Pi s^2)^\dims}}  dy dx \notag \\
  &\overset{(ii)}{=} - s (k-1) \iint \gradf\parenth{y - \frac{s^2}{2}\gradf(y) }\tp \frac{\partial \arkpower_s\parenth{x, \frac{y-x}{s}}}{\partial x} \arkpower^{k-2}_s\parenth{x, \frac{y-x}{s}} \cdot  e^{-\targetf(x)} \frac{e^{-\frac{\vecnorm{y-x}{2}^2}{2s^2}}}{\sqrt{(2\Pi s^2)^\dims}}  dy dx \notag \\
  &\quad - s (k-1) \iint \gradf(x) \tp \frac{\partial \arkpower_s\parenth{x, \frac{y-x}{s}}}{\partial y} \arkpower^{k-2}_s\parenth{x, \frac{y-x}{s}} \cdot  e^{-\targetf(x)} \frac{e^{-\frac{\vecnorm{y-x}{2}^2}{2s^2}}}{\sqrt{(2\Pi s^2)^\dims}}  dy dx
\end{align}
\end{small}
(i) applies change of variable $\cq_0 \leftarrow x$ and $\cp_0 \leftarrow (y-x)/s$. (ii) applies integration by parts with respect to $x$ in the first term and applies integration by parts with respect to $y$ in the second term, the boundary terms are zero.

Observe that parts of the two integrals can be combined together, we have
\begin{small}
\begin{align}
  \label{eq:arkpower_kth_expectation_second_steps}
  &\quad - s \gradf\parenth{y - \frac{s^2}{2}\gradf(y)} \tp \frac{\partial \arkpower_s\parenth{x, \frac{y-x}{s}}}{\partial x}  - s \gradf(x) \tp \frac{\partial \arkpower_s\parenth{x, \frac{y-x}{s}}}{\partial y} \notag \\
  &= \underbrace{\vecnorm{\gradf\parenth{y - \frac{s^2}{2}\gradf(y)} - \gradf(x) }{2}^2}_{T_1} \notag\\
  &\quad + \underbrace{s^2 \gradf\parenth{y - \frac{s^2}{2} \gradf(y)} \tp \hessf_x \gradf(y-\frac{s^2}{2}\gradf(y)) - s^2 \gradf(x)\tp\parenth{\Ind_\dims - \frac{s^2}{2} \hessf_y } \hessf_{y - \frac{s^2}{2} \gradf(y)} \gradf(x)}_{T_2} \notag \\
  &\quad + \underbrace{\gradf\parenth{y - \frac{s^2}{2}\gradf(y) } \tp \hessf_x (y-x)}_{T_3} \underbrace{-\gradf(x) \tp \parenth{\Ind_\dims - \frac{s^2}{2}\hessf_y }\hessf_{y - \frac{s^2}{2} \gradf(y)} (y-x)}_{T_4}
\end{align}
\end{small}
where we used the derivative formula in Equation~\eqref{eq:arkpower_partial_derivatives} and reorganized the terms.
$T_1$ and $T_2$ can be bounded by linear combinations of $\vecnorm{\frac{y-x}{s}}{2}$ and $\vecnorm{\gradf(x)}{2}$ via triangle inequalities. We have
\begin{align*}
  \abss{T_1} &\leq 4 \smoothparam^2 \vecnorm{y-x}{2}^2 + s^4 \smoothparam^2 \vecnorm{\gradf(x)}{2}^2 \\
  \abss{T_2} &\leq 2 s^2 \smoothparam \abss{T_1} + 3 s^2 \smoothparam \vecnorm{\gradf(x)}{2}^2.
\end{align*}
$T_3$ and $T_4$ require another treatment of integration by parts. For $T_3$, using integration by parts with respect to $y$, we have
\begin{small}
\begin{align*}
  &\quad \iint \gradf\parenth{y - \frac{s^2}{2}\gradf(y) } \tp \hessf_x (y-x) \arkpower^{k-2}_s \parenth{x, \frac{y-x}{s}} \cdot e^{-\targetf(x)} \frac{e^{-\frac{\vecnorm{y-x}{2}^2}{2s^2}}}{\sqrt{(2\Pi s^2)^\dims}}  dy dx \\
  &= s^2 \iint \trace\parenth{\parenth{\Ind_\dims - \frac{s^2}{2} \hessf_y}\hessf_{y - \frac{s^2}{2} \gradf(y) } \hessf_x} \arkpower^{k-2}_s \parenth{x, \frac{y-x}{s}} \cdot e^{-\targetf(x)} \frac{e^{-\frac{\vecnorm{y-x}{2}^2}{2s^2}}}{\sqrt{(2\Pi s^2)^\dims}}  dy dx \\
  &\quad - s^2 (k-2)\iint \gradf\parenth{y - \frac{s^2}{2}\gradf(y)} \tp \hessf_x   \frac{\partial \arkpower_s\parenth{x, \frac{y-x}{s}}}{\partial y} \arkpower^{k-3}_s\parenth{x, \frac{y-x}{s}} \cdot  e^{-\targetf(x)} \frac{e^{-\frac{\vecnorm{y-x}{2}^2}{2s^2}}}{\sqrt{(2\Pi s^2)^\dims}}  dy dx.
\end{align*}
\end{small}
Hence, using the derivative bound~\eqref{eq:arkpower_partial_derivative_bounds}, we have
\begin{small}
\begin{align*}
  &\quad \abss{\iint T_3 \arkpower^{k-2}_s \parenth{x, \frac{y-x}{s}} \cdot e^{-\targetf(x)} \frac{e^{-\frac{\vecnorm{y-x}{2}^2}{2s^2}}}{\sqrt{(2\Pi s^2)^\dims}}  dy dx} \\
  &\leq 2 s^2 \smoothparam^2 \dims \iint \abss{\arkpower^{k-2}_s \parenth{x, \frac{y-x}{s}}} \cdot e^{-\targetf(x)} \frac{e^{-\frac{\vecnorm{y-x}{2}^2}{2s^2}}}{\sqrt{(2\Pi s^2)^\dims}}  dy dx \\
  &\quad + s^2(k-2)\iint \parenth{ 9 \smoothparam^{\frac32} \vecnorm{\gradf(x)}{2}^2 + 11 \smoothparam^{\frac52} \vecnorm{\frac{y-x}{s}}{2}^2} \abss{\arkpower^{k-3}_s \parenth{x, \frac{y-x}{s}}} \cdot e^{-\targetf(x)} \frac{e^{-\frac{\vecnorm{y-x}{2}^2}{2s^2}}}{\sqrt{(2\Pi s^2)^\dims}}dydx.
\end{align*}
\end{small}
Similarly, for $T_4$, using integration by parts with respect to $x$, we have
\begin{small}
\begin{align*}
    &\quad \iint \gradf(x) \tp \parenth{\Ind_\dims - \frac{s^2}{2} \hessf_y } \hessf_{y - \frac{s^2}{2} \gradf(y)} (y-x) \arkpower^{k-2}_s \parenth{x, \frac{y-x}{s}} \cdot e^{-\targetf(x)} \frac{e^{-\frac{\vecnorm{y-x}{2}^2}{2s^2}}}{\sqrt{(2\Pi s^2)^\dims}}  dy dx \\
    &\overset{(ii)}{=} s^2 \iint \trace\parenth{\parenth{\hessf_x - \gradf(x)\gradf(x)\tp} \parenth{\Ind_d - \frac{s^2}{2}\hessf_y}\hessf_{y - \frac{s^2}{2} \gradf(y)}}\\
    &\hspace{20em}\cdot \arkpower^{k-2}_s \parenth{x, \frac{y-x}{s}} \cdot e^{-\targetf(x)} \frac{e^{-\frac{\vecnorm{y-x}{2}^2}{2s^2}}}{\sqrt{(2\Pi s^2)^\dims}}  dy dx\\
    &\quad + s^2(k-2) \iint \gradf(x)\tp \parenth{\Ind_d - \frac{s^2}{2}\hessf_y} \hessf_{y-\frac{s^2}{2}\gradf(y)} \frac{\partial \arkpower_s(x, \frac{y-x}{s})}{\partial x}\\
    &\hspace{20em}\cdot \arkpower^{k-3}_s\parenth{x, \frac{y-x}{s}} \cdot  e^{-\targetf(x)} \frac{e^{-\frac{\vecnorm{y-x}{2}^2}{2s^2}}}{\sqrt{(2\Pi s^2)^\dims}}  dy dx
\end{align*}
\end{small}
Hence, using the derivative bound~\eqref{eq:arkpower_partial_derivative_bounds}, we have
\begin{small}
\begin{align*}
  &\quad \abss{\iint T_4 \arkpower^{k-2}_s \parenth{x, \frac{y-x}{s}} \cdot e^{-\targetf(x)} \frac{e^{-\frac{\vecnorm{y-x}{2}^2}{2s^2}}}{\sqrt{(2\Pi s^2)^\dims}}  dy dx} \\
  &\leq 2 s^2 \smoothparam^2 \dims \iint \abss{\arkpower^{k-2}_s \parenth{x, \frac{y-x}{s}}} \cdot e^{-\targetf(x)} \frac{e^{-\frac{\vecnorm{y-x}{2}^2}{2s^2}}}{\sqrt{(2\Pi s^2)^\dims}}  dy dx \\
  &\quad + s^2 \iint \smoothparam \vecnorm{\gradf(x)}{2}^2 \abss{\arkpower^{k-2}_s \parenth{x, \frac{y-x}{s}}} \cdot e^{-\targetf(x)} \frac{e^{-\frac{\vecnorm{y-x}{2}^2}{2s^2}}}{\sqrt{(2\Pi s^2)^\dims}}  dy dx \\
  &\quad + s^2(k-2)\iint \parenth{ 6 \smoothparam^{\frac32} \vecnorm{\gradf(x)}{2}^2 + 3 \smoothparam^{\frac52} \vecnorm{\frac{y-x}{s}}{2}^2} \abss{\arkpower^{k-3}_s \parenth{x, \frac{y-x}{s}}} \cdot e^{-\targetf(x)} \frac{e^{-\frac{\vecnorm{y-x}{2}^2}{2s^2}}}{\sqrt{(2\Pi s^2)^\dims}}dydx.
\end{align*}
\end{small}

Finally, combining the $T_1, T_2, T_3, T_4$ bounds into Equation~\eqref{eq:arkpower_kth_expectation_second_steps} and then Equation~\eqref{eq:arkpower_kth_expectation_first_steps}, we obtain
\begin{align*}
  &\quad \abss{\Exsq \Exsp \arkpower^k_s(\cq_0, \cp_0)} \\
  &\leq (k-1) \Exsqp \brackets{\parenth{4 s^2 \smoothparam^2 \dims + 12 s^2 \smoothparam^2 \vecnorm{\cp_0}{2}^2 + 7 s^2 \smoothparam \vecnorm{\gradf(q_0)}{2}^2 } \abss{\arkpower_s^{k-2}(\cq_0, \cp_0) }} \\
  &\quad + (k-1)(k-2) \Exsqp \brackets{\parenth{14 s^2 \smoothparam^{\frac52} \vecnorm{\cp_0}{2}^2 + 15 s^2 \smoothparam^{\frac32} \vecnorm{\gradf(q_0)}{2}^2 } \abss{\arkpower_s^{k-3}(\cq_0, \cp_0) }}.
\end{align*}
For $k\geq 4$ an even positive integer, applying H\"older's inequality, we relate $\arkpower_s^{k-2}$ and $\arkpower_s^{k-3}$ with $\arkpower_s^k$. We have
\begin{align*}
  &\quad \Exsqp \arkpower^k_s(\cq_0, \cp_0) \\
  &\leq k s^2 \smoothparam \parenth{4 \smoothparam \dims + 12 \smoothparam \parenth{\Exsp \vecnorm{\cp_0}{2}^k}^{2/k} + 7 \parenth{\Exsq \vecnorm{\gradf(\cq_0)}{2}^k}^{2/k} }\\
  &\hspace{17em}\cdot \parenth{\Exsqp \arkpower^k_s(\cq_0, \cp_0)}^{\frac{k-2}{k}} \\
  &\quad + k^2 s^2 \smoothparam^{\frac32} \parenth{14 \smoothparam \parenth{\Exsp \vecnorm{\cp_0}{2}^k}^{2/k} + 15 \parenth{\Exsq \vecnorm{\gradf(\cq_0)}{2}^k}^{2/k}  }\\
  &\hspace{17em}\cdot \parenth{\Exsqp \arkpower^k_s(\cq_0, \cp_0)}^{\frac{k-3}{k}}.
\end{align*}
Hence
\begin{align*}
  &\quad \Exsqp \smoothparam^{-\frac k2}\arkpower^k_s(\cq_0, \cp_0) \\
  &\leq \max\braces{1, 60 (3k)^k s^k \max\braces{\smoothparam^{\frac k2} \dims^{\frac k 2}, \smoothparam^{\frac k2} \Exsp \vecnorm{\cp_0}{2}^k,  \Exsq \vecnorm{\gradf(\cq_0)}{2}^k}}.
\end{align*}

\section{Lemmas related to the lower bound}
We provide the proof of Lemma~\ref{lem:pi1} and Lemma~\ref{lem:pi2} in Appendix~\ref{sub:pi1} and \ref{sub:pi2}.
\subsection{Proof of Lemma~\ref{lem:pi1}}
\label{sub:pi1}
Define the event
\begin{align}
  \label{eq:lower_bound_def_x_F1}
  F_1 \defn \Bigg\{x \in \real^\dims \bigg{|}& \max_{i\in[\dims]} \sqrt{\smoothparam}\abss{\coord{x}{i}}< 4\sqrt{\log(8\dims)}, \notag \\
     &\smoothparam\vecnorm{x}{2}^2 < \dims +  \dims^{1-4\loExpo} + 5 \sqrt{\dims},\notag \\
     &\sum_{i=1}^\dims - \cos(\dims^\loExpo\smoothparam^{\frac12} \coord{x}{i}) < -\frac{1}{4}\dims^{1-2\loExpo} + \frac12 \dims^{1-4\loExpo} + 2 \dims^{\frac12} ,\notag \\ &\abss{\sum_{i=1}^\dims - \cos(2\dims^\loExpo\smoothparam^{\frac12} \coord{x}{i}) +\frac{1}{16}\dims^{1-2\loExpo}} \leq \frac{1}{8} \dims^{1-4\loExpo} + 2 \dims^{\frac12} ,\notag \\  &\abss{\sum_{i=1}^\dims \smoothparam^{\frac12}\coord{x}{i} \sin(\dims^\loExpo\smoothparam^{\frac12} \coord{x}{i})} < \frac12\dims^{1-4\loExpo} + 2 \dims^{\frac12} \Bigg\}
\end{align}
Bounding the measure of $F_1$ under $\target_1$ requires concentration inequalities for $\target_1$. Its proof is deferred to Lemma~\ref{lem:33}, where we proved $\target_1(F_1) > 1/6$.

Let $\loExpo = 1/4 - \deltas$. For any $x \in \real^\dims$, denote the negative log density by $\targetf(x) = \frac{\smoothparam}{2}\sum_{i=1}^{\dims} x_{[i]}^2 - \frac{1}{2\dims^{2\loExpo}} \sum_{i=1}^d \cos(d^{\loExpo}\smoothparam^{\frac12} x_{[i]}) $ and its cosine part by $\targetf_P(x)=- \frac{1}{2\dims^{2\loExpo}} \sum_{i=1}^d \cos(d^{\loExpo}\smoothparam^{\frac12} x_{[i]})$. The gradient of $\targetf$ at $x$ is
\begin{align*}
  \gradf(x) &= \smoothparam x +\gradf_P(x)=\smoothparam x + \frac{\smoothparam^{\frac12}}{2\dims^{\loExpo}} \bmat{\sin(d^{\loExpo}\smoothparam^{\frac12} x_{[1]}) \\ \vdots \\ \sin(d^{\loExpo}\smoothparam^{\frac12} x_{[\dims]})}.
\end{align*}
For any $x \in F_1$ and $y \in \real^\dims$, we can express the quantity of interest as follows
\begin{align*}
  &\quad \frac{\target_1(y)\propkernel_1(y, x)}{\target_1(x) \propkernel_1(x, y)} \\
  &= \exp\brackets{\targetf(x) - \targetf(y) - \frac{1}{4h} \vecnorm{x-y+h\gradf(y)}{2}^2 + \frac{1}{4h} \vecnorm{y-x+h\gradf(x)}{2}^2 }.
\end{align*}
Define $g\defn y - x+h\gradf(x)$. We further decompose the quantity of interest by isolating the linear, quadratic terms on $g$ and the cosine part $\targetf_P$. We have
\begin{align*}
  &\quad \targetf(x) - \targetf(y) \\
  &= \frac{\smoothparam}{2} \parenth{\vecnorm{x}{2}^2 - \vecnorm{y}{2}^2} + \targetf_P(x) - \targetf_P(y) \\
  &= \frac{\smoothparam}{2} \parenth{\vecnorm{x}{2}^2 - \vecnorm{x - h\gradf(x) + g}{2}^2} + \targetf_P(x) - \targetf_P(y) \\
  &= -\frac{\smoothparam}{2} \parenth{\parenth{- 2 \smoothparam h + \smoothparam^2 h^2}\vecnorm{x}{2}^2 + 2(1-\smoothparam h) \angles{x, g} + \vecnorm{g}{2}^2 }\\
  &\quad + \smoothparam \angles{h\gradf_P(x), (1-\smoothparam h)x+g} - \frac{\smoothparam h^2}{2} \vecnorm{\gradf_p(x)}{2}^2 + \targetf_P(x) - \targetf_P(y).
\end{align*}
And we have
\begin{align*}
  &\quad - \frac{1}{4h} \vecnorm{x-y+h\gradf(y)}{2}^2 + \frac{1}{4h} \vecnorm{y-x+h\gradf(x)}{2}^2 \\
  &= - \frac{1}{4h} \vecnorm{ h \gradf(x) + h\gradf(y) - g}{2}^2 + \frac{1}{4h} \vecnorm{g}{2}^2 \\
  &=  \frac{1}{2} \angles{g, \gradf(x) + \gradf(y)} - \frac{h}{4} \vecnorm{\gradf(x) + \gradf(y)}{2}^2 \\
  &= \frac{\smoothparam}{2} \vecnorm{g}{2}^2 + \frac{1}{2} \angles{g, \smoothparam(2-\smoothparam h ) x + (1-\smoothparam h)\gradf_P(x)  + \gradf_P(y) } - \frac{h}{4} \vecnorm{\gradf(x) + \gradf(y)}{2}^2 \\
  &= \frac{\smoothparam}{2} \vecnorm{g}{2}^2 + \frac{1}{2} \angles{g, \smoothparam(2-\smoothparam h ) x + (1-\smoothparam h) \gradf_P(x)  + \gradf_P(y)  } \\
  &\quad - \frac{h\smoothparam^2}{4} \vecnorm{(2-\smoothparam h) x + g}{2}^2 - \frac{h\smoothparam}{2} \angles{(2-\smoothparam h) x + g, (1-\smoothparam h) \gradf_P(x)+\gradf_P(y)} \\
  &\quad - \frac{h}{4} \vecnorm{(1-\smoothparam h) \gradf_P(x)+\gradf_P(y)}{2}^2.
\end{align*}
Rearranging the terms in the above two equations, we have
\begin{align}\label{eq:7terms}
  &\quad \targetf(x) - \targetf(y) - \frac{1}{4h} \vecnorm{x-y+h\gradf(y)}{2}^2 + \frac{1}{4h} \vecnorm{y-x+h\gradf(x)}{2}^2 \notag\\
  &= \underbrace{\targetf_P(x) - \targetf_P(y)}_{\Deltapart_1} + \underbrace{\parenth{\frac{\smoothparam^3 h^2}{2} - \frac{\smoothparam^4 h^3}{4}} \vecnorm{x}{2}^2- \frac{\smoothparam^2 h}{4} \vecnorm{g}{2}^2 }_{\Deltapart_2} +  \underbrace{\parenth{\frac{\smoothparam^3 h^2}{2} - \frac{\smoothparam^2 h}{2} } \angles{x, g}}_{\Deltapart_3} \notag\\
  &\quad +\underbrace{ \angles{\gradf_P(x), \parenth{\frac{1+\smoothparam^2 h^2}{2} }g }  - \angles{\gradf_P(y), \frac{1 + \smoothparam h }{2} g}}_{\Deltapart_4}\notag\\
  &\quad +\underbrace{\angles{\gradf_P(x), \parenth{\frac{\smoothparam^2 h^2}{2} -  \frac{\smoothparam^3 h^3}{2} }x } - \angles{\gradf_P(y), \frac{\smoothparam h }{2} \parenth{2 - \smoothparam h}  x}}_{\Deltapart_5} \notag\\
  &\quad \underbrace{- \smoothparam h^2 \vecnorm{\gradf_P(x)}{2}^2}_{\Deltapart_6} \underbrace{- \frac{h}{4} \vecnorm{(1-\smoothparam h) \gradf_P(x)+\gradf_P(y)}{2}^2}_{\Deltapart_7}
\end{align}

We bound each of these seven terms in Lemma~\ref{lem:7terms} under the condition $x \in F_1$, $y = x - h\gradf(x) + g$ and $g\sim \Normal(0, 2h\Ind_\dims)$. According to Lemma~\ref{lem:7terms}, for $h\geq 1/\parenth{\smoothparam\dims^{1/2-3\theta}}$, any fixed $x\in F_1$ and $ y \sim \Normal(x - h\gradf(x), 2h\Ind_\dims)$ we have
\begin{align*}
  \sum_{i=1}^7\Deltapart_i (x, y)\leq -\dims^{4\deltas}/32
\end{align*} with probability at least $1-10\exp\parenth{-\dims^{4\deltas}/16384}$. Finally, Lemma~\ref{lem:pi1} follows by setting $G_x$ to be $\braces{ y \in \real^\dims \mid \sum_{i=1}^7\Deltapart_i(x, y) \leq -\dims^{4\deltas}/32}$.

\subsubsection{Lemmas on concentration properties of the perturbed Gaussian distribution}\label{sec:perturbed_properties}

In this section we present three lemmas (Lemma~\ref{lem:31},~\ref{lem:32} and~\ref{lem:33}) to characterize several properties of the perturbed Gaussian distribution $\target_1$ in Equation~\eqref{eq:perturbed_gaussian}. Lemma~\ref{lem:33} directly implies that the set $F_1$ in Equation~\eqref{eq:lower_bound_def_x_F1} satisfies $\target_1(F_1)>1/6$. Additionally, these lemmas are useful to complete the proof of Lemma~\ref{lem:7terms} in Appendix~\ref{sec:7terms}.

The following lemma establishes bounds on the expectations of cosine terms. It is adapted from Lemma 31 in \cite{chewi2021optimal}.

\begin{lemma}\label{lem:31}
Fix $\xi\sim\Normal(0,1)$ and constants $a,b\in \real$. Then we have
\begin{enumerate}[label=(\alph*)]
    \item $\abss{\Exs[\cos(a+b\xi)]}\leq\exp\parenth{-\frac{b^2}{2}}$.
    \item $\abss{\Exs[\xi \cos(a+b\xi)]}\leq\abss{b}\exp\parenth{-\frac{b^2}{2}}$
    \item $\abss{\Exs[\xi^2\cos(a+b\xi)]}\leq\abss{b^2-1}\exp\parenth{-\frac{b^2}{2}}$
\end{enumerate}
\end{lemma}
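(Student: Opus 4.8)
\textbf{Proof proposal for Lemma~\ref{lem:31}.} The plan is to compute the three expectations exactly using the complex-exponential representation $\cos\theta = \Re(e^{i\theta})$ and the Gaussian characteristic function, then bound the absolute values. First I would write $\Exs[\cos(a+b\xi)] = \Re\parenth{e^{ia}\Exs[e^{ib\xi}]} = \Re\parenth{e^{ia} e^{-b^2/2}} = e^{-b^2/2}\cos a$, so part (a) follows immediately from $|\cos a|\leq 1$. For parts (b) and (c) I would differentiate this identity (or the underlying moment generating function $\Exs[e^{t\xi}] = e^{t^2/2}$ extended to $t = ib$) with respect to a real parameter to bring down powers of $\xi$. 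Concretely, using Stein's identity / integration by parts, $\Exs[\xi\, g(\xi)] = \Exs[g'(\xi)]$ for suitable $g$, so $\Exs[\xi\cos(a+b\xi)] = \Exs[-b\sin(a+b\xi)] = -b\,\Im\parenth{e^{ia}e^{-b^2/2}} = -b\, e^{-b^2/2}\sin a$, giving the bound $|b|e^{-b^2/2}$ in part (b).

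For part (c), apply Stein's identity twice: $\Exs[\xi^2\cos(a+b\xi)] = \Exs[\xi\cdot \xi\cos(a+b\xi)]$, and with $g(\xi) = \xi\cos(a+b\xi)$ we get $g'(\xi) = \cos(a+b\xi) - b\xi\sin(a+b\xi)$, hence $\Exs[\xi^2\cos(a+b\xi)] = \Exs[\cos(a+b\xi)] - b\,\Exs[\xi\sin(a+b\xi)]$. Then $\Exs[\xi\sin(a+b\xi)] = \Exs[b\cos(a+b\xi)] = b\,e^{-b^2/2}\cos a$ by Stein again, so $\Exs[\xi^2\cos(a+b\xi)] = e^{-b^2/2}\cos a - b^2 e^{-b^2/2}\cos a = (1-b^2)e^{-b^2/2}\cos a$, and the bound $|b^2-1|e^{-b^2/2}$ follows. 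Alternatively, one can obtain all three at once by noting $\Exs[\cos(a+b\xi)] = e^{-b^2/2}\cos a$ is real-analytic in $b$ and differentiating under the integral sign: $\tfrac{d}{db}$ produces $-\Exs[\xi\sin(a+b\xi)]$ and $\tfrac{d^2}{db^2}$ produces $-\Exs[\xi^2\cos(a+b\xi)]$, which matched against the derivatives of $e^{-b^2/2}\cos a$ yields the stated identities.

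There is essentially no obstacle here; the lemma is a routine Gaussian moment computation. The only mild care needed is justifying differentiation under the integral sign (or the validity of Stein's identity for these bounded, smooth integrands), which is standard since $\cos$, $\sin$ and their products with polynomials grow at most polynomially and are dominated by integrable functions against the Gaussian density. I would state the exact identities $\Exs[\cos(a+b\xi)] = e^{-b^2/2}\cos a$, $\Exs[\xi\cos(a+b\xi)] = -b\,e^{-b^2/2}\sin a$, $\Exs[\xi^2\cos(a+b\xi)] = (1-b^2)e^{-b^2/2}\cos a$ as an intermediate step, then take absolute values using $|\cos a|\leq 1$ and $|\sin a|\leq 1$ to conclude all three bounds.
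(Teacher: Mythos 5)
Your proposal is correct and arrives at the same exact closed-form identities $\Exs[\cos(a+b\xi)]=e^{-b^2/2}\cos a$, $\Exs[\xi\cos(a+b\xi)]=-be^{-b^2/2}\sin a$, $\Exs[\xi^2\cos(a+b\xi)]=(1-b^2)e^{-b^2/2}\cos a$ that the paper derives; the paper packages all three cases at once by writing $\Exs[\xi^\ell e^{ib\xi}]=i^{-\ell}\frac{d^\ell}{dt^\ell}e^{-t^2/2}\big|_{t=b}$ and taking real parts, whereas you use the characteristic function for $\ell=0$ and Stein's identity (Gaussian integration by parts) to bootstrap to $\ell=1,2$. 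These are the same underlying Gaussian computation in slightly different clothing, and your closing remark about differentiating under the integral in $b$ is essentially the paper's route, so no substantive difference.
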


\begin{proof}
Let $\text{Re}(\cdot)$ denote the real part of a complex number. Since $\Exs[e^{it\xi}]=e^{-\frac12t^2}$, for any integer $\ell\geq0$, we have
\begin{align*}
    \Exs[\xi^{\ell}\cos(a+b\xi)]&=\Exs[\text{Re}(\xi^{\ell}e^{i(a+b\xi})]\\
    &=\text{Re}\parenth{e^{ia}\Exs\brackets{\xi^{\ell}e^{ib{\xi}}}}\\
    &=\text{Re}\parenth{e^{ia}i^{-\ell}\Exs\brackets{\dfrac{\text{d}^\ell}{\text{dt}^\ell}e^{it\xi}\Big|_{t=b}}}\\
    &=\text{Re}\parenth{e^{ia}i^{-\ell}\dfrac{\text{d}^\ell}{\text{dt}^\ell}e^{-\frac{t^2}{2}}\Big|_{t=b}}
\end{align*}
Three results now follow from taking $\ell=0,1,2$.
\end{proof}

Next we analyze several expectations under the perturbed Gaussian distribution using Lemma~\ref{lem:31}. The first three statements in Lemma~\ref{lem:31} are adapted from Lemma 32 in \cite{chewi2021optimal}.
\begin{lemma}\label{lem:32}
Let $\loExpo\in(1/5, 1/4), \dims\geq2048$ and $\smoothparam > 0$. Consider the one-dimensional distribution $\target(x) =\frac{1}{Z} \exp\parenth{ -\frac{\smoothparam}{2}x^2+\frac{1}{2\dims^{2\loExpo}}\cos(d^{\loExpo}\smoothparam^{\frac12}x)}$, where the normalization constant $Z = \int_\real  \exp\parenth{ -\frac{\smoothparam}{2}x^2+\frac{1}{2\dims^{2\loExpo}}\cos(d^{\loExpo}\smoothparam^{\frac12}x)}dx$. We have
\begin{enumerate}[label=(\alph*)]
    \item  $\abss{\frac{1}{Z}\sqrt{\frac{2\pi}{\smoothparam}}-1}\leq \frac12\dims^{-4\loExpo}+ \dims^{-2\loExpo} \exp\parenth{-\frac12\dims^{2\loExpo}}$.
    \item $\Exs_{x\sim\pi}[\smoothparam x^2]-1\leq \dims^{-4\loExpo}$.
    \item  $
    \abss{\Exs_{x\sim \target}[\cos(\dims^\loExpo\smoothparam^{\frac12} x)]-\frac{1}{4}\dims^{-2\loExpo}}\leq \frac12 \dims^{-4\loExpo}$.
    \item $\abss{\Exs_{x \sim \target} \brackets{\smoothparam^{\frac12}x \sin\parenth{\dims^\loExpo \smoothparam^{\frac12} x}}}\leq\frac12\dims^{-4\loExpo}$.
\end{enumerate}
\end{lemma}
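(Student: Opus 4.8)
The plan is to eliminate $\smoothparam$ first by the change of variables $y=\smoothparam^{1/2}x$, so that $\target$ becomes proportional to $\exp\!\big(-y^2/2+\epsilon\cos(\dims^{\loExpo}y)\big)$ with $\epsilon:=\tfrac12\dims^{-2\loExpo}$, and every quantity in the lemma turns into an expectation against this tilted Gaussian. Concretely, $Z=\smoothparam^{-1/2}\widetilde Z$ with $\widetilde Z=\sqrt{2\pi}\,\Exs_{\xi\sim\Normal(0,1)}\big[e^{\epsilon\cos(\dims^{\loExpo}\xi)}\big]$, so $\tfrac1Z\sqrt{2\pi/\smoothparam}=\sqrt{2\pi}/\widetilde Z$ and the $\smoothparam$-dependence disappears; similarly $\Exs_\target[\smoothparam x^2]$, $\Exs_\target[\cos(\dims^{\loExpo}\smoothparam^{1/2}x)]$ and $\Exs_\target[\smoothparam^{1/2}x\sin(\dims^{\loExpo}\smoothparam^{1/2}x)]$ are all ratios $\Exs_\xi[\phi(\xi)e^{\epsilon\cos(\dims^{\loExpo}\xi)}]\big/\Exs_\xi[e^{\epsilon\cos(\dims^{\loExpo}\xi)}]$ with $\phi(\xi)\in\{\xi^2,\ \cos(\dims^{\loExpo}\xi),\ \xi\sin(\dims^{\loExpo}\xi)\}$.

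The engine is the first-order expansion $e^{\epsilon c}=1+\epsilon c+R$, valid with $|R|\le\tfrac{\epsilon^2}{2}e^{\epsilon}$ whenever $|c|\le1$ (from $|e^t-1-t|\le\tfrac{t^2}{2}e^{|t|}$). Substituting $c=\cos(\dims^{\loExpo}\xi)$ into $\widetilde Z$ and into each numerator reduces every integral to a constant plus $\epsilon$ times a Gaussian moment of $\cos$ or $\sin$ plus an $O(\epsilon^2)$ remainder. The $\epsilon$-order moments are then handled by Lemma~\ref{lem:31}: directly for $\Exs_\xi[\cos(\dims^{\loExpo}\xi)]$ and $\Exs_\xi[\xi^2\cos(\dims^{\loExpo}\xi)]$, and, after the identities $\cos^2u=\tfrac12(1+\cos2u)$, $\sin u\cos u=\tfrac12\sin2u$ and $\sin u=\cos(u-\pi/2)$, for the products appearing in parts (c) and (d). Lemma~\ref{lem:31} shows all the odd/double-frequency pieces are exponentially small, of size at most $\dims^{\loExpo}e^{-\dims^{2\loExpo}/2}$ up to a polynomial factor, while $\Exs_\xi[\cos^2(\dims^{\loExpo}\xi)]=\tfrac12$ up to an exponentially small correction.

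Assembling, $\widetilde Z/\sqrt{2\pi}=1+O\!\big(\epsilon e^{-\dims^{2\loExpo}/2}\big)+O(\epsilon^2)$; the numerator for (b) is $\sqrt{2\pi}\big(1+O(\epsilon^2)+\text{exp.\ small}\big)$, for (c) it is $\sqrt{2\pi}\big(\tfrac{\epsilon}{2}+O(\epsilon^2)+\text{exp.\ small}\big)$, and for (d) it is $\sqrt{2\pi}\cdot O\!\big(\epsilon^2+\text{exp.\ small}\big)$ (using $\Exs_\xi[|\xi|]=\sqrt{2/\pi}$ to bound the $R$-term). Taking ratios via $1/(1+u)=1-u+O(u^2)$ and plugging in $\epsilon=\tfrac12\dims^{-2\loExpo}$, so $\epsilon^2=\tfrac14\dims^{-4\loExpo}$, each bound collapses to the claimed shape: $1\pm O(\dims^{-4\loExpo})$ for (b), $\tfrac14\dims^{-2\loExpo}+O(\dims^{-4\loExpo})$ for (c), $O(\dims^{-4\loExpo})$ for (d), and (a) is just $\sqrt{2\pi}/\widetilde Z=1+O\!\big(\epsilon^2+\epsilon e^{-\dims^{2\loExpo}/2}\big)$ read off directly; the one-sided form of (b) follows from bounding the absolute deviation.

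The main obstacle is purely the bookkeeping of constants needed to land the explicit coefficients $\tfrac12$ and $1$: one must track each $O(\cdot)$ with a numerical constant, control the division error (a near-unity denominator, e.g.\ $1/(1-|m|)\le 1+2|m|$), and certify that the exponentially small terms are genuinely dominated. This last point is exactly where the hypotheses $\dims\ge2048$ and $\loExpo>1/5$ enter: they force $\dims^{2\loExpo}>2^{22/5}>21$, hence $\epsilon<\tfrac1{42}$, $e^{\epsilon}<e^{1/42}$, and quantities such as $\dims^{5\loExpo}e^{-\dims^{2\loExpo}/2}$ are comfortably below $\dims^{-4\loExpo}$, so that all accumulated errors fit inside the $\tfrac12\dims^{-4\loExpo}$ (resp.\ $\dims^{-4\loExpo}$) budget with room to spare.
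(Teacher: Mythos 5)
Your proposal follows the same route the paper takes: after the substitution $\xi=\smoothparam^{1/2}x$ every quantity becomes a ratio of Gaussian expectations of the form $\Exs_\xi[\phi(\xi)e^{\epsilon\cos(\dims^{\loExpo}\xi)}]/\Exs_\xi[e^{\epsilon\cos(\dims^{\loExpo}\xi)}]$, one expands $e^{\epsilon c}=1+\epsilon c+R$, appeals to Lemma~\ref{lem:31} (with the phase shift $\sin u=\cos(u-\pi/2)$ and $\cos^2u=\tfrac12(1+\cos 2u)$) to make the linear‑in‑$\epsilon$ pieces either contribute the main term or be exponentially small, and finally divides by a near-unity normalization. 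The only cosmetic difference is the remainder estimate: you use $|e^t-1-t|\le\tfrac{t^2}{2}e^{|t|}$ while the paper uses the simpler two-sided inequality $1+t\le e^t\le 1+t+t^2$ valid on $[-1,1]$ (and Cauchy--Schwarz rather than $\Exs[|\xi|]$ for the $\xi R$ cross term in part~(d)); both versions give a remainder of size $O(\epsilon^2)=O(\dims^{-4\loExpo})$ comfortably within the stated budget. Your observation about why $\dims\ge 2048$ and $\loExpo>1/5$ are needed — namely $\dims^{2\loExpo}\ge 2^{22/5}>21$, so all exponentially small corrections are dominated by $\dims^{-4\loExpo}$ — is exactly the role these hypotheses play in the paper's bookkeeping. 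The proof is correct.
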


\begin{proof}
\begin{enumerate}[label=(\alph*)]
    \item The normalizing constant $Z$ is 
    \begin{align*}
        Z&=\int_{\real}\exp\parenth{-\frac{\smoothparam}{2}x^2+\frac{1}{2\dims^{2\loExpo}}\cos(d^{\loExpo}\smoothparam^{1/2}x)}dx\\
        &\overset{(i)}=\frac{1}{\sqrt{\smoothparam}}\int_{\real}\exp\parenth{-\frac{1}{2}\xi^2+\frac{1}{2\dims^{2\loExpo}}\cos(d^{\loExpo}\xi)}d\xi\\
        &=\sqrt{\frac{2\pi}{\smoothparam}}\Exs_{\xi\sim\Normal(0,1)}\brackets{\exp\parenth{\frac{1}{2\dims^{2\loExpo}}\cos(\dims^{\loExpo}\xi)}}\\
        &\overset{(ii)}{=}\sqrt{\frac{2\pi}{\smoothparam}}\Exs_{\xi\sim\Normal(0,1)}\brackets{1+\frac{1}{2\dims^{2\loExpo}}\cos(\dims^{\loExpo}\xi)+\Remain_1}\\
        &=\sqrt{\frac{2\pi}{\smoothparam}}\parenth{1+\frac{1}{2\dims^{2\loExpo}}\Exs_{\xi\sim\Normal(0,1)}\brackets{\cos(\dims^{\loExpo}\xi)}+\Exs_{\xi\sim\Normal(0,1)}[\Remain_1]}
    \end{align*}
    where step $(i)$ uses the transformation $\xi=\sqrt{\smoothparam}x$ and step $(ii)$ introduces the remainder term \mbox{$\Remain_1 = \exp\parenth{\frac{1}{2\dims^{2\loExpo}}\cos(\dims^{\loExpo}\xi)} -1-\frac{1}{2\dims^{2\loExpo}}\cos(\dims^{\loExpo}\xi)$}. By Lemma~\ref{lem:31}, the second term in the last line satisfies
    \begin{align*}
        \abss{\frac{1}{2\dims^{2\loExpo}}\Exs_{\xi\sim\Normal(0,1)}\brackets{\cos(\dims^{\loExpo}\xi)}}\leq \frac{\dims^{-2\loExpo} \exp\parenth{-\frac{\dims^{2\loExpo}}{2}}}{2}
    \end{align*}
    Since $1+x\leq \exp(x)\leq 1+x+x^2$ for $x\in[-1,1]$, we have $0\leq R_1\leq 1/(4d^{4\loExpo})$. We obtain \begin{align*}
        \abss{\sqrt{\frac{\smoothparam}{2\pi}}Z-1}\leq \frac14\dims^{-4\loExpo}+\frac12 \dims^{-2\loExpo} \exp\parenth{-\frac12\dims^{2\loExpo}}
    \end{align*}
    The result now follows by $1/(1-x)\leq1+2x$ and $1/(1+x)\geq1-2x$ when $x\in(0,1/2)$.
    
    \item We write the expectation as
    \begin{align*}
        \Exs_{x\sim\pi}[\smoothparam x^2]&=\frac{1}{Z}\int_\real \smoothparam x^2 \exp\parenth{ -\frac{\smoothparam}{2}x^2+\frac{1}{2\dims^{2\loExpo}}\cos(d^{\loExpo}\smoothparam^{1/2}x)}dx\\
        &\overset{(i)}=\frac{1}{Z}\sqrt{\frac{2\pi}{\smoothparam}}\Exs_{\xi\sim\Normal(0,1)}\brackets{\xi^2\exp\parenth{\frac{1}{2\dims^{2\loExpo}}\cos(\dims^{\loExpo}\xi)}}\\
        &\overset{(ii)}=\frac{1}{Z}\sqrt{\frac{2\pi}{\smoothparam}}\Exs_{\xi\sim\Normal(0,1)}\brackets{\xi^2\parenth{1+\frac{1}{2\dims^{2\loExpo}}\cos(\dims^\loExpo\xi)+\Remain_2}}\\
        &=\frac{1}{Z}\sqrt{\frac{2\pi}{\smoothparam}}\parenth{1+\frac{1}{2\dims^{2\loExpo}}\Exs_{\xi\sim\Normal(0,1)}\brackets{\xi^2\cos(\dims^\loExpo\xi)}+\Exs_{\xi\sim\Normal(0,1)}[R_2]},
    \end{align*}
    where step $(i)$ uses the transformation $\xi=\sqrt{\smoothparam}x$ and step $(ii)$ introduces $\Remain_2$ as the remainder term. Lemma~\ref{lem:31} guarantees that the second term satisfies 
    
    $\Exs_{\xi\sim\Normal(0,1)}[\xi^2\cos(\dims^\loExpo \xi)]/(2\dims^{2\loExpo})\leq  \exp\parenth{- \dims^{2\loExpo}/2}/2$. The remainder term satisfies 
    
    $ \Exs_{\xi\sim\Normal(0,1)}[\Remain_2]\leq \dims^{-4\loExpo}/4$ because $\exp(x)\leq 1+x+x^2$ for $x\in[-1,1]$. Using the above estimates and part (a), we obtain
    \begin{align*}
        \Exs_{x\sim\pi}\brackets{\smoothparam x^2}
        &\leq\parenth{1+\frac12\dims^{-4\loExpo}+ \dims^{-2\loExpo} \exp\parenth{-\frac12\dims^{2\loExpo}} }\parenth{1+ \frac12 \exp\parenth{-\frac12 \dims^{2\loExpo}} + \frac1{4}d^{-4\loExpo} }\\
        &\leq\parenth{1+\frac12\dims^{-4\loExpo}+ \frac1{16}\dims^{-6\loExpo}}\parenth{1+ \frac1{32} \dims^{-4\loExpo} + \frac14d^{-4\loExpo} } \\
        &\leq 1 + \dims^{-4\loExpo},
    \end{align*}
    where we use $\exp(x/2) \geq 16x^2$ for $x \geq 20$ with $x=\dims^{2\loExpo}$.
    
    \item 
Using a similar strategy as above, we obtain
\begin{align*}
    &\quad\Exs_{x\sim \target}[\cos(\dims^\loExpo\smoothparam^{1/2} x)]\\
    &=\frac{1}{Z}\int_{\real}\cos(\dims^\loExpo\smoothparam^{1/2} x) \exp\parenth{ -\frac{\smoothparam}{2}x^2+\frac{1}{2\dims^{2\loExpo}}\cos(d^{\loExpo}\smoothparam^{1/2}x)}dx\\
    &\overset{(i)}{=}\frac{1}{Z}\sqrt{\frac{2\pi}{\smoothparam}}\Exs_{\xi\sim\Normal(0,1)}\brackets{\cos(\dims^\loExpo\xi)\exp\parenth{\frac{1}{2\dims^{2\loExpo}}\cos(\dims^\loExpo \xi)}}\\
    &\overset{(ii)}{=}\frac{1}{Z}\sqrt{\frac{2\pi}{\smoothparam}}\Exs_{\xi\sim\Normal(0,1)}\brackets{\cos(\dims^{\loExpo}\xi)+\frac{1}{2\dims^{2\loExpo}}\cos^2(\dims^{\loExpo}\xi)+\Remain_3}\\
    &=\frac{1}{Z}\sqrt{\frac{2\pi}{\smoothparam}}\Big(\Exs_{\xi\sim\Normal(0,1)}\brackets{\cos(\dims^{\loExpo}\xi)}+\frac{1}{4\dims^{2\loExpo}}+\frac{1}{4\dims^{2\loExpo}}\Exs_{\xi\sim\Normal(0,1)}\brackets{\cos(2\dims^\loExpo\xi)}\\&\hspace{4em}+\Exs_{\xi\sim\Normal(0,1)} [\Remain_3]\Big),
\end{align*}
where step $(i)$ uses the transformation $\xi=\sqrt{\smoothparam}x$ and step $(ii)$ introduces $\Remain_3$ as the remainder term. We have $\abss{\Exs_{\xi\sim\Normal(0,1)}[\cos(\dims^\loExpo\xi)}\leq \exp(- \dims^{2\loExpo}/2)$ and $\abss{\Exs_{\xi\sim\Normal(0,1)}[\cos(2\dims^\loExpo\xi)]}$ $\leq \exp(-2\dims^{2\loExpo})$ by Lemma~\ref{lem:31}. The remainder term satisfies $\abss{\Exs_{\xi\sim\Normal(0,1)}[\Remain_3]}\leq \dims^{-4\loExpo}/4$ again because $1+x\leq \exp(x)\leq 1+x+x^2$ for $x\in[-1,1]$. Plugging in these estimates and using part (a), we obtain
\begin{align*}
    &\quad \abss{\Exs_{x\sim \target}[\cos(\dims^\loExpo \smoothparam^{1/2} x)]  - \frac14 \dims^{-2\loExpo}} \\
    &\leq \parenth{1+\frac12\dims^{-4\loExpo}+ \dims^{-2\loExpo} \exp\parenth{-\frac12\dims^{2\loExpo}} }\parenth{\frac14 \dims^{-2\loExpo} + \frac14 \dims^{-4\loExpo} + 2 \exp(-\frac12 \dims^{2\loExpo})} - \frac{1}{4}\dims^{-2\loExpo}\\
    &\leq \parenth{1+\frac12\dims^{-4\loExpo}+ \frac1{16}\dims^{-6\loExpo} }\parenth{\frac14 \dims^{-2\loExpo} + \frac14 \dims^{-4\loExpo} + \frac1{8}\dims^{-4\loExpo}} - \frac{1}{4}\dims^{-2\loExpo}\\
    &\leq \frac12\dims^{-4\loExpo}.
\end{align*}

  \item We have
  \begin{align*}
    &\quad \Exs_{x \sim \target} \brackets{\smoothparam^{\frac12}x \sin\parenth{\dims^\loExpo \smoothparam^{\frac12} x}} \\
    &= \frac{1}{Z} \int_\real \smoothparam^{\frac12}x \sin\parenth{\dims^\loExpo \smoothparam^{\frac12}x } \exp\parenth{-\frac{\smoothparam}{2}x^2 + \frac{1}{2\dims^{2\loExpo}} \cos\parenth{\dims^\loExpo \smoothparam^{\frac12} x} } dx\\
    &= \frac{1}{Z} \sqrt{\frac{2\pi}{\smoothparam}}  \Exs_{\xi \sim \Normal(0, 1)} \brackets{\xi \cos(\dims^\loExpo \xi) \exp\parenth{\frac{1}{2 \dims^{2\loExpo}} \cos(\dims^\loExpo \xi) } } \\
    &= \frac{1}{Z} \sqrt{\frac{2\pi}{\smoothparam}}  \Exs_{\xi \sim \Normal(0, 1)} \brackets{\xi \cos(\dims^\loExpo \xi) + \frac{\xi}{2\dims^{2\loExpo}} \cos(\dims^\loExpo \xi)^2 + \xi R_3  } \\
    &= \frac{1}{Z} \sqrt{\frac{2\pi}{\smoothparam}}  \parenth{\Exs_{\xi\sim\Normal(0,1)}\brackets{\xi \cos(\dims^\loExpo \xi)} + \Exs_{\xi\sim\Normal(0,1)}\brackets{\frac{\xi}{4\dims^{2\loExpo}} \cos(2\dims^\loExpo \xi)} + \Exs_{\xi\sim\Normal(0,1)}\brackets{\xi R_3}  }.
  \end{align*}
  Applying Lemma~\ref{lem:31} again, we obtain
  \begin{align*}
    \abss{\Exs_{\xi\sim\Normal(0,1)}\brackets{\xi \cos(\dims^\loExpo \xi)}} &\leq \dims^\loExpo \exp\parenth{-\frac{\dims^{2\loExpo}}{2}} \\
    \abss{\Exs_{\xi\sim\Normal(0,1)}\brackets{\xi \cos(2\dims^\loExpo \xi)}} &\leq 2\dims^\loExpo \exp\parenth{-2\dims^{2\loExpo}} \\
    \abss{\Exs_{\xi\sim\Normal(0,1)}[\xi R_3]} &\leq \Exs_{\xi\sim\Normal(0,1)}[\xi^2 R_3^2]^{1/2} \leq \frac{\dims^{-4\loExpo}}{4}
  \end{align*}
  Plugging these estimates and using part (a), we obtain
  \begin{small}
  \begin{align*}
      &\quad\abss{\Exs_{x \sim \target} \brackets{\smoothparam^{\frac12}x \sin\parenth{\dims^\loExpo \smoothparam^{\frac12} x}}}\\
      &\leq\parenth{1+\frac12\dims^{-4\loExpo}+ \dims^{-2\loExpo} \exp\parenth{-\frac12\dims^{2\loExpo}} }\parenth{\dims^{\loExpo}\exp\parenth{-\frac{1}{2}\dims^{2\loExpo}}+2\dims^\loExpo\exp\parenth{-2\dims^{2\loExpo}}+\frac14\dims^{-4\loExpo}}\\
      &\leq \parenth{1+\frac12 \dims^{-4\loExpo}+\frac{1}{6}\dims^{-7\loExpo}}\parenth{\frac{1}{6}\dims^{-4\loExpo}+\frac{1}{96}\dims^{-4\loExpo}+\frac14\dims^{-4\loExpo}}\\
      &\leq\frac12\dims^{-4\loExpo},
  \end{align*}
  \end{small}
  where the last two steps use $\exp(x^2/2)\geq 6x^5$ for $x\geq 5$ with $x=\dims^{\loExpo}$.
\end{enumerate}
\end{proof}

Finally we provide constant probability bounds for each term in the definition of $F_1$ in Equation~\eqref{eq:lower_bound_def_x_F1} using Lemma~\ref{lem:32}.
\begin{lemma}\label{lem:33}
    Fix $\loExpo\in(\frac15,\frac14), d\geq2048$ and $\smoothparam>0$. Assume  that the $\dims$-dimensional random variable $x$ follows the distribution $\target(x) \propto \exp\parenth{ \frac{\smoothparam}{2}\sum_{i=1}^\dims\coord{x}{i}^2-\frac{1}{2\dims^{2\loExpo}}\sum_{i=1}^\dims\cos(\dims^{\loExpo}\smoothparam^{\frac12}\coord{x}{i})}$, we have
    \begin{enumerate}[label=(\alph*)]
        \item
        \begin{align*}
            \Prob_{x \sim \target} \brackets{\max_{i} \sqrt{\smoothparam}\abss{\coord{x}{i}}\geq 4\sqrt{\log(8\dims)}} \leq \frac{1}{4\dims}
        \end{align*} 
        \item
        \begin{align*}
          \Prob_{x \sim \target}\brackets{\smoothparam\vecnorm{x}{2}^2 \geq \dims +  \dims^{1-4\loExpo} + 5 \sqrt{\dims}} \leq 0.14
        \end{align*}
        \item
        \begin{align*}
          \Prob_{x\sim \target}\brackets{ \sum_{i=1}^\dims - \cos(\dims^\loExpo\smoothparam^{1/2} \coord{x}{i}) \geq -\frac{1}{4}\dims^{1-2\loExpo} + \frac12 \dims^{1-4\loExpo} + 2 \dims^{\frac12} } \leq 0.14
        \end{align*}
        \item
        \begin{align*}
          \Prob_{x\sim \target}\brackets{ \abss{\sum_{i=1}^\dims - \cos(2\dims^\loExpo\smoothparam^{\frac12} \coord{x}{i})  +\frac{1}{16}\dims^{1-2\loExpo}}\geq  \frac{1}{8} \dims^{1-4\loExpo} + 2 \dims^{\frac12} } \leq 0.28
        \end{align*}
        \item
        \begin{align*}
            \Prob_{x\sim \target}\brackets{ \abss{\sum_{i=1}^\dims \smoothparam^{\frac12}\coord{x}{i} \sin(\dims^\loExpo\smoothparam^{\frac12} \coord{x}{i})} \geq \frac12\dims^{1-4\loExpo} + 2 \dims^{\frac12} } \leq 0.26
        \end{align*}
    \end{enumerate}
\end{lemma}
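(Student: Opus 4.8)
The plan is to exploit the product structure of $\target_1$. By the definition in Equation~\eqref{eq:perturbed_gaussian} the coordinates $\coord{x}{1},\dots,\coord{x}{\dims}$ are i.i.d., and after the rescaling $\xi_i\defn\smoothparam^{1/2}\coord{x}{i}$ each $\xi_i$ has the one-dimensional density $p(\xi)\propto\exp\parenth{-\xi^2/2+\tfrac{1}{2\dims^{2\loExpo}}\cos(\dims^\loExpo\xi)}$ analyzed in Lemma~\ref{lem:32}. Under this reparametrization every one of the five statements is a tail bound either for $\max_i\abss{\xi_i}$ (part~(a)) or for a sum $S=\sum_{i=1}^\dims g(\xi_i)$ with $g(u)$ one of $u^2$, $-\cos u$, $-\cos(2u)$, $u\sin u$ (parts~(b)--(e)). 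I would treat each statement by pairing a per-coordinate mean estimate from Lemma~\ref{lem:32} with an elementary concentration inequality for the i.i.d.\ sum. Two structural facts drive everything: first, since $\abss{\tfrac{1}{2\dims^{2\loExpo}}\cos(\cdot)}\le\tfrac12$ and Lemma~\ref{lem:32}(a) pins the normalizer of $p$ to within $o(1)$ of $\sqrt{2\pi}$, the density $p$ is bounded above by an absolute constant times the standard Gaussian density, so every $\xi_i$ inherits Gaussian-type moment and tail bounds with absolute constants; second, the expansion $\exp\parenth{\epsilon\cos(\dims^\loExpo\xi)}=1+\epsilon\cos(\dims^\loExpo\xi)+R$ with $\epsilon=\tfrac{1}{2\dims^{2\loExpo}}$ that underlies Lemma~\ref{lem:32} also yields the second moments $\Exs_p\brackets{g(\xi)^2}$ and, for part~(d), the expectation $\Exs_p\brackets{\cos(2\dims^\loExpo\xi)}$, up to errors governed by the slack terms that appear inside $F_1$ in Equation~\eqref{eq:lower_bound_def_x_F1}.

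For part~(a), the density-ratio bound gives $\Prob_{x\sim\target_1}\brackets{\sqrt{\smoothparam}\abss{\coord{x}{1}}\ge t}\le C\cdot\Prob_{\Normal(0,1)}\brackets{\abss{\xi}\ge t}\le 2Ce^{-t^2/2}$ for an absolute constant $C$. Taking $t=4\sqrt{\log(8\dims)}$ and a union bound over the $\dims$ coordinates yields a bound of order $\dims\cdot e^{-8\log(8\dims)}=O(\dims^{-7})$, which is below $\tfrac{1}{4\dims}$ for $\dims\ge2048$.

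For parts~(b)--(e), write $S=\sum_{i=1}^\dims g(\xi_i)$ with $g$ as above. In each case Lemma~\ref{lem:32}(b)--(d), together with a short expansion computing $\Exs_p\brackets{\cos(2\dims^\loExpo\xi)}$ in part~(d), determines $\Exs_{\target_1}\brackets{S}$ up to the additive slack that already appears in the statement ($\dims^{1-4\loExpo}$ in parts~(b) and~(c), and $\tfrac18\dims^{1-4\loExpo}$ respectively $\tfrac12\dims^{1-4\loExpo}$ in parts~(d) and~(e)); consequently the stated event is contained in $\braces{\abss{S-\Exs_{\target_1}[S]}\ge c\sqrt{\dims}}$ with $c\in\braces{5,2}$. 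The summands $g(\xi_i)$ are i.i.d.\ with absolute-constant second moments: bounded by $1$ when $g=-\cos u$ or $g=-\cos(2u)$, and unbounded but with $\Exs_p[\xi^4]=3+o(1)$ when $g=u^2$ and $\Exs_p[\xi^2\sin^2(\dims^\loExpo\xi)]\le\tfrac12+o(1)$ when $g=u\sin u$, both obtained from the density-ratio comparison to the Gaussian and classical Gaussian moments. A one-sided Chebyshev (Cantelli) inequality for the one-sided parts~(b) and~(c), and a two-sided Chebyshev inequality for the two-sided parts~(d) and~(e), applied at the deviation $c\sqrt{\dims}$, then produce the stated probabilities $0.14$, $0.14$, $0.28$, $0.26$; for instance in part~(c) one has $\Var_{\target_1}(S)\le\dims\,\Exs_p[\cos^2(\dims^\loExpo\xi)]\le 0.51\,\dims$, and Cantelli gives $\tfrac{0.51\dims}{0.51\dims+4\dims}<0.14$.

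The work is in the bookkeeping of the mean and variance estimates: one must carry the remainder $R$ through every perturbation expansion exactly as in the proof of Lemma~\ref{lem:32}, verify that all resulting error terms are dominated by the $\dims^{1-4\loExpo}$-scale slack allotted inside $F_1$, and --- this is the step I expect to be the main obstacle --- compute $\Exs_p\brackets{\cos(2\dims^\loExpo\xi)}$ in part~(d), which is the only expectation not furnished directly by Lemma~\ref{lem:32}; pinning down the centering constant there requires extending the perturbation expansion beyond first order. A secondary subtlety is choosing one-sided versus two-sided concentration (and Cantelli versus plain Chebyshev) in each part so that the resulting constant lands below the prescribed values $0.14$, $0.28$, $0.26$.
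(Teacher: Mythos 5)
Your overall strategy---exploit the product structure, compute per-coordinate means from Lemma~\ref{lem:32}, bound per-coordinate variances via a density-ratio comparison of $p$ to the standard Gaussian, and apply Cantelli/Chebyshev to the i.i.d.\ sums---is sound for parts (a), (b), (c) and (e), and in places produces somewhat smaller constants than the paper (e.g.\ Cantelli in (c) gives $\approx 0.12$). One point worth making explicit: the ratio $p/\phi$ is $1+O(\dims^{-2\loExpo})$ (since $\epsilon=\tfrac{1}{2}\dims^{-2\loExpo}\leq\tfrac{1}{42}$ for $\dims\geq 2048$, $\loExpo>1/5$), and it is this \emph{tight} ratio, not just some absolute constant, that makes $\Var[\xi^2]\leq 2+o(1)$ and lets Cantelli land below $0.14$ in (b). The paper avoids the fourth-moment bookkeeping entirely by a structural observation you did not use: $\target_1$ is $\smoothparam/2$-strongly log-concave (from the Hessian computation in Section~\ref{sec:worst_case_example}), so $x\mapsto\sqrt{\smoothparam}\vecnorm{x}{2}$ is a $\sqrt{2}$-sub-Gaussian functional and a one-line Chernoff bound suffices. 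For (a) the paper simply cites Chewi et al., and for (e) it uses the cruder bound $\sin^2\leq 1$ together with Lemma~\ref{lem:32}(b) followed by ordinary Chebyshev; both of your alternatives are correct but require slightly more per-coordinate moment verification than the paper's choices.

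Your flag on part (d) identifies the one genuine gap, and the paper's own proof (``the same as (c) using a two-sided Hoeffding bound'') is no less silent on it. Lemma~\ref{lem:32} does not give $\Exs_p[\cos(2\dims^\loExpo\smoothparam^{1/2}x)]$, and it is \emph{not} a routine analogue of part (c): in Lemma~\ref{lem:32}(c) the cross term $\cos(\dims^\loExpo\xi)\cdot\epsilon\cos(\dims^\loExpo\xi)=\tfrac{\epsilon}{2}\parenth{1+\cos(2\dims^\loExpo\xi)}$ leaves a surviving constant $\tfrac{\epsilon}{2}=\tfrac{1}{4}\dims^{-2\loExpo}$, whereas for the doubled frequency the cross term $\cos(2\dims^\loExpo\xi)\cdot\epsilon\cos(\dims^\loExpo\xi)=\tfrac{\epsilon}{2}\parenth{\cos(3\dims^\loExpo\xi)+\cos(\dims^\loExpo\xi)}$ has an exponentially small Gaussian mean, and a constant appears only at second order, at scale $\epsilon^2/8=O(\dims^{-4\loExpo})$. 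Carrying the expansion through therefore does not produce the $\tfrac{1}{16}\dims^{-2\loExpo}$ centering in the statement of (d) directly, so you cannot simply plug a new line into Lemma~\ref{lem:32} and proceed; you need to reconcile that centering with the actual expectation first, and this is exactly the ``main obstacle'' you correctly anticipated.
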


\begin{proof}
\begin{enumerate}[label=(\alph*)]
    \item See Lemma 33 in \cite{chewi2021optimal}.
    \item By Lemma~\ref{lem:32}-(b), $\Exs_{x \sim \target}[L\vecnorm{x}{2}^2]\leq \dims+ \dims^{1-4\loExpo}$. Note that $\target$ is $\smoothparam/2$-strongly log-concave, we deduce that for a random variable $x$ drawn from $\target$, $x - \Exs[x]$ is a sub-Gaussian random vector with parameter $\sqrt{2/\smoothparam}$ (see Proof of Lemma 1 in~\cite{dwivedi2018log}). Since $x \mapsto \sqrt{\smoothparam} \vecnorm{x}{2}$ is $\sqrt{\smoothparam}$ Lipschitz, $\sqrt{\smoothparam}\vecnorm{x}{2} - \Exs[\sqrt{\smoothparam}\vecnorm{x}{2}]$ is a sub-Gaussian with parameter $\sqrt{2}$. Applying the Chernoff bound (see for example, Equation 2.9 in~\cite{wainwright2019high}), we have
    \begin{align*}
      \Prob_{x \sim \target}\brackets{\sqrt{\smoothparam}\vecnorm{x}{2} \geq \Exs[\sqrt{\smoothparam}\vecnorm{x}{2}] + t} \leq \exp\parenth{-\frac{t^2}{4}}.
    \end{align*}
    Take $t = 2$ and use $\Exs[\sqrt{\smoothparam}\vecnorm{x}{2}] \leq \Exs[\smoothparam\vecnorm{x}{2}^2]^{\frac12}$, we obtain
    \begin{align*}
      \Prob_{x \sim \target}\brackets{\smoothparam\vecnorm{x}{2}^2 \geq \dims + \dims^{1-4\loExpo} + 5 \sqrt{\dims}} \leq \exp(-2)<0.14
    \end{align*}
    \item By Lemma~\ref{lem:32}-(c), $\Exs_{x \sim \target} \brackets{-\sum_{i=1}^\dims \cos(\dims^\loExpo\smoothparam^{1/2} \coord{x}{i})} \leq -\dims^{1-2\loExpo}/4 + \dims^{1-4\loExpo}/2$. Each cosine term is bounded in $[-1, 1]$. Applying Hoeffding bound (see for example, Proposition 2.1 in~\cite{wainwright2019high}), we have
    \begin{align*}
      \Prob_{x \sim \target}\brackets{-\sum_{i=1}^\dims \cos(\dims^\loExpo\smoothparam^{1/2} \coord{x}{i}) \geq \Exs\brackets{-\sum_{i=1}^\dims \cos(\dims^\loExpo\smoothparam^{1/2} \coord{x}{i})} + t\dims } \leq \exp\parenth{-\frac{ t^2\dims}{2}}.
    \end{align*}
    Take $t = 2 \dims^{-1/2}$, we obtain
    \begin{align*}
      \Prob_{x \sim \target}\brackets{-\sum_{i=1}^\dims \cos(\dims^\loExpo\smoothparam^{1/2} \coord{x}{i}) \geq -\frac{1}{4}\dims^{1-2\loExpo} + \frac12 \dims^{1-4\loExpo} + 2 \dims^{\frac12}} \leq \exp(-2)<0.14
    \end{align*}
    \item 
    The proof is the same as (c) using a two-sided Hoeffding bound.
    \item By Lemma~\ref{lem:32}-(d), $\abss{\Exs_{x \sim \target} \brackets{\sum_{i=1}^\dims\smoothparam^{\frac12}\coord{x}{i} \sin\parenth{\dims^\loExpo \smoothparam^{\frac12} \coord{x}{i}}}}\leq\dims^{1-4\loExpo}/2$.
    We also have
    \begin{align*}
        \Var\brackets{\sum_{i=1}^\dims\smoothparam^{\frac12}\coord{x}{i} \sin\parenth{\dims^\loExpo \smoothparam^{\frac12} \coord{x}{i}}}&\leq\smoothparam\sum_{i=1}^\dims\Exs\coord{x}{i}^2\leq\dims+\dims^{1-4\loExpo}
    \end{align*}
    Applying Chebyshev's inequality, we obtain
    \begin{align*}
        \Prob_{x \sim \target}\brackets{\sum_{i=1}^\dims\smoothparam^{\frac12}\coord{x}{i} \sin\parenth{\dims^\loExpo \smoothparam^{\frac12} \coord{x}{i}}\geq\frac12\dims^{1-4\loExpo}+2\dims^{\frac12}}\leq \frac{\dims+
        \dims^{1-4\loExpo}}{4\dims}&<0.26
    \end{align*}
\end{enumerate}
\end{proof}

\subsubsection{High probability upper bound on the acceptance rate}\label{sec:7terms}
In this section we bound each of the seven terms in Equation~\eqref{eq:7terms} with high probability, which implies an high probability upper bound on the acceptance rate for MALA applied $\target_1$. Before we do so, we first state the following two forms of the Bernstein's inequality (see Propostion 2.14 in~\cite{wainwright2019high} and Lemma 14.9 in~\cite{buhlmann2011statistics}) which we frequently use in the proof of the following lemma.

Let $X_1, \ldots, X_d$ be i.i.d. random variables satisfying $\abss{X_i-\Exs[X_i]}\leq K$. Then for any $\epsilon \geq 0$,
\begin{align}
  \label{eq:Bernstein_ineq}
  \Prob\brackets{\sum_{i=1}^d \parenth{X_i - \Exs[X]} \geq \epsilon} \leq \exp\parenth{-\frac{\epsilon^2}{2\parenth{\sum_{i=1}^d\Var[X_i]+\frac13K\epsilon}}}
\end{align}

Let $X_1, \ldots, X_n $ be i.i.d. random variables satisfying $\Exs[\abss{X-\Exs[X]}^\ell] \leq \ell! K^{\ell-2} /2, \ \forall \ell\geq2$. Then for any $\epsilon \geq 0$,
\begin{align}
  \label{eq:Bernstein_ineq0}
  \Prob\brackets{\sum_{i=1}^d \parenth{X_i - \Exs[X]} \geq d(K\epsilon+\sqrt{2\epsilon})} \leq \exp\parenth{-d\epsilon}.
\end{align}

\begin{lemma}\label{lem:7terms}
    Assume 
    \begin{align}\label{eq:condition_for_d}
    \deltas\in(0,\frac1{20}), \smoothparam>0, \dims^{\deltas}\geq \max\braces{\frac{1}{2}\log d+6, 10},  h\geq \frac{1}{\smoothparam \dims^{\frac12-3\deltas}}.
    \end{align}
    Given any fixed $x\in F_1$ defined in Equation~\eqref{eq:lower_bound_def_x_F1}. Let $\Deltapart_i(x, y)\ (1\leq i\leq 7)$ be the decomposed terms from the exponent of the acceptance rate in Equation~\eqref{eq:7terms}, in which $y\sim\Normal(x-h\gradf(x), 2h\Ind_\dims)$ and $g=y-x+h\gradf(x)$. We have
    \begin{align*}
        \sum_{i=1}^7 \Deltapart_i(x, y) \leq -\frac{1}{32}\dims^{4\deltas}
    \end{align*}
     with probability at least $1-10\exp\parenth{-\dims^{4\deltas}/16384}$. 
\end{lemma}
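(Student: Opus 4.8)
The plan is to bound each of the seven pieces $\Deltapart_1,\dots,\Deltapart_7$ of the exponent in Equation~\eqref{eq:7terms} separately (grouping $\Deltapart_2$ with $\Deltapart_3$), show that the cosine–potential term $\Deltapart_1$ contributes a negative quantity of order $-\dims^{4\deltas}$ while every remaining term is either nonpositive or negligible compared with $\dims^{4\deltas}$ with high probability, and then combine the resulting high-probability events by a union bound. Throughout, $x\in F_1$ is fixed and all probabilities are over $g=y-x+h\gradf(x)\sim\Normal(0,2h\Ind_\dims)$; recall $\loExpo=1/4-\deltas$, so $1-4\loExpo=4\deltas$ and $\dims^\deltas\geq\max\{\tfrac12\log\dims+6,10\}$.

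For the main term $\Deltapart_1=\targetf_P(x)-\targetf_P(y)$: the third defining inequality of $F_1$ in Equation~\eqref{eq:lower_bound_def_x_F1} gives the deterministic bound $\targetf_P(x)=\tfrac{1}{2\dims^{2\loExpo}}\sum_i\parenth{-\cos(\dims^\loExpo\smoothparam^{1/2}x_{[i]})}\leq-\tfrac18\dims^{4\deltas}+(\text{lower order})\leq-\tfrac1{16}\dims^{4\deltas}$, using $\dims^\deltas\geq10$. For $-\targetf_P(y)$ I would use that $y_{[i]}=(1-\smoothparam h)x_{[i]}-h(\gradf_P(x))_{[i]}+g_{[i]}$, so $\dims^\loExpo\smoothparam^{1/2}y_{[i]}$ has a Gaussian part of variance $2\smoothparam h\dims^{2\loExpo}\geq 2\dims^{\deltas}$ — this is precisely where the hypothesis $h\geq 1/(\smoothparam\dims^{1/2-3\deltas})$ enters; Lemma~\ref{lem:31}(a) then gives $\abss{\Exs_g[\cos(\dims^\loExpo\smoothparam^{1/2}y_{[i]})]}\leq e^{-\dims^\deltas}$, so the conditional mean of $\sum_i\cos(\dims^\loExpo\smoothparam^{1/2}y_{[i]})$ is at most $\dims e^{-\dims^\deltas}\leq e^{-6}\sqrt\dims$, and a Hoeffding bound on the $[-1,1]$-valued independent summands with radius of order $\dims^{1/2+2\deltas}$ controls the fluctuation. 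This yields $-\targetf_P(y)\leq\tfrac1{64}\dims^{4\deltas}$ with probability at least $1-\exp(-\dims^{4\deltas}/\mathrm{const})$, hence $\Deltapart_1\leq-\tfrac1{24}\dims^{4\deltas}$.

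For the Gaussian part $\Deltapart_2+\Deltapart_3$, completing the square in $g$ collapses it to $\tfrac{\smoothparam^2 h}{4}\parenth{\vecnorm{x}{2}^2-\vecnorm{(1-\smoothparam h)x+g}{2}^2}$, whose conditional mean equals $\tfrac{\smoothparam^2 h^2}{4}\parenth{(2-\smoothparam h)\smoothparam\vecnorm{x}{2}^2-2\dims}$; using only the $F_1$ bound $\smoothparam\vecnorm{x}{2}^2\leq\dims+\dims^{4\deltas}+5\sqrt\dims$ together with $\smoothparam h\geq\dims^{3\deltas-1/2}$ one checks (splitting on whether $\smoothparam h\geq2$) that this is at most $-\tfrac{\smoothparam^2 h^2}{8}\dims^{1/2+3\deltas}$, a strictly negative quantity that in turn dominates the lower-tail fluctuation of the noncentral $\chi^2$ variable $\vecnorm{(1-\smoothparam h)x+g}{2}^2/(2h)$, so $\Deltapart_2+\Deltapart_3\leq(\text{negligible})$ w.h.p. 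For $\Deltapart_4$ and $\Deltapart_5$, every coordinate of $\gradf_P$ has absolute value at most $\smoothparam^{1/2}/(2\dims^\loExpo)$, hence $\vecnorm{\gradf_P(\cdot)}{2}\leq\tfrac12\smoothparam^{1/2}\dims^{1/4+\deltas}$; then $\angles{\gradf_P(x),g}$ is a centered Gaussian given $x$, $\angles{\gradf_P(x),x}$ is controlled deterministically by the last inequality defining $F_1$, and $\angles{\gradf_P(y),g}$ (with $\gradf_P(y)$ depending on $g$) is handled by bounding its conditional mean via Lemma~\ref{lem:31}(b)-type estimates plus a Hoeffding bound on the $[-1,1]$-valued coordinates of $\gradf_P(y)$; all these are negligible relative to $\dims^{4\deltas}$. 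Finally $\Deltapart_6\leq0$ and $\Deltapart_7\leq0$ are simply dropped. Summing, $\sum_{i=1}^7\Deltapart_i\leq-\tfrac1{24}\dims^{4\deltas}+(\text{negligible})\leq-\tfrac1{32}\dims^{4\deltas}$, and a union bound over the (at most ten) high-probability events produces the claimed probability $1-10\exp(-\dims^{4\deltas}/16384)$.

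I expect the main obstacle to be making the groups $\Deltapart_2+\Deltapart_3$ and $\Deltapart_4,\Deltapart_5$ work \emph{uniformly} over all admissible $h$, including very large $h$: the naive bounds carry prefactors polynomial in $\smoothparam h$ that blow up, so one must carefully exploit the near-cancellation produced by completing the square and the genuinely negative higher-order-in-$h$ contributions (the $-\tfrac{\smoothparam^4 h^3}{4}\vecnorm{x}{2}^2$ piece of $\Deltapart_2$, and $\Deltapart_6,\Deltapart_7$) to absorb them; this is exactly where the constant-chasing in the definition of $F_1$ and in the hypotheses $\dims^\deltas\geq\max\{\tfrac12\log\dims+6,10\}$ and $h\geq1/(\smoothparam\dims^{1/2-3\deltas})$ has to be done with care.
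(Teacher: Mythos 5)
Your plan has a genuine gap that you yourself flag in the closing paragraph but do not resolve, which makes the proof body and the caveat mutually inconsistent. Specifically, after grouping $\Deltapart_2+\Deltapart_3$ you claim that $\Deltapart_4$ and $\Deltapart_5$ are ``negligible relative to $\dims^{4\deltas}$'' and that $\Deltapart_6,\Deltapart_7\leq 0$ can ``simply be dropped.'' Both of these fail for large $\smoothparam h$. The terms $\Deltapart_4,\Deltapart_5$ carry prefactors up to $\smoothparam^3 h^3$; for instance the deterministic piece $\angles{\gradf_P(x),\tfrac{\smoothparam^2 h^2}{2}x}$ alone, controlled by the last inequality defining $F_1$, contributes up to roughly $\tfrac{\smoothparam^2h^2}{4}\dims^{4\deltas}+\smoothparam^2h^2\sqrt{\dims}$, and the stochastic $\gradf_P(y)$ pieces produce contributions of order $\smoothparam h\dims^{1-2\loExpo},\ \smoothparam^2h^2\dims^{1-2\loExpo},\ \smoothparam^3h^3\dims^{1-2\loExpo}$. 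Since the hypothesis only gives a lower bound $h\geq 1/(\smoothparam\dims^{1/2-3\deltas})$ with no upper bound, these terms are unbounded in $h$ and are certainly not $o(\dims^{4\deltas})$. The paper's proof keeps all seven terms precisely so that the $\dims^{1-2\loExpo}$ coefficient, after combining $\Deltapart_2$ through $\Deltapart_7$, becomes (up to constants) the cubic $-217\smoothparam h+136\smoothparam^2h^2-75\smoothparam^3h^3$, which one checks is non-positive for every $\smoothparam h>0$ because $136^2-4\cdot75\cdot217<0$; the genuinely negative contributions supplied by $\Deltapart_6=-\smoothparam h^2\vecnorm{\gradf_P(x)}{2}^2\lesssim-\smoothparam^2h^2\dims^{1-2\loExpo}$ and $\Deltapart_7\lesssim-\smoothparam h\dims^{1-2\loExpo}$ are indispensable for this cancellation (note that $F_1$ implicitly forces $\vecnorm{\gradf_P(x)}{2}^2\gtrsim\smoothparam\dims^{1-2\loExpo}$ through the fourth inequality in its definition). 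Dropping $\Deltapart_6,\Deltapart_7$ removes exactly the ingredients needed to absorb the positive powers of $\smoothparam h$.

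A secondary issue is that your high-probability bound on $\Deltapart_2+\Deltapart_3$ is stated as the fixed quantity $-\tfrac{\smoothparam^2h^2}{8}\dims^{1/2+3\deltas}$, which for $\smoothparam h$ much larger than $\dims^{\deltas}$ is \emph{not} enough to dominate the $\smoothparam^3h^3\dims^{1-2\loExpo}$ pieces of $\Deltapart_4,\Deltapart_5$. The conditional mean of $\Deltapart_2+\Deltapart_3$ is in fact far more negative for large $\smoothparam h$ (scaling like $-\smoothparam^3h^3\dims$ when $\smoothparam\vecnorm{x}{2}^2\asymp\dims$), and you would need to use that scaling --- plus a lower bound on $\vecnorm{x}{2}$ which $F_1$ does not directly provide --- to make the absorption work this way. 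The completing-the-square observation for $\Deltapart_2+\Deltapart_3$ is a nice reorganization and genuinely different from the paper's term-by-term Bernstein approach, but as written the route to the conclusion is not complete: you need either to retain $\Deltapart_6,\Deltapart_7$ (as the paper does) and exhibit the cubic cancellation, or to replace your fixed bound on $\Deltapart_2+\Deltapart_3$ with one that tracks the true conditional mean uniformly in $h$ and combine it with a positive lower bound on $\vecnorm{\gradf_P(x)}{2}$ extracted from $F_1$.
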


\begin{proof}
Recall that $\loExpo=1/4-\deltas$. All high probability bounds in the proof are stated with respect to $\xi\defn\parenth{y-x+h\gradf(x)}/\sqrt{2h}\sim\Normal(0,\Ind_d)$. Now we bound each $\Deltapart_i$ separately.

\begin{enumerate}[label={(\arabic*)}]
  \item We write $\Deltapart_1$ as
  \begin{align*}
    \Deltapart_1=\targetf_P(x) - \targetf_P(y) = \sum_{i=1}^\dims \underbrace{-\frac{1}{2 \dims^{2\loExpo}}\cos(\dims^{\loExpo}\smoothparam^{\frac12}\coord{x}{i})}_{\Deltapart_{1,1,i}} + \underbrace{\frac{1}{2 \dims^{2\loExpo}}\cos(\dims^{\loExpo}\smoothparam^{\frac12}\coord{y}{i})}_{\Deltapart_{1,2,i}}.
  \end{align*}
  By the definition of $F_1$ in Equation~\eqref{eq:lower_bound_def_x_F1}, we have
  \begin{align}\label{eq:delta_11i}
    \sum_{i=1}^\dims \Deltapart_{1,1,i} &\leq -\frac{1}{8}\dims^{1-4\loExpo} + \frac14 \dims^{1-6\loExpo} +  \dims^{\frac12 - 2\loExpo}\notag\\
    &\leq-\frac{1}{8}\dims^{1-4\loExpo}+\frac{1}{64}\dims^{1-4\loExpo}
  \end{align}
  where we use the assumption \eqref{eq:condition_for_d} in the last line. Since $\coord{y}{i}=\coord{x}{i}-h\coord{\gradf(x)}{i}+\sqrt{2h}\coord{\xi}{i}$, we have
  \begin{align*}
    \Exs\brackets{\Deltapart_{1,2,i}}
    &\overset{(i)}{\leq} \frac1 {2\dims^{2\loExpo}}\exp\parenth{-Lh\dims^{2\loExpo}}\overset{(ii)}{\leq}\frac{1}{128}\dims^{-4\loExpo}\\
    \Var\brackets{\Deltapart_{1,2,i}}&\leq\Exs\brackets{\Deltapart_{1,2,i}^2} \overset{(iii)}{\leq} \frac{1}{4\dims^{4\loExpo}}\\
    \abss{\Deltapart_{1,2,i}}&\overset{(iv)}{\leq}\frac{1}{2\dims^{2\loExpo}}
  \end{align*}
 Inequality (i) follows from Lemma~\ref{lem:31}-(a). Inequality (ii) follows from the assumption~\eqref{eq:condition_for_d}. (iii) and (iv) simply bounds cousine by $1$.

With the above bounds for individual terms, applying Bernstein's inequality~\eqref{eq:Bernstein_ineq} with $\epsilon = \dims^{1-4\loExpo}/128$, we obtain
\begin{align}\label{eq:delta_12i}
    \Prob\brackets{\sum_{i=1}^\dims \Deltapart_{1,2,i}  \geq \frac{1}{64}\dims^{1 - 4\loExpo}} \leq \exp\parenth{-\frac{\dims^{1-4\loExpo}}{16384} }
\end{align}
Combine Equation~\eqref{eq:delta_11i} and \eqref{eq:delta_12i}, we have that for fixed $x \in F_1$, the following bound
\begin{align}\label{eq:bound_delta1}
    \Deltapart_1\leq-\frac{3}{32}\dims^{1-4\loExpo}.
\end{align}
holds with probability at least $1-\exp(-\dims^{1-4\loExpo}/16384)$.

  \item From the definition of $F_1$ in Equation~\eqref{eq:lower_bound_def_x_F1}, we have
  \begin{align*}
  \smoothparam\vecnorm{x}{2}^2\leq\dims +  \dims^{1-4\loExpo} + 5 \dims^{\frac12}.
  \end{align*}
  
  Since $\vecnorm{\xi}{2}^2$ is Chi-square with $\dims$-degree of freedom,  standard Chi-square tail bound (Lemma 1 in~\cite{laurent2000adaptive}) shows that
  \begin{align*}
  \Prob\brackets{\vecnorm{\xi}{2}^2 \leq \dims - \frac{1}{64}\dims^{1-2\loExpo } }\leq \exp\parenth{-\frac{\dims^{1-4\loExpo}}{16384}}.
  \end{align*}
  Given that $g=\sqrt{2h}\xi$, we obtian
  \begin{align}\label{eq:bound_delta2}
      \Deltapart_2&=\parenth{\frac{\smoothparam^3 h^2}{2} - \frac{\smoothparam^4 h^3}{4}} \vecnorm{x}{2}^2- \frac{\smoothparam^2 h}{4} \vecnorm{g}{2}^2 \notag\\
      &\leq\frac{\smoothparam^2 h^2}{2} \cdot\parenth{\dims +  \dims^{1-4\loExpo} + 5 \dims^{\frac12}}-\frac{\smoothparam^2 h}{4}\parenth{2h\dims-\frac{1}{32}h\dims^{1-2\loExpo}}\notag\\
      &\leq \parenth{\frac12\smoothparam^2h^2\dims^{1-4\loExpo}+\frac52\smoothparam^2h^2\dims^{\frac12}}+\frac{1}{128}\smoothparam^2h^2\dims^{1-2\loExpo}\notag\\
      &\leq \frac1{32}\smoothparam^2 h^2\dims^{1-2\loExpo}+\frac{1}{128}\smoothparam^2 h^2\dims^{1-2\loExpo} \notag \\
      &=\frac{5}{128}\smoothparam^2 h^2\dims^{1-2\loExpo}
  \end{align}
  holds with probability at least $1-\exp\parenth{-\dims^{1-4\loExpo}/16384}$. The last line makes use of the assumption~\eqref{eq:condition_for_d}.

  \item Since $\angles{\smoothparam^{1/2} x, \xi}$ is $\vecnorm{\smoothparam^{1/2}x}{2}$-Lipschitz with respect to $\xi$, it is sub-Gaussian with the same Lipschitz constant. Using the tail bound for Lipschitz function of sub-Gaussian random variables, we have
  \begin{align*}
    \Prob\brackets{\abss{\angles{\smoothparam^{1/2} x, \xi}} \geq \frac{1}{64}\dims^{1-2\loExpo}} \leq 2 \exp\parenth{-\frac{\dims^{2-4\loExpo}}{8192\smoothparam\vecnorm{x}{2}^2}}\leq 2\exp\parenth{-\frac{\dims^{1-4\loExpo}}{16384}}.
  \end{align*}
  where we use the definition of $F_1$ and the assumption~\eqref{eq:condition_for_d} to get $\smoothparam\vecnorm{x}{2}^2\leq 2\dims$ in the last step. Now given that $\abss{\Deltapart_3}\leq\frac12\parenth{L^{\frac32}h^{\frac32}+L^{\frac52}h^{\frac52}}\abss{\angles{\smoothparam^{\frac12}x,\xi}}$, we have
  \begin{align}\label{eq:bound_delta3}
      \abss{\Deltapart_3}&\leq\frac1{128}\parenth{L^{\frac32}h^{\frac32}+L^{\frac52}h^{\frac52}}\dims^{1-2\loExpo}\notag\\
      &\leq \parenth{\frac{1}{128}\smoothparam h+\frac{1}{256}\smoothparam^2h^2+\frac{1}{128}\smoothparam^3 h^3}\dims^{1-2\loExpo}
  \end{align}
  with probability at least $1-2\exp\parenth{-\dims^{1-4\loExpo}/16384}$.

  \item For $\angles{\gradf_P(x),g}$, from the definition of $F_1$ in Equation~\eqref{eq:lower_bound_def_x_F1}, we have
  \begin{align*}
    \vecnorm{\gradf_P(x)}{2}^2&=\sum_{i=1}^d\frac{\smoothparam}{4\dims^{2\loExpo}}\sin^2(\dims^\loExpo\smoothparam^{\frac12}\coord{x}{i})\\
    &=\sum_{i=1}^d\frac{\smoothparam}{4\dims^{2\loExpo}}\parenth{1-\cos(2\dims^\loExpo\smoothparam^{\frac12}\coord{x}{i})}\\
      &\leq \frac 14{\smoothparam \dims^{1-2\loExpo}}+\frac14{\smoothparam\dims^{-2\loExpo}}\parenth{-\frac{1}{16}\dims^{1-2\loExpo}+\frac18\dims^{1-4\loExpo}+2\dims^{\frac12}}\\
      &\leq \frac{1}{2}\smoothparam \dims^{1-2\loExpo}-\frac{1}{64}\smoothparam\dims^{1-4\loExpo},
  \end{align*}
  where the last step uses the assumption~\eqref{eq:condition_for_d}. Since $\angles{\gradf_P(x),g}$ is $\sqrt{2h}\vecnorm{\gradf_P(x)}{2}$-Lipschitz with respect to $\xi$, it is sub-Gaussian with the same Lipschitz constant. We have
    \begin{align}\label{eq:fpxg}
        \Prob\brackets{\abss{\angles{\gradf_P(x),g}}\geq \frac{1}{64}\sqrt{\smoothparam h}\dims^{1-3\loExpo}}\leq 2\exp\parenth{-\frac{\smoothparam \dims^{2-6\loExpo}/64^2}{4\vecnorm{\gradf_P(x)}{2}^2}}\leq 2\exp\parenth{-\frac{\dims^{1-4\loExpo}}{8192}}
    \end{align}

    For $\angles{\gradf_P(y),g}$, we have
    \begin{align*}
    \angles{\gradf_P(y), g} &=  \sqrt{\smoothparam h}\sum_{i=1}^\dims   \underbrace{\frac{\coord{\xi}{i}}{\sqrt{2}{\dims^{\loExpo}}} \sin\parenth{\dims^\loExpo \smoothparam^{\frac12} \coord{y}{i}}}_{\Deltapart_{4,2,i}} 
    \end{align*}
    Applying Lemma~\ref{lem:31}, we can bound the moments of $\Deltapart_{4,2,i}$ as follows
    \begin{align*}
    \abss{\Exs\brackets{\Deltapart_{4,2,i}}}&\leq\sqrt{\smoothparam h}\exp\parenth{-\smoothparam h d^{2\loExpo}}\overset{(i)}{\leq}  \frac{1}{512}\sqrt{\smoothparam h}\dims^{-2\loExpo}\\
    \Exs\brackets{\abss{\Deltapart_{4,2,i}-\Exs[\Deltapart_{4,2,i}]}^\ell}&\leq \Exs\brackets{2^{\ell-1}\parenth{\abss{\Deltapart_{4,2,i}}^\ell+\abss{\Exs\brackets{\Deltapart_{4,2,i}}}^\ell}}\\
    &\leq2^{\ell-1}\parenth{\Exs\abss{\frac{\coord{\xi}{i}}{\sqrt{2}}}^\ell+\abss{\Exs\brackets{\Deltapart_{4,2,i}}}^\ell}\\
    &\overset{(ii)}{\leq}2^{\ell-1}\parenth{\frac{(\ell-1)!!}{2^{\frac\ell2}}+\parenth{\frac{1}{2e}}^{\frac\ell2}}\\
    &\leq \frac{\ell!}{2}2^{\ell-2}.
  \end{align*}
Step $(i)$ uses the assumption~\eqref{eq:condition_for_d}. Step (ii) uses the expected moments of the normal distribution for the first term, and applies $x\exp(-x^2)\leq(2e)^{-1/2}$ for $x=\sqrt{\smoothparam h}$ to bound the second term. Applying Bernstein's inequality~\eqref{eq:Bernstein_ineq0} with $K=2$ and $\epsilon=\dims^{-4\loExpo}/12800$, we have
  \begin{align}\label{eq:fpyg}
      \Prob\brackets{\angles{\gradf_P(y),g}\geq \frac{1}{512}\smoothparam h\dims^{1-2\loExpo}+\sqrt{\smoothparam h}\parenth{\frac{1}{6400}\dims^{1-5\loExpo}+\frac1{80}\dims^{1-3\loExpo}}}\leq \exp\parenth{-\frac{\dims^{1-4\loExpo}}{12800}}
  \end{align}
Combining \eqref{eq:fpxg} and \eqref{eq:fpyg}, we obtain
\begin{align}\label{eq:bound_delta4}
    \abss{\Deltapart_4}&=\abss{\angles{\gradf_P(x), \parenth{\frac{1+\smoothparam^2 h^2}{2} }g }  - \angles{\gradf_P(y), \frac{1 + \smoothparam h }{2} g}}\notag\\
    &\leq \frac{1}{64}\parenth{1+\smoothparam^2h^2}\sqrt{\smoothparam h}\dims^{1-3\loExpo} + \frac{1}{512}(1+\smoothparam h)\smoothparam h \dims^{1-2\loExpo}\notag\\
    &\quad  +\parenth{1+\smoothparam h}\sqrt{\smoothparam h}\parenth{\frac{1}{12800}\dims^{1-5\loExpo}+\frac1{160}\dims^{1-3\loExpo}}\notag\\
    &\overset{(i)}{\leq}\frac{1}{128}(1+\smoothparam^2h^2)\parenth{4\dims^{1-4\loExpo}+\frac14\smoothparam h\dims^{1-2\loExpo}}+ \frac{1}{512}(1+\smoothparam h)\smoothparam h\dims^{1-2\loExpo}\notag\\
    &\quad + \frac{1}{128}\parenth{1+\smoothparam h}\parenth{4\dims^{1-4\loExpo}+\frac14\smoothparam h\dims^{1-2\loExpo}}\notag\\
    &= \parenth{\frac1{16}+\frac{1}{32}\smoothparam h +\frac{1}{32}\smoothparam^2 h^2}\dims^{1-4\loExpo} + \parenth{\frac{3}{512}\smoothparam h +\frac{1}{256}\smoothparam^2 h^2+\frac{1}{512}\smoothparam^3 h^3}\dims^{1-2\loExpo}
\end{align}
with probability at least $1-3\exp\parenth{-\dims^{1-4\loExpo}/12800}$. Step $(i)$ uses the assumption~\eqref{eq:condition_for_d}.

  \item By the definition of $F_1$ in  \eqref{eq:lower_bound_def_x_F1},
  \begin{align}\label{eq:gradfPx_x}
      \abss{\angles{\gradf_P(x),x}}&=\abss{\sum_{i=1}^d\frac{\smoothparam^{\frac12}}{2\dims^{\loExpo}}\coord{x}{i}\sin(\dims^{\loExpo}\smoothparam^{\frac12}\coord{x}{i})}\notag\\
      &\leq \frac12\dims^{1-4\loExpo} + 2\dims^{\frac12}
 \end{align}
    We write $\angles{\gradf_P(y),x}$ as 
    \begin{align*}
         \angles{\gradf_P(y),x}&=\sum_{i=1}^d\underbrace{\frac{\smoothparam^{\frac12}}{2\dims^{\loExpo}}\coord{x}{i}\sin(\dims^{\loExpo}\smoothparam^{\frac12}\coord{y}{i})}_{\Deltapart_{5,i}}
    \end{align*}
    By the definition of $F_1$ and the assumption~\eqref{eq:condition_for_d},
    \begin{align*}
        \Exs\abss{\Deltapart_{5,i}}&\leq\frac{\smoothparam^{\frac12}\abss{\coord{x}{i}}}{2\dims^{\loExpo}}\exp\parenth{-\smoothparam h\dims^{2\loExpo}}\leq 2 \parenth{\log(8\dims)}^{\frac12}\dims^{-\loExpo}\exp\parenth{-\smoothparam h\dims^{2\loExpo}}\leq\frac{1}{512}\dims^{-2\loExpo}\\
        \Var\brackets{\Deltapart_{5,i}}&\leq\frac{\smoothparam}{4\dims^{2\loExpo}}\coord{x}{i}^2\leq 4\parenth{\log(8\dims)}\dims^{-2\loExpo}\\
        \abss{\Deltapart_{5,i}}&\leq2 \parenth{\log(8\dims)}^{\frac12}\dims^{-\loExpo}
    \end{align*}
    Applying Bernstein's inequality~\eqref{eq:Bernstein_ineq} with $\epsilon =\dims^{1-2\loExpo}/512$, we have 
    \begin{align}\label{eq:gradfPy_x}
        \Prob\brackets{\abss{\angles{\gradf_P(y),x}}\geq\frac{1}{256}\dims^{1-2\loExpo} }&\leq \exp\parenth{-\frac{\dims^{1-4\loExpo}}{4096}}
    \end{align}
    
    Combining Equation~\eqref{eq:gradfPx_x} and \eqref{eq:gradfPy_x}, we get
    \begin{align}\label{eq:bound_delta5}
        \Deltapart_5&=\angles{\gradf_P(x), \parenth{\frac{\smoothparam^2 h^2}{2} -  \frac{\smoothparam^3 h^3}{2} }x } - \angles{\gradf_P(y), \frac{\smoothparam h }{2} \parenth{2 - \smoothparam h}  x}\notag\\
        &\leq\parenth{\frac14\smoothparam^2h^2\dims^{1-4\loExpo}+\smoothparam^2h^2\dims^{\frac12}}+\parenth{\frac14\smoothparam^3h^3\dims^{1-4\loExpo}+\smoothparam^3h^3\dims^{\frac12}}\notag\\
        &\quad +\frac{1}{256}\smoothparam h\dims^{1-2\loExpo} + \frac1{512} \smoothparam^2h^2\dims^{1-2\loExpo}\notag\\
        &\overset{(i)}{\leq} \frac{1}{64}\smoothparam^2h^2\dims^{1-2\loExpo}+\frac{1}{64}\smoothparam^3h^3\dims^{1-2\loExpo}+\frac{1}{256}\smoothparam h\dims^{1-2\loExpo} + \frac1{512} \smoothparam^2h^2\dims^{1-2\loExpo}\notag\\
        &= \parenth{\frac{1}{256}\smoothparam h+\frac{9}{512}\smoothparam^2h^2+\frac{1}{64}\smoothparam^3h^3}\dims^{1-2\loExpo}
    \end{align}
    with probability at least $1-\exp\parenth{-\dims^{1-4\loExpo}/4096}$. Step $(i)$ uses the assumption~\eqref{eq:condition_for_d}.

  \item By Definition of $F_1$ \eqref{eq:lower_bound_def_x_F1}, we have
  \begin{align}\label{eq:bound_delta6}
    - \smoothparam h^2 \vecnorm{\gradf_P(x)}{2}^2 &=- \frac{\smoothparam^2 h^2}{4 \dims^{2\loExpo}}\sum_{i=1}^\dims  \sin^2\parenth{\dims^\loExpo \smoothparam^{\frac12} \coord{x}{i}}\notag\\
    &= - \frac{\smoothparam^2 h^2}{8 \dims^{2\loExpo}} \sum_{i=1}^\dims \parenth{1 - \cos\parenth{2\dims^\loExpo \smoothparam^{\frac12} \coord{x}{i}}}\notag\\
    &\leq -\frac18\smoothparam^2h^2\dims^{1-2\loExpo} -\frac{1}{128}\smoothparam^2h^2\dims^{1-4\loExpo}+\frac{1}{64}\smoothparam^2h^2\dims^{1-6\loExpo}+\frac{1}{4}\smoothparam^2h^2\dims^{\frac12-2\loExpo}\notag\\
    &\leq -\frac{1}{8}\smoothparam^2h^2\dims^{1-2\loExpo} -\frac{1}{128}\smoothparam^2h^2\dims^{1-4\loExpo}+\frac{1}{512}\smoothparam^2h^2\dims^{1-2\loExpo}
  \end{align}
  where the last step follows from the assumption~\eqref{eq:condition_for_d}.

  \item We decompose $\Deltapart_7$ into three parts.
  \begin{align*}
      \vecnorm{(1-\smoothparam h) \gradf_P(x)+\gradf_P(y)}{2}^2&=(1-\smoothparam h)^2\vecnorm{\gradf_P(x)}{2}^2+2(1-\smoothparam h)\angles{\gradf_P(x),\gradf_P(y)}\\
      &\quad +\vecnorm{\gradf_P(y)}{2}^2
  \end{align*}
  Similar to previous analysis of $\vecnorm{\gradf_P(x)}{2}^2$, we have
  \begin{align}\label{eq:delta_70i}
      -\vecnorm{\gradf_P(x)}{2}^2&=-\sum_{i=1}^d\frac{\smoothparam}{4\dims^{2\loExpo}}\sin^2(\dims^\loExpo\smoothparam^{\frac12}\coord{x}{i})=-\sum_{i=1}^d\frac{\smoothparam}{4\dims^{2\loExpo}}\parenth{1-\cos(2\dims^\loExpo\smoothparam^{\frac12}\coord{x}{i})}\notag\\
      &\leq -\frac 14{\smoothparam \dims^{1-2\loExpo}}-\frac{1}{64}\smoothparam \dims^{1-4\loExpo}+\frac1{32}\smoothparam\dims^{1-6\loExpo}+\frac12\smoothparam\dims^{\frac12-2\loExpo}\notag\\
      &\leq -\frac 14{\smoothparam \dims^{1-2\loExpo}}-\frac{1}{64}\smoothparam \dims^{1-4\loExpo} + \frac{1}{512}\smoothparam \dims^{1-2\loExpo}.
  \end{align}
  By Lemma \ref{lem:31} and the definition of $F_1$, we have
  \begin{align*}
      \angles{\gradf_P(x),\gradf_P(y)}&=\sum_{i=1}^d\underbrace{\frac{\smoothparam}{4\dims^{2\loExpo}}\sin(\dims^\loExpo\smoothparam^{\frac12}\coord{x}{i})\sin(\dims^\loExpo\smoothparam^{\frac12}\coord{y}{i})}_{\Deltapart_{7,1,i}}
  \end{align*}
  \begin{align*}
      \Exs\abss{\Deltapart_{7,1,i}}&\leq \frac{\smoothparam^{\frac32}}{4\dims^{\loExpo}}\abss{\coord{x}{i}}\exp\parenth{-\smoothparam \dims^{2\loExpo}h}\leq\smoothparam\parenth{\log(8\dims)}^{\frac12}{\dims^{-\loExpo}}\exp\parenth{-\smoothparam \dims^{2\loExpo}h}\leq\frac{1}{1024}\smoothparam\dims^{-2\loExpo}\\
      \Var[\Deltapart_{7,1,i}]&\leq \frac{\smoothparam^3}{16\dims^{2\loExpo}}\coord{x}{i}^2\leq \smoothparam^2\dims^{-2\loExpo}\log(8\dims)\\
      \abss{\Deltapart_{7,1,i}}&\leq\smoothparam\parenth{\log(8\dims)}^{\frac12}{\dims^{-\loExpo}}
  \end{align*}
  Applying Bernstein's inequality \eqref{eq:Bernstein_ineq} with $\epsilon = \smoothparam\dims^{1-2\loExpo}/1024$, we have
  \begin{align}\label{eq:delta_71i}
      \Prob\brackets{\abss{\angles{\gradf_P(x),\gradf_P(y)}}\geq \frac{1}{512}\smoothparam\dims^{1-2\loExpo}}\leq\exp\parenth{-\frac{\dims^{1-4\loExpo}}{16384}}
  \end{align}
  
   Similarly, we have
    \begin{align*}
        \vecnorm{\gradf_P(y)}{2}^2&=\sum_{i=1}^d\frac{\smoothparam}{4\dims^{2\loExpo}}\sin^2(\dims^\loExpo\smoothparam^{\frac12}\coord{y}{i})=\sum_{i=1}^d\underbrace{\frac{\smoothparam}{4\dims^{2\loExpo}}\parenth{1-\cos(2\dims^\loExpo\smoothparam^{\frac12}\coord{y}{i})}}_{\Deltapart_{7,2,i}}\\
        \abss{\Exs[\Deltapart_{7,2,i}]-\frac{\smoothparam}{4\dims^{2\loExpo}}}&\leq \frac{\smoothparam}{4\dims^{2\loExpo}}\exp\parenth{-2\smoothparam h\dims^{2\loExpo}}\leq\frac{1}{128}\smoothparam\dims^{-4\loExpo}\\
        \Var[\Deltapart_{7,2,i}^2]&\leq \frac{\smoothparam^2}{16\dims^{4\loExpo}}\\
        \abss{\Deltapart_{7,2,i}}&\leq \frac{\smoothparam}{4\dims^{2\loExpo}}
    \end{align*}
    Applying Bernstein's inequality \eqref{eq:Bernstein_ineq} with $\epsilon = {\smoothparam}\dims^{1-4\loExpo}/128$, we have
  \begin{align}\label{eq:delta_72i}
      \Prob\brackets{\vecnorm{\gradf_P(y)}{2}^2\geq \frac{1}{4}\smoothparam\dims^{1-2\loExpo}+\frac{1}{64}\smoothparam\dims^{1-4\loExpo}}\leq\exp\parenth{-\frac{\dims^{1-4\loExpo}}{16384}}
  \end{align}
    From these three estimates \eqref{eq:delta_70i}, \eqref{eq:delta_71i} and \eqref{eq:delta_72i}, we have
    \begin{align}\label{eq:bound_delta7}
        \Deltapart_7&=-\frac{h}{4}\vecnorm{(1-\smoothparam h)\gradf_P(x)+\gradf_P(y)}{2}^2\notag\\
        &\leq -\frac1{16} \smoothparam h(1-\smoothparam h)^2\dims^{1-2\loExpo} - \frac{1}{256}\smoothparam h(1-\smoothparam h)^2\dims^{1-4\loExpo}+\frac{1}{2048}\smoothparam h(1-\smoothparam h)^2\dims^{1-2\loExpo}\notag\\
        &\quad +\frac{1}{1024}\smoothparam h\parenth{1+\smoothparam h}\dims^{1-2\loExpo} -\frac{1}{16}\smoothparam h\dims^{1-2\loExpo}  +\frac{1}{256}\smoothparam h\dims^{1-4\loExpo}\notag\\
        &=\parenth{-\frac{253}{2048}\smoothparam h+\frac{1}{8}\smoothparam^2 h^2-\frac{127}{2048}\smoothparam^3 h^3}\dims^{1-2\loExpo} +\parenth{\frac{1}{128}\smoothparam^2 h^2-\frac{1}{256}\smoothparam^3h^3}\dims^{1-4\loExpo}
    \end{align}
    with probability at least $1-2\exp\parenth{-\dims^{1-4\loExpo}/16384}$.
\end{enumerate}

Finally, combining seven bounds in Equation \eqref{eq:bound_delta1}, \eqref{eq:bound_delta2}, \eqref{eq:bound_delta3}, \eqref{eq:bound_delta4}, \eqref{eq:bound_delta5}, \eqref{eq:bound_delta6} and \eqref{eq:bound_delta7}, we obtain
\begin{align}
    \sum_{i=1}^7\Delta_i&\leq-\frac{1}{32}\dims^{1-4\loExpo}+\frac{1}{2048}\parenth{-217\smoothparam h+136\smoothparam^2h^2-75\smoothparam^3h^3}\dims^{1-2\loExpo}\notag\\
    &\hspace{2em}+\parenth{\frac1{32}\smoothparam h+\frac{1}{32}\smoothparam^2h^2-\frac{1}{256}\smoothparam^3h^3}\dims^{1-4\loExpo}\notag\\
    &\overset{(i)}{\leq}-\frac{1}{32}\dims^{1-4\loExpo}-\frac{17}{512}(1-\smoothparam h)^2\dims^{1-2\loExpo}\notag\\
    &\leq -\frac{1}{32}\dims^{1-4\loExpo}
\end{align}
with probability at least $1-10\exp\parenth{-\dims^{1-4\loExpo}/16384}$ for $y\sim\Normal(x-h\gradf(x),2h\Ind_\dims)$. Step $(i)$ uses $\smoothparam h\dims^{1-4\loExpo}\leq\dims^{1-2\loExpo}/32$ and $\smoothparam^2h^2\dims^{1-4\loExpo}\leq \parenth{\smoothparam h\dims^{1-2\loExpo}+\smoothparam^3h^3\dims^{1-2\loExpo}}/64$ by the assumption~\eqref{eq:condition_for_d}, and throws all the other negative terms.
\end{proof}

\subsection{Proof of Lemma~\ref{lem:pi2}}\label{sub:pi2}
We calculate the integral explicitly as follows.

\begin{align*}
\int_\real\frac{\target_2(y)\propkernel_2(y,x)}{\target_2(x)}dy
&=\int_\real\frac{1}{\sqrt{4\pi h}}\exp\parenth{\frac{\scparam}{2}(x^2-y^2)}\exp\parenth{-\frac{1}{4h}(x-(1-\scparam h)y)^2}dy\\
&=\frac{1}{\sqrt{1+\scparam ^2h^2}}\exp\parenth{\frac{\scparam ^3h^2x^2}{2(1+\scparam ^2h^2)}}\\
&\quad \cdot\int_\real\sqrt{\frac{1+\scparam^2 h^2}{4\pi h }}\exp\parenth{-\frac{1+\scparam^2h^2}{4h}\bigparenth{y-\frac{1-\scparam h}{1+\scparam^2 h^2}x}^2}dy\\
&=\frac{1}{\sqrt{1+\scparam ^2h^2}}\exp\parenth{\frac{\scparam ^3h^2x^2}{2(1+\scparam ^2h^2)}}.
\end{align*}
Take $F_2=\braces{x:x\in(-1/{\sqrt{\scparam }},1/{\sqrt{\scparam }})}$, then $\target_2(F_2)\in (1/2,3/4)$. We have
\begin{align}
    \int_\real\frac{\target_2(y)\propkernel_2(y,x)}{\target_2(x)}dy\leq \exp\Big(\frac {\scparam}{2}x^2 \Big)\leq 2, \ \ \forall x\in F_2.\label{eq:pi2}
\end{align}


\vskip 0.2in
\bibliography{ref}

\begin{thebibliography}{59}
\providecommand{\natexlab}[1]{#1}
\providecommand{\url}[1]{\texttt{#1}}
\expandafter\ifx\csname urlstyle\endcsname\relax
  \providecommand{\doi}[1]{doi: #1}\else
  \providecommand{\doi}{doi: \begingroup \urlstyle{rm}\Url}\fi

\bibitem[Andrieu et~al.(2003)Andrieu, De~Freitas, Doucet, and
  Jordan]{andrieu2003introduction}
C.~Andrieu, N.~De~Freitas, A.~Doucet, and M.~I. Jordan.
\newblock An introduction to {MCMC} for machine learning.
\newblock \emph{Machine learning}, 50\penalty0 (1):\penalty0 5--43, 2003.

\bibitem[Bakry et~al.(2014)Bakry, Gentil, and Ledoux]{bakry2014analysis}
D.~Bakry, I.~Gentil, and M.~Ledoux.
\newblock \emph{Analysis and geometry of Markov diffusion operators}, volume
  103.
\newblock Springer, 2014.

\bibitem[Besag(1994)]{besag1994comments}
J.~Besag.
\newblock Comments on ``{Representations} of knowledge in complex systems'' by
  {U.} {Grenander} and {MI} {Miller}.
\newblock \emph{J. Roy. Statist. Soc. Ser. B}, 56:\penalty0 591--592, 1994.

\bibitem[Borgs(2003)]{borgs2003statistical}
C.~Borgs.
\newblock Statistical physics expansion methods in combinatorics and computer
  science.
\newblock \emph{CBMS lecture notes (in preparation)}, 2003.

\bibitem[Bou-Rabee and Hairer(2013)]{bou2013nonasymptotic}
N.~Bou-Rabee and M.~Hairer.
\newblock Nonasymptotic mixing of the {MALA} algorithm.
\newblock \emph{IMA Journal of Numerical Analysis}, 33\penalty0 (1):\penalty0
  80--110, 2013.

\bibitem[B{\"u}hlmann and Van De~Geer(2011)]{buhlmann2011statistics}
P.~B{\"u}hlmann and S.~Van De~Geer.
\newblock \emph{Statistics for high-dimensional data: methods, theory and
  applications}.
\newblock Springer Science \& Business Media, 2011.

\bibitem[Carpenter et~al.(2017)Carpenter, Gelman, Hoffman, Lee, Goodrich,
  Betancourt, Brubaker, Guo, Li, and Riddell]{carpenter2017stan}
B.~Carpenter, A.~Gelman, M.~D. Hoffman, D.~Lee, B.~Goodrich, M.~Betancourt,
  M.~A. Brubaker, J.~Guo, P.~Li, and A.~Riddell.
\newblock Stan: a probabilistic programming language.
\newblock \emph{Grantee Submission}, 76\penalty0 (1):\penalty0 1--32, 2017.

\bibitem[Chen et~al.(2020)Chen, Dwivedi, Wainwright, and Yu]{chen2020fast}
Y.~Chen, R.~Dwivedi, M.~J. Wainwright, and B.~Yu.
\newblock Fast mixing of {Metropolized} {Hamiltonian} {Monte} {Carlo}: Benefits
  of multi-step gradients.
\newblock \emph{Journal of Machine Learning Research}, 21\penalty0
  (92):\penalty0 1--71, 2020.

\bibitem[Cheng and Bartlett(2018)]{cheng2018convergence}
X.~Cheng and P.~Bartlett.
\newblock Convergence of {Langevin} {MCMC} in {KL}-divergence.
\newblock \emph{Proceedings of Machine Learning Research, Volume 83:
  Algorithmic Learning Theory}, pages 186--211, 2018.

\bibitem[Cheng et~al.(2018{\natexlab{a}})Cheng, Chatterji, Abbasi-Yadkori,
  Bartlett, and Jordan]{cheng2018sharp}
X.~Cheng, N.~S. Chatterji, Y.~Abbasi-Yadkori, P.~L. Bartlett, and M.~I. Jordan.
\newblock Convergence rates for {Langevin} dynamics in the nonconvex setting.
\newblock \emph{arXiv preprint arXiv:1805.01648}, 2018{\natexlab{a}}.

\bibitem[Cheng et~al.(2018{\natexlab{b}})Cheng, Chatterji, Bartlett, and
  Jordan]{cheng2018underdamped}
X.~Cheng, N.~S. Chatterji, P.~L. Bartlett, and M.~I. Jordan.
\newblock Underdamped {Langevin} {MCMC}: A non-asymptotic analysis.
\newblock In \emph{Conference on Learning Theory}, pages 300--323. PMLR,
  2018{\natexlab{b}}.

\bibitem[Chewi et~al.(2021)Chewi, Lu, Ahn, Cheng, Le~Gouic, and
  Rigollet]{chewi2021optimal}
S.~Chewi, C.~Lu, K.~Ahn, X.~Cheng, T.~Le~Gouic, and P.~Rigollet.
\newblock Optimal dimension dependence of the {Metropolis}-{Adjusted}
  {Langevin} {Algorithm}.
\newblock In \emph{Conference on Learning Theory}, pages 1260--1300. PMLR,
  2021.

\bibitem[Coulhon(1996)]{coulhon1996ultracontractivity}
T.~Coulhon.
\newblock Ultracontractivity and nash type inequalities.
\newblock \emph{Journal of functional analysis}, 141\penalty0 (2):\penalty0
  510--539, 1996.

\bibitem[Coulhon and Grigor’yan(1997)]{coulhon1997diagonal}
T.~Coulhon and A.~Grigor’yan.
\newblock On-diagonal lower bounds for heat kernels and markov chains.
\newblock \emph{Duke Mathematical Journal}, 89\penalty0 (1):\penalty0 133--199,
  1997.

\bibitem[Coulhon et~al.(2001)Coulhon, Grigor'yan, and
  Pittet]{coulhon2001geometric}
T.~Coulhon, A.~Grigor'yan, and C.~Pittet.
\newblock A geometric approach to on-diagonal heat kernel lower bounds on
  groups.
\newblock In \emph{Annales de l'institut Fourier}, volume~51, pages 1763--1827,
  2001.

\bibitem[Cousins and Vempala(2014)]{cousins2014cubic}
B.~Cousins and S.~Vempala.
\newblock A cubic algorithm for computing {Gaussian} volume.
\newblock In \emph{Proceedings of the twenty-fifth annual ACM-SIAM symposium on
  discrete algorithms}, pages 1215--1228. SIAM, 2014.

\bibitem[Dalalyan(2017)]{dalalyan2017theoretical}
A.~S. Dalalyan.
\newblock Theoretical guarantees for approximate sampling from smooth and
  log-concave densities.
\newblock \emph{Journal of the Royal Statistical Society: Series B (Statistical
  Methodology)}, 79\penalty0 (3):\penalty0 651--676, 2017.

\bibitem[Dalalyan and Riou-Durand(2020)]{dalalyan2020sampling}
A.~S. Dalalyan and L.~Riou-Durand.
\newblock On sampling from a log-concave density using kinetic {Langevin}
  diffusions.
\newblock \emph{Bernoulli}, 26\penalty0 (3):\penalty0 1956--1988, 2020.

\bibitem[Diaconis and Saloff-Coste(1996)]{diaconis1996logarithmic}
P.~Diaconis and L.~Saloff-Coste.
\newblock {Logarithmic} {Sobolev} inequalities for finite {Markov} chains.
\newblock \emph{The Annals of Applied Probability}, 6\penalty0 (3):\penalty0
  695--750, 1996.

\bibitem[Durmus and Moulines(2017)]{durmus2017nonasymptotic}
A.~Durmus and E.~Moulines.
\newblock Nonasymptotic convergence analysis for the unadjusted {Langevin}
  algorithm.
\newblock \emph{The Annals of Applied Probability}, 27\penalty0 (3):\penalty0
  1551--1587, 2017.

\bibitem[Durmus and Moulines(2019)]{durmus2019high}
A.~Durmus and E.~Moulines.
\newblock High-dimensional {Bayesian} inference via the unadjusted {Langevin}
  algorithm.
\newblock \emph{Bernoulli}, 25\penalty0 (4A):\penalty0 2854--2882, 2019.

\bibitem[Dwivedi et~al.(2019)Dwivedi, Chen, Wainwright, and Yu]{dwivedi2018log}
R.~Dwivedi, Y.~Chen, M.~J. Wainwright, and B.~Yu.
\newblock Log-concave sampling: Metropolis-hastings algorithms are fast.
\newblock \emph{Journal of Machine Learning Research}, 20\penalty0
  (183):\penalty0 1--42, 2019.

\bibitem[Eberle et~al.(2019)Eberle, Guillin, and Zimmer]{eberle2019couplings}
A.~Eberle, A.~Guillin, and R.~Zimmer.
\newblock Couplings and quantitative contraction rates for {Langevin} dynamics.
\newblock \emph{The Annals of Probability}, 47\penalty0 (4):\penalty0
  1982--2010, 2019.

\bibitem[Erdogdu et~al.(2021)Erdogdu, Hosseinzadeh, and
  Zhang]{erdogdu2021convergence}
M.~A. Erdogdu, R.~Hosseinzadeh, and M.~S. Zhang.
\newblock Convergence of {Langevin} {Monte} {Carlo} in {Chi-Squared} and
  {Renyi} divergence, 2021.

\bibitem[Goel et~al.(2006)Goel, Montenegro, Tetali, et~al.]{goel2006mixing}
S.~Goel, R.~Montenegro, P.~Tetali, et~al.
\newblock Mixing time bounds via the spectral profile.
\newblock \emph{Electronic Journal of Probability}, 11:\penalty0 1--26, 2006.

\bibitem[Grenander and Miller(1994)]{grenander1994representations}
U.~Grenander and M.~I. Miller.
\newblock Representations of knowledge in complex systems.
\newblock \emph{Journal of the Royal Statistical Society: Series B
  (Methodological)}, 56\penalty0 (4):\penalty0 549--581, 1994.

\bibitem[Hastings(1970)]{hastings1970monte}
W.~K. Hastings.
\newblock Monte carlo sampling methods using markov chains and their
  applications.
\newblock \emph{Biometrika}, 57\penalty0 (1):\penalty0 97--109, 1970.

\bibitem[Laurent and Massart(2000)]{laurent2000adaptive}
B.~Laurent and P.~Massart.
\newblock Adaptive estimation of a quadratic functional by model selection.
\newblock \emph{Annals of Statistics}, pages 1302--1338, 2000.

\bibitem[Ledoux(2001)]{ledoux2001concentration}
M.~Ledoux.
\newblock \emph{The concentration of measure phenomenon}, volume~89.
\newblock American Mathematical Soc., 2001.

\bibitem[Lee et~al.(2020)Lee, Shen, and Tian]{lee2020logsmooth}
Y.~T. Lee, R.~Shen, and K.~Tian.
\newblock Logsmooth gradient concentration and tighter runtimes for
  {Metropolized} {Hamiltonian} {Monte} {Carlo}.
\newblock In \emph{Conference on Learning Theory}, pages 2565--2597. PMLR,
  2020.

\bibitem[Lee et~al.(2021{\natexlab{a}})Lee, Shen, and Tian]{lee2021lower}
Y.~T. Lee, R.~Shen, and K.~Tian.
\newblock Lower bounds on {Metropolized} sampling methods for well-conditioned
  distributions.
\newblock \emph{arXiv preprint arXiv:2106.05480}, 2021{\natexlab{a}}.

\bibitem[Lee et~al.(2021{\natexlab{b}})Lee, Shen, and Tian]{lee2021structured}
Y.~T. Lee, R.~Shen, and K.~Tian.
\newblock Structured logconcave sampling with a restricted gaussian oracle.
\newblock In \emph{Conference on Learning Theory}, pages 2993--3050. PMLR,
  2021{\natexlab{b}}.

\bibitem[Levin and Peres(2017)]{levin2017markov}
D.~A. Levin and Y.~Peres.
\newblock \emph{Markov chains and mixing times}, volume 107.
\newblock American Mathematical Soc., 2017.

\bibitem[Li et~al.(2020)Li, Zha, and Tao]{li2020hessian}
R.~Li, H.~Zha, and M.~Tao.
\newblock Hessian-free high-resolution {Nesterov} acceleration for sampling.
\newblock \emph{arXiv e-prints}, pages arXiv--2006, 2020.

\bibitem[Lov{\'a}sz and Simonovits(1993)]{lovasz1993random}
L.~Lov{\'a}sz and M.~Simonovits.
\newblock Random walks in a convex body and an improved volume algorithm.
\newblock \emph{Random structures \& algorithms}, 4\penalty0 (4):\penalty0
  359--412, 1993.

\bibitem[Ma et~al.(2021)Ma, Chatterji, Cheng, Flammarion, Bartlett, and
  Jordan]{ma2021there}
Y.-A. Ma, N.~S. Chatterji, X.~Cheng, N.~Flammarion, P.~L. Bartlett, and M.~I.
  Jordan.
\newblock Is there an analog of {Nesterov} acceleration for gradient-based
  {MCMC}?
\newblock \emph{Bernoulli}, 27\penalty0 (3):\penalty0 1942--1992, 2021.

\bibitem[Mangoubi and Vishnoi(2019)]{mangoubi2019nonconvex}
O.~Mangoubi and N.~K. Vishnoi.
\newblock Nonconvex sampling with the {Metropolis}-{Adjusted} {Langevin}
  {Algorithm}.
\newblock In \emph{Conference on Learning Theory}, pages 2259--2293. PMLR,
  2019.

\bibitem[Mengersen et~al.(1996)Mengersen, Tweedie, et~al.]{mengersen1996rates}
K.~L. Mengersen, R.~L. Tweedie, et~al.
\newblock Rates of convergence of the {Hastings} and {Metropolis} algorithms.
\newblock \emph{Annals of Statistics}, 24\penalty0 (1):\penalty0 101--121,
  1996.

\bibitem[Metropolis et~al.(1953)Metropolis, Rosenbluth, Rosenbluth, Teller, and
  Teller]{metropolis1953equation}
N.~Metropolis, A.~W. Rosenbluth, M.~N. Rosenbluth, A.~H. Teller, and E.~Teller.
\newblock Equation of state calculations by fast computing machines.
\newblock \emph{The journal of chemical physics}, 21\penalty0 (6):\penalty0
  1087--1092, 1953.

\bibitem[Meyn and Tweedie(2012)]{meyn2012markov}
S.~P. Meyn and R.~L. Tweedie.
\newblock \emph{Markov chains and stochastic stability}.
\newblock Springer Science \& Business Media, 2012.

\bibitem[Montenegro(2006)]{montenegro2006eigenvalues}
R.~Montenegro.
\newblock Eigenvalues of non-reversible {Markov} chains: {Their} connection to
  mixing times, reversible {Markov} chains, and {Cheeger} inequalities.
\newblock \emph{arXiv preprint math/0604362}, 2006.

\bibitem[Montenegro and Tetali(2006)]{montenegro2006mathematical}
R.~R. Montenegro and P.~Tetali.
\newblock \emph{Mathematical aspects of mixing times in {Markov} chains}.
\newblock Now Publishers Inc, 2006.

\bibitem[Mortici(2011)]{mortici2011improved}
C.~Mortici.
\newblock Improved asymptotic formulas for the {Gamma} function.
\newblock \emph{Computers \& Mathematics with Applications}, 61\penalty0
  (11):\penalty0 3364--3369, 2011.

\bibitem[Mou et~al.(2021)Mou, Ma, Wainwright, Bartlett, and
  Jordan]{mou2021high}
W.~Mou, Y.-A. Ma, M.~J. Wainwright, P.~L. Bartlett, and M.~I. Jordan.
\newblock High-order {Langevin} diffusion yields an accelerated {MCMC}
  algorithm.
\newblock \emph{Journal of Machine Learning Research}, 22:\penalty0 42--1,
  2021.

\bibitem[Neal et~al.(2011)]{neal2011mcmc}
R.~M. Neal et~al.
\newblock {MCMC} using {Hamiltonian} dynamics.
\newblock \emph{Handbook of {Markov} {Chain} {Monte} {Carlo}}, 2\penalty0
  (11):\penalty0 2, 2011.

\bibitem[Nemirovskij and Yudin(1983)]{nemirovskij1983problem}
A.~S. Nemirovskij and D.~B. Yudin.
\newblock \emph{Problem complexity and method efficiency in optimization}.
\newblock Wiley-Interscience, 1983.

\bibitem[Nesterov(2003)]{nesterov2003introductory}
Y.~Nesterov.
\newblock \emph{Introductory lectures on convex optimization: A basic course},
  volume~87.
\newblock Springer Science \& Business Media, 2003.

\bibitem[Parisi(1981)]{parisi1981correlation}
G.~Parisi.
\newblock Correlation functions and computer simulations.
\newblock \emph{Nuclear Physics B}, 180\penalty0 (3):\penalty0 378--384, 1981.

\bibitem[Plummer et~al.(2003)]{plummer2003jags}
M.~Plummer et~al.
\newblock {JAGS}: A program for analysis of {Bayesian} graphical models using
  {Gibbs} sampling.
\newblock In \emph{Proceedings of the 3rd international workshop on distributed
  statistical computing}, volume 124, pages 1--10. Vienna, Austria., 2003.

\bibitem[Robert and Casella(2013)]{robert2013monte}
C.~Robert and G.~Casella.
\newblock \emph{Monte Carlo statistical methods}.
\newblock Springer Science \& Business Media, 2013.

\bibitem[Roberts and Rosenthal(1998)]{roberts1998optimal}
G.~O. Roberts and J.~S. Rosenthal.
\newblock Optimal scaling of discrete approximations to {Langevin} diffusions.
\newblock \emph{Journal of the Royal Statistical Society: Series B (Statistical
  Methodology)}, 60\penalty0 (1):\penalty0 255--268, 1998.

\bibitem[Roberts and Stramer(2002)]{roberts2002langevin}
G.~O. Roberts and O.~Stramer.
\newblock Langevin diffusions and {Metropolis}-{Hastings} algorithms.
\newblock \emph{Methodology and computing in applied probability}, 4\penalty0
  (4):\penalty0 337--357, 2002.

\bibitem[Roberts and Tweedie(1996{\natexlab{a}})]{roberts1996exponential}
G.~O. Roberts and R.~L. Tweedie.
\newblock Exponential convergence of {Langevin} distributions and their
  discrete approximations.
\newblock \emph{Bernoulli}, pages 341--363, 1996{\natexlab{a}}.

\bibitem[Roberts and Tweedie(1996{\natexlab{b}})]{roberts1996geometric}
G.~O. Roberts and R.~L. Tweedie.
\newblock Geometric convergence and central limit theorems for multidimensional
  {Hastings} and {Metropolis} algorithms.
\newblock \emph{Biometrika}, 83\penalty0 (1):\penalty0 95--110,
  1996{\natexlab{b}}.

\bibitem[Schmidler and Woodard(2011)]{schmidler2011lower}
S.~Schmidler and D.~B. Woodard.
\newblock Lower bounds on the convergence rates of adaptive {MCMC} methods,
  2011.

\bibitem[Sinclair and Jerrum(1989)]{sinclair1989approximate}
A.~Sinclair and M.~Jerrum.
\newblock Approximate counting, uniform generation and rapidly mixing {Markov}
  chains.
\newblock \emph{Information and Computation}, 82\penalty0 (1):\penalty0
  93--133, 1989.

\bibitem[Wainwright(2019)]{wainwright2019high}
M.~J. Wainwright.
\newblock \emph{High-dimensional statistics: A non-asymptotic viewpoint},
  volume~48.
\newblock Cambridge University Press, 2019.

\bibitem[Wilson(2004)]{wilson2004mixing}
D.~B. Wilson.
\newblock Mixing times of lozenge tiling and card shuffling {Markov} chains.
\newblock \emph{The Annals of Applied Probability}, 14\penalty0 (1):\penalty0
  274--325, 2004.

\bibitem[Woodard et~al.(2009)Woodard, Schmidler, and
  Huber]{woodard2009sufficient}
D.~Woodard, S.~Schmidler, and M.~Huber.
\newblock Sufficient conditions for torpid mixing of parallel and simulated
  tempering.
\newblock \emph{Electronic Journal of Probability}, 14:\penalty0 780--804,
  2009.

\end{thebibliography}

\end{document}